\documentclass{article} %

\usepackage{etoolbox}
\usepackage{comment}
\newtoggle{icml}
\togglefalse{icml}

\newcommand{\icml}[1]{\iftoggle{icml}{#1}{}}
\newcommand{\arxiv}[1]{\iftoggle{icml}{}{#1}}

\icml{
  \usepackage{icml2026}
}

\arxiv{
\usepackage[letterpaper, left=1in, right=1in, top=1in,bottom=1in]{geometry}
  \usepackage{parskip}
  \usepackage[colorlinks=true, linkcolor=blue!70!black, citecolor=blue!70!black,urlcolor=black,breaklinks=true]{hyperref}
  \usepackage[dvipsnames]{xcolor}
}

\icml{
  \usepackage{parskip}
    \usepackage[colorlinks=true, linkcolor=blue!70!black, citecolor=blue!70!black,urlcolor=black,breaklinks=true, bookmarks=true]{hyperref}
}

\PassOptionsToPackage{hypertexnames=false}{hyperref}  %

\usepackage{amsmath}
\usepackage{microtype}
\usepackage{hhline}

\usepackage{amsthm}

\usepackage{bbm}
\usepackage{amsfonts}
\usepackage{amssymb}
\usepackage[nameinlink,capitalize]{cleveref}

\usepackage{mathtools}

\usepackage{xargs}

\arxiv{
\usepackage{algorithm}

\usepackage{natbib}
\bibliographystyle{plainnat}
\bibpunct{(}{)}{;}{a}{,}{,}
}

\usepackage{xpatch}

\theoremstyle{plain}
\newtheorem{theorem}{Theorem}[section]
\newtheorem{proposition}[theorem]{Proposition}
\newtheorem{lemma}[theorem]{Lemma}
\newtheorem{corollary}[theorem]{Corollary}
\theoremstyle{definition}
\newtheorem{definition}[theorem]{Definition}
\newtheorem{assumption}[theorem]{Assumption}
\theoremstyle{remark}

\newtheorem{example}[theorem]{Example}

\newcommand{\pfref}[1]{Proof of \cref{#1}}

\renewcommand{\eqref}[1]{\texorpdfstring{\hyperref[#1]{(\ref*{#1})}}{(\ref*{#1})}}

\crefformat{equation}{#2Eq.\,(#1)#3}
\Crefformat{equation}{#2Eq.\,(#1)#3}
\Crefformat{figure}{#2Figure~#1#3}
\Crefformat{assumption}{#2Assumption~#1#3}
\Crefname{assumption}{Assumption}{Assumptions}
\crefname{fact}{Fact}{Facts}
\Crefformat{figure}{#2Figure #1#3}
\Crefformat{assumption}{#2Assumption #1#3}

\usepackage{crossreftools}
\makeatletter
\renewenvironment{proof}[1][Proof]%
{%
  \par\noindent{\bfseries\upshape {#1.}\ }%
}%
{\qed\newline}
\makeatother

\xpatchcmd{\proof}{\itshape}{\normalfont\proofnameformat}{}{}
\newcommand{\proofnameformat}{\bfseries}

\usepackage[most]{tcolorbox}

\tcbset {
  base/.style={
    arc=0mm, 
    bottomtitle=0.5mm,
    boxrule=0mm,
    colbacktitle=!10!white, 
    coltitle=black, 
    fonttitle=\bfseries, 
    left=2.5mm,
    leftrule=1mm,
    right=3.5mm,
    title={#1},
    toptitle=0.75mm, 
  }
}

\newtcolorbox{mainbox}[1]{
  colframe=blue!10!black,
  colbacktitle=blue!50!black!30!white,
  colback=blue!2!white,
  enhanced,
  fonttitle=\bfseries,
  attach boxed title to top left={yshift=-2.5mm},
  boxed title style={size=small,colframe=blue!40!black,colback=blue!40!black},
  title={\small\textcolor{white}{\textsc{#1}}}
}

\newtcolorbox{minbox}[1]{
  colframe=blue!10!black,
  colbacktitle=blue!50!black!30!white,
  colback=blue!2!white,
  enhanced,
  fonttitle=\bfseries,
}

\DeclarePairedDelimiter{\abs}{\lvert}{\rvert} %
\DeclarePairedDelimiter{\brk}{[}{]}
\DeclarePairedDelimiter{\crl}{\{}{\}}
\DeclarePairedDelimiter{\prn}{(}{)}
\DeclarePairedDelimiter{\nrm}{\|}{\|}
\DeclarePairedDelimiter{\tri}{\langle}{\rangle}

\def\ddefloop#1{\ifx\ddefloop#1\else\ddef{#1}\expandafter\ddefloop\fi}
\def\ddef#1{\expandafter\def\csname bb#1\endcsname{\ensuremath{\mathbb{#1}}}}
\ddefloop ABCDEFGHIJKLMNOPQRSTUVWXYZ\ddefloop
\def\ddefloop#1{\ifx\ddefloop#1\else\ddef{#1}\expandafter\ddefloop\fi}
\def\ddef#1{\expandafter\def\csname b#1\endcsname{\ensuremath{\mathbf{#1}}}}
\ddefloop ABCDEFGHIJKLMNOPQRSTUVWXYZ\ddefloop
\def\ddef#1{\expandafter\def\csname sf#1\endcsname{\ensuremath{\mathsf{#1}}}}
\ddefloop ABCDEFGHIJKLMNOPQRSTUVWXYZ\ddefloop
\def\ddef#1{\expandafter\def\csname c#1\endcsname{\ensuremath{\mathcal{#1}}}}
\ddefloop ABCDEFGHIJKLMNOPQRSTUVWXYZ\ddefloop
\def\ddef#1{\expandafter\def\csname h#1\endcsname{\ensuremath{\widehat{#1}}}}
\ddefloop ABCDEFGHIJKLMNOPQRSTUVWXYZ\ddefloop
\def\ddef#1{\expandafter\def\csname hc#1\endcsname{\ensuremath{\widehat{\mathcal{#1}}}}}
\ddefloop ABCDEFGHIJKLMNOPQRSTUVWXYZ\ddefloop
\def\ddef#1{\expandafter\def\csname t#1\endcsname{\ensuremath{\widetilde{#1}}}}
\ddefloop ABCDEFGHIJKLMNOPQRSTUVWXYZ\ddefloop
\def\ddef#1{\expandafter\def\csname tc#1\endcsname{\ensuremath{\widetilde{\mathcal{#1}}}}}
\ddefloop ABCDEFGHIJKLMNOPQRSTUVWXYZ\ddefloop
\def\ddef#1{\expandafter\def\csname #1#1\endcsname{\ensuremath{\mathbb{#1}}}}
\ddefloop ABCDEFGHIJKLMNOPQRSTUVWXYZ\ddefloop
\def\ddef#1{\expandafter\def\csname #1\endcsname{\ensuremath{\mathbb{#1}}}}
\ddefloop ABCDEFGHIJKLMNOPQRSTUVWXYZ\ddefloop
\def\ddef#1{\expandafter\def\csname D#1\endcsname{\ensuremath{\Delta(\mathcal{#1})}}}
\ddefloop ABCDEFGHIJKLMNOPQRSTUVWXYZ\ddefloop
\def\ddefloop#1{\ifx\ddefloop#1\else\ddef{#1}\expandafter\ddefloop\fi}
\def\ddef#1{\expandafter\def\csname scr#1\endcsname{\ensuremath{\mathscr{#1}}}}
\ddefloop ABCDEFGHIJKLMNOPQRSTUVWXYZ\ddefloop

\let\Pr\undefined

\DeclareMathOperator{\En}{\mathbb{E}}

\DeclareMathOperator{\Pr}{Pr}

\newcommand{\mc}[1]{\mathcal{#1}}

\newcommand{\wt}[1]{\widetilde{#1}}
\newcommand{\wh}[1]{\widehat{#1}}
\newcommand{\wb}[1]{\widebar{#1}}

\newcommand{\kl}[2]{D_{\mathsf{KL}}\prn*{#1\,\|\,#2}}
\newcommand{\Dkl}[2]{D_{\mathsf{KL}}\prn*{#1\,\|\,#2}}

\newcommand{\Dcov}[3][{\M}]{\cE_{#1}\prn*{#2\,\|\,#3}}

\newcommand{\Dhel}[2]{D_{\mathsf{H}}\prn*{#1,#2}}

\newcommand{\Dhels}[2]{D^{2}_{\mathsf{H}}\prn*{#1,#2}}

\newcommand{\Dchis}[2]{D_{\chi^2}\prn*{#1\dmid{}#2}}

\newcommand{\Dtv}[2]{D_{\mathsf{TV}}\prn*{#1,#2}}

\newcommand{\Dbl}[2]{D_{\mathsf{BL}}\prn*{#1,#2}}

\newcommand{\Dren}[3][\lambda]{D_{#1}\prn*{#2\,\|\,#3}}
\newcommand{\Ren}[3][\lambda]{\mathcal{R}_{#1}\prn*{#2\,\|\,#3}}
\newcommand{\Dsys}[3][\lambda]{\bar{D}_{#1}\prn*{#2\,\|\,#3}}

\newcommand{\eps}{\epsilon}

\newcommand{\ldef}{\vcentcolon=}

\newcommand{\pclip}[2][\B]{\mathsf{Clip}_{#1}(#2)}
\newcommand{\trunc}[2][\B]{\tau_{#1}(#2)}

\newcommand{\AlgWSC}{\textsf{FORS}}

\newcommandx{\gam}[3][1=x,2=z]{\gamma_{#2,#3}(#1)}
\newcommandx{\gamp}[3][1=x,2=z]{\dot\gamma_{#2,#3}(#1)}

\newcommandx{\gamz}[4][1=x,2=z,3=\lr,4=\xz]{\gamma_{#2,#3,#4}(#1)}
\newcommandx{\gamzp}[4][1=x,2=z,3=\lr,4=\xz]{\dot\gamma_{#2,#3,#4}(#1)}

\newcommand{\xz}{x_0}
\newcommand{\const}{\mathrm{const}}
\newcommand{\lr}{r}

\renewcommand{\B}{B}

\newcommand{\xhat}{\wh{x}}

\newcommand{\tp}{^\top}
\newcommand{\nrmF}[1]{\nrm{#1}_{\mathrm{F}}}
\newcommand{\tr}{\mathrm{tr}}

\newcommand{\Cov}{\mathrm{Cov}}

\newcommand{\epssc}{\varepsilon_{\mathsf{score}}}
\newcommand{\epssct}[1][k]{\varepsilon_{#1,\mathsf{score}}}

\newcommand{\epsjac}{\varepsilon_{\mathsf{Jacobi}}}

\newcommand{\CLSI}{C_{\mathsf{LSI}}}
\newcommand{\CPI}{C_{\mathsf{PI}}}

\newcommand{\rgo}{\nu}

\newcommand{\etabar}{\wb{\eta}}

\newcommand{\scfs}{\scf^\star}

\newcommand{\Poi}{\mathsf{Poisson}}

\newcommand{\xbar}{\bar{x}}

\newcommand{\cXbar}{\wb{\cX}}

\newcommand{\dstar}{\mathsf{d}_\star}

\newcommand{\alr}{a_{\lr}}
\newcommand{\blr}{b_{\lr}}
\newcommand{\alrp}{a_{\lr}'}
\newcommand{\blrp}{b_{\lr}'}
\newcommand{\scf}{\mathsf{s}}

\newcommand{\xp}{x_+}

\newcommand{\xr}[1]{X\sups{\leftarrow}_{#1}}

\newcommand{\pdata}{p_{\mathsf{data}}}

\newcommand{\alp}{\alpha}
\newcommand{\alpbar}{\bar{\alpha}}

\newcommand{\nuhat}{\wh{\nu}}

\newcommand{\CP}{C_{\rm P}}

\newcommand{\normal}[1]{\mathsf{N}\prn*{#1}}
\newcommand{\nor}{\mathsf{N}}

\newcommand{\Dn}{\mathsf{D}}
\newcommand{\Ds}{\Dn^\star}

\newcommand{\eqd}{\stackrel{\mathrm{d}}{=}}

\newcommand{\Xbar}{\bar{X}}
\newcommand{\Netabar}{\normal{0,\etabar\Id}}
\newcommand{\Wstar}{W^\star}
\newcommand{\LipF}[1][\delta]{L_{\mathrm{F},#1}}
\newcommand{\Lipop}[1][\delta]{L_{\mathrm{op},#1}}

\newcommand{\rhostar}{\rho^\star}
\newcommand{\rhohat}{\wh{\rho}}
\newcommand{\Enxp}{\En_{\cdot\mid \gd(x_+)}}
\newcommand{\Enxps}{\En_{\cdot\mid \gds(x_+)}}
\newcommand{\gd}{\mathsf{g}}
\newcommand{\gds}{\gd^\star}

\newcommand{\Ybar}{\wb{Y}}
\newcommand{\cov}{\mathrm{cov}}
\newcommand{\rhobar}{\wb{\rho}}

\renewcommand{\xr}[1]{X_{#1}}

\newcommand{\leqsim}{\approxleq}

\DeclarePairedDelimiter{\nrmop}{\|}{\|_{\mathrm{op}}}

\newcommand{\whp}[1][\delta]{with probability at least $1-#1$}

\newcommand{\Capx}{C_{\texttt{apx}}}

\newcommand{\Unif}{\mathsf{Unif}}

\newcommand{\var}{\mathrm{Var}}
\newcommand{\Var}{\var}

\newcommand{\approxleq}{\lesssim}

\newcommand{\Id}{\mathbf{I}}

\newcommand{\indic}{\mathbb{I}}

\renewcommand{\Pr}{\bbP}

\newcommand{\poly}{\mathrm{poly}}
\newcommand{\polylog}{\mathrm{polylog}}

\newcommand{\Ber}{\mathsf{Ber}}

\newcommand{\dmid}{\;\|\;}

\newcommand{\unif}{\mathsf{Unif}}

\newcommand{\deq}{\coloneqq}

\newcommand{\supp}{\mathrm{supp}}

\newcommand{\muhat}{\widehat{\mu}}

\def\multiset#1#2{\ensuremath{\left(\kern-.3em\left(\genfrac{}{}{0pt}{}{#1}{#2}\right)\kern-.3em\right)}}

\newcommand{\pbar}{\overline{p}}
\newcommand{\phat}{\wh{p}}

\newcommand{\norm}[1]{\left \lVert #1 \right \rVert}

\input{widebar}

\usepackage[utf8]{inputenc} %
\usepackage[T1]{fontenc}    %
\usepackage{url}            %
\usepackage{booktabs}       %
\usepackage{amsfonts}       %
\usepackage{nicefrac}       %
\usepackage{microtype}      %
\usepackage{makecell}
\usepackage{enumitem}
\usepackage{breakcites}
\usepackage{mathrsfs}

\usepackage[normalem]{ulem}

\arxiv{
\usepackage{algorithm}
\usepackage{algorithmic}
\usepackage{verbatim}
}

\icml{

}

\usepackage{multicol}
\usepackage{colortbl}
\usepackage{setspace}
\usepackage{transparent}
\usepackage{upgreek}

\usepackage{inconsolata}
\usepackage[scaled=.90]{helvet}
\usepackage{xspace}

\icml{
\setlist[enumerate]{leftmargin=*}
\setlist[itemize]{leftmargin=*}
}

\usepackage{graphicx}
\icml{\usepackage{subfigure}}

\usepackage[suppress]{color-edits}
\addauthor{fc}{blue}
\addauthor{sc}{orange}
\addauthor{sr}{red}

\let\oldparagraph\paragraph

\renewcommand{\paragraph}[1]{\oldparagraph{#1.}}

\makeatletter
\g@addto@macro\appendix{%
  \crefalias{section}{appendixsection}%
  \crefalias{subsection}{appendixsubsection}%
  \crefalias{subsubsection}{appendixsubsubsection}%
}
\makeatother
\crefname{appendixsection}{Appendix}{Appendices}
\Crefname{appendixsection}{Appendix}{Appendices}
\crefname{appendixsubsection}{Appendix}{Appendices}
\Crefname{appendixsubsection}{Appendix}{Appendices}
\crefname{appendixsubsubsection}{Appendix}{Appendices}
\Crefname{appendixsubsubsection}{Appendix}{Appendices}

\begin{document}

\icml{
\twocolumn[
  \icmltitle{High-accuracy sampling for diffusion models and log-concave distributions}

  \icmlsetsymbol{equal}{*}

  \begin{icmlauthorlist}
    \icmlauthor{Firstname1 Lastname1}{equal,yyy}
    \icmlauthor{Firstname2 Lastname2}{equal,yyy,comp}
    \icmlauthor{Firstname3 Lastname3}{comp}
    \icmlauthor{Firstname4 Lastname4}{sch}
    \icmlauthor{Firstname5 Lastname5}{yyy}
    \icmlauthor{Firstname6 Lastname6}{sch,yyy,comp}
    \icmlauthor{Firstname7 Lastname7}{comp}
    \icmlauthor{Firstname8 Lastname8}{sch}
    \icmlauthor{Firstname8 Lastname8}{yyy,comp}
  \end{icmlauthorlist}

  \icmlaffiliation{yyy}{Department of XXX, University of YYY, Location, Country}
  \icmlaffiliation{comp}{Company Name, Location, Country}
  \icmlaffiliation{sch}{School of ZZZ, Institute of WWW, Location, Country}

  \icmlcorrespondingauthor{Firstname1 Lastname1}{first1.last1@xxx.edu}
  \icmlcorrespondingauthor{Firstname2 Lastname2}{first2.last2@www.uk}

  \icmlkeywords{Machine Learning, ICML}

  \vskip 0.3in
]

\printAffiliationsAndNotice{}  %
}

\arxiv{
  \title{High-accuracy sampling for diffusion models and log-concave distributions}
\author{
  Fan Chen \\ {\small MIT} \\ {\small \texttt{fanchen@mit.edu}} 
  \and Sinho Chewi \\ {\small Yale University} \\ {\small \texttt{sinho.chewi@yale.edu}} 
  \and Constantinos Daskalakis \\ {\small MIT} \\ {\small \texttt{costis@csail.mit.edu}}
  \and Alexander Rakhlin \\ {\small MIT} \\ {\small \texttt{rakhlin@mit.edu}}
}
  \maketitle
}

\begin{abstract}
We present algorithms for diffusion model sampling which obtain $\delta$-error in $\mathrm{polylog}(1/\delta)$ steps, given access to $\widetilde O(\delta)$-accurate score estimates in $L^2$. This is an exponential improvement over all previous results.
Specifically, under minimal data assumptions, the complexity is $\widetilde O(d_\star\,\mathrm{polylog}(1/\delta))$ where $d_\star$ is the intrinsic dimension of the data. Further, under a non-uniform $L$-Lipschitz condition, the complexity reduces to $\widetilde O(L\,\mathrm{polylog}(1/\delta))$. 
Our approach also yields the first $\mathrm{polylog}(1/\delta)$ complexity sampler for general log-concave distributions using only gradient evaluations.
\end{abstract}

\section{Introduction}

What is the complexity of sampling from a continuous probability distribution, given access to evaluations of the gradient of the log-density?
In particular, can one design algorithms whose iteration complexity scales as $\polylog(1/\delta)$, where $\delta$ is the target accuracy, or must they necessarily take $\poly(1/\delta)$ steps?

Complexity bounds which scale as $\polylog(1/\delta)$, indicating that algorithms converge exponentially fast, are known as ``high-accuracy'' guarantees, as they ensure that one can draw an extremely accurate sample without too many iterations.
When access to the log-density of the target distribution is available, and not just its gradient, then high-accuracy samplers abound, based on accept-reject mechanisms such as rejection sampling or the Metropolis--Hastings filter.

Without density evaluations, the answer is less clear.
In this setting, existing sampling methods are typically based on discretizations of stochastic differential equations, and the need to control the discretization error precludes high-accuracy guarantees.
(Note that this is unlike the setting of optimization, in which gradient descent enjoys a high-accuracy guarantee under strong convexity---there, discretization does not bias the algorithm.)
A notable exception is discretization of piecewise deterministic Markov processes (PDMPs), for which~\citet{LuWan22Zigzag} show that $\polylog(1/\delta)$ evaluations of the gradient suffice~\fcedit{from a warm start}.

This question has become particularly interesting in light of recent developments on diffusion-based generative modeling.
Such models are based on implementing a certain reverse Markov process, in which each iteration requires evaluation of a score function---the gradient of the log-density along a diffusion process.
Since these models only learn the score function and not the density function itself (unlike earlier approaches such as energy-based models), it is natural to ask what the best achievable complexity is using score evaluations alone.

More precisely, we are interested in bounding the number of steps (and queries to the score function) required to sample from a distribution $\phat$ such that
\begin{align}\label{eq:goal}
  D(\pdata,\phat)\leq \delta+ \Capx\cdot  \epssc\,,
\end{align}
where $D$ is a measure of discrepancy, $\delta\in(0,1)$ is the target accuracy, $\epssc$ measures the error in the score function estimates, and $\Capx$ is a target approximation factor.

The initial work of~\citet{Chen+23SGM, LeeLuTan23SGMGeneral} showed that with $L^2$-accurate score estimates, Denoising Diffusion Probabilistic Models (DDPMs) achieve a query complexity of $1/\delta^2$ in total variation distance for sampling from an early stopped distribution, provided that $\pdata$ has bounded support, with an overhead that is polynomial in the dimension and radius, and with $\Capx = \wt O(1)$.
By a standard argument, this can be converted to guarantees for sampling from $\pdata$ itself in a weaker metric.
Importantly, these results impose almost no assumptions on the data distribution, providing strong theoretical justification for diffusion models.

Since then, there has been an explosion of works aimed at extending and refining these guarantees~\citep[e.g.,][]{chen2023improved,li2023towards,Ben+24Diffusion,gao2025wasserstein,ConDurSil25KLDiffusion, LiYan25Diffusion,Chen+23FlowODE,li2024sharp, GaoZhu25PFODE, huang2025convergence, JaiZha26SharpKL}, which we cannot fully survey here.
However, it is known that a complexity of $\Omega(1/\delta)$ is unimprovable for DDPM~\citep{jiao2025optimal}, which motivates changing the algorithm.
Under minimal assumptions, the algorithm of~\citet{li2024provable} achieves a query complexity of $1/\delta^{1/2}$.
Other works achieved similar speed-ups assuming that the Jacobian of the score error is bounded~\citep{li2024accelerating,li2025faster}.

A recent line of works~\citep[e.g.,][]{huang2025convergence, huang2025fast, li2025faster} studied higher-order discretization methods, with the resulting query complexity scaling roughly as $\CP\, d^{1+1/p}/\delta^{1/p}$,
where $p\geq 1$ is the chosen acceleration order, and $\CP$ hides dependence on other problem parameters\footnote{For example, in the results of \citet{huang2024reverse,huang2025convergence,huang2025fast}, $\CP$ hides polynomial dependence on higher-order Lipschitz parameters or the diameter of the support of $\pdata$.} 
along with an implicit (often exponential) dependence on $p$. These results show that diffusion sampling can be performed with sub-polynomial dependence on $1/\delta$,\footnote{
  Roughly, we should expect $\CP\geq e^{\Omega(p)}$. Hence, the complexity is at least $d\exp(O(\sqrt{\log(d/\delta)}))$ with the best choice of acceleration order $p$, which is better than $\poly(1/\delta)$ but much worse than $\poly\log(1/\delta)$.
}
but they still fall well short of the desired poly-logarithmic complexity.\footnote{Furthermore, the approximation factor $\Capx$ in \citet{huang2025convergence,huang2025fast} scales polynomially in the early stopping time, and \citet{li2025faster} additionally relies on the second-order score error $\epsjac$, making the results sensitive to score errors.}

Meanwhile, \citet{huang2024reverse, wainwright2025score} propose to use density evaluations to achieve high-accuracy samplers, and~\citet{Hua+25QuantizedDiffusion} propose a high-accuracy sampler which requires learning a certain ``quantized'' score.
As discussed above, these are not compatible with the current practice of diffusion sampling, which only learns score estimates.

Therefore, we state our main question of interest:
\begin{center}
    \emph{Is there a diffusion model sampler which achieves a high-accuracy guarantee using only score evaluations, under minimal assumptions on both the data distribution and the score error?}
\end{center}

\subsection{Our contribution}

We answer this question affirmatively via a new meta-algorithm, which we call \emph{first-order rejection sampling} (\AlgWSC); see \cref{alg:fors}.
This algorithm aims at simulating rejection sampling using only first-order (gradient) queries.
We give consequences of our method for diffusion sampling and for log-concave sampling.

\paragraph{Diffusion sampling}
We show the following results, where the error is measured in the bounded Lipschitz metric~\eqref{def:BL-metric}, $\epssc$ denotes the $L^2$-error of the score estimates (\cref{def:score-error}), and $\dstar$ is the \emph{intrinsic} dimension of the data distribution (\cref{def:intrinsic}), which is always bounded by the embedding dimension $d$.
\begin{itemize}
    \item Under \textbf{minimal data assumptions}---namely, $\pdata$ has a finite second moment $\mathsf M_2^2$---we obtain $\delta$ error in $O(\dstar\log^3((d+\mathsf M_2^2)/\delta))$ queries with $\Capx=O(1)$.
\end{itemize}
This result strictly improves upon all prior results in the literature in this setting~\citep[e.g.,][]{chen2023improved,Ben+24Diffusion,ConDurSil25KLDiffusion,azangulov2024convergence, li2024adapting, li2025dimension, liang2025low, PotAzaDel25Manifold, tang2025adaptivity, HuaWeiChe26LowDim}.

Beyond this setting, many works in the literature aim to sample with a number of steps which is sublinear in the dimension~\citep{Chen+23FlowODE, jiao2024instance,jiao2025optimal, zhang2025sublinear}.
We also incorporate these advances into our framework. \fcedit{Specifically, we show that under a non-uniform $L$-Lipschitz condition (with respect to the \emph{Frobenius norm}, \cref{asmp:Lip-Frob}--\ref{asmp:Lip-DDPM}), we obtain $\delta$ error in $O(L\log^3((d+\mathsf M_2^2)/\delta))$ steps, with $\Capx = O(1)$. This result is ``almost dimension-free'' and it also recovers our previous result, as the non-uniform $L$-Lipschitz condition always holds with $L\leq \wt{O}(\dstar)$. Further, it also improves upon prior results in the setting with a non-uniform $L$-Lipschitz condition under operator norm (\cref{asmp:non-unif-Lip}), as the implied complexity is $\widetilde O(\min\{d_\star^{2/3}L^{1/3}, \sqrt{dL}\})$.}

\paragraph{Log-concave sampling}
For sampling from log-concave (and isoperimetric) densities with gradient evaluations of the log-density, we recover state-of-the-art results from~\citet{FanYuaChe23ImprovedProx}, except that \emph{we do not require density evaluations}; see \cref{sec:log_concave}.

\subsection{Related work}

As we do not have space to survey the vast literature on diffusion model guarantees, we focus on the ones most relevant to our work.

\paragraph{High-accuracy diffusion sampling}
Prior to this work, \citet{huang2024reverse,wainwright2025score} studied diffusion sampling with additional \emph{zeroth-order} queries. Specifically, given access to estimates of the unnormalized log-densities with bounded error, query complexity bounds of $\tO(d^2\log(1/\delta))$~\citep{huang2024reverse} and $\tO(\sqrt{d}\log^3(1/\delta))$~\citep{wainwright2025score} were shown under additional Lipschitz conditions on the score functions. Both works resort to standard high-accuracy sampling methods (e.g., Metropolis-adjusted Langevin). However, estimating the densities can be extremely challenging in practice. Moreover,~\citet{Hua+25QuantizedDiffusion} propose a method based on learning a quantized score, which requires changing the score matching objective.

\paragraph{Concurrent work} In concurrent work,~\citet{gatmiry2026high} also obtain a high-accuracy guarantee for diffusion sampling.
\fcedit{
  Assuming that $\pdata$ is a Gaussian convolution, i.e., $\pdata = p_\star * \normal{0,\sigma^2 I}$ where $p_\star$ is supported on a ball of radius $R$, they obtain a query complexity of $\wt O((R/\sigma)^2 \log^2(1/\delta))$ assuming that the score errors are bounded in the \emph{sub-exponential} norm.
  Our results also imply a query complexity of $\wt O((R/\sigma)^2 \log^3(1/\delta))$ through the intrinsic dimension (\cref{ssec:intrinsic}) while only requiring $L_2$-score errors.}
We defer a lengthier comparison to
\cref{sec:concurrent}.

\paragraph{Log-concave sampling}
Our application to log-concave sampling is based on the proximal sampler~\citep{LeeSheTia21RGO, Chen+22ProxSampler}, and our results can be compared to~\citet{AltChe24Warm, FanYuaChe23ImprovedProx}.
See the book draft~\citet{Chewi26Book} for an overview of the subject.

\subsection{Notation}

We make use of the following divergences between probability distributions: the \emph{total variation} (TV) distance $\Dtv{\mu}{\nu} \deq \frac{1}{2}\int|\mu-\nu|$; the \emph{Hellinger distance} $\Dhels{\mu}{\nu} \deq \frac12\int (\sqrt\mu-\sqrt\nu)^2$; the \emph{KL divergence} $\Dkl{\mu}{\nu} \deq \En_\mu\log(\mu/\nu)$; the \emph{chi-squared divergence} $\Dchis{\mu}{\nu} \deq \E_\mu (\mu/\nu)^2 - 1$; and the \emph{Wasserstein distance} $W_2^2(\mu,\nu) \deq \inf_{\gamma~\text{coupling of }\mu,\nu} \En_{(X,Y)\sim\gamma}\|X-Y\|^2$.
In addition, we consider the bounded Lipschitz metric which metrizes weak convergence:
\begin{align}
\label{def:BL-metric}
    \Dbl{\mu}{\nu} \deq \sup\{\E_\mu f - \E_\nu f : |f| \le 1,\,\|f\|_{\rm Lip} \le 1\}.
\end{align}
We define
\begin{align}\notag
    \pclip[\B]{x}\deq \max\crl{-B,\min\crl{B,x}}\,, \qquad \forall x\in\RR\,.
\end{align}
We use $\leqsim$ and $O(\cdot)$ to hide absolute constants, i.e., $f\leqsim g$ (and $f=O(g)$) if there is an absolute constant such that $f\leq Cg$.
The notation $\wt O(\cdot)$ hides logarithmic factors.

\section{Background on diffusion models}

Recall that Denoising Diffusion Probabilistic Models (DDPMs) are based on the following forward process that transforms a sample $X_0\sim \pdata$ to noise:
\begin{align}\label{eq:forward}
    X_0\sim \pdata\,, \quad 
    X_{k+1}\sim \normal{\alp_k X_k, \alp_k^2 \eta_k\Id}\,,~~k\in[K]\,.
\end{align}
Note that this is a Markov chain, and it is easy to see that $X_k\mid X_0\sim \normal{\alpbar_k X_0, \sigma_k^2\Id}$, where 
\begin{align}\notag
    \alpbar_k=\prod_{s=0}^{k-1} \alp_s\,, \qquad \sigma_k^2=\alp_{k-1}^2\,(\sigma_{k-1}^2+\eta_{k-1})\,.
\end{align}
Conversely, given the parameter sequence $(\alpbar_k, \sigma_k)_{k\in[K]}$, the corresponding $(\alpha_k, \eta_k)_{k\in[K]}$ is given by
\begin{align}\notag
    \alpha_k=\frac{\alpbar_{k+1}}{\alpbar_k}\,, \qquad
    \eta_k=\frac{\sigma_{k+1}^2}{\alpha_k^2}-\sigma_k^2
    =\frac{\sigma_{k+1}^2}{\alpbar_{k+1}^2}\,\alpbar_k^2-\sigma_k^2\,.
\end{align}
We say that the DDPM is \emph{variance-preserving} if $\alpbar_k^2+\sigma_k^2=1$, and hence $\alp_k^2=\frac{1-\sigma_{k+1}^2}{1-\sigma_k^2}$, $\sigma_{k+1}^2=\frac{1-\sigma_{k+1}^2}{1-\sigma_k^2}\,(\sigma_{k}^2+\eta_{k})$. The DDPM is \emph{variance-exploding} if $\alpbar_k\equiv 1$, and hence $\sigma_{k+1}^2=\sigma_{k}^2+\eta_{k}$.

We let $p_k$ be the probability density function of $X_k$, and let $\rho_k(\cdot\mid x')$ be the probability density function of the backward transition kernel $\PP(X_k=\cdot\mid X_{k+1}=x')$. Then, by Bayes rule, it holds that
\begin{align}\label{eq:backward_kernel}
    \rho_k(x\mid x')\propto_x p_k(x) \exp\prn[\Big]{-\frac{\nrm{x-\alp_k^{-1}x'}^2}{2\eta_k}}\,.
\end{align} 
In this work, we focus on the task of sampling from $p_1$ with extremely small values of $1-\alpha_0$ and $\sigma_1$. 
This corresponds to \emph{early stopping}; see \cref{sec:diffusion} for further discussion. %

\paragraph{Score function and estimates}
As above, we assume that we have access to approximate score functions $(\scf_k)_{k\in[K]}$ such that $\scf_k\approx \scfs_k\ldef \nabla \log p_k$ with controlled mean-squared error. Recall that by Tweedie's identity,
\begin{align}
    \scfs_k(x)=\nabla \log p_k(x)=\frac{1}{\sigma_k^2}\En[\alpbar_k X_0-X_k\mid X_k=x]\,,
\end{align}
where the conditional expectation is taken over $X_0\sim \pdata$, $X_k\sim \normal{\alpbar_k X_0, \sigma_k^2\Id}$.
\fcedit{We define $\Ds_k(x)\deq \En[X_0\mid X_k=x]$ to be the posterior mean, and let $\Dn_k(x)\deq \alpbar_k^{-1}(x+\sigma_k^2\scf_k(x))$ be the denoising function corresponding to the score function $\scf$.}

\begin{definition}[Score estimation error]\label{def:score-error}
Given the score estimates $(\scf_k)_{k\in [K]}$, we define
\begin{align}\notag
    \epssct[k]^2\ldef \En_{X_k\sim p_k}\nrm{\scf_k(X_k)-\scfs_k(X_k)}^2\,, \qquad k\in[K]\,.
\end{align}
\end{definition}

\paragraph{Road map}
To approximately generate a sample from $\pdata$, we learn approximate score functions $\scf_k\approx \nabla \log p_k$.
DDPM uses the approximation 
\begin{align}\label{eq:DDPM-approx}
    \rho_k(\cdot\mid x') \approx \normal{\alpha_k^{-1} x' + \eta_k \alpha_k \scf_{k+1}(x'), \eta_k' \Id}.
\end{align}
In our work, we develop algorithms for sampling from~\eqref{eq:backward_kernel} directly. In \cref{sec:gaussian_tilts}, we motivate our approach by assuming that $\scf_k=\nabla \log p_k$ is \emph{exact}, which has implications for log-concave sampling described in \cref{sec:log_concave}. In \cref{sec:diffusion}, we instantiate our methods on diffusion sampling and show that the convergence guarantees are \emph{robust} with respect to the error in the score function estimates.

\section{Key subroutine: Gaussian tilts}\label{sec:gaussian_tilts}

In this section, we focus on the problem of sampling from a Gaussian tilt of the form
\begin{align}\label{eq:RGO}
    \rgo(x)\propto \exp\prn[\Big]{-f(x)-\frac{\nrm{x-\xz}^2}{2\eta}}\,,
\end{align}
where we assume access to first-order queries for $f$. For our eventual application to diffusion models, we will choose $f=\scedit{-}\log p_t$ in view of \cref{eq:backward_kernel} and approximate $\scedit{-}\nabla f\approx \scf_t$. 

\subsection{First-order rejection sampling (FORS)}\label{sec:motivation}

We motivate our approach by considering the simple problem of sampling from a density $p\propto e^{-f}$, where $f : [0,1]\to\R$, $f(0) = 0$, and $-1 \le f' \le 1$.
Our goal is to develop a \emph{high-accuracy sampler}---a sampler whose sampled distribution has $\delta$ error in total variation distance to $p$---in $\polylog(1/\delta)$ steps, using only queries to $f'$.

Consider performing rejection sampling with the base measure $\unif([0,1])$. To do so, we must generate randomness $b\sim \Ber(ce^{-f(x)})$ for any given $x\in[0,1]$. However, we only have access to $f'$. The identity $f(x)=\int_{0}^x f'(y)\,dy$, written as
\begin{align}
\label{eq:1d_representation}
    f(x)=\En_{y\sim \unif([0,x])}[xf'(y)]\,,
\end{align}
suggests an unbiased estimate of $f(x)$. Is it possible to sample from $\Ber(ce^{-f(x)})$ if we have access to an unbiased estimate of $f(x)$?
A more general version of this idea is known as the ``Bernoulli factory'' problem \citep{keane1994bernoulli,nacu2005fast}. 
It can be stated as the following abstract task:
\begin{center}
    \textbf{Task:} Given i.i.d.\ random variables $W_1,W_2,W_3,\dotsc$ in $[-1,1]$, generate a sample $b\sim \Ber(ce^{\En W_1})$.
\end{center}
To solve this, we write the Taylor series as
\begin{align}\notag
    e^{\En W_1}=e^{-1}\cdot e^{\En[1+W_1]}
    =\sum_{j\geq 0} \frac{e^{-1}}{j!}\, \bigl(\En[1+W_1]\bigr)^j\,.
\end{align}
Suppose that $J\sim \Poi(2)$ is independent of the i.i.d. sequence $W_1,W_2,W_3,\dotsc$. Then we notice that
\begin{align}\notag
    e^{\En W_1}=e\En\brk[\Big]{\prod_{j=1}^J \bigl(\frac{1+W_j}{2}\bigr)}\,,
\end{align}
and simply set $b\sim \Ber\prn[\big]{\prod_{j=1}^J \bigl(\frac{1+W_j}{2}\bigr)}$.
Indeed, $\Pr(b=1) = \E\prod_{j=1}^J \bigl(\frac{1+W_j}{2}\bigr) = e^{-1+\En W_1}$.

In summary, we can generate a sample $b\sim \Ber\prn{ce^{-f(x)}}$ \emph{without} having to compute or accurately approximate $f(x)$ (this requires the integration step $f(x)=\int_{0}^x f'(y)\,dy$ and can be expensive). Instead, it is sufficient to have access to (a random number of) unbiased estimates of $f(x)$. The latter can be achieved with derivative information, in view of \eqref{eq:1d_representation}.

We now generalize this setup via the following meta-algorithm, called \emph{first-order rejection sampling} (\emph{FORS}). Given a proposal distribution $q$ and a tilt function $w$, the goal of \cref{alg:fors} is to produce a sample from $\widehat p(x)\propto q(x)\, e^{w(x)}$ without having access to the \emph{value} $w(x)$. Instead, for each $x\in\RR^d$, we can generate i.i.d.\ samples $W_1,W_2, W_3\dotsc$ such that $\En[W_1\mid x]=w(x)$. 
Let $\mathcal W_x$ denote the conditional distribution of $W_1$ given $x$.

\begin{algorithm}
\caption{First-order rejection sampling (FORS)}\label{alg:fors}
\begin{algorithmic}
\STATE \textbf{Input:} Parameter $B > 0$, proposal distribution $q$ over $\R^d$, estimator distributions $(\cW_x)_{x\in\R^d}$ supported on $[-B,B]$
\FOR{$i=1,2,3,\dotsc$}
\STATE Sample $x\sim q$.
\STATE Sample $J\sim \Poi(2\B)$.
\STATE Sample i.i.d. $W_1,\dotsc,W_J \sim \mathcal W_x$.
\STATE Output $x$ with probability $\prod_{j=1}^J \frac{B+W_j}{2B}$.
\ENDFOR
\end{algorithmic}
\end{algorithm}

\begin{theorem}[FORS guarantee]\label{thm:fors}
    \cref{alg:fors} outputs a random point with density $\widehat p(x)\propto q(x)\, e^{\En[W_1\mid x]}$.
    The number of sampled $W_j$'s is bounded, with probability at least $1-\delta$, by $3Be^{2B}\log(2/\delta)$.

    Moreover, if~\cref{alg:fors} is called $T$ times, then with probability at least $1-\delta$, the total number of sampled $W_j$'s is $O(Be^{2B}\,(T+\log(1/\delta)))$.
\end{theorem}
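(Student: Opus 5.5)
The argument has three parts: compute the acceptance probability of a single round given $x$, read off the output density from independence across rounds, and then bound the total work using a geometric number of rounds, each with a Poisson number of draws.

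\emph{Acceptance probability.} Fix $x$ and write $w(x)=\En[W_1\mid x]$. Since $W_j\in[-B,B]$, each factor $\frac{B+W_j}{2B}$ lies in $[0,1]$, so the acceptance probability is well defined. Conditionally on $x$ and $J$, the $W_j$ are i.i.d. Hence the conditional expectation of the product is $\bigl(\frac{B+w(x)}{2B}\bigr)^J$. Averaging over $J\sim\Poi(2B)$ and using the Poisson generating function $\En[s^J]=e^{2B(s-1)}$ gives
\begin{align}\notag
  a(x)\deq \Pr(\text{accept}\mid x)=\exp\Bigl(2B\bigl(\tfrac{B+w(x)}{2B}-1\bigr)\Bigr)=e^{-B+w(x)}\,.
\end{align}

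\emph{Output density.} The rounds are i.i.d., so the output has the law of $x\sim q$ conditioned on acceptance. Its density is therefore $q(x)a(x)/Z$ with $Z=\int q\,a$, which is proportional to $q(x)e^{w(x)}$. Because $w\geq -B$, we have $a(x)\ge e^{-2B}$, and so $Z\ge e^{-2B}$. It follows that the number of rounds $N$ is geometric with success probability at least $e^{-2B}$, and the algorithm terminates almost surely.

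\emph{Work bound.} The total number of draws is $S=\sum_{i=1}^N J_i$, where the $J_i\sim\Poi(2B)$ are i.i.d. (the acceptance event depends on $J_i$, but this does not matter for the upper bound). I would first bound the number of rounds: $\Pr(N>n)\le(1-e^{-2B})^n\le e^{-ne^{-2B}}$, so $N\le n\deq e^{2B}\log(2/\delta)$ with probability at least $1-\delta/2$. On that event, $S\le\sum_{i=1}^{n}J_i$, and $\sum_{i=1}^n J_i$ is distributed as $\Poi(2Bn)$ because we sum over a fixed index set regardless of stopping. A Poisson Chernoff bound, $\Pr(\Poi(\lambda)\ge \lambda+t)\le \exp(-t^2/(2(\lambda+t)))$, with $\lambda=2Bn$ and $t=Bn$, gives a failure probability of at most $e^{-Bn/6}$. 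This is at most $\delta/2$ once $Bn\gtrsim\log(2/\delta)$, which holds when $B$ is bounded below by a constant since $e^{2B}\ge1$. The result is $S\le 3Bn=3Be^{2B}\log(2/\delta)$.

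The main obstacle is getting exactly the constant $3$ for small $B$. In that regime I would sharpen the argument. The cleaner route is to observe that $S$ is dominated by a compound sum whose moment generating function is explicit. For $\theta>0$, the quantity $\En[e^{\theta J}\mathbb{1}\{\text{reject}\}]$ is at most $\En e^{\theta J}=e^{2B(e^\theta-1)}$. A geometric-sum MGF computation then bounds $\En e^{\theta S}$, and choosing $\theta$ appropriately gives the tail bound. If the constant does not come out exactly, I would accept a bound of $O(Be^{2B}\log(2/\delta))$, which is the content that matters.

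\emph{$T$ independent calls.} For $T$ calls, the total number of rounds is a sum of $T$ independent geometric variables with success probability at least $e^{-2B}$. A standard negative-binomial tail bound gives at most $O(e^{2B}(T+\log(1/\delta)))$ rounds with probability at least $1-\delta/2$. On that event, the same Poisson domination gives at most $O(Be^{2B}(T+\log(1/\delta)))$ total draws with probability at least $1-\delta/2$.
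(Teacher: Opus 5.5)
Your proof is correct and is essentially the paper's: you get the acceptance probability $e^{w(x)-B}$ from the Poisson generating function and the output law as $q$ conditioned on acceptance, then take a union bound of the geometric tail $\Pr(I>i_0)\le(1-e^{-2B})^{i_0}$ with a Poisson Chernoff bound on $\sum_{i\le i_0}J_i$ (for $T$ calls the paper instead notes $\En[e^{N/(CBe^{2B})}]\le 2$ for a single call and multiplies MGFs, which is equivalent to your negative-binomial-plus-Poisson domination). Do not try to sharpen the small-$B$ regime, though: the bound is false there (take $\delta=B\to 0$, so a first draw happens with probability $1-e^{-2B}>\delta$ while $CBe^{2B}\log(2/\delta)<1$), so needing $B$ bounded below is not a weakness of your argument---the paper's own Poisson step tacitly requires it too, consistent with its use of $B=\Theta(1)$ throughout.
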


\fcedit{We remark that variants of this idea have been applied to \emph{exactly} simulate SDEs~\citep[e.g.,][]{Wag1988SDE,beskos2005exact,beskos2006retrospective, Pap11Diffusion}.}

\subsection{Sampling from Gaussian tilts with an exact oracle}

Now, we return to the problem of sampling from a general Gaussian tilt:
\begin{align*}
    \rgo(x)\propto \exp\prn[\Big]{-f(x)-\frac{\nrm{x-\xz}^2}{2\eta}}\,.
\end{align*}
\fcedit{From \cref{sec:motivation}, it suffices to construct a proposal distribution $q$ and a tilt function $w$ such that (a) $\nu(x)\propto q(x)\cdot e^{w(x)}$, and (b) a bounded estimator for $w(x)$ can be constructed from $\nabla f$.

The condition (a) is equivalent to
\begin{align*}
    w(x)=-f(x)-\frac{\nrm{x-\xz}^2}{2\eta}- \log q(x)+\const.
\end{align*}
To ensure both (a) and (b), the natural idea is to choose $q$ as a Gaussian approximation to $\nu$ obtained via a first-order expansion of $f$.
Concretely, by Taylor's expansion, for a fixed  $\xp\in\RR^d$, we have
\begin{align*}
    f(x)\approx f(\xp)+\tri{x-\xp, \nabla f(\xp)}\,,
\end{align*}
and hence it is natural to choose $q=\normal{\xz-\eta \nabla f(\xp),\eta\Id}$ so that
\begin{align*}
    q(x) \propto \exp\Bigl(-\tri{x-\xp, \nabla f(\xp)} - \frac{1}{2\eta}\norm{x-x_0}^2\Bigr)\,.
\end{align*}
}
Then, we can express
\begin{align*}
    \icml{&~}\log \rgo(x)-\log q(x)-\const \icml{\\}
    =&~\tri{x-\xp, \nabla f(\xp)}-f(x) + f(x_+) \\
    =&~\int_{0}^1 \tri{x-\xp, \nabla f(\xp)-\nabla f(\lr x+(1-\lr)\xp)}\,d\lr\,.
\end{align*}
In particular, we can set
\begin{align*}
    W_{r,x}\ldef \tri{x-\xp, \nabla f(\xp)-\nabla f(\lr x+(1-\lr)\xp)}\,,
\end{align*}
so that
\begin{align*}
    \rgo(x)\propto q(x)\exp\prn*{ \En_{r} W_{r,x}}\,,
\end{align*}
where the expectation $\En_r[\cdot]$ is taken over $r\sim \unif([0,1])$. 

Further, assuming that $\nabla f$ is $\beta$-Lipschitz, we can bound $\abs{W_{r,x}}\leq \beta\,\nrm{x-\xp}^2$. Therefore, as long as $\xp\approx \xz-\eta\nabla f(\xp)$ and $\beta d\eta\ll 1$, we can guarantee that $\sup_{r}{\abs{W_{r,x}}}\leq 1$ with high probability over $x\sim q$. This implies the following distribution is a good approximation of $\rgo$:
\begin{align*}
    \wh{\rgo}(x)\propto q(x)\exp\prn{\En_{r} \hW_{r,x}}\,,
\end{align*}
where $\hW_{r,x}=\pclip{W_{r,x}}$ and $B=\Theta(1)$ is a tuneable parameter.

\paragraph{General path integral}
We can generalize the above pipeline with any \emph{path integral} and demonstrate (a) that in the gradient Lipschitz case, we in fact only need to ensure $\beta\sqrt{d} \eta\ll 1$, and (b) the Lipschitz continuity of $\nabla f$ can be weakened to H\"older continuity for any exponent $s \in [0,1]$.

Fix a distribution $P$ (to be determined later) over $\RR^d$ and a \emph{path function} $\gam{\lr}\ldef \gamma(x;z,r)$ such that $\gam{1}=x$ and $\gam{0}=\wb{\gamma}(z)$ is independent of $x$. Then we can express any smooth function $h:\RR^d\to \RR$ as
\begin{align}\label{eq:path-integral}
\begin{aligned}
    \icml{&~}h(x)-\En_{z\sim P}[h(\wb{\gamma}(z))] \icml{\\}
    =&~\int_{0}^1 \En_{z\sim P}\tri{\gamp{\lr}, \nabla h(\gam{\lr})}\,d\lr \\
    =&~\En_{z\sim P,\,r\sim \unif([0,1])}\tri{\gamp{\lr}, \nabla h(\gam{\lr})}\,,
\end{aligned}
\end{align}
where $\gamp{\lr}=\frac{d}{d\lr}\gam{\lr}$ is the path derivative with respect to $\lr$. 
This inspires us to consider
\begin{align}\label{eq:Wrzx}
    W_{r,z,x}\ldef \tri{\gamp{\lr}, \nabla f(\xp)- \nabla f(\gam{\lr})}\,,
\end{align}
so that
\begin{align*}
    \rgo(x)\propto q(x)\exp\prn*{ \En_{r,z} W_{r,z,x}}\,,
\end{align*}
where the expectation $\En_{r,z}[\cdot]$ is taken over $r\sim \unif([0,1])$ and $z\sim P$. 
Again, we will truncate $W_{r,z,x}$ to ensure that the estimator lies in $[-B,B]$.

For better dimension dependence, we consider the following path function ($\xhat=\xz-\eta\nabla f(x_+)$): 
\begin{align}\label{eq:def-path}
\begin{aligned}
    &~\gam{\lr}=\alr x + (1-\alr) \xhat + \blr z\,, \\
    &~\alr=\sin(\pi\lr/2)\,, ~~\blr=\cos(\pi\lr/2)\,,
\end{aligned}
\end{align}
so that $\gamp{\lr}=\alrp (x -\xhat) + \blrp z$. 

\begin{assumption}\label{ass:holder}
    There exists $s \in [0,1]$ and $\beta_s \ge 0$ such that $\norm{\nabla f(x) -\nabla f(y)} \le \beta_s\norm{x-y}^s$ for all $x,y\in\R^d$.
\end{assumption}

\begin{theorem}\label{thm:gaussian_tilt}
    Suppose that~\cref{ass:holder} holds, $B=\Theta(1)$, and $\nrm{\xz-\eta\nabla f(x_+) - x_+} \leq (d\eta)^{1/2}$.
    
    Consider instantiating~\cref{alg:fors} with the choices $q = \normal{\xz-\eta \nabla f(\xp),\eta\Id}$, and $\mathcal W_x$ the law of $\pclip{W_{r,z,x}}$, where $W_{r,z,x}$ is defined in~\eqref{eq:Wrzx}-\eqref{eq:def-path} and $z\sim \normal{0,\eta\Id}$, $r\sim \unif([0,1])$. 
    Then, the law $\wh\nu$ of~\cref{alg:fors} satisfies $\Dchis{\nu}{\wh\nu} \le \delta^2$, provided that
    \begin{align*}
        \eta^{-1} \gg \prn[\Big]{ \beta_s^2d^s\log(1/\delta)+\frac{s\beta_s^2}{d^{1-s}}\log^2(1/\delta) }^{1/(1+s)}.
    \end{align*}
\end{theorem}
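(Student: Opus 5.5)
By \cref{thm:fors}, the output law is $\wh\nu(x)\propto q(x)\exp(\En_{r,z}\hW_{r,z,x})$ with $\hW=\pclip{W}$. The path-integral identity \eqref{eq:path-integral}, applied to $h=f$, gives $\nu(x)\propto q(x)\exp(\En_{r,z}W_{r,z,x})$. So the plan is to compare two tilts of the same Gaussian $q$ that differ only by clipping. Write $w(x)=\En W$ and $\hat w(x)=\En\hW$, and set $\Delta(x)=w(x)-\hat w(x)$. Then
\[
\nu/\wh\nu=e^{\Delta}\,Z_{\hat w}/Z_w,
\]
and the chi-squared divergence is controlled by the size of $\Delta$ under $q$. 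Concretely,
\[
1+\Dchis{\nu}{\wh\nu}=\En_\nu[e^{\Delta}]\,\En_\nu[e^{-\Delta}] .
\]
It therefore suffices to show that $\En_q[e^{w}(e^{\pm\Delta}-1)]$ is small relative to $\En_q e^{w}$. That in turn follows from bounds of the form $\En_q e^{c|w|}\lesssim1$ and $\En_q e^{c|\Delta|}-1\lesssim\delta^2$ for a constant $c$, via Cauchy--Schwarz. Since $|\Delta|\le\En_{r,z}|W|\indic\{|W|>B\}$, everything reduces to tail bounds on $W_{r,z,x}$ under $x\sim q$, $z\sim\normal{0,\eta\Id}$, $r\sim\unif([0,1])$.

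\textbf{Tail bound for $W$.} Under $x\sim q$ we have $x-\xhat\sim\normal{0,\eta\Id}$, independent of $z$. Hence $(\gam{\lr},\gamp{\lr})$ are jointly Gaussian: $\gam{\lr}-\xhat=\alr(x-\xhat)+\blr z\sim\normal{0,\eta\Id}$, and $\gamp{\lr}=\alrp(x-\xhat)+\blrp z$ is an independent $\normal{0,(\pi/2)^2\eta\Id}$ vector. The $\sin/\cos$ path in \eqref{eq:def-path} was chosen exactly so that this orthogonality holds. Writing $g=\gamp{\lr}$ and $u=\gam{\lr}$, we get
\[
W=\tri{g,\nabla f(\xp)-\nabla f(u)}.
\]
Conditioned on $u$, $W$ is Gaussian with variance $\lesssim\eta\,\nrm{\nabla f(\xp)-\nabla f(u)}^2\le\eta\beta_s^2\nrm{\xp-u}^{2s}$. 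By the hypothesis $\nrm{\xhat-\xp}\le(d\eta)^{1/2}$, $\nrm{u-\xp}\le(d\eta)^{1/2}+\nrm{\normal{0,\eta\Id}}$, which concentrates at scale $(d\eta)^{1/2}$ with sub-Gaussian fluctuations $\sqrt{\eta t}$. Hence, with probability $1-e^{-t}$,
\[
|W|\lesssim \sqrt{\eta t}\,\beta_s\bigl((d\eta)^{s/2}+(\eta t)^{s/2}\bigr).
\]
This is $\ll1$ (so below $B$) once $\eta^{1+s}\beta_s^2d^s t\ll1$ and $\eta^{1+s}\beta_s^2t^{1+s}\ll1$ with $t\asymp\log(1/\delta)$. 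The first condition is exactly the first term of the hypothesis. For the second, the form $\frac{s\beta_s^2}{d^{1-s}}\log^2$ should come from optimizing the tail: once $t\gtrsim d$, the $(\eta t)^{s/2}$ term dominates the $(d\eta)^{s/2}$ term, and $t^{1+s}\le t^2d^{s-1}$ for $t\ge d$. For $s=0$ the second term vanishes because the gradient difference is then bounded outright.

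\textbf{Assembling.} From the tail bound, $\En_q e^{c|w|}\le\En e^{c|W|}=O(1)$ by Jensen. For $\Delta$, write $\En|W|\indic\{|W|>B\}$, which is at most $\delta^2$-small in exponential moments, because $\Pr(|W|>B)\le\delta^{C}$ under the condition with a sufficiently large implied constant. Uniformly integrable exponential moments then give $\Dchis{\nu}{\wh\nu}\le\delta^2$.

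\textbf{Main obstacle.} The delicate step is making the Gaussian tail of $W$ hold with the right exponent and converting it into a bound on $e^{|\Delta|}$ rather than just on $\Delta$. This requires a two-regime analysis of $t$ (below and above $d$) to obtain the precise power $1/(1+s)$ and the $\log^2$ term. I would first bound $\En e^{\lambda W^2}$ by conditioning on $u$, and only then integrate over $u$ using Gaussian concentration of $\nrm{u-\xp}$.
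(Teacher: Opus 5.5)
Your proof is correct in outline. The tilt representation and the decoupling of $\gamma$ and $\dot\gamma$ are the same as in the paper; the assembly is different.

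\textbf{The paper's route.} It works entirely through the clipped law. \cref{lem:fors_clip} (which is \cref{lem:Renyi-to-diff} combined with $d\wh\nu/dq\le e^{2B}$ and convexity) bounds $\Dsys[\ell]{\nu}{\wh\nu}$ by $e^{2B}\En_{x\sim q}\En_{r,z}[e^{2\ell(\abs{W}-B)_+}-1]$. This involves only the clipping excess, never the unclipped tilt $e^{w}$ or its normalizer. One Chernoff step, $\En[e^{2\ell(\abs{W}-B)_+}-1]\le e^{-\lambda B}\En e^{\lambda\abs{W}}$, then finishes, with $\lambda$ as large as the constraints allow (at least of order $\log(1/\delta)$ under the hypothesis).

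The needed moment bound (Claim 1) is an MGF computation: condition on $\gamma$, apply \cref{lem:Gaussian-vMGF} in $\dot\gamma$, then integrate $\exp(c\lambda^2\eta\beta_s^2\nrm{\gamma-x_+}^{2s})$ via \cref{lem:Gaussian-s}. The quadratic term $d^s\eta^{1+s}\lambda^2\beta_s^2$ produces the $\beta_s^2d^s\log(1/\delta)$ part of the hypothesis. The admissibility constraint $\lambda^2\lesssim d^{1-s}/(s\beta_s^2\eta^{1+s})$ of \cref{lem:Gaussian-s} produces exactly the $\frac{s\beta_s^2}{d^{1-s}}\log^2(1/\delta)$ part.

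\textbf{Your route.} You use:
\begin{itemize}
\item the identity $1+\Dchis{\nu}{\wh\nu}=\En_\nu e^{\Delta}\,\En_\nu e^{-\Delta}$;
\item a lower bound on $\En_q e^{w}$, and Cauchy--Schwarz against $\En_q e^{2\abs{w}}$;
\item the estimate $\En[(e^{2\abs{W}}-1)\indic\{\abs{W}>B\}]\le(\En e^{4\abs{W}})^{1/2}\Pr(\abs{W}>B)^{1/2}$.
\end{itemize}
This needs only an $O(1)$-order exponential moment of $\abs{W}$ plus a $\delta^{C}$ tail probability, and it is more elementary. The costs are:
\begin{itemize}
\item you must control the unclipped tilt under $q$;
\item the square roots lose powers of $\delta$, so you need $\En_q e^{2\abs{\Delta}}-1\lesssim\delta^4$, not $\delta^2$, to conclude $\chi^2\lesssim\delta^2$ (harmless, only constants change);
\item you get $\chi^2$ in one direction only, whereas \cref{thm:gaussian_tilt_full} gives R\'enyi divergences of every order in both directions.
\end{itemize}

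\textbf{Two points to fix.}

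First, the factor $s$. Your union bound yields the condition $\eta^{1+s}\beta_s^2\log^{1+s}(1/\delta)\ll1$. Your justification via $t^{1+s}\le t^2d^{s-1}$ drops the factor $s$ that the hypothesis carries when $0<s<1$. The gap closes as follows: with $x=\log(1/\delta)/d$, one has $x^s\le s^{-s}\max\{1,sx\}\le e^{1/e}\max\{1,sx\}$. Hence $\log^{1+s}(1/\delta)\le e^{1/e}\max\{d^s\log(1/\delta),\,s\,d^{s-1}\log^2(1/\delta)\}$, so the stated hypothesis does imply your condition.

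Second, do not try to bound $\En e^{\lambda W^2}$ as your last paragraph proposes. For $s>0$ it is in general infinite. Already for $f=\beta\nrm{x}^2/2$, $W=\beta\tri{g,x_+-u}$ is a bilinear form in independent Gaussians. Then $\En[e^{\lambda W^2}\mid u]=\infty$ whenever $2\lambda(\pi/2)^2\eta\beta^2\nrm{u-x_+}^2\ge1$, which happens with positive probability. So $W$ is sub-exponential, not sub-Gaussian. Instead, use $\En[e^{\lambda\abs{W}}\mid u]\le2\exp\bigl(\lambda^2(\pi/2)^2\eta\nrm{\nabla f(x_+)-\nabla f(u)}^2/2\bigr)$ and integrate over $u$, as in Claim 1. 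Alternatively, integrate your pointwise tail bound over all $t$, not just $t\asymp\log(1/\delta)$. Either way you get the bound $\En e^{c\abs{W}}=O(1)$ that your assembly step relies on.
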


Note that $x_0 - \eta\nabla f(x_+) - x_+ = 0$ when $x_+ = \text{prox}_{\eta f}(x_0)$, that is, the requirement of~\cref{thm:gaussian_tilt} is that we can approximately take a proximal step on $f$ from $x_0$.

\cref{thm:gaussian_tilt} interpolates between the Lipschitz case $s=0$, which requires $\eta^{-1} \gg \beta_0^2 \log(1/\delta)$, and the smooth case $s=1$, which requires $\eta^{-1} \gg \beta_1\, d^{1/2} \log(1/\delta)$.
This recovers the result of~\citet{FanYuaChe23ImprovedProx}, except that we only use first-order queries; we spell out the implications for log-concave sampling in \cref{sec:log_concave}.

\section{Diffusion sampling}\label{sec:diffusion}

A key insight is that sampling from the backward transition kernel~\eqref{eq:backward_kernel}, given access to a score estimate $\scf_k$, is a special case of the setup of Section~\ref{sec:gaussian_tilts}, except that we only have approximately correct first-order evaluations.
This leads to the template~\cref{alg:DM} for high-accuracy diffusion sampling. Throughout this section, we adopt the following proposal distribution $\rhobar$:
\begin{align}\label{def:DDPM-proposal}
    \rhobar_k(\cdot\mid{}\xr{k+1})=\normal{ \alp_k^{-1} \xr{k+1} + \alpha_k \eta_k \scf_{k+1}(\xr{k+1}), \etabar_k \Id },
\end{align}
where $\etabar_k$ is given by $1/\etabar_k=1/\eta_k+1/\sigma_k^2$. This corresponds to appropriately applying the \emph{exponential integrator} to the backward SDE. The reason we choose \cref{def:DDPM-proposal} as the proposal is detailed in \cref{thm:DDPM-KL-exact}, where we show that this choice is almost the minimizer of the KL divergence to the true transition distribution $\rho_k$.

In the following, we show how to choose the ``corrector'' distributions $\mathcal W_x^k$ for $k\in [K]$.

\begin{algorithm}[H]
  \caption{Backward diffusion sampling}\label{alg:DM}
\begin{algorithmic}
\STATE \textbf{Input:} Score estimates $\crl{\scf_k}_{k\in [K]}$, initial distribution $\phat_K$, parameters $(\alp_k,\eta_k)_{k\in [K]}$
\STATE Sample $\xr{K}\sim \phat_K$.
\FOR{$k=K-1,\dotsc,1$}
\STATE Sample $\xr{k}\leftarrow \AlgWSC(B, q_k, (\mathcal W^k_x)_{x\in\R^d})$.
\ENDFOR
\STATE \textbf{Output:} $\xr{1}$
\end{algorithmic}
\end{algorithm}

We state all our guarantees for sampling from the early stopped distribution $p_1$, the law of $X_1 \sim \normal{\alpha_0 X_0, \sigma_{\scedit{0}}^2 \Id}$ with $X_0 \sim \pdata$; see the discussion after the theorem. 

\subsection{Intrinsic dimension}\label{ssec:intrinsic}

Our result will be based on the following notion of \emph{intrinsic dimension} of the data distribution $\pdata$~\citep{li2024adapting}.

\begin{definition}[Intrinsic dimension]\label{def:intrinsic}
For any distribution $p$ and $r\geq 0$, let $N(p;r)$ be the $r$-covering number of $\supp(p)$ under Euclidean norm $\nrm{\cdot}$. Define the \emph{intrinsic dimension} of $p$ as
\begin{align*}
    \dim_{\sigma^2}(p)\deq 1\vee \inf_{r\geq 0} \prn[\Big]{ \log N(p;r)+\frac{r^2}{\sigma^2} }\wedge d.
\end{align*}
We denote $\dstar\deq \dim_{\sigma_0^2/\alpha_0^2}(\pdata)$ to be the intrinsic dimension of the data distribution (note that $\sigma_0^2/\alpha_0^2$ is the ``real'' variance of $X_1\sim \normal{\alpha_0X_0,\sigma_0^2\Id}$), and the intrinsic dimension $\dstar$ is no larger than the embedding dimension $d$. 
\end{definition}

\begin{example}[Low-dimensional manifold]
Suppose that the data distribution $\pdata$ is supported on a compact $k$-dimensional manifold $\cX\subset \RR^d$. Then it holds that $\log N(\pdata;r)\leqsim k\log(R/r)$ (where $R$ is the diameter of $\cX$) and hence $\dstar=\wt{O}(k)$.
\end{example}

We note that $\dstar$ also captures various ``dimension-free'' settings.\footnote{Formally, in the setting of \cite{li2025dimension,gatmiry2026high}, the data distribution is assumed to be of the form $p_\star*\normal{0,\sigma_\star^2 \Id}$, where $p_\star$ is either supported on $N$ points~\citep{li2025dimension} or the ball $B(R)$~\citep{gatmiry2026high}. Our results encompass such settings as we can regard $\pdata=p_\star$, $\alp_0=1$, and $\sigma_0=\sigma_\star$, and then \cref{thm:DM-intrinsic} provides guarantees for $p_1=p_\star*\normal{0,\sigma_\star^2}$.} For example, when the support of $\pdata$ has at most $N$ elements, then $\dstar\leq \log N$, and this is exactly the setting considered in \cite{li2025dimension}. Further, when the support of $\pdata$ is contained in a ball of radius $R$, it is clear that $\dstar\leq \frac{\alpha_0^2R^2}{\sigma_0^2}$, and this captures the setting of \citet{gatmiry2026high}.

\subsection{Algorithm}

Consider instantiating the subroutine $\AlgWSC$ in \cref{alg:DM} with the following choices: $\Xbar_k=\alpha_k^{-1} X_{k+1}+\alpha_k \eta_k \scf_{k+1}(X_{k+1})$, $q_k = \normal{\Xbar_k , \etabar_k I}$, and $\mc W_x^k$ is the law of
\begin{align*}
    \widehat W_{r,z,\xhat,x} \deq \mathsf{Clip}_B\bigl(\lambda_k \langle \dot \gamma_{z,r,\xhat}(x),&~\Dn_k(\gamma_{z,r,\xhat}(x)) - \Dn_{k+1}(X_{k+1})\rangle\bigr)\,,
\end{align*}
where $\lambda_k=\frac{\alpbar_k}{\sigma_k^2}$, $z\sim \normal{0,\frac12\etabar_k I}$, $\xhat \sim \normal{\Xbar_k,\frac12\etabar_k I}$, $r\sim\Unif([0,1])$, 
and $\gamma_{z,r,\Xbar_k}$ is the path function given by
\begin{align}
    &\gamma_{z,r,\xhat}(x) \deq\alr x + (1-\alr) \xhat + \blr z\,, \label{eq:path_fn_new_1} \\
    &\alr\deq \frac13\,\prn[\Big]{1+2\cos\prn[\big]{\frac{2\pi}{3}\,(1-r)}}\,, ~~\blr\deq \frac{2}{\sqrt{3}}\sin\prn[\big]{\frac{2\pi}{3}\,(1-r)}\,.\label{eq:path_fn_new_2}
\end{align}
The crucial property is that $a_0=b_1=0$ and $a_1=b_0=1$,
\begin{align}
    \alr^2+\frac{1}{2}\,(1-\alr)^2+\frac12\,\blr^2\equiv 1\,, \qquad \forall r\in[0,1]\,.
\end{align}

\begin{theorem}\label{thm:DM-intrinsic}
Suppose that $\delta\in(0,\frac12]$, $\B=\Theta(1)$, and for any $k\in [K]$, it holds that $\alpha_k^2\eta_k\scedit{\ll} \eta_{k+1}$ and
\begin{align}\label{eq:DM-intrinsic-eta}
    \frac{\sigma_k^2}{\eta_k} \gg \dstar\log(1/\delta)+\log^2(1/\delta)\,.
\end{align}

Consider instantiating the subroutine $\AlgWSC$ in \cref{alg:DM} as above.
Let $\phat_1$ be the law of $\xr{1}$ generated by \cref{alg:DM}. Then
\begin{align*}
    \Dkl{p_1}{\phat_1}\leqsim&~ \kl{p_K}{\phat_K}+K\delta 
    +\sum_{k=1}^{K} \eta_k \epssct^2\,,
\end{align*}
\end{theorem}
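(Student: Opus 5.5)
The proof telescopes the KL divergence along the backward chain via the chain rule, and then bounds each one-step error by combining the FORS guarantee (\cref{thm:fors}) with a perturbation argument for the score error.

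\textbf{Chain rule.} Let $P$ be the joint law of the true backward process $(X_K,\dots,X_1)$. Let $\hat P$ be the joint law generated by \cref{alg:DM}, so that $X_k\mid X_{k+1}\sim\hat\rho_k(\cdot\mid X_{k+1})$, the FORS output law. By data processing and the chain rule,
\begin{align*}
\Dkl{p_1}{\phat_1}\le \Dkl{p_K}{\phat_K}+\sum_{k=1}^{K-1}\En_{X_{k+1}\sim p_{k+1}}\Dkl{\rho_k(\cdot\mid X_{k+1})}{\hat\rho_k(\cdot\mid X_{k+1})}.
\end{align*}
It then suffices to show that each summand is at most $\delta+\eta_k\epssct^2+\eta_{k+1}\epssct[k+1]^2$, up to constants. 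Reindexing gives the stated sum.

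\textbf{Exact representation.} By \cref{thm:fors}, $\hat\rho_k(x)\propto q_k(x)e^{\hat w(x)}$, where $\hat w(x)=\En\,\widehat W_{r,z,\xhat,x}$. Define $W^\star$ as the unclipped estimator built from the true denoisers $\Ds_k,\Ds_{k+1}$, and set $w^\star=\En W^\star$. I will show $\rho_k\propto q^\star_k e^{w^\star}$, where $q^\star_k$ is the proposal built with the true score. The argument follows \cref{eq:path-integral}:
\begin{itemize}
\item Write $\log p_k(x)$ as $\log p_k$ integrated along $\gamma$, starting from the random endpoint $\gamma(0)=\xhat+z$.
\item Use $\nabla\log p_k=\lambda_k\Ds_k-x/\sigma_k^2$. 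The identity $\alr^2+\tfrac12(1-\alr)^2+\tfrac12\blr^2\equiv1$ makes $\gamma_{z,r,\xhat}(x)$ have the same Gaussian covariance structure as the proposal $q_k$ for every $r$. The quadratic pieces then integrate out exactly, leaving only the $\lambda_k\tri{\dot\gamma,\Ds_k(\gamma)-\Ds_{k+1}(X_{k+1})}$ term.
\end{itemize}

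\textbf{Per-step error.} I split the one-step KL into three parts.
\begin{itemize}
\item \emph{Clipping error.} Compare $\rho_k$ with $\tilde\rho\propto q^\star e^{\En\mathsf{Clip}_B W^\star}$. Following the proof of \cref{thm:gaussian_tilt}, the chi-squared error is controlled by $\Pr(|W^\star|>B)$ and the tails of $W^\star$ under $q$. The term $\tri{\dot\gamma,\Ds_k(\gamma)-\Ds_{k+1}}$ is governed by posterior fluctuations. I would use that $\Ds_k(y)$ lies in the convex hull of $\supp\pdata$, and cover the support at scale $r$ as in \cref{def:intrinsic}. A union bound over the covering then gives concentration of $\lambda_k\tri{\dot\gamma,\cdot}$ at the level $\sqrt{(\eta_k/\sigma_k^2)(\dstar+\log(1/\delta))}$ plus a sub-exponential $\log$ term. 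Condition \eqref{eq:DM-intrinsic-eta} makes this $\ll1$, so the contribution is at most $\delta$.
\item \emph{Proposal-mean error.} The proposal uses $\scf_{k+1}$ instead of $\scfs_{k+1}$. This is a Gaussian mean shift, which costs $\lesssim\alpha_k^2\eta_k^2\nrm{\scf_{k+1}-\scfs_{k+1}}^2/\etabar_k$. Since $\alpha_k^2\eta_k\ll\eta_{k+1}$, this is at most $\eta_{k+1}\epssct[k+1]^2$.
\item \emph{Estimator error.} Clipping is $1$-Lipschitz, so $|\hat w-\tilde w|\le\lambda_k\En|\tri{\dot\gamma,(\Dn_k-\Ds_k)(\gamma)}|$. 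Note $\Dn_k-\Ds_k=\lambda_k^{-1}(\scf_k-\scfs_k)$. Both tilts are bounded by $B=\Theta(1)$, so $\Dkl{\tilde\rho}{\hat\rho}\lesssim\En_{\tilde\rho}(\hat w-\tilde w)^2$. Each point $\gamma(x)$, with $x\sim\rho_k$, $z$, $\xhat$ random, is distributed approximately as $p_k$ under $p_{k+1}$-averaging; this is where the variance identity is used again. Cauchy–Schwarz with $\En\nrm{\dot\gamma}^2\lesssim d\etabar_k$ is too lossy, so I would instead bound the inner product directly via a change of measure to $p_k$, obtaining $\lesssim\eta_k\epssct^2$.
\end{itemize}

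\textbf{Main obstacle.} The hardest part is the estimator error, together with the intrinsic-dimension concentration. For the estimator error, the change of measure from the law of $\gamma_{z,r,\xhat}(x)$ to $p_k$ must have bounded density ratio. I would try to show this law is exactly $p_k$-like, using the Gaussian-preserving path, up to the same $\delta$-controlled event. I would also avoid the $\sqrt d$ loss by exploiting that $\dot\gamma$ is Gaussian and independent enough of the error direction.
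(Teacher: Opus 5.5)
Your skeleton matches the paper's: the chain rule; the split of each one-step KL into the clipping error of $\rhostar_k$ (your $\tilde\rho$), a proposal-mean shift and a tilt perturbation, as in \cref{prop:sc-err-new}; and the use of $\alpha_k^2\eta_k\ll\eta_{k+1}$ only to reindex the $\epssct[k+1]^2$ terms. However, both analytic steps have gaps.

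\textbf{Clipping error.} By \cref{lem:fors_clip} and \cref{lem:ind-Gaussian}, this error is an exponential moment of $\ell^2\lambda_k^2\etabar_k\nrm{\Ds_k(x)-\Ds_{k+1}(x_+)}^2$, with $x$ drawn from the Gaussian proposal $q^\star=\normal{\gds(x_+),\etabar_k\Id}$. The $x_+$-average of $q^\star$ is the one-step DDPM law $\wt p_k$, not $p_k$. The intrinsic-dimension bounds (\cref{prop:logp-adapt}, \cref{cor:MGF-intrinsic}) come from a covering argument that needs $x$ to be a data point plus independent Gaussian noise. So they hold for $x\sim\rho_k(\cdot\mid x_+)$ with $x_+\sim p_{k+1}$, but not under $q^\star$. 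Nothing in your plan transfers them, and the convex-hull fact only gives a diameter bound, which is vacuous under a second-moment assumption.

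The missing idea is a self-bounding change of measure:
\begin{itemize}
\item Because the tilt is clipped, $dq^\star/d\rhostar_k\le e^{2B}$.
\item Passing from $\rhostar_k$ to $\rho_k$ by Cauchy--Schwarz costs a factor $\sqrt{1+\chi^2}$ between $\rho_k$ and $\rhostar_k$, which is the very quantity being bounded.
\item This yields $E\le\sqrt{E'}+E'$, where $E'$ is an expectation under the true joint law, so \cref{cor:MGF-intrinsic} applies to it.
\item You also need a large R\'enyi order $\ell\asymp\min\{\sigma_k/\sqrt{\etabar_k},\,\sigma_k^2/(\etabar_k\dstar)\}$, so that $e^{-2\ell B}$ beats the moment bound and gives $\delta$. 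This is exactly where \eqref{eq:DM-intrinsic-eta} enters.
\end{itemize}

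\textbf{Estimator error.} You remove the clip by Lipschitzness and then want a bounded density ratio from the law of $\gamma_{z,r,\xhat}(x)$, $x\sim\rho_k$, to $p_k$. No such bound holds in general; for well-separated modes the ratio is unbounded. A high-probability version would only help for bounded integrands, which you gave up by removing the clip. Moreover, under $x\sim\rho_k$ the pair $(\gamma,\dot\gamma)$ is neither Gaussian nor independent, so the dimension-free second moment is not available there.

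The paper instead proceeds as follows:
\begin{itemize}
\item Keep $\pclip[2B]{W-W^\star}$ and move from $\rhostar_k$ to $q^\star$, at cost $e^{O(B)}$.
\item Under $q^\star$, \cref{lem:ind-Gaussian} makes $\gamma\sim q^\star$ independent of $\dot\gamma\sim\normal{0,c\etabar_k\Id}$. This gives $\En_{x\sim q^\star}\min\{\etabar_k\lambda_k^2\nrm{[\Dn_k-\Ds_k](x)-[\Dn_{k+1}-\Ds_{k+1}](x_+)}^2,4B^2\}$, with no factor $d$.
\item Return to $\rhostar_k$, then pass to $\rho_k$ via \cref{lem:Hels-var}. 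This is legitimate because the integrand is bounded, and it costs only $O(B^2)\Dkl{\rho_k}{\rhostar_k}$.
\item Under $x_+\sim p_{k+1}$ and $x\sim\rho_k(\cdot\mid x_+)$, one has $x\sim p_k$ exactly, giving $\eta_k(\epssct^2+\epssct[k+1]^2)$.
\end{itemize}

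Two minor points. First, you omit the $[\Dn_{k+1}-\Ds_{k+1}](X_{k+1})$ part of $W-W^\star$, and the recentering of $\xhat$ from $\gds(x_+)$ to $\gd(x_+)$. Both are harmless $O(\eta_k\nrm{\scf_{k+1}-\scfs_{k+1}}^2)$ terms. Second, the identity $\alr^2+\frac12(1-\alr)^2+\frac12\blr^2\equiv1$ plays no role in $\rho_k\propto q^\star e^{w^\star}$, since the path-integral identity holds for any path and any law of $(z,\xhat)$. Its role is solely to make \cref{lem:ind-Gaussian} true.
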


Next, we describe the implications of~\cref{thm:DM-intrinsic} for diffusion model sampling.
We should always choose $\B=\Theta(1)$ %
and we denote $G\deq C(\dstar+\log(K/\delta))\log(K/\delta)$ for a sufficiently large constant $C$ from \cref{eq:DM-intrinsic-eta}.
Then, in the variance-preserving setting:%
\begin{itemize}
    \item \scedit{Recall that} $\alp_k^2=\frac{1-\sigma_{k+1}^2}{1-\sigma_k^2}$ and $\sigma_{k+1}^2=\frac{1-\sigma_{k+1}^2}{1-\sigma_k^2}\,(\sigma_k^2+\eta_k)$.
    \item The condition $\eta_k\leq \frac{\sigma_k^2}{G}$ reduces to $\frac{\sigma_{k+1}^2}{1-\sigma_{k+1}^2}\leq \frac{\sigma_k^2}{1-\sigma_k^2}\cdot \prn*{1+\frac{1}{G}}$. 
    \item This implies that as long as $K\geq O(G\log(1/(\bar\delta \sigma_0^2)))$, we can guarantee that $1-\sigma_K^2\leq \bar\delta$.
    \item From known results on the convergence of the forward process, $1-\sigma_K^2 \le \bar\delta$ and $\phat_K=\normal{0,\sigma_K^2\Id}$ imply that $\kl{p_K}{\phat_K}\lesssim \bar\delta \En\nrm{X_0}^2$.
\end{itemize}
By the above calculation, we can show the following corollary. We denote $\mathsf M_2^2 \deq \E_{X_0\sim \pdata}\|X_0\|^2$ to be the second moment of $\pdata$.

\begin{corollary}\label{cor:poly-log}
In the variance-preserving setting, for any $\delta\in(0,\frac12]$, $\sigma_0>0$, there exists a schedule $(\sigma_k)_{k\in [K]}$ such that
\begin{align*}
    K\leq O\prn[\big]{(\dstar+\log(\kappa/\delta))\log^2(\dstar\kappa/\delta)}\,,
\end{align*}
where $\kappa\deq \mathsf M_2^2/\sigma_0^2+ 1$, 
and \cref{alg:DM} can be instantiated with $\phat_K=\normal{0,\sigma_K^2\Id}$, so that
\begin{align*}
    \kl{p_1}{\phat_1}\leqsim&~ \delta^2+\sum_{k=1}^{K} \eta_k \epssct^2\,.
\end{align*}
Further, the number of queries made by~\cref{alg:DM} is $O(K)$ \whp.
\end{corollary}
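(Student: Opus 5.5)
The corollary follows by choosing a geometric-type schedule that saturates the step-size condition \eqref{eq:DM-intrinsic-eta} of \cref{thm:DM-intrinsic}, applied with accuracy parameter $\delta' \deq \delta^2/K$. I will then bound $K$, the initialization error, and the query count.

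\textbf{Schedule.} Work with the variable $u_k \deq \sigma_k^2/(1-\sigma_k^2)$, which is the inverse signal-to-noise ratio. As computed in the bullet points preceding the corollary, in the variance-preserving setting the condition $\eta_k \le \sigma_k^2/G$ is equivalent to $u_{k+1} \le u_k(1+1/G)$. Here $G = C(\dstar+\log(K/\delta'))\log(K/\delta')$, which covers \eqref{eq:DM-intrinsic-eta} with $\delta'$ in place of $\delta$.

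I set $u_{k+1} = u_k(1+1/G)$ starting from $u_0 = \sigma_0^2/(1-\sigma_0^2)$. I also need the side condition $\alpha_k^2\eta_k \ll \eta_{k+1}$. Since $\eta_k \asymp u_k/G \cdot(1-\sigma_k^2)$-type quantities grow geometrically, this should hold up to constants. If needed, I will verify it by a direct computation of $\eta_k$ in terms of $u_k$ and adjust the ratio by a constant factor.

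\textbf{Number of steps.} To reach $1-\sigma_K^2 \le \bar\delta$ I need $u_K \gtrsim 1/\bar\delta$. This takes $K \asymp G\log(u_K/u_0) \lesssim G\log(1/(\bar\delta\sigma_0^2))$ steps.

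I choose $\bar\delta = \delta^2/(\mathsf M_2^2 + d)$, or more simply $\bar\delta \asymp \delta^2/\kappa$ after rescaling. Then the known forward-process bound gives $\kl{p_K}{\phat_K} \lesssim \bar\delta\,\mathsf M_2^2 \le \delta^2$.

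Substituting $\delta' = \delta^2/K$ gives $\log(K/\delta') = \log(K^2/\delta^2)$. This leads to the implicit inequality
\[
K \lesssim (\dstar+\log(K/\delta))\log(K/\delta)\log(\kappa/\delta).
\]
I expect this to be the main (if routine) obstacle. The fix-point is resolved by noting that $K \le \poly(\dstar,\log(\kappa/\delta))$, so $\log K \lesssim \log(\dstar\log(\kappa/\delta))$, and hence $\log(K/\delta) \lesssim \log(\dstar\kappa/\delta)$. This yields
\[
K \lesssim (\dstar+\log(\kappa/\delta))\log^2(\dstar\kappa/\delta),
\]
after absorbing the $\log\log$ into the $\dstar$ term, or into the squared log factor when $\dstar$ is small. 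The early-stopping scale enters only through $\sigma_0^2$ in $\kappa$. Care is needed that $\log(1/\sigma_0^2)$ is bounded by $\log\kappa$; this holds since $\kappa \ge 1/\sigma_0^2$ only if $\mathsf M_2 \gtrsim 1$. Otherwise I would start the schedule at $u_0$ and note that $\log(u_K/u_0) = \log(1/(\bar\delta u_0))$ with $\bar\delta \asymp \delta^2\sigma_0^2/\mathsf M_2^2$, so that $\log(u_K/u_0) \asymp \log(\mathsf M_2^2/(\delta^2\sigma_0^4))$. I will choose $\bar\delta$ so that the final expression is $\log(\kappa/\delta)$ up to constants.

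\textbf{Error bound.} \cref{thm:DM-intrinsic} now gives
\[
\kl{p_1}{\phat_1} \lesssim \delta^2 + K\delta' + \sum_k \eta_k\epssct^2 \lesssim \delta^2 + \sum_k \eta_k\epssct^2.
\]

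\textbf{Query count.} Each of the $K$ calls to FORS uses $B=\Theta(1)$. Applying the second part of \cref{thm:fors} with $T=K$, the total number of $W_j$'s, and hence score queries (each $W_j$ uses one score evaluation $\Dn_k$, plus one shared evaluation of $\scf_{k+1}(X_{k+1})$ per step), is $O(K+\log(1/\delta))$ with probability at least $1-\delta$. Since $\log(1/\delta) \le K$, this is $O(K)$.
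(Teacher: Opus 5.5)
Your route is the paper's own. It uses the geometric schedule $u_{k+1}=(1+1/G)\,u_k$ in $u_k=\sigma_k^2/(1-\sigma_k^2)$ (so $\eta_k=\sigma_k^2/G$), $K\approx G\log(u_K/u_0)$, \cref{thm:DM-intrinsic} with per-step accuracy $\delta^2/K$, the Gaussian initialization bound, a fixed-point resolution of $\log(K/\delta)$, and \cref{thm:fors} for the query count.

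The one step that does not go through as written is the choice of the terminal level. Each of your candidates ($\bar\delta=\delta^2/(\mathsf M_2^2+d)$, $\bar\delta\asymp\delta^2/\kappa$, $\bar\delta\asymp\delta^2\sigma_0^2/\mathsf M_2^2$) leaves a spurious $\log(d/\sigma_0^2)$ or $\log(1/\sigma_0^2)$ inside $\log\bigl(1/(\bar\delta\sigma_0^2)\bigr)$. Neither term appears in the statement, and $\log(1/\sigma_0^2)$ is not controlled by $\kappa$ when $\mathsf M_2\ll 1$. That is what pushed you toward the unneeded case $\mathsf M_2\gtrsim 1$ and the unresolved promise to ``choose $\bar\delta$'' at the end.

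The missing step is simple. By convexity of KL and $\alpbar_K^2=1-\sigma_K^2$, one has $\kl{p_K}{\phat_K}\le \alpbar_K^2\mathsf M_2^2/(2\sigma_K^2)=\mathsf M_2^2/(2u_K)$. So stop as soon as $u_K\ge \mathsf M_2^2/(2\delta^2)$, i.e.\ take $\bar\delta\asymp\delta^2/\mathsf M_2^2$; neither $d$ nor $\kappa$ belongs there. Since $u_0\ge\sigma_0^2$, this takes at most $2G\log_+\bigl(\mathsf M_2^2/(2\delta^2\sigma_0^2)\bigr)+1\le 2G\log(\kappa/\delta^2)+1$ steps. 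This holds uniformly in $\sigma_0$ and with no assumption on $\mathsf M_2$; if $u_0\ge \mathsf M_2^2/(2\delta^2)$ already, no growth is needed. Your fixed-point argument then gives the stated bound on $K$.

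On the side condition you need not hedge. With this schedule $\eta_{k+1}/(\alpha_k^2\eta_k)=u_{k+1}/u_k=1+1/G$, so $\alpha_k^2\eta_k<\eta_{k+1}$ exactly. That $\lesssim$ form is what is actually used: after rescaling to $\alpha_k=1$, the $\scf_{k+1}$-error term of \cref{prop:sc-err-new} is of order $\alpha_k^2\eta_k\,\epssct[k+1]^2$ and must be absorbed into $\eta_{k+1}\epssct[k+1]^2$. Do not try to upgrade it to a literal ``$\ll$'' by adjusting constants. For any VP schedule $\eta_{k+1}/(\alpha_k^2\eta_k)=(u_{k+2}-u_{k+1})/(u_{k+1}-u_k)$, so a literal ``$\ll$'' would force the increments of $u_k$ to grow much faster than $u_k$ itself, contradicting $u_{k+1}-u_k\le u_k/G$.
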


This governs the complexity of sampling from the early stopped distribution $p_1$.
    To convert this into a guarantee for sampling from $\pdata$ itself, we can note that $W_2^2(\pdata, p_1)\le (1-\alpha_0)^2\,\E\|X_0\|^2 + \sigma_0^2 d$.
    This can be made at most $\delta^2$ by choosing $\sigma_0^2 \asymp \delta^2/(d+\mathsf M_2^2)$.

In terms of the bounded Lipschitz metric, this implies that $\Dbl{\pdata}{\widehat p_1}^2 \lesssim \delta^2 + \sum_{k=1}^{K} \eta_k \epssct^2$ with a total complexity of %
\begin{align*}
    \boxed{\dstar\cdot \log^3\prn[\Big]{\frac{d+\mathsf M_2^2}{\delta^2}}}.
\end{align*}
Note that this guarantee imposes no assumptions on $\pdata$ beyond a second moment bound, and depends on $\pdata$ through the \emph{intrinsic} dimension $\dstar$ of $\pdata$ instead of the embedding dimension $d$.
This improves upon prior works~\citet{Ben+24Diffusion, ConDurSil25KLDiffusion} which achieved $\widetilde O(d/\delta^2)$ under these assumptions, and~\citet{LiYan25Diffusion, JaiZha26SharpKL} which recently improved the complexity to $\widetilde O(d/\delta)$.

Additionally, we can derive KL convergence to $\pdata$ assuming that the data distribution $\pdata$ is log-smooth with parameter $L$ (in this case necessarily $\dstar=d$). In this case, we consider generating $X_0\sim \normal{ \alp_0^{-1} \xr{1} + \alpha_0 \eta_0 \scf_{1}(\xr{1}), \eta_0 \Id }$ by one \emph{extra} DDPM step.\footnote{We do not apply $\AlgWSC$ so that we do not need access to $\scf_0\approx \nabla \log \pdata$.} Then, by \cref{cor:DDPM-p0}, we can choose $\sigma_0^2\asymp \delta/(dL)$ to ensure that $\Dkl{\pdata}{\widehat p_0} \lesssim \delta^2 + \sum_{k=1}^{K} \eta_k \epssct^2$ with a total complexity of
\begin{align*}
    \boxed{d\cdot \log^3\prn[\Big]{\frac{d+L+\mathsf M_2^2}{\delta^2}}}.
\end{align*}

\subsection{Refined analysis with non-uniform Lipschitz condition}

In the following, we provide a refined upper bound, showing that our algorithm in fact achieves $\sqrt{d}$-complexity under the \emph{non-uniform Lipschitz condition}, following~\citet{jiao2024instance, jiao2025optimal}.

From now on, we focus on the \emph{variance-exploding} setting, i.e., $\alp_k\equiv 1$ for all $k\in [K]$.\footnote{This is without loss of generality, because we can always rescale $\wt{X}_k=\frac{1}{\alpbar_k} X_k$.}
To describe the non-uniform Lipschitz condition, we consider the continuous process $q_\tau\deq \pdata*\normal{0,\tau\Id}$ (i.e., $p_k=q_{\sigma_k^2}$). We define $m_\tau(y)\deq \En[Y_0\mid Y_\tau=y]$ to be the conditional mean function (i.e., $\Ds_k(\cdot)=m_{\sigma_k^2}(\cdot)$), and $\nabla m_\tau(y)=\frac{1}{\tau}\Cov(Y_0\mid Y_\tau=y)$.

\begin{assumption}[Non-uniform Lipschitz condition]\label{asmp:non-unif-Lip}
For any $\delta>0$, there is a parameter $\Lipop\geq 1$ such that for any $\tau\geq \sigma_0^2$,
\begin{align}
    \bbP_{Y_\tau\sim q_\tau}\prn[\Big]{ \nrmop{\nabla m_\tau(Y_\tau)}> \Lipop }\leq \frac{\delta}{\dstar^5}\,.
\end{align}
\end{assumption}

We note that by \cref{cor:MGF-intrinsic}, \cref{asmp:non-unif-Lip} holds unconditionally with $\Lipop=O(\dstar+\log(1/\delta))$. \sccomment{$\log(\dstar/\delta)$?}\fccomment{$\log \dstar\leq \dstar$} In other words, the score function $\scfs_k$ is smooth ``with high probability''. Further, when $\pdata$ is log-concave, \cref{asmp:non-unif-Lip} holds with $\Lipop\equiv 1$ for any $\delta\geq 0$. 
As shown in~\citet{jiao2025optimal}, when $\pdata=\sum_{h=1}^H p_h\normal{\mu_h,\sigma_h^2}$ is a mixture of $H$ Gaussian distributions, it holds that $\Lipop\leq O(\log(H)\log(d/\delta))$ (see also \cref{prop:logp-adapt}).

\paragraph{Lipschitz condition under Frobenius norm}
In addition to \cref{asmp:non-unif-Lip}, to state our result in the most unified form, we introduce the following assumption in terms of Frobenius norm. We will discuss later how it is in fact implied by \cref{asmp:non-unif-Lip}. \fcedit{In \cref{ssec:why-Frob}, we show that controlling the Frobenius norm of $\nabla m_\tau(Y_\tau)$ is in fact \emph{necessary} for any sampling scheme based on Gaussian approximations of the backward kernel~\eqref{eq:backward_kernel}. }
\begin{assumption}[Non-uniform Lipschitz condition; Frobenius norm]\label{asmp:Lip-Frob}
For any $\delta>0$, there is a parameter $\LipF\geq 1$ such that for any $\tau\geq \sigma_0^2$,
\begin{align}
    \bbP_{Y_\tau\sim q_\tau}\prn[\Big]{ \nrmF{\nabla m_\tau(Y_\tau)}> \LipF }\leq \frac{\delta}{\dstar^5}\,.
\end{align}
\end{assumption}

\newcommand{\LF}{L_{\rm F}}
\newcommand{\Lop}{L_{\rm op}}

\begin{proposition}\label{prop:Lip-op-to-Frob}
Suppose that \cref{asmp:non-unif-Lip} holds with $\Lipop$. Then \cref{asmp:Lip-Frob} also holds with
\begin{align*}
    \LipF\leq C\sqrt{\Lipop[\delta/2]\,(\dstar+\log(1/\delta))}\,, \qquad \forall \delta\in(0,1)\,.
\end{align*}
\end{proposition}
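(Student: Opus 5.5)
The starting point is the elementary inequality $\nrmF{A}^2 \le \nrmop{A}\,\tr(A)$, valid for any positive semidefinite matrix $A$. Here $\nabla m_\tau(y)=\frac1\tau\Cov(Y_0\mid Y_\tau=y)$ is PSD, so
\begin{align*}
  \nrmF{\nabla m_\tau(Y_\tau)}^2 \le \nrmop{\nabla m_\tau(Y_\tau)}\cdot \tr\nabla m_\tau(Y_\tau)\,.
\end{align*}
\cref{asmp:non-unif-Lip} applied at level $\delta/2$ controls the first factor: it is at most $\Lipop[\delta/2]$ except on an event of probability at most $\frac{\delta}{2\dstar^5}$. It therefore remains to show that
\begin{align*}
  \tr \nabla m_\tau(Y_\tau)=\tfrac1\tau\,\En\brk{\nrm{Y_0-m_\tau(Y_\tau)}^2\mid Y_\tau}\leqsim \dstar+\log(1/\delta)
\end{align*}
except on an event of probability at most $\frac{\delta}{2\dstar^5}$. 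A union bound then gives $\LipF\le C\sqrt{\Lipop[\delta/2](\dstar+\log(1/\delta))}$.

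For the trace bound, the plan is to use the intrinsic dimension through a concentration (MGF) estimate on the posterior. This is the type of bound the paper refers to as \cref{cor:MGF-intrinsic}, which it uses to show that \cref{asmp:non-unif-Lip} holds with $\Lipop=O(\dstar+\log(1/\delta))$, so I would invoke it or its underlying lemma directly.

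If I have to reprove it, I would proceed as follows. Take $r$ achieving the infimum in \cref{def:intrinsic} at scale $\tau\ge\sigma_0^2$; since $\dim_{\tau}$ is nonincreasing in $\tau$, the bound at scale $\sigma_0^2$ suffices. Cover $\supp(\pdata)$ by $N=e^{O(\dstar)}$ balls of radius $r\le\sqrt{\tau\dstar}$. The posterior of $Y_0$ given $Y_\tau=y$ is $\pdata$ tilted by $\exp(-\nrm{y-x}^2/(2\tau))$. A standard argument then shows that with probability $1-\delta'$ over $Y_\tau$, the posterior mass lies within distance $\sqrt{\tau(\dstar+\log(1/\delta'))}$ of the true $Y_0$, up to a constant. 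Concretely, one bounds $\En\brk{\exp(\lambda\nrm{Y_0-Y_0'}^2/\tau)}$, where $Y_0'$ is a posterior resample and $(Y_0,Y_0')$ is exchangeable given $Y_\tau$, by a union bound over pairs of cover centers combined with Gaussian tail estimates. Since $\En\brk{\nrm{Y_0-m}^2\mid Y_\tau}\le\En\brk{\nrm{Y_0-Y_0'}^2\mid Y_\tau}$, Markov's inequality applied to the conditional MGF gives the trace bound with $\log(\dstar^5/\delta)\leqsim \dstar+\log(1/\delta)$, using $\log\dstar\le\dstar$.

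The main obstacle is this high-probability trace bound, namely getting a posterior second-moment bound of order $\dstar$ rather than $d$. The key step is the covering union bound: the log-likelihood ratio between two candidate centers at distance greater than $\sqrt{\tau(\dstar+t)}$ is negatively biased, while the cover cardinality contributes only $e^{O(\dstar)}$. The remainder of the argument is the PSD inequality and a union bound.
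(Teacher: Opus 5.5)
Your proposal is correct and matches the paper's proof: it also combines $\nrmF{A}^2\le\nrmop{A}\tr(A)$ for the PSD matrix $\nabla m_\tau=\tau^{-1}\cov_\tau$, \cref{asmp:non-unif-Lip} at level $\delta/2$, and a tail bound on $\tau^{-1}\tr(\cov_\tau(Y_\tau))$ at level $\frac{\delta}{2\dstar^5}$ from \cref{cor:MGF-intrinsic} (using $\log\dstar\le\dstar$ to absorb the $\dstar^5$), then takes a union bound. Your fallback covering sketch is not needed, since the paper simply cites \cref{cor:MGF-intrinsic}, which is proved via \cref{prop:logp-adapt}; and your explicit $1/\tau$ normalization and the remark that $\dim_\tau(\pdata)\le\dstar$ for $\tau\ge\sigma_0^2$ fill in details that the paper's two-line proof leaves implicit.
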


For any $k\in [K]$, we define $\wt{p}_k$ to be the distribution of $X_k'$ generated by \emph{one-step DDPM with true score function}, i.e., $X_{k+1}\sim p_{k+1}$ and $X_k'\sim \normal{ \alp_k^{-1} \xr{k+1} + \alpha_k \eta_k \scfs_{k+1}(\xr{k+1}), \etabar_k \Id }$. The following assumption requires that the denoiser function $m_{\sigma_k^2}$ is also smooth with high probability under the distribution $\wt{p}_k$. 

\begin{assumption}\label{asmp:Lip-DDPM}
For any $\delta$, the parameter $\LipF$ in \cref{asmp:Lip-Frob} also 
satisfies
\begin{align}
    \bbP_{X_k'\sim \wt{p}_k}\prn[\Big]{ \nrmF{\nabla m_{\sigma_k^2}(X_k')}> \LipF }\leq \frac{\delta}{\dstar^5}\,.
\end{align}
\end{assumption}

\newcommand{\Ldel}{L_\delta}
\newcommand{\piz}[2][\xz]{\nu_{#1}(#2)}

\begin{theorem}\label{thm:DM-Lip}
Suppose that \cref{asmp:Lip-Frob} and \cref{asmp:Lip-DDPM} hold, $\delta\in(0,\frac12]$, $\B=\Theta(1)$, and
\begin{align}
    \frac{\sigma_k^2}{\eta_k}\gg \LipF\log(\dstar/\delta)+\log^2(1/\delta)\,.
\end{align}
Consider instantiating the subroutine $\AlgWSC$ in \cref{alg:DM} as above.
Let $\phat_1$ be the law of $\xr{1}$ generated by \cref{alg:DM}. Then
\begin{align*}
    \Dkl{p_1}{\phat_1} \icml{\\
    &}\leqsim \kl{p_K}{\phat_K}+K\delta+\sum_{k=1}^{K} \eta_k \epssct^2\,.
\end{align*}
\end{theorem}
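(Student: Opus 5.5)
We reduce \cref{thm:DM-Lip} to a per-step bound and then telescope. By the chain rule for KL applied to the joint laws of $(X_K,\dots,X_1)$ under the true backward chain and under \cref{alg:DM}, we have
\[
\Dkl{p_1}{\phat_1}\le \kl{p_K}{\phat_K}+\sum_{k=1}^{K-1}\En_{X_{k+1}\sim p_{k+1}}\Dkl{\rho_k(\cdot\mid X_{k+1})}{\wh\rho_k(\cdot\mid X_{k+1})},
\]
where $\wh\rho_k$ is the output law of $\AlgWSC$ at step $k$. It therefore suffices to show that each summand is $\lesssim \delta+\eta_k\epssct^2$ (up to the analogous term for $k+1$, which still sums to the stated bound). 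This is the same skeleton as \cref{thm:DM-intrinsic}. The only change is that the tail estimates previously obtained from the intrinsic dimension via \cref{cor:MGF-intrinsic} are now obtained from \cref{asmp:Lip-Frob,asmp:Lip-DDPM}.

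For a fixed step, \cref{thm:fors} gives $\wh\rho_k(x)\propto q_k(x)\exp(\En\widehat W_{r,z,\xhat,x})$. With exact scores, the path identity \eqref{eq:path-integral} applied with $h=\log p_k$ gives $\rho_k(x)\propto q_k(x)\exp(\En W^\star_{r,z,\xhat,x})$. Here $W^\star$ is the unclipped quantity built from $\Ds_k,\Ds_{k+1}$, using $\nabla\log p_k=\lambda_k(\Ds_k\,\alpbar_k-x)/\alpbar_k$ up to the Gaussian part, which is absorbed by $q_k$ and by the choice of $\etabar_k$. The normalisation $\alr^2+\frac12(1-\alr)^2+\frac12\blr^2\equiv1$, together with $z,\xhat-\Xbar_k\sim\normal{0,\frac12\etabar_k\Id}$, ensures that when $x\sim q_k$ (and also when $x\sim\rho_k$, after a change of measure) the point $\gamma_{z,r,\xhat}(x)$ has exactly the law $\normal{\Xbar_k,\etabar_k\Id}$, i.e. the one-step DDPM law. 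So if $X_{k+1}\sim p_{k+1}$, then $\gamma(x)\sim\wt p_k$ up to score error, which is the distribution in \cref{asmp:Lip-DDPM}.

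The per-step KL then splits into three pieces. (i) The clipping error $\En|W^\star-\pclip{W^\star}|$. (ii) The score error $\lambda_k\En|\tri{\dot\gamma,\Dn_k-\Ds_k}|+\lambda_k\En|\tri{\dot\gamma,\Dn_{k+1}-\Ds_{k+1}}|$. By Cauchy--Schwarz, using $\nrm{\dot\gamma}^2\lesssim d\etabar_k$ (more carefully, only its projection matters) and the fact that the law of $\gamma(x)$ is the DDPM law, we aim to bound this term's contribution to KL by $\eta_k\epssct^2$, with a change of measure to $p_k$ controlled by $\alpha_k^2\eta_k\ll\eta_{k+1}$. (iii) A normalisation term, handled by the standard bound of KL between two exponential tilts by the $L^2$ (or $\chi^2$) difference of the exponents. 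For (i), write $W^\star=\lambda_k\tri{\dot\gamma,\Ds_k(\gamma)-\Ds_k(\Xbar_k)}+(\text{linearisation term})$. Conditionally on $x,\xhat$, the increment $\Ds_k(\gamma)-\Ds_k(\gamma')$ along the Gaussian direction is a Gaussian chaos of the form $\tri{g,\nabla m\, g}$ with $g\sim\normal{0,\etabar_k\Id}$. Hanson--Wright then gives tails of order $\lambda_k\etabar_k(\nrmF{\nabla m}\sqrt{t}+\nrmop{\nabla m}t)$, with $\nrmop{}\le\nrmF{}\le\LipF$. Since $\lambda_k\etabar_k\approx\eta_k/\sigma_k^2$, the condition $\sigma_k^2/\eta_k\gg\LipF\log(\dstar/\delta)+\log^2(1/\delta)$ makes $|W^\star|\le B$ outside an event of probability $\delta/\poly$.

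The main obstacle is that $\nabla m$ is only controlled pointwise with high probability, while the path $\gamma(x)$ sweeps a segment. We therefore need the Frobenius bound to hold along the whole interpolation, not only at the endpoints. We would first try the following: use that for each fixed $r$, $\gamma$ has law $\wt p_k$ (\cref{asmp:Lip-DDPM}), and that $\Xbar_k$ has its law controlled via $\Ds_{k+1}$ (\cref{asmp:Lip-Frob} at $\tau=\sigma_{k+1}^2$). Union-bounding over a $\poly(\dstar/\delta)$-size grid in $r$ accounts for the $\dstar^5$ slack in the assumptions. Between grid points we would use the crude a priori bound $\nrmop{\nabla m}\lesssim \dstar+\log(1/\delta)$ from \cref{cor:MGF-intrinsic}. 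On the bad event, clipping keeps $|\widehat W|\le B$, so it contributes only $O(\delta)$ to KL. Summing over the $K$ steps gives the $K\delta$ term.
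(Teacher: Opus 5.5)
Your outer skeleton matches the paper: the KL chain rule, splitting off the score error against the exact-score clipped law $\rhostar_k$, and the observation that for fixed $r$ the point $\gamma_{z,r,\xhat}(x)$ has the one-step DDPM law when $x$ is drawn from the Gaussian proposal. The core bound on $\Dkl{\rho_k}{\rhostar_k}$, however, has a genuine gap. A high-probability statement $|\Wstar|\le B$ does not give an $O(\delta)$ KL bound. The target $\rho_k$ carries the \emph{unclipped} exponent, so up to a constant $\log(\rho_k/\rhostar_k)=\En_{r,z,\xhat}[\Wstar-\pclip{\Wstar}]$, which is unbounded on the bad event. That $\pclip{\cdot}$ keeps the algorithm's exponent bounded is therefore beside the point. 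You need control of the \emph{size} of this residual on the bad event, and \cref{asmp:Lip-Frob,asmp:Lip-DDPM} cannot supply it, since they only give probabilities.

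The paper gets this control from the LSI of $\rhostar_k$, whose constant is $\le e^{4B}\etabar$ by \cref{lem:LSI-SC,lem:HS-perturb}. This gives $\Dkl{\rho_k}{\rhostar_k}\lesssim\etabar\,\En_{\rho_k}\nrm{\En_{r,z,\xhat}[\nabla\Wstar\,\indic\{|\Wstar|>B\}]}^2$. The size of $\nabla\Wstar$ is then controlled crudely through the intrinsic dimension (\cref{lem:sub-exponential-bounds}, built on the $e^{O(\dstar)}$ $\chi^2$ bound of \cref{cor:DDPM-chi-sq}), which yields $\lesssim\dstar^3(\delta+\PP(|\Wstar|>B))$. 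This $\dstar^3$ prefactor, together with a $\dstar^2$ from the coverage step below, is what the $\delta/\dstar^5$ in the assumptions pays for. It is not room for a union bound over a grid, and a grid of size $\poly(\dstar/\delta)$ could not be paid for by a failure probability of $\delta/\dstar^5$ anyway.

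Two further ingredients are missing. First, $\PP(|\Wstar|>B)$ is taken under $x\sim\rho_k(\cdot\mid x_+)$. Transferring it to $\normal{\gds(x_+),\etabar\Id}$, where \cref{lem:ind-Gaussian} applies, must cost only a polynomial factor; a $\chi^2$ change of measure would cost $e^{c\dstar}$. The paper bounds $\Dcov[e]{\rho_k(\cdot\mid x_+)}{\normal{\gds(x_+),\etabar\Id}}$ via Girsanov along the reverse SDE and a Freedman-type bound for the martingale $m_{\tau-t}(\Ybar_t)$, whose quadratic variation is $\int\nrmF{\nabla m}^2$ (\cref{prop:DDPM-coverage}). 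This is exactly where \cref{asmp:Lip-Frob} at every $\tau\in[\sigma_k^2,\sigma_{k+1}^2]$ enters, and the same martingale bound controls $\nrm{\Ds_k(x'')-\Ds_{k+1}(x_+)}$ for $x''\sim\rho_k$.

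Second, your ``main obstacle'' is created by linearising around $\Xbar_k$. There $\nabla\Ds_k$ is covered by neither assumption, since the law of $\Xbar_k$ is neither $p_k$ nor $\wt p_k$. The leftover term $\tri{\dot\gamma,\Ds_k(\Xbar_k)-\Ds_{k+1}(X_{k+1})}$ is also never bounded. The proposed fix fails: nothing controls $\nabla m$ between grid points, and \cref{cor:MGF-intrinsic} gives only pointwise tails at scale $\dstar$, which would reinstate the step-size condition of \cref{thm:DM-intrinsic}.

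No uniformity along a segment is needed. For fixed $r$, under the Gaussian proposal, \cref{lem:ind-Gaussian} shows that $\Wstar$ is distributed as $\sigma_k^{-2}\tri{w,\Ds_k(x')-\Ds_{k+1}(x_+)}$ with $w$ an independent Gaussian. Split through an independent $x''\sim\rho_k(\cdot\mid x_+)$ and apply one more coverage step, so that $x'=g+u$ and $x''=g+u'$ are i.i.d.\ with $g=\gds(x_+)$. The difference $\Ds_k(x')-\Ds_k(x'')$ is then integrated along the rotation $g+\sin(\tfrac{\pi r}{2})u+\cos(\tfrac{\pi r}{2})u'$. At each $r$ its point is exactly $\normal{g,\etabar\Id}$-distributed, hence $\wt p_k$-distributed after averaging over $x_+$, and its velocity is an independent Gaussian. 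Fubini plus \cref{asmp:Lip-DDPM} then suffices.
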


Following the reasoning of the preceding subsection, \cref{thm:DM-Lip} implies a complexity of
\begin{align*}
    \boxed{\LipF\cdot \log^3\prn[\Big]{\frac{d+\mathsf M_2^2}{\delta^2}}}
\end{align*}
to reach $\Dbl{\wh p_1}{\pdata}^2 \lesssim K\delta + \sum_{k=1}^{K} \eta_k \epssct^2$.

\begin{proposition}\label{prop:DDPM-Lip}
Suppose that there are parameters $\LF\geq \Lop\geq 1$ and $M\geq 1$ such that \cref{asmp:non-unif-Lip} and \cref{asmp:Lip-Frob} hold, and $\Lipop\leq \Lop\cdot \polylog(M/\delta)$. 
Then, \cref{asmp:Lip-DDPM} holds with $\LipF'\leq \LipF[\delta/4]$ as long as
\begin{align*}
    \frac{\sigma_k^2}{\eta_k} \gg (\LipF[\delta/4]+\min\crl{d^{1/3}\Lop^{2/3}, \dstar^{2/3}\Lop^{1/3}})\cdot \polylog(\dstar M/\delta).
\end{align*}
\end{proposition}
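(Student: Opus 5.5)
The plan is a change of measure from $\wt p_k$ to $p_k$. Both distributions arise from $X_{k+1}\sim p_{k+1}$: $p_k$ through the true backward kernel $\rho_k(\cdot\mid X_{k+1})$, and $\wt p_k$ through the Gaussian kernel $\rhobar_k(\cdot\mid X_{k+1})$ of~\eqref{def:DDPM-proposal} with the exact score. Let $E\deq\{x:\nrmF{\nabla m_{\sigma_k^2}(x)}>\LipF[\delta/4]\}$. Then
\begin{align*}
\bbP_{\wt p_k}(E)=\En_{X_{k+1}\sim p_{k+1}}\En_{X_k\sim\rhobar_k(\cdot\mid X_{k+1})}\brk{\indic\{X_k\in E\}} .
\end{align*}
We introduce a ``good'' event $G$ on the pair $(X_{k+1},\xi)$, where $\xi\deq (X_k-\Xbar_k)/\sqrt{\etabar_k}$ is the Gaussian noise of the DDPM step. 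On $G$ we will show $\rhobar_k(X_k\mid X_{k+1})/\rho_k(X_k\mid X_{k+1})\le e$. This gives
\begin{align*}
\bbP_{\wt p_k}(E)\le e\,\bbP_{p_k}(E)+\bbP_{X_{k+1}\sim p_{k+1},\,\xi\sim\normal{0,\Id}}(G^c).
\end{align*}
By \cref{asmp:Lip-Frob} applied with $\delta/4$, the first term is at most $e\delta/(4\dstar^5)$. The whole task is therefore to make $\bbP(G^c)\le \delta/(4\dstar^5)$.

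The event $G$ must be defined through quantities whose law we control \emph{directly under the DDPM pair}. Under that pair, $X_{k+1}\sim p_{k+1}$ and $\xi$ is standard Gaussian and independent of $X_{k+1}$, so there is no circularity. We take $G$ to require three things:
\begin{itemize}
\item $\nrmop{\nabla m_{\sigma_{k+1}^2}(X_{k+1})}\le \Lipop$ and $\nrmF{\nabla m_{\sigma_{k+1}^2}(X_{k+1})}\le \LipF[\delta/4]$, which hold with probability $1-O(\delta/\dstar^5)$ by \cref{asmp:non-unif-Lip} and \cref{asmp:Lip-Frob} at level $\tau=\sigma_{k+1}^2$;
\item $\xi$ is typical, meaning that $\nrm{\xi}^2\lesssim d+\log(1/\delta)$ and that linear and quadratic forms in $\xi$ against the fixed matrix $\nabla m(X_{k+1})$ obey Hanson--Wright deviations at level $\log(\dstar/\delta)$;
\item the Jacobian of the denoiser stays controlled along the segment between $X_{k+1}$ and $X_k$.
\end{itemize}

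Next we expand the log-ratio. Following the computation behind \cref{thm:gaussian_tilt}, with $f=-\log p_k$, we write $\log\rho_k-\log\rhobar_k-\const$ as a path integral of $\tri{\dot\gamma,\nabla f(\Xbar_k)-\nabla f(\gamma)}$. Via Tweedie, this is $\lambda_k\tri{\dot\gamma,\Ds_k(\gamma)-\Ds_{k+1}(X_{k+1})}$. Taylor expanding the denoiser around the base point, the quadratic term is a form $\tfrac{\eta_k}{\sigma_k^2}\xi^\top \nabla m\,\xi$ centered by its trace. The normalization constants absorb the mean, and Hanson--Wright makes the fluctuation of order $\tfrac{\eta_k}{\sigma_k^2}\bigl(\LipF\sqrt{\log(1/\delta)}+\Lop\log(1/\delta)\bigr)$. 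This is $\le 1/3$ under the stated step-size condition with the $\LipF[\delta/4]$ term.

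The remainder is a cubic term, coming from the change of $\nabla m$ along a displacement of size $\sqrt{d\eta_k}$ (or $\sqrt{\dstar\eta_k}$ in the intrinsic-dimension parametrization). We control it through the derivative of $\nabla m_\tau=\tau^{-1}\Cov(Y_0\mid Y_\tau)$, which is a third posterior cumulant. We bound it at the base point by roughly $\Lop^{3/2}\tau^{-1/2}$ times polylog factors, using the concentration from \cref{cor:MGF-intrinsic}. Measured in the relevant norms, this gives two contributions. The first is $(\eta/\sigma^2)^{3/2}\,d^{1/2}\Lop$, which is $\le1/3$ when $\sigma_k^2/\eta_k\gg d^{1/3}\Lop^{2/3}$. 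The second is an intrinsic variant with $\dstar^{2/3}\Lop^{1/3}$. These are the origin of the $\min$ term in the statement. The $\polylog(\dstar M/\delta)$ factors come from $\Lipop\le\Lop\polylog(M/\delta)$ and from union bounds over the path parameter.

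The main obstacle is this cubic, segment-control step. \cref{asmp:non-unif-Lip} gives $\nabla m$ bounds only at points distributed as $q_\tau$, not along a segment, and the segment endpoint $X_k$ is exactly the point whose law we are trying to control. We will first try a ``local stability'' lemma: if $\Cov(Y_0\mid Y_\tau=y)$ has operator norm $\le \Lop\tau$, then for $\nrm{y'-y}\lesssim\sqrt{\tau}/\polylog$ the covariance at $y'$ is within a constant factor. The argument is that the posterior at $y'$ is an exponential tilt of the posterior at $y$ by $\tri{y'-y,\cdot}/\tau$, and sub-Gaussianity of the posterior bounds the tilt's effect. If this stability fails in Frobenius norm, we will fall back on applying the same tilting argument with a Bernstein-type bound along a finite net of $\lr$-values.
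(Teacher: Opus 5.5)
Your outer reduction matches the paper's: $\PP_{\wt p_k}(E)\le e\,\PP_{p_k}(E)+\Dcov[e]{\wt p_k}{p_k}$ by \cref{lem:Dcov}, then \cref{asmp:Lip-Frob} at level $\delta/4$. The gap is in how you establish the coverage term.

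\textbf{Where the argument fails.} Your event $G$ requires three things.
\begin{itemize}
\item The denoiser Jacobian, and for the cubic remainder its derivative, must be controlled along the segment $\Xbar_k+t\sqrt{\etabar}\,\xi$, $t\in[0,1]$.
\item The Hessian of $\log p_k$, namely $(\nabla m_{\sigma_k^2}-\Id)/\sigma_k^2$, must be controlled near $\Xbar_k$.
\item The matrix $\nabla m_{\sigma_{k+1}^2}(X_{k+1})$ that your event actually controls sits at a different point and a different noise level, so it does not supply either of the above.
\end{itemize}
None of these evaluation points is distributed as any $q_\tau$. Their laws are DDPM-type Gaussian mixtures, and the endpoint is distributed exactly as $\wt p_k$. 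So bounding $\PP(G^c)$ is essentially the statement you are trying to prove.

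The local-stability lemma you propose to break this circularity is false. An operator-norm bound on $\Cov(Y_0\mid Y_\tau=y)$ gives no sub-Gaussianity of the posterior. Take $\pbar=\tfrac12(\delta_{Re_1}+\delta_{-Re_1})$ with $R\gg\sqrt\tau$.
\begin{itemize}
\item The posterior covariance is $4R^2w(1-w)\,e_1e_1^\top$ with $w=(1+e^{-2Ry_1/\tau})^{-1}$.
\item It equals $\Lop\tau$ at $y_1\approx\frac{\tau}{2R}\log\frac{4R^2}{\Lop\tau}$.
\item It equals $R^2$ at $y_1=0$, a shift of size $\ll\sqrt\tau/\polylog$.
\item This instance even satisfies \cref{asmp:non-unif-Lip} with polylogarithmic $\Lop$, because the slab where the posterior is spread has negligible mass.
\end{itemize}
Two further problems remain even if such a lemma held.
\begin{itemize}
\item Its radius $\sqrt\tau$ is below the DDPM displacement $\sqrt{\etabar}\nrm{\xi}\approx\sqrt{\eta_k d}$ whenever $\sigma_k^2/\eta_k\ll d$, which is exactly the regime the proposition targets.
\item A net over $r$ does not help, because the obstruction is the law of the evaluation points, not uniformity in $r$.
\end{itemize}
Likewise, a third-cumulant bound of order $\Lop^{3/2}\tau^{-1/2}$ does not follow from \cref{cor:MGF-intrinsic}, which controls posterior moments at scale $\dstar\tau$, not $\Lop\tau$. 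In the example above, the third central moment exceeds $(\Lop\tau)^{3/2}$ by a factor of order $R/\sqrt{\Lop\tau}$.

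\textbf{The missing idea.} Do not compare kernels pointwise; compare the marginals directly (\cref{prop:coverage-backward}, \cref{cor:coverage-backward}). With $\tau=\sigma_{k+1}^2$ and $\eta=\eta_k$, both are isotropic Gaussian mixtures with the same covariance:
\begin{itemize}
\item $\wt p_k=\En\,\normal{Y_\tau+\eta\nabla\log q_\tau(Y_\tau),\etabar\Id}$;
\item $p_k=q_{\tau-\eta}=\En\,\normal{\bar{Y}_r,\etabar\Id}$, where $\bar{Y}$ is the probability-flow ODE started at $\bar{Y}_0=Y_\tau$ and $r=2\eta-\eta^2/\tau$.
\end{itemize}
Coupling through the same $Y_\tau$, then using joint convexity of $\cE_e$ and \cref{lem:Gaussian-Dcov}, reduces the coverage to a mean-shift bound. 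One needs $\nrm{\bar{Y}_r-\bar{Y}_0-\eta\nabla\log q_\tau(\bar{Y}_0)}^2\lesssim\etabar/\log(1/\delta)$ with high probability. No covariance mismatch appears, so $\LipF$ is not even needed for this step.

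This mean shift is at most $\frac{r}{2(\tau-\eta)^3}\int_0^r\bigl(\nrm{k_t}+\nrm{\cov_t(m_t-Y_t)}\bigr)\,dt$, with the integrand evaluated along the ODE trajectory. Since $\bar{Y}_s\sim q_{\tau-s}$ exactly, \cref{asmp:non-unif-Lip} and \cref{cor:MGF-intrinsic} apply at every evaluation point, with no circularity. The two terms are then bounded as follows.
\begin{itemize}
\item \Cref{lem:kt-to-variance} controls $k_t$ by Cauchy--Schwarz, $\nrm{k_t}^2\le\nrmop{\cov_t}\Var[\nrm{Y_0-m_t}^2\mid Y_t]$, using only a single factor $\nrmop{\cov_t}^{1/2}$.
\item The term $\cov_t(m_t-Y_t)$ is bounded either by $\nrmop{\cov_t}\nrm{m_t-Y_t}$ at Gaussian $\sqrt{d}$-scale, or by $\nrmop{\cov_t}^{1/2}\nrm{m_t-Y_t}_{\cov_t}$ at intrinsic $\dstar$-scale.
\end{itemize}
These give exactly the two branches $d^{1/3}\Lop^{2/3}$ and $\dstar^{2/3}\Lop^{1/3}$. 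Your heuristic exponents land on the right thresholds, but what makes them provable is this marginal ODE coupling, not a Taylor expansion along the DDPM segment.
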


Therefore, the implied complexity in terms of $\Lop$ is
\begin{align*}
    \boxed{\min\crl*{\sqrt{d\Lop}, \dstar^{2/3}\Lop^{1/3}}\cdot \polylog\prn[\Big]{\frac{\dstar+M+\mathsf M_2^2}{\delta^2}}}.
\end{align*}
\sccomment{By~\cref{prop:Lip-op-to-Frob}, why can't we get $\sqrt{\dstar \Lop}$ instead of $\sqrt{d\Lop}$?}
\fccomment{This is because we have to use the above proposition, which requires $\dstar^{2/3}\Lop^{1/3}$...}

\section{Log-concave sampling}\label{sec:log_concave}
In this section, we briefly describe the implications of our results (\cref{thm:gaussian_tilt}) for the problem of sampling from a density $\mu \propto e^{-f}$ \fcedit{under log-concavity, or more generally, isoperimetric conditions}.
The key connection lies in the \emph{proximal sampler}~\citep{LeeSheTia21RGO, Chen+22ProxSampler}, which is an instance of the Gibbs sampler in which the target distribution is the augmented density
\begin{align*}
    \wt\mu(x,y)\propto \exp\prn[\Big]{-f(x) - \frac{1}{2\eta}\,\nrm{y-x}^2}\,.
\end{align*}
See~\cref{alg:prox-sampler}.

\begin{algorithm}
\caption{Proximal sampler}\label{alg:prox-sampler}
\begin{algorithmic}
\STATE \textbf{Input:} Gradient $\nabla f$, step size $\eta > 0$
\FOR{$n=1,2,3,\dotsc,N$}
\STATE Sample $Y_n \sim \normal{X_n, \eta I}$.
\STATE Sample $X_{n+1}\sim \mathsf{RGO}_{f,\eta,Y_n}$.
\ENDFOR
\STATE Output $X_N$.
\end{algorithmic}
\end{algorithm}

\cref{alg:prox-sampler} hinges on being able to implement the \emph{restricted Gaussian oracle} (RGO):
\begin{align*}
    \mathsf{RGO}_{f,\eta,y}(x)     \propto_x \exp\prn[\Big]{-f(x) - \frac{1}{2\eta}\,\nrm{y-x}^2}\,.
\end{align*}
Convergence of~\cref{alg:prox-sampler} was shown in~\citet{LeeSheTia21RGO} under log-concavity of $\mu$,
and in~\citet{Chen+22ProxSampler} under weaker isoperimetric assumptions.
On the other hand, note that the RGO is exactly a Gaussian tilt distribution, which we considered in~\cref{sec:gaussian_tilts}.
Hence, implementing the RGO step using \AlgWSC{} leads to novel high-accuracy sampling results for log-concave (and isoperimetric) distributions, \emph{without} assuming access to zeroth-order queries for $f$.
Almost all previous approaches (e.g., rejection sampling, Metropolis--Hastings) required zeroth-order queries, whereas algorithms using only first-order queries typically arise as discretizations of diffusion processes and the resulting discretization error degraded their accuracy guarantees.
A notable \textbf{exception} is the result of~\citet{LuWan22Zigzag} on the zigzag sampler, an example of a PDMP\@, which achieved a complexity of $\wt O(\kappa^2 d^{3/2} \log^{5/2}(1/\delta))$ evaluations of \emph{partial} derivatives of $f$, under strong-log-concavity and given a warm start, where $\kappa$ is the condition number. %

We only summarize some representative results here, and defer a full presentation to~\cref{appdx:log-concave}.
Let $\mu_0$, $\wh\mu$ denote the initialization and output of~\cref{alg:prox-sampler}, and in all cases we track the number of queries to a first-order and proximal oracle for $f$ in expectation.

Suppose that $f$ is smooth ($s=1$ in \cref{ass:holder}).
\begin{itemize}
    \item Under a log-Sobolev inequality, we obtain $\Dchis{\wh\mu}{\mu} \le \varepsilon^2$ in $\wt O(\kappa \,(d^{1/2} \log^{3/2}(\cR/\varepsilon^2) + \log^2(\cR/\varepsilon^2)))$ queries, where $\cR \deq \log(1+\Dchis{\mu_0}{\mu})$ and $\kappa \deq \CLSI\,\beta_1$. %
    \item Under a Poincar\'e inequality, we obtain $\Dchis{\wh\mu}{\mu} \le \varepsilon^2$ in $\wt O(\kappa\,( d^{1/2}\log^{1/2}(1/\varepsilon) + \log(1/\varepsilon)) \log(\chi^2/\varepsilon^2))$ queries, where $\chi^2 \deq \Dchis{\mu_0}{\mu}$ and $\kappa \deq \CPI\,\beta_1$.
    \item Under log-concavity, we obtain $\Dkl{\wh\mu}{\mu}\le \varepsilon^2$ in $\wt O(\beta_1 d^{1/2}\, W_2^2(\mu_0,\mu)/\varepsilon^2)$ queries. \fcedit{In addition, recent progress toward KLS~\citep{klartag2023logarithmic} implies $\CPI(\mu)\leq O(\log d)\cdot \nrmop{\En_{\mu}[XX^\top]}$ under log-concavity, and hence our previous result also implies a high-accuracy sampling guarantee for any log-concave $\mu$ (albeit relying on $\log \chi^2$).  }
\end{itemize}
Suppose now that $f$ is Lipschitz ($s=0$ in \cref{ass:holder}).
\begin{itemize}
    \item Under a Poincar\'e inequality, we obtain $\Dchis{\wh\mu}{\mu} \le \varepsilon^2$ in $\wt O(\CPI \,\beta_0^2 \log(1/\varepsilon) \log(\chi^2/\varepsilon^2))$ queries.
    \item Under log-concavity, we obtain $\Dkl{\wh\mu}{\mu}\le \varepsilon^2$ in $\wt O(\beta_0^2 \,W_2^2(\mu_0,\mu)/\varepsilon^2)$ queries.
\end{itemize}

\section{Conclusion}

In this work, we have presented high-accuracy algorithms for diffusion sampling which operate under minimal data assumptions (\cref{thm:DM-intrinsic}), with improved dimension dependence under a Lipschitz score assumption (\cref{thm:DM-Lip}).
With regards to the dependence on the target accuracy, our results improve exponentially over all prior works.
Our framework also has implications for log-concave sampling (\cref{sec:log_concave}) using only first-order queries. 

Although this is a primarily theoretical work, we are working toward implementation and experimental evaluation, which will be left for future work.

\icml{
\section{Impact Statement}

Our work provides new algorithms for diffusion sampling with potential speed-ups.
This could lead to many potential impacts, none of which we feel the need to specifically highlight here.
}

\arxiv{
  \section*{Acknowledgments}
  We thank Sam Power for bringing to our attention useful references.
  We acknowledge support from ARO through award W911NF-21-1-0328, Simons Foundation, and the NSF through awards DMS-2031883 and PHY-2019786,  the DARPA AIQ program, and AFOSR FA9550-25-1-0375. CD is supported by a Simons Investigator Award, a Simons Collaboration on Algorithmic Fairness, ONR MURI grant N00014-25-1-2116, and ONR grant N00014-25-1-2296.
}

\bibliography{ref.bib}

@inproceedings{Ben+24Diffusion,
title={Nearly {$d$}-linear convergence bounds for diffusion models via stochastic localization},
author={Joe Benton and Valentin De Bortoli and Arnaud Doucet and George Deligiannidis},
booktitle={The Twelfth International Conference on Learning Representations},
year={2024},
}

@article{li2024adapting,
  title={Adapting to unknown low-dimensional structures in score-based diffusion models},
  author={Li, Gen and Yan, Yuling},
  journal={Advances in Neural Information Processing Systems},
  volume={37},
  pages={126297--126331},
  year={2024}
}

@article{azangulov2024convergence,
  title={Convergence of diffusion models under the manifold hypothesis in high-dimensions},
  author={Azangulov, Iskander and Deligiannidis, George and Rousseau, Judith},
  journal={arXiv preprint arXiv:2409.18804},
  year={2024}
}

@InProceedings{PotAzaDel25Manifold,
  title = 	 {Linear convergence of diffusion models under the manifold hypothesis},
  author =       {Potaptchik, Peter and Azangulov, Iskander and Deligiannidis, George},
  booktitle = 	 {Proceedings of Thirty Eighth Conference on Learning Theory},
  pages = 	 {4668--4685},
  year = 	 {2025},
  editor = 	 {Haghtalab, Nika and Moitra, Ankur},
  volume = 	 {291},
  series = 	 {Proceedings of Machine Learning Research},
  month = 	 {7},
  publisher =    {PMLR},
}

@article{HuaWeiChe26LowDim,
  title={Denoising diffusion probabilistic models are optimally adaptive to unknown low dimensionality},
  author={Huang, Zhihan and Wei, Yuting and Chen, Yuxin},
  journal={Mathematics of Operations Research},
  year={2026}
}

@InProceedings{liang2025low,
  title = 	 {Low-dimensional adaptation of diffusion models: convergence in total variation (extended abstract)},
  author =       {Liang, Jiadong and Huang, Zhihan and Chen, Yuxin},
  booktitle = 	 {Proceedings of Thirty Eighth Conference on Learning Theory},
  pages = 	 {3723--3729},
  year = 	 {2025},
  editor = 	 {Haghtalab, Nika and Moitra, Ankur},
  volume = 	 {291},
  series = 	 {Proceedings of Machine Learning Research},
  month = 	 {7},
  publisher =    {PMLR},
}

@article{li2025dimension,
  title={Dimension-free convergence of diffusion models for approximate {G}aussian mixtures},
  author={Li, Gen and Cai, Changxiao and Wei, Yuting},
  journal={arXiv preprint arXiv:2504.05300},
  year={2025}
}

@article{tang2025adaptivity,
  title={Adaptivity and convergence of probability flow {ODEs} in diffusion generative models},
  author={Tang, Jiaqi and Yan, Yuling},
  journal={arXiv preprint arXiv:2501.18863},
  year={2025}
}

@article{zhang2025sublinear,
  title={Sublinear iterations can suffice even for {DDPMs}},
  author={Zhang, Matthew S. and Huan, Stephen and Huang, Jerry and Boffi, Nicholas M. and Chen, Sitan and Chewi, Sinho},
  journal={arXiv preprint arXiv:2511.04844},
  year={2025}
}

@article{jiao2024instance,
  title={Instance-dependent convergence theory for diffusion models},
  author={Jiao, Yuchen and Li, Gen},
  journal={arXiv preprint arXiv:2410.13738},
  year={2024}
}

@article{jiao2025optimal,
  title={Optimal convergence analysis of {DDPM} for general distributions},
  author={Jiao, Yuchen and Zhou, Yuchen and Li, Gen},
  journal={arXiv preprint arXiv:2510.27562},
  year={2025}
}

@InProceedings{li2024accelerating,
  title = 	 {Accelerating convergence of score-based diffusion models, provably},
  author =       {Li, Gen and Huang, Yu and Efimov, Timofey and Wei, Yuting and Chi, Yuejie and Chen, Yuxin},
  booktitle = 	 {Proceedings of the 41st International Conference on Machine Learning},
  pages = 	 {27942--27954},
  year = 	 {2024},
  editor = 	 {Salakhutdinov, Ruslan and Kolter, Zico and Heller, Katherine and Weller, Adrian and Oliver, Nuria and Scarlett, Jonathan and Berkenkamp, Felix},
  volume = 	 {235},
  series = 	 {Proceedings of Machine Learning Research},
  month = 	 {7},
  publisher =    {PMLR},
}

@article{li2024provable,
  title={Provable acceleration for diffusion models under minimal assumptions},
  author={Li, Gen and Cai, Changxiao},
  journal={arXiv preprint arXiv:2410.23285},
  year={2024}
}

@article{huang2025convergence,
  title={Convergence analysis of probability flow {ODE} for score-based generative models},
  author={Huang, Daniel Zhengyu and Huang, Jiaoyang and Lin, Zhengjiang},
  journal={IEEE Transactions on Information Theory},
  year={2025},
  publisher={IEEE}
}

@article{li2025faster,
  title={Faster Diffusion Models via Higher-Order Approximation},
  author={Li, Gen and Zhou, Yuchen and Wei, Yuting and Chen, Yuxin},
  journal={arXiv preprint arXiv:2506.24042},
  year={2025}
}

@article{huang2025fast,
  title={Fast convergence for high-order {ODE} solvers in diffusion probabilistic models},
  author={Huang, Daniel Zhengyu and Huang, Jiaoyang and Lin, Zhengjiang},
  journal={arXiv preprint arXiv:2506.13061},
  year={2025}
}

@book {BGL14,
    AUTHOR = {Bakry, Dominique and Gentil, Ivan and Ledoux, Michel},
     TITLE = {Analysis and geometry of {M}arkov diffusion operators},
    SERIES = {Grundlehren der Mathematischen Wissenschaften [Fundamental
              Principles of Mathematical Sciences]},
    VOLUME = {348},
 PUBLISHER = {Springer, Cham},
      YEAR = {2014},
     PAGES = {xx+552},
}

@book {Chewi26Book,
    AUTHOR = {Chewi, Sinho},
     TITLE = {Log-concave sampling},
 PUBLISHER = {Forthcoming},
      YEAR = {2026},
     NOTE = {Available online at \url{https://chewisinho.github.io/}}
}

@article {Che+24LMC,
    AUTHOR = {Chewi, Sinho and Erdogdu, Murat A. and Li, Mufan (Bill) and Shen, Ruoqi and Zhang, Matthew S.},
     TITLE = {Analysis of {L}angevin {M}onte {C}arlo from {P}oincar\'{e} to log-{S}obolev},
   JOURNAL = {Found. Comput. Math.},
  FJOURNAL = {Foundations of Computational Mathematics. The Journal of the
              Society for the Foundations of Computational Mathematics},
      YEAR = {2024},
     PAGES = {},
}

@InProceedings{FanYuaChe23ImprovedProx,
  title = 	 {Improved dimension dependence of a proximal algorithm for sampling},
  author =       {Fan, Jiaojiao and Yuan, Bo and Chen, Yongxin},
  booktitle = 	 {Proceedings of Thirty Sixth Conference on Learning Theory},
  pages = 	 {1473--1521},
  year = 	 {2023},
  editor = 	 {Neu, Gergely and Rosasco, Lorenzo},
  volume = 	 {195},
  series = 	 {Proceedings of Machine Learning Research},
  month = 	 {7},
  publisher =    {PMLR},
}

@article {ConDurSil25KLDiffusion,
    AUTHOR = {Conforti, Giovanni and Durmus, Alain and Gentiloni Silveri,
              Marta},
     TITLE = {K{L} convergence guarantees for score diffusion models under minimal data assumptions},
   JOURNAL = {SIAM J. Math. Data Sci.},
  FJOURNAL = {SIAM Journal on Mathematics of Data Science},
    VOLUME = {7},
      YEAR = {2025},
    NUMBER = {1},
     PAGES = {86--109},
}

@inproceedings{LiYan25Diffusion,
title={{$O(d/T)$} convergence theory for diffusion probabilistic models under minimal assumptions},
author={Gen Li and Yuling Yan},
booktitle={The Thirteenth International Conference on Learning Representations},
year={2025},
}

@inproceedings{JaiZha26SharpKL,
title={A sharp {KL} convergence analysis for diffusion models under minimal assumptions},
author={Nishant Jain and Tong Zhang},
booktitle={The Fourteenth International Conference on Learning Representations},
year={2026},
}

@InProceedings{Chen+22ProxSampler,
  title = 	 {Improved analysis for a proximal algorithm for sampling},
  author =       {Chen, Yongxin and Chewi, Sinho and Salim, Adil and Wibisono, Andre},
  booktitle = 	 {Proceedings of Thirty Fifth Conference on Learning Theory},
  pages = 	 {2984--3014},
  year = 	 {2022},
  editor = 	 {Loh, Po-Ling and Raginsky, Maxim},
  volume = 	 {178},
  series = 	 {Proceedings of Machine Learning Research},
  month = 	 {7},
  publisher =    {PMLR},
}

@InProceedings{LeeSheTia21RGO,
  title = 	 {Structured logconcave sampling with a restricted {G}aussian oracle},
  author =       {Lee, Yin Tat and Shen, Ruoqi and Tian, Kevin},
  booktitle = 	 {Proceedings of Thirty Fourth Conference on Learning Theory},
  pages = 	 {2993--3050},
  year = 	 {2021},
  editor = 	 {Belkin, Mikhail and Kpotufe, Samory},
  volume = 	 {134},
  series = 	 {Proceedings of Machine Learning Research},
  month = 	 {8},
  publisher =    {PMLR},
}

@article{gatmiry2026high,
  title={High-accuracy and dimension-free sampling with diffusions},
  author={Gatmiry, Khashayar and Chen, Sitan and Salim, Adil},
  journal={arXiv preprint arXiv:2601.10708},
  year={2026}
}

@article{wainwright2025score,
  title={Score-based sampling without diffusions: guidance from a simple and modular scheme},
  author={Wainwright, Martin J.},
  journal={arXiv preprint arXiv:2512.24152},
  year={2025}
}

@article{huang2024reverse,
  title={Reverse transition kernel: a flexible framework to accelerate diffusion inference},
  author={Huang, Xunpeng and Zou, Difan and Dong, Hanze and Zhang, Zhang and Ma, Yian and Zhang, Tong},
  journal={Advances in Neural Information Processing Systems},
  volume={37},
  pages={95515--95578},
  year={2024}
}

@article{keane1994bernoulli,
  title={A {B}ernoulli factory},
  author={Keane, MS and O'Brien, George L.},
  journal={ACM Transactions on Modeling and Computer Simulation (TOMACS)},
  volume={4},
  number={2},
  pages={213--219},
  year={1994},
  publisher={ACM New York, NY, USA}
}

@article {nacu2005fast,
    AUTHOR = {Nacu, Serban and Peres, Yuval},
     TITLE = {Fast simulation of new coins from old},
   JOURNAL = {Ann. Appl. Probab.},
  FJOURNAL = {The Annals of Applied Probability},
    VOLUME = {15},
      YEAR = {2005},
    NUMBER = {1A},
     PAGES = {93--115},
}

@article{AltChe24Warm,
author = {Altschuler, Jason M. and Chewi, Sinho},
title = {Faster high-accuracy log-concave sampling via algorithmic warm starts},
year = {2024},
issue_date = {June 2024},
publisher = {Association for Computing Machinery},
address = {New York, NY, USA},
volume = {71},
number = {3},
journal = {J. ACM},
month = {6},
articleno = {24},
numpages = {55},
}

@article {LuWan22Zigzag,
    AUTHOR = {Lu, Jianfeng and Wang, Lihan},
     TITLE = {Complexity of zigzag sampling algorithm for strongly log-concave distributions},
   JOURNAL = {Stat. Comput.},
  FJOURNAL = {Statistics and Computing},
    VOLUME = {32},
      YEAR = {2022},
    NUMBER = {3},
     PAGES = {Paper No. 48, 12},
}

@inproceedings{Chen+23SGM,
title={Sampling is as easy as learning the score: theory for diffusion models with minimal data assumptions},
author={Sitan Chen and Sinho Chewi and Jerry Li and Yuanzhi Li and Adil Salim and Anru R. Zhang},
booktitle={The Eleventh International Conference on Learning Representations },
year={2023},
}

@InProceedings{LeeLuTan23SGMGeneral,
  title = 	 {Convergence of score-based generative modeling for general data distributions},
  author =       {Lee, Holden and Lu, Jianfeng and Tan, Yixin},
  booktitle = 	 {Proceedings of the 34th International Conference on Algorithmic Learning Theory},
  pages = 	 {946--985},
  year = 	 {2023},
  editor = 	 {Agrawal, Shipra and Orabona, Francesco},
  volume = 	 {201},
  series = 	 {Proceedings of Machine Learning Research},
  month = 	 {2},
  publisher =    {PMLR},
}

@inproceedings{Chen+23FlowODE,
 author = {Chen, Sitan and Chewi, Sinho and Lee, Holden and Li, Yuanzhi and Lu, Jianfeng and Salim, Adil},
 booktitle = {Advances in Neural Information Processing Systems},
 editor = {A. Oh and T. Neumann and A. Globerson and K. Saenko and M. Hardt and S. Levine},
 pages = {68552--68575},
 publisher = {Curran Associates, Inc.},
 title = {The probability flow {ODE} is provably fast},
 volume = {36},
 year = {2023}
}

@article{gao2025wasserstein,
  title={Wasserstein convergence guarantees for a general class of score-based generative models},
  author={Gao, Xuefeng and Nguyen, Hoang M and Zhu, Lingjiong},
  journal={Journal of Machine Learning Research},
  volume={26},
  number={43},
  pages={1--54},
  year={2025}
}

@inproceedings{chen2023improved,
  title={Improved analysis of score-based generative modeling: user-friendly bounds under minimal smoothness assumptions},
  author={Chen, Hongrui and Lee, Holden and Lu, Jianfeng},
  booktitle={International Conference on Machine Learning},
  pages={4735--4763},
  year={2023},
  organization={PMLR}
}

@article{li2023towards,
  title={Towards faster non-asymptotic convergence for diffusion-based generative models},
  author={Li, Gen and Wei, Yuting and Chen, Yuxin and Chi, Yuejie},
  journal={arXiv preprint arXiv:2306.09251},
  year={2023}
}

@article{li2024sharp,
  title={A sharp convergence theory for the probability flow {ODEs} of diffusion models},
  author={Li, Gen and Wei, Yuting and Chi, Yuejie and Chen, Yuxin},
  journal={arXiv preprint arXiv:2408.02320},
  year={2024}
}

@InProceedings{GaoZhu25PFODE,
  title = 	 {Convergence analysis for general probability flow {ODEs} of diffusion models in {W}asserstein distances},
  author =       {Gao, Xuefeng and Zhu, Lingjiong},
  booktitle = 	 {Proceedings of the 28th International Conference on Artificial Intelligence and Statistics},
  pages = 	 {1009--1017},
  year = 	 {2025},
  editor = 	 {Li, Yingzhen and Mandt, Stephan and Agrawal, Shipra and Khan, Emtiyaz},
  volume = 	 {258},
  series = 	 {Proceedings of Machine Learning Research},
  month = 	 {5},
  publisher =    {PMLR},
}

@incollection {Pap11Diffusion,
    AUTHOR = {Papaspiliopoulos, Omiros},
     TITLE = {Monte {C}arlo probabilistic inference for diffusion processes:
              a methodological framework},
 BOOKTITLE = {Bayesian time series models},
     PAGES = {82--103},
 PUBLISHER = {Cambridge Univ. Press, Cambridge},
      YEAR = {2011},
}

@article {Wag1988SDE,
    AUTHOR = {Wagner, Wolfgang},
     TITLE = {Monte {C}arlo evaluation of functionals of solutions of
              stochastic differential equations. {V}ariance reduction and
              numerical examples},
   JOURNAL = {Stochastic Anal. Appl.},
  FJOURNAL = {Stochastic Analysis and Applications},
    VOLUME = {6},
      YEAR = {1988},
    NUMBER = {4},
     PAGES = {447--468},
}

@misc{Hua+25QuantizedDiffusion,
      title={Almost linear convergence under minimal score assumptions: quantized transition diffusion}, 
      author={Xunpeng Huang and Yingyu Lin and Nikki L. Kuang and Hanze Dong and Difan Zou and Yian Ma and Tong Zhang},
      year={2025},
      journal={arXiv preprint 2505.21892},
}

@article {beskos2005exact,
    AUTHOR = {Beskos, Alexandros and Roberts, Gareth O.},
     TITLE = {Exact simulation of diffusions},
   JOURNAL = {Ann. Appl. Probab.},
  FJOURNAL = {The Annals of Applied Probability},
    VOLUME = {15},
      YEAR = {2005},
    NUMBER = {4},
     PAGES = {2422--2444},
}

@article{beskos2006retrospective,
  title={Retrospective exact simulation of diffusion sample paths with applications},
  author={Beskos, Alexandros and Papaspiliopoulos, Omiros and Roberts, Gareth O},
  journal={Bernoulli},
  volume={12},
  number={6},
  pages={1077--1098},
  year={2006},
  publisher={Bernoulli Society for Mathematical Statistics and Probability}
}

@article{klartag2023logarithmic,
  title={Logarithmic bounds for isoperimetry and slices of convex sets},
  author={Klartag, Bo'az},
  journal={arXiv preprint arXiv:2303.14938},
  year={2023}
}

@book{polyanskiy2025information,
  title={Information theory: From coding to learning},
  author={Polyanskiy, Yury and Wu, Yihong},
  year={2025},
  publisher={Cambridge university press}
}
\icml{
\bibliographystyle{icml2026}
}

\appendix
\newpage
\icml{
\onecolumn
}

\section{Discussion}

\subsection{Concurrent work}\label{sec:concurrent}
In independent and concurrent work, \citet{gatmiry2026high} study an accelerated ODE flow and \scedit{remarkably} obtain a \emph{high-accuracy} guarantee under a set of structural assumptions. \scedit{This is highly non-trivial since, as discussed in the introduction, prior works based on higher-order methods incur implicit exponential dependencies on problem parameters and the order of the method.} Their results rely on the following conditions:
\begin{enumerate}[leftmargin=3em]
\item[(A1)] The data distribution satisfies $\pdata = p_\star * \normal{0,\sigma_\star^2 I}$, where $p_\star$ is supported on a ball of radius $R$.
\item[(A2)] \emph{Sub-exponential score error}: for some $\epssc = \tO(\delta)$,
\[
\PP_{X \sim p_t}\!\left(\nrm{\scf_t(X) - \scfs_t(X)} \ge u\right) \le \exp(-u/\epssc)\,.
\]
\item[(A3)] The score estimates $(\scf_t)_{t\in [T]}$ are Lipschitz.
\end{enumerate}
Under these assumptions, \citet{gatmiry2026high} propose a method with query complexity $\tO\left((R/\sigma_\star)^2 \log^2(1/\delta)\right)$.

\scedit{In comparison with our results}, we make the following remarks:
\begin{itemize}
\item \fcedit{The query complexity is recovered by \cref{cor:poly-log}, as the intrinsic dimension $\dstar$ can always be upper bounded\footnote{Formally, we instantiate \cref{thm:DM-intrinsic} and \cref{cor:poly-log} by regarding $\pdata=p_\star$, $\alp_0=1$, and $\sigma_0=\sigma_\star$ so that $p_1=p_\star*\normal{0,\sigma_\star^2}$, and then \cref{thm:DM-intrinsic} provides sampling guarantees for $p_1$.} by $(R/\sigma_\star)^2$. On one hand, when $R/\sigma_\star$ is constant, this guarantee is \emph{dimension-free}. On the other hand, $R/\sigma_\star$ could be large in many settings where $\dstar$ remains bounded.}

\item As noted by \citet{gatmiry2026high}, (A1) indeed holds in the \emph{early stopping} regime, i.e., the data distribution actually corresponds to the early stopped distribution $p_1$. However, in this case, achieving distributional closeness to the real data distribution $p_\star$ typically requires $\sigma_\star \ll \delta$, and then we should expect \scedit{$\poly(1/\delta)$ dependence, instead of a high-accuracy $\polylog(1/\delta)$ dependence}. Further, the bound $R$ may also scale with $\sqrt{d}$ in some settings.
\item Assumption (A2) is substantially stronger than our average error condition (\cref{def:score-error}). It is unclear whether standard statistical learning procedures can achieve convergence guarantees expressed in terms of sub-exponential tail bounds.
\item Finally, while the analysis of \citet{gatmiry2026high} is specific to accelerated ODE-based diffusion models, our approach applies more broadly to general first-order sampling methods, including log-concave sampling (\cref{sec:log_concave}).
\end{itemize}

\section{Additional notation and technical tools}
For any function $f:\RR^d\to\RR$ such that $Z_f\deq \int_{\RR^d} e^{-f(x)}dx<+\infty$, we define $\mu_f$ to be the distribution over $\RR^d$ with density $\mu_f(x)=\frac{1}{Z_f}e^{-f(x)}$.

For $\nu \ll \mu$, let $\Ren[\lambda]{\nu}{\mu} \deq \frac{1}{\lambda-1} \log \En_\mu[(\frac{d\nu}{d\mu})^\lambda]$ denote the R\'enyi divergence of order $\lambda > 1$. In addition, we define
\begin{align}
    \Dren{p}{q} \deq \E_p\prn[\big]{\frac{dp}{dq}}^\lambda - 1\,,\qquad
    \Dsys{p}{q}=\max\crl{\Dren{p}{q},\Dren{q}{p}}\,.
\end{align}

\subsection{Functional inequalities}

\begin{definition}[PI]\label{def:PI}
A probability distribution $\mu$ on $\R^d$ satisfies a Poincar\'e inequality (PI) with constant $C$ if for all smooth and compactly supported functions $\phi:\R^d\to \R$,
\begin{align*}
    \var_\mu(\phi)\leq C\En_\mu \nrm{\nabla \phi}^2\,.
\end{align*}
We let $\CPI(\mu)$ denote the smallest constant $C$ such that $\mu$ satisfies PI with constant $C$.
\end{definition}

\begin{definition}[LSI]\label{def:LSI}
A probability distribution $\mu$ on $\R^d$ satisfies a log-Sobolev inequality (LSI) with constant $C$ if for all smooth and compactly supported functions $\phi:\R^d\to \R$,
\begin{align*}
    \En_\mu\brk[\Big]{ \phi^2 \log \frac{\phi^2}{\En_\mu[\phi^2]}}\leq 2C\En_\mu \nrm{\nabla \phi}^2\,.
\end{align*}
We let $\CLSI(\mu)$ denote the smallest constant $C$ such that $\mu$ satisfies LSI with constant $C$.
\end{definition}

The following facts are standard, see~\citet{BGL14} or~\citet[Chapter 2]{Chewi26Book}.

\begin{lemma}[Bakry--\'Emery]\label{lem:LSI-SC}
If $\mu$ is $\alpha$-strongly log-concave, then $\CLSI(\mu)\leq \frac{1}{\alpha}$.
\end{lemma}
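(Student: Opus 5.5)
The plan is the standard Bakry--\'Emery argument via the $\Gamma_2$ criterion, or, more self-containedly, via a coupling or semigroup argument along the Langevin diffusion. Write $\mu\propto e^{-V}$ with $\nabla^2 V\succeq \alpha \Id$. Consider the Langevin generator $\cL\phi = \Delta\phi - \tri{\nabla V,\nabla\phi}$ with carr\'e du champ $\Gamma(\phi)=\nrm{\nabla\phi}^2$. Bochner's formula gives
\begin{align*}
    \Gamma_2(\phi)=\nrmF{\nabla^2\phi}^2+\tri{\nabla\phi,\nabla^2V\,\nabla\phi}\geq \alpha\,\Gamma(\phi)\,,
\end{align*}
so the curvature-dimension condition $CD(\alpha,\infty)$ holds.

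The first step is to derive the gradient commutation bound from $CD(\alpha,\infty)$. With the semigroup $P_t=e^{t\cL}$, this bound reads $\sqrt{\Gamma(P_t\phi)}\leq e^{-\alpha t}P_t\sqrt{\Gamma(\phi)}$. The standard route is to differentiate $s\mapsto P_s\Gamma(P_{t-s}\phi)$, which yields $\Gamma(P_t\phi)\le e^{-2\alpha t}P_t\Gamma(\phi)$. The sharper square-root version, used for entropy, additionally needs the improved inequality $\Gamma_2(\phi)\geq \alpha\Gamma(\phi)+\Gamma(\sqrt{\Gamma(\phi)})$. In the Euclidean setting, the simplest path is synchronous coupling: two Langevin diffusions driven by the same Brownian motion contract at rate $e^{-\alpha t}$ by strong convexity of $V$. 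This immediately gives $\nrm{\nabla P_t\phi}\leq e^{-\alpha t}P_t\nrm{\nabla\phi}$.

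The second step is to express entropy along the semigroup. Set $g=\phi^2$ with $\En_\mu g=1$. Then
\begin{align*}
    \mathrm{Ent}_\mu(g)=\int_0^\infty \En_\mu\frac{\nrm{\nabla P_tg}^2}{P_tg}\,dt\,,
\end{align*}
using ergodicity, since $P_tg\to 1$. By the commutation bound and Cauchy--Schwarz,
\begin{align*}
    \nrm{\nabla P_tg}^2\le e^{-2\alpha t}\prn{P_t\nrm{\nabla g}}^2\le e^{-2\alpha t}\,P_tg\cdot P_t\prn{\nrm{\nabla g}^2/g}\,.
\end{align*}
Integrating over $t$ and using invariance of $\mu$ gives $\mathrm{Ent}_\mu(g)\le \frac{1}{2\alpha}\En_\mu \nrm{\nabla g}^2/g = \frac{2}{\alpha}\En_\mu\nrm{\nabla\phi}^2$. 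This is exactly the LSI with $C=1/\alpha$ in the normalization of \cref{def:LSI}.

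The main obstacle is technical rather than conceptual. One must justify the semigroup calculus: existence and smoothness of $P_t$, interchange of derivative and integral, and the convergence $P_tg\to1$ in entropy. For a general $V$ that is only twice differentiable, this is handled by approximation. First treat smooth compactly supported $\phi$ bounded away from zero, replacing $g$ by $g+\epsilon$. Then mollify $V$ while preserving strong convexity, and pass to limits. Since the paper cites this as a standard fact from \citet{BGL14}, I would present the coupling-based commutation and the entropy integral, and defer these regularity details to that reference.
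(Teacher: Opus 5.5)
Your proof is correct: it is the standard Bakry--\'Emery semigroup argument. The strong gradient bound $\|\nabla P_t\phi\|\le e^{-\alpha t}P_t\|\nabla\phi\|$ comes from synchronous coupling, and the final constant $\frac{1}{2\alpha}\En_\mu\|\nabla g\|^2/g=\frac{2}{\alpha}\En_\mu\|\nabla\phi\|^2$ matches the normalization in \cref{def:LSI}. The paper gives no proof of its own and simply cites the Bakry--Gentil--Ledoux monograph and Chewi's book, where essentially this argument appears, so deferring the regularization details to those references is appropriate.
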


\begin{lemma}[Holley--Stroock perturbation principle]\label{lem:HS-perturb}
Let $\mu$ and $\nu$ be probability measures such that $\frac{1}{b}\leq \frac{d\nu}{d\mu}\leq b$ for some parameter $b>0$. Then $\CLSI(\nu)\leq b^2\,\CLSI(\mu)$.
\end{lemma}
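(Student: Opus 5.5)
This is the standard Holley--Stroock argument, which I would take from \citet{BGL14}. The idea is to compare entropies under $\nu$ and $\mu$ through the variational characterization of entropy, which holds for any positive measure:
\begin{align*}
    \mathrm{Ent}_\mu(g)\deq \En_\mu\brk[\Big]{g\log\frac{g}{\En_\mu g}} = \inf_{t>0}\En_\mu\brk[\big]{g\log g - g\log t - g + t}\,, \qquad g\geq 0\,.
\end{align*}
To establish this identity, minimize over $t$: the optimum is $t=\En_\mu g$, and the integrand $\psi_t(g) \deq g\log g - g\log t - g + t$ is pointwise nonnegative by convexity of $u\mapsto u\log u$. With $g=\phi^2$, the left-hand side of the LSI is exactly $\mathrm{Ent}_\nu(\phi^2)$.

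The key steps, in order, are as follows. First, since $\psi_t\geq 0$ and $d\nu/d\mu\leq b$, we have for every $t>0$
\begin{align*}
    \mathrm{Ent}_\nu(\phi^2)\leq \En_\nu\brk{\psi_t(\phi^2)} \leq b\,\En_\mu\brk{\psi_t(\phi^2)}\,.
\end{align*}
Second, take the infimum over $t$ to get $\mathrm{Ent}_\nu(\phi^2)\leq b\,\mathrm{Ent}_\mu(\phi^2)$. Third, apply the LSI for $\mu$, which gives $\mathrm{Ent}_\mu(\phi^2)\leq 2\CLSI(\mu)\,\En_\mu\nrm{\nabla\phi}^2$. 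Fourth, use $d\mu/d\nu\leq b$, which yields $\En_\mu\nrm{\nabla \phi}^2\leq b\,\En_\nu\nrm{\nabla\phi}^2$. Chaining these gives
\begin{align*}
    \mathrm{Ent}_\nu(\phi^2)\leq 2b^2\,\CLSI(\mu)\,\En_\nu\nrm{\nabla\phi}^2\,,
\end{align*}
and hence $\CLSI(\nu)\leq b^2\CLSI(\mu)$.

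The only delicate point, and the main obstacle, is the first step: passing from $\nu$ to $\mu$ requires a \emph{nonnegative} integrand. Naively writing $\En_\nu[\phi^2\log(\phi^2/\En_\nu\phi^2)]$ and bounding the density ratio fails, because that integrand changes sign. The variational formula with $\psi_t\geq 0$ is exactly what resolves this. A minor technical remark is that for smooth compactly supported $\phi$ all quantities are finite. Moreover, the hypothesis $\tfrac1b\leq d\nu/d\mu\leq b$ forces $b\geq1$ and mutual absolute continuity, so every change of measure above is legitimate.
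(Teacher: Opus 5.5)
Your proof is correct, and it is the standard Holley--Stroock argument from \citet{BGL14}; the paper does not prove this lemma itself but cites it as a standard fact from that reference and \citet{Chewi26Book}, so your route is the intended one. In particular, the variational formula with the pointwise nonnegative integrand $\psi_t$ is what justifies the change of measure from $\nu$ to $\mu$, and each of the two density-ratio bounds contributes one factor of $b$.
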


\subsection{Technical lemmas}

\subsubsection{Tail estimates}

\begin{lemma}\label{lem:Gaussian-vMGF}
\begin{align*}
    \En_{W\sim \normal{0,\eta\Id}} \exp\prn*{\abs{\tri{W,v}}}\leq 2\exp\prn[\Big]{\frac{\eta\nrm{v}^2}{2}}\,.
\end{align*}
For $\lambda\leq \frac{1}{2\eta}$,
\begin{align*}
    \En_{W\sim \normal{0,\eta\Id}} \exp\prn[\Big]{\frac12\lambda\nrm{W+v}^2}\leq \exp\prn[\big]{d\lambda \eta+\lambda\nrm{v}^2}\,.
\end{align*}
Further, for $t\geq 0$,
\begin{align*}
    \PP_{x\sim \normal{v,\Sigma}}\prn*{\abs{x\tp Ax-v\tp A v-\tr(\Sigma A)}\geq t}&\leq 2\exp\prn*{ -\frac{1}{16\nrmop{\Sigma}}\min\crl*{\frac{t^2}{\nrmop{\Sigma}\nrmF{A}^2+\nrm{Av}^2}, \frac{t}{\nrmop{A}} }}\,.
\end{align*}
\end{lemma}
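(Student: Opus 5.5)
The lemma has three independent parts. The first two follow from one-dimensional Gaussian moment generating function (MGF) computations. The third is a Hanson--Wright type bound, and for it I would compute the exact MGF of a Gaussian quadratic form plus a linear term, then apply a Chernoff bound.

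\textbf{Part 1.} Use $e^{|u|}\le e^{u}+e^{-u}$. Since $\tri{W,v}\sim\normal{0,\eta\nrm{v}^2}$, each of the two terms has expectation $\exp(\eta\nrm{v}^2/2)$. This gives the factor $2$ directly.

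\textbf{Part 2.} Compute the Gaussian integral exactly. Coordinatewise completion of the square gives, for $\lambda\eta<1$,
\begin{align*}
\En\exp\prn[\Big]{\tfrac12\lambda\nrm{W+v}^2}=(1-\lambda\eta)^{-d/2}\exp\prn[\Big]{\frac{\lambda\nrm{v}^2}{2(1-\lambda\eta)}}.
\end{align*}
Set $u=\lambda\eta\le\frac12$. Then $-\log(1-u)\le \frac{u}{1-u}\le 2u$, so the first factor is at most $e^{d\lambda\eta}$. Also $\frac{1}{2(1-u)}\le 1$, so the second factor is at most $e^{\lambda\nrm{v}^2}$.

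\textbf{Part 3.} First, we may assume $A$ is symmetric: replace $A$ by $(A+A\tp)/2$, which leaves the quadratic form and the trace unchanged and does not increase $\nrmF{A}$, $\nrmop{A}$ or $\nrm{Av}$ (the last after checking that the linear term below only involves the symmetric part). Write $x=v+\Sigma^{1/2}g$ with $g\sim\normal{0,\Id}$. Then
\begin{align*}
x\tp Ax-v\tp Av-\tr(\Sigma A)=g\tp Mg-\tr M+2b\tp g,\qquad M\deq\Sigma^{1/2}A\Sigma^{1/2},\quad b\deq \Sigma^{1/2}Av.
\end{align*}
The parameters satisfy $\nrmF{M}\le\nrmop{\Sigma}\nrmF{A}$, $\nrmop{M}\le\nrmop{\Sigma}\nrmop{A}$ and $\nrm{b}^2\le\nrmop{\Sigma}\nrm{Av}^2$. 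Diagonalize $M=U\,\mathrm{diag}(\mu_i)\,U\tp$; by rotation invariance of $g$, the problem becomes independent one-dimensional terms $\mu_i(g_i^2-1)+2c_ig_i$, where $c=U\tp b$.

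Each term has an explicit MGF. For $|\theta|\le 1/(4\nrmop{M})$,
\begin{align*}
\En e^{\theta(\mu(g^2-1)+2cg)}=(1-2\theta\mu)^{-1/2}e^{-\theta\mu}\exp\prn[\Big]{\frac{2\theta^2c^2}{1-2\theta\mu}}.
\end{align*}
Using $-\tfrac12\log(1-y)-\tfrac y2\le y^2/2$ for $|y|\le\tfrac12$, with $y=2\theta\mu$, each term's log-MGF is at most $2\theta^2\mu^2+4\theta^2c^2$. Summing over $i$, the total log-MGF is at most $2\theta^2(\nrmF{M}^2+2\nrm{b}^2)\le 4\theta^2\nrmop{\Sigma}(\nrmop{\Sigma}\nrmF{A}^2+\nrm{Av}^2)$.

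Now apply Chernoff and optimize $\theta$ subject to $|\theta|\le 1/(4\nrmop{\Sigma}\nrmop{A})$. This gives a tail of the form $\exp(-\min\{t^2/(16\,\cdot),\,t/(8\,\cdot)\})$, which is within the claimed $\exp(-\tfrac1{16\nrmop{\Sigma}}\min\{\cdots\})$. Doing the same for $-\theta$ and taking a union bound over the two sides yields the factor $2$.

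The only step I expect to be delicate is the constant bookkeeping in Part 3. One has to handle the interplay of the linear term with the constraint $|\theta|\le 1/(4\nrmop{M})$, which keeps $(1-2\theta\mu)^{-1}\le 2$, and confirm that the resulting constants fit under $16$. Treating the whole expression through a single MGF, rather than splitting into quadratic and linear pieces and taking a union bound, avoids losing an extra factor of $2$ in the prefactor.
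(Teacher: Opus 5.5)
Your argument for the substantive third part follows the paper's route: exact Gaussian MGF of the quadratic-plus-linear form, a second-order bound on the log-determinant, a Chernoff bound with the parameter restricted to $|\theta|\lesssim 1/(\nrmop{\Sigma}\nrmop{A})$, and a two-sided union bound. The paper does the same computation in matrix form, via $\det(\Id-\Sigma^{1/2}A\Sigma^{1/2})^{-1/2}$ and $\nrm{Av}^2_{\tilde\Sigma}$ with $\tilde\Sigma^{-1}=\Sigma^{-1}-A$, instead of diagonalizing; the first two parts it simply calls standard. Your constants check out: you even get $8$ in place of $16$ in the linear-tail regime.

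One claim must go: the symmetrization step. Replacing $A$ by $\tfrac12(A+A\tp)$ can increase $\nrm{Av}$. Since the linear term is $2g\tp\Sigma^{1/2}\tfrac12(A+A\tp)v$, the reduction does not go through. In fact the inequality is false for nonsymmetric $A$. Take $d=2$, $\Sigma=\Id$, $A=e_1e_2\tp$ and $v=Re_1$. Then $Av=0$, $\nrmF{A}=\nrmop{A}=1$, and $x\tp Ax-v\tp Av-\tr(\Sigma A)=(R+g_1)g_2$. The probability that this exceeds $t=R$ in absolute value tends to $\PP(|g_2|\ge 1)\approx 0.32$, while the claimed bound is $2e^{-R/16}$. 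So just assume $A$ symmetric, as the paper implicitly does: its proof works under $A\preceq\frac12\Sigma^{-1}$.

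Similarly, in the second part your step $-\log(1-u)\le 2u$ silently uses $u=\lambda\eta\ge 0$. This restriction is genuinely needed: for $\lambda<0$ and $v=0$ the left side equals $(1+|\lambda|\eta)^{-d/2}$, which exceeds $e^{-d|\lambda|\eta}$.
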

\begin{proof}
    \newcommand{\tSigma}{\wt{\Sigma}}
The first two statements are standard. For the third, for any $A\preceq \frac12\Sigma^{-1}$, we can calculate
\begin{align*}
    \En_{x\sim \normal{v,\Sigma}}\exp\prn[\Big]{\frac12\,x\tp Ax-\frac12\,v\tp A v}
    =\En_{W\sim \normal{0,\Sigma}} \exp\prn[\Big]{ v\tp A W+\frac12\, W\tp A W }
    =\sqrt{\frac{\det(\tSigma)}{\det(\Sigma)}}\exp\prn[\Big]{ \frac12\nrm{Av}_{\tSigma}^2 }\,,
\end{align*}
where $\tSigma^{-1}=\Sigma^{-1}-A$. Noting that $\Sigma^{1/2}\tSigma^{-1}\Sigma^{1/2}=\Id-\Sigma^{1/2}A\Sigma^{1/2}\succeq \frac12\Id$, we can bound
\begin{align*}
    \frac{\det(\tSigma)}{\det(\Sigma)}=\frac{1}{\det\prn*{\Id-\Sigma^{1/2}A\Sigma^{1/2}}}\leq \exp\prn[\big]{\tr(\Sigma^{1/2}A\Sigma^{1/2})+\nrmF{\Sigma^{1/2}A\Sigma^{1/2}}^2}\,,
\end{align*}
where we use $-\log(1-\lambda)\leq \lambda+\lambda^2$ for $\lambda\leq \frac12$. Therefore, we have shown
\begin{align*}
    \En_{x\sim \normal{v,\Sigma}}\exp\prn[\Big]{\frac12\,\prn*{x\tp Ax-v\tp A v-\tr(\Sigma A)}}
    \leq\exp\prn[\big]{\nrmF{\Sigma^{1/2}A\Sigma^{1/2}}^2+\nrm{Av}_{\Sigma}^2}\,, 
\end{align*}
as long as $A\preceq \frac12\Sigma^{-1}$. Then, by rescaling $A\leftarrow \lambda A$ and requiring $\abs{\lambda}\leq \frac{1}{2\nrmop{\Sigma^{1/2}A\Sigma^{1/2}}}$, we have
\begin{align*}
    \PP\prn*{\abs{x\tp Ax-v\tp A v-\tr(\Sigma A)}\geq t}\leq 2\exp\prn[\Big]{ \lambda^2\,\prn[\big]{\nrmF{\Sigma^{1/2}A\Sigma^{1/2}}^2+\nrm{Av}_{\Sigma}^2}-\frac12\lambda t }\,, \qquad \forall t>0\,.
\end{align*}
Suitably choosing $\lambda$ gives, for all $t\ge 0$,
\begin{align*}
    \PP\prn*{\abs{x\tp Ax-v\tp A v-\tr(\Sigma A)}\geq t}\leq&~ 2\exp\prn*{ -\min\crl*{\frac{t^2}{16(\nrmF{\Sigma^{1/2}A\Sigma^{1/2}}^2+\nrm{Av}_{\Sigma}^2)},\, \frac{t}{8\nrmop{\Sigma^{1/2}A\Sigma^{1/2}}} }} \\
    \leq&~ 2\exp\prn*{ -\frac{1}{16\nrmop{\Sigma}}\min\crl*{\frac{t^2}{\nrmop{\Sigma}\nrmF{A}^2+\nrm{Av}^2},\, \frac{t}{\nrmop{A}} }}\,.
\end{align*}
\end{proof}

\begin{lemma}\label{lem:Gaussian-s}
Suppose that $\eta>0$ and $0\leq \lambda\leq \frac{d^{1-s}}{4s\eta^s}$. Then, it holds that
\begin{align*}
    \En_{W\sim \normal{0,\eta\Id}} \exp\prn*{\lambda\nrm{W}^{2s}}\leq \exp\prn*{2(\eta d)^s \lambda}\,.
\end{align*}
\end{lemma}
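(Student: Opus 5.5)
The plan is to reduce the claim to a moment bound on the chi-squared variable $\|W\|^2/\eta$ and then use concavity of $t\mapsto t^s$ together with Jensen's inequality. Write $W=\sqrt{\eta}\,G$ with $G\sim\normal{0,\Id}$, so that $\|W\|^{2s}=\eta^s\|G\|^{2s}$. Setting $\mu\deq\lambda\eta^s$, the target becomes
\begin{align*}
    \En\exp\prn*{\mu\nrm{G}^{2s}}\leq \exp\prn*{2d^s\mu}\,, \qquad 0\le \mu\le \frac{d^{1-s}}{4s}\,.
\end{align*}
The case $s=0$ is trivial, since then $\|G\|^0=1$ and $e^{\mu}\le e^{2\mu}$. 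Henceforth assume $s\in(0,1]$.

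The key step is to linearize $t^s$ at $t=d$, since $\|G\|^2$ concentrates around $d$. By concavity of $t\mapsto t^s$ on $[0,\infty)$,
\begin{align*}
    t^s\le d^s+s d^{s-1}(t-d)\,, \qquad t\ge 0\,.
\end{align*}
Substituting $t=\|G\|^2$ gives
\begin{align*}
    \En e^{\mu\|G\|^{2s}}\le e^{\mu d^s(1-s)}\,\En\exp\prn*{\mu s d^{s-1}\|G\|^2}\,.
\end{align*}
Let $\theta\deq \mu s d^{s-1}$. The constraint on $\mu$ is exactly $\theta\le 1/4$. The chi-squared MGF then gives $\En e^{\theta\|G\|^2}=(1-2\theta)^{-d/2}$. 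Using $-\log(1-x)\le x+x^2\le 2x$ for $x=2\theta\le 1/2$, we get
\begin{align*}
    (1-2\theta)^{-d/2}\le \exp\prn*{d(\theta+2\theta^2)}\le \exp\prn*{\tfrac32 d\theta}\,.
\end{align*}
Here we used $2\theta^2\le \theta/2$, which holds because $\theta\le 1/4$. Consequently the exponent is bounded by
\begin{align*}
    \mu d^s(1-s)+\tfrac32 s\mu d^s\le \tfrac32\mu d^s\le 2\mu d^s\,,
\end{align*}
which is the claim once we substitute $\mu=\lambda\eta^s$.

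The only delicate point is making sure the linearization coefficient matches the stated range of $\lambda$. The factor $s d^{s-1}$ is exactly what turns $\lambda\le d^{1-s}/(4s\eta^s)$ into $\theta\le 1/4$, which is what keeps the Gaussian MGF finite with a clean constant. One could alternatively invoke the second bound of \cref{lem:Gaussian-vMGF} with $v=0$, which requires $\theta\le 1/2$ after rescaling. That route gives
\begin{align*}
    \En e^{\theta\|G\|^2}\le e^{2d\theta}\,,
\end{align*}
and hence the slightly weaker constant $(1+s)$ in place of $\tfrac32$. This is still at most $2$, so that route also suffices.
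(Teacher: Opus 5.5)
Your proof is correct and is essentially the paper's argument: the paper uses the same tangent-line bound $w^s\le s\,w/(\eta d)^{1-s}+(1-s)(\eta d)^s$, which is your linearization at $d$ after rescaling $W=\sqrt{\eta}\,G$, followed by the second bound of \cref{lem:Gaussian-vMGF}, giving the constant $1+s\le 2$; your explicit $\chi^2$ MGF computation gives the slightly sharper $1+s/2$. One small slip in your closing remark: applying \cref{lem:Gaussian-vMGF} with $\eta=1$ requires its parameter $2\theta$ to be at most $1/2$, i.e.\ $\theta\le 1/4$ rather than $\theta\le 1/2$, which you have anyway.
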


\begin{proof}
We use the inequality $w^s\leq s\cdot\frac{w}{(\eta d)^{1-s}}+(1-s)\cdot (\eta d)^s$ for $w\geq 0$. Therefore, as long as $\frac{s\lambda}{(\eta d)^{1-s}}\leq \frac{1}{4\eta}$, it holds that
\begin{align*}
    \En\exp\prn*{\lambda\nrm{W}^{2s}}
    \leq \En\exp\prn[\Big]{\frac{s\lambda}{(\eta d)^{1-s}}\nrm{W}^{2}+(1-s)(\eta d)^s\lambda }
    \leq \exp\prn*{2s(\eta d)^s\lambda+(1-s)(\eta d)^s\lambda }
    \leq \exp\prn*{2(\eta d)^s \lambda}\,,
\end{align*}
where the second inequality follows from \cref{lem:Gaussian-vMGF}.
\end{proof}

\newcommand{\nrmexp}[1]{\nrm{#1}_{\psi_1}}
\begin{lemma}\label{lem:sub-exp-trunc}
For a random variable $Y$, we define the \emph{sub-exponential norm} of $Y$ as
\begin{align*}
    \nrmexp{Y}\deq \inf\crl{M>0: \En\exp(\abs{Y}/M)\leq 2}\,.
\end{align*} 
Then for any $s\geq 1$ and $t> 2s\nrmexp{Y}$, we have $\En(\abs{Y}^s-t^s)_+\leq 4t^se^{-t/\nrmexp{Y}}$. 
\end{lemma}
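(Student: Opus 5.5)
Write $K\deq\nrmexp{Y}$ and use the layer-cake formula for the positive part:
\begin{align*}
    \En(\abs{Y}^s-t^s)_+=\int_{t^s}^\infty \PP(\abs{Y}^s>u)\,du=\int_t^\infty s\,v^{s-1}\,\PP(\abs{Y}>v)\,dv\,,
\end{align*}
where the second equality is the substitution $u=v^s$. The definition of the sub-exponential norm together with Markov's inequality gives the tail bound $\PP(\abs{Y}>v)\le \En e^{\abs{Y}/K}\,e^{-v/K}\le 2e^{-v/K}$. (If the infimum in the definition is not attained, apply the argument with $K+\epsilon$ and let $\epsilon\downarrow0$.) Substituting this bound reduces the claim to the estimate
\begin{align*}
    \En(\abs{Y}^s-t^s)_+\le 2s\int_t^\infty v^{s-1}e^{-v/K}\,dv\,.
\end{align*}

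The main step is to bound this incomplete Gamma integral using $t>2sK$. Set $g(v)=v^{s-1}e^{-v/K}$. Its logarithmic derivative is
\begin{align*}
    \frac{g'(v)}{g(v)}=\frac{s-1}{v}-\frac1K\,.
\end{align*}
For $v\ge t>2sK$ we have $\frac{s-1}{v}\le\frac{s}{v}<\frac{1}{2K}$, so $g'(v)/g(v)\le -\frac{1}{2K}$. Integrating this differential inequality from $t$ gives $g(v)\le g(t)\,e^{-(v-t)/(2K)}$ for all $v\ge t$. Integrating once more,
\begin{align*}
    \int_t^\infty g(v)\,dv\le 2K\,g(t)=2K\,t^{s-1}e^{-t/K}\,.
\end{align*}

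Combining the two displays gives
\begin{align*}
    \En(\abs{Y}^s-t^s)_+\le 4sK\,t^{s-1}e^{-t/K}\,.
\end{align*}
Since $sK<t/2\le t$, we have $4sK\,t^{s-1}\le 4t^s$, and hence $\En(\abs{Y}^s-t^s)_+\le 4t^se^{-t/K}$, which is the claim. The only delicate point is the monotone-ratio comparison for the Gamma tail; this is exactly where the hypothesis $t>2s\nrmexp{Y}$ is used, because it makes the polynomial factor $v^{s-1}$ grow more slowly than half the exponential decay rate.
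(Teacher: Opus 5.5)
Your proof is correct and follows the same route as the paper. Both arguments get the tail bound $\PP(\abs{Y}>v)\le 2e^{-v/K}$ from the definition, write $\En(\abs{Y}^s-t^s)_+$ as $s\int_t^\infty v^{s-1}\PP(\abs{Y}>v)\,dv$, apply the incomplete-Gamma estimate $2s\int_t^\infty v^{s-1}e^{-v/K}\,dv\le 4sK\,t^{s-1}e^{-t/K}$ for $t>2sK$, and finish with $sK\le t$. Your log-derivative comparison supplies a justification for the Gamma-tail step, which the paper asserts without proof. Your exact layer-cake identity also avoids the paper's intermediate step through $\En\abs{Y}^s\indic\{\abs{Y}\ge A\}$; as written there, that step drops the boundary term $A^s\PP(\abs{Y}\ge A)$, although the paper's final bound is unaffected.
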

\begin{proof}
Fix any $M>\nrmexp{Y}$, and we only need to show $\En(\abs{Y}^s-t^s)_+\leq 4t^s e^{-t/M}$ for $t\geq 2sM$. Without loss of generality we assume $M=1$.  Note that we have $\PP(\abs{Y}\geq y)\leq 2e^{-y/M}$ for $y\geq 0$. Therefore, for $A\geq 2s$, 
\begin{align*}
    \En(\abs{Y}^s-A^s)_+
    \leq \En\abs{Y}^s \indic\crl{\abs{Y}\geq A}
    =s\int_A^{\infty} \PP(\abs{Y}\geq y)\,y^{s-1}\,dy
    \leq 2s\int_A^{\infty} e^{-y}\,y^{s-1}\,dy\leq 4sA^{s-1}e^{-A}\,.
\end{align*}
This is the desired upper bound. 
\end{proof}

\begin{lemma}\label{lem:Gaussian-Dcov}
Suppose $p=\normal{u,\eta\Id}$ and $q=\normal{v,\eta\Id}$. Then for $\delta\in(0,1)$, as long as $\eta\geq 8\log(1/\delta)\nrm{u-v}^2$, it holds that $p\prn[\big]{\frac{dp}{dq}\geq e}\leq \delta$.
\end{lemma}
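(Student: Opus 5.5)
The plan is to compute the likelihood ratio in closed form and then reduce the event $\{\frac{dp}{dq}\geq e\}$ to a one-dimensional Gaussian tail bound. Write $\Delta\deq u-v$. For $x\sim p$ we put $x=u+W$ with $W\sim\normal{0,\eta\Id}$. Then
\begin{align*}
    \log\frac{dp}{dq}(x)=\frac{\nrm{x-v}^2-\nrm{x-u}^2}{2\eta}=\frac{\nrm{W+\Delta}^2-\nrm{W}^2}{2\eta}=\frac{\tri{W,\Delta}}{\eta}+\frac{\nrm{\Delta}^2}{2\eta}\,.
\end{align*}
So the log-likelihood ratio under $p$ is exactly a Gaussian, distributed as $\normal{\frac{\nrm{\Delta}^2}{2\eta},\frac{\nrm{\Delta}^2}{\eta}}$, and the event $\frac{dp}{dq}\geq e$ is the event that this Gaussian is at least $1$.

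Set $a\deq \nrm{\Delta}^2/\eta$. The assumption $\eta\geq 8\log(1/\delta)\nrm{\Delta}^2$ gives $a\leq \frac{1}{8\log(1/\delta)}$. If $\Delta=0$ the ratio equals $1<e$ and the probability is $0$, so assume $a>0$. Then
\begin{align*}
    p\prn[\big]{\tfrac{dp}{dq}\geq e}=\PP\prn[\big]{\xi\geq (1-a/2)/\sqrt{a}}\,,\qquad \xi\sim\normal{0,1}\,.
\end{align*}

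To finish, use the Gaussian tail bound $\PP(\xi\geq t)\leq e^{-t^2/2}$. We need $t^2/2\geq\log(1/\delta)$ with $t=(1-a/2)/\sqrt a$. There are two cases.
\begin{itemize}
    \item When $\delta\leq e^{-1/4}$, we have $\log(1/\delta)\geq 1/4$, so $a\leq 1/2$ and $1-a/2\geq 3/4$. Hence $t^2\geq \frac{9}{16a}\geq \frac{9}{2}\log(1/\delta)$, which gives the bound $\delta^{9/4}\leq\delta$.
    \item When $\delta$ is close to $1$ (so $a$ may be large), the statement is weaker. I would either note that the probability is at most $\PP(\xi\geq t)$ with $t$ still large relative to $\sqrt{2\log(1/\delta)}$, or restrict attention to the regime $\delta\le 1/2$ as used downstream. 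If needed, I would handle large $a$ by observing that $t=(1-a/2)/\sqrt a$ is positive for $a<2$ and checking the inequality directly. This edge case is the only potential obstacle; the main regime is a one-line Gaussian tail computation.
\end{itemize}
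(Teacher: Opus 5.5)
Your reduction is right: under $p$ the log-likelihood ratio is exactly $\mathsf{N}(a/2,a)$ with $a=\|u-v\|^2/\eta$. Your first case is a complete proof for every $\delta\le e^{-1/4}$, which covers the regime $\delta\in(0,\tfrac12]$ where the paper uses the lemma. (The paper states this lemma without proof, so there is no competing argument to compare with.)

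The gap is the case you defer, and it cannot be closed, because the statement is false there. The hypothesis only gives $a\le 1/(8\log(1/\delta))$, which permits $a\to\infty$ as $\delta\uparrow 1$. In that limit $1-p(\frac{dp}{dq}\ge e)=\mathbb{P}(\xi\ge (a/2-1)/\sqrt a)$ decays like $e^{-a/8}$, whereas $1-\delta=1-e^{-1/(8a)}\approx 1/(8a)$. Concretely, take $\|u-v\|^2=100\eta$ and $\delta=e^{-1/800}\approx 0.99875$. The hypothesis holds with equality, yet $p(\frac{dp}{dq}\ge e)=\Phi(4.9)>0.99999>\delta$, where $\Phi$ is the standard normal CDF.

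Your first suggested remedy fails for the same reason. As soon as $\delta>e^{-1/16}$ the hypothesis allows $a>2$, so the threshold $t=(1-a/2)/\sqrt a$ is negative and no tail bound applies. ``Checking the inequality directly'' would reveal failure for $\delta$ above roughly $0.996$.

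The correct fix is your second remedy, made precise. Either restrict the lemma to $\delta\le e^{-1/4}$, or strengthen the hypothesis to $\eta\ge \max\{8\log(1/\delta),2\}\,\|u-v\|^2$. Your case-one computation uses only $a\le\tfrac12$ and $a\le 1/(8\log(1/\delta))$. Under the strengthened hypothesis it therefore yields $p(\frac{dp}{dq}\ge e)\le\delta^{9/4}\le\delta$ for all $\delta\in(0,1)$. None of the downstream applications (e.g.\ in the proof of \cref{prop:coverage-backward}, where $\delta$ is small) are affected.
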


\begin{lemma}\label{lem:Ito-Freedman}
    Suppose that $(B_t)_{t\ge 0}$ is the $d$-dimensional Brownian motion and $(J_t)_{t\ge 0}$ is a process of symmetric random matrices adapted to the filtration of $(B_t)_{t\ge 0}$.
Denote $Z_t\deq\int_0^t J_s\, dB_s$ and $M_t\deq \int_0^t J_s^2\,ds$. Then for any $R>0$, and distribution $u$ over $[0,T]$, 
\begin{align*}
    \bbP(\En_{t\sim u}\nrm{Z_t}^2\geq 4R\log(e/\delta))\leq 2\bbP(\tr(M_T)\geq R)+\delta\,.
\end{align*}
\end{lemma}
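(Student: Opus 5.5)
The plan is to localize with a stopping time, prove an exponential moment bound for the stopped process by Gaussian randomization, and finish with Jensen over $t\sim u$ and Markov's inequality.

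\textbf{Localization.} Let $\tau\deq\inf\{t: \tr(M_t)\ge R\}\wedge T$. Put $\tilde Z_t\deq Z_{t\wedge\tau}=\int_0^t J_s\indic\{s\le\tau\}\,dB_s$ and $\tilde M_t\deq M_{t\wedge\tau}$. Since $t\mapsto \tr(M_t)$ is continuous and nondecreasing, $\tr(\tilde M_T)\le R$ almost surely. On the event $\{\tr(M_T)<R\}$ we have $\tau=T$, so $\tilde Z_t=Z_t$ for all $t\le T$. It therefore suffices to show
\begin{align*}
\bbP\bigl(\En_{t\sim u}\nrm{\tilde Z_t}^2\ge 4R\log(e/\delta)\bigr)\le\delta
\end{align*}
up to constants, and then add $\bbP(\tr(M_T)\ge R)$. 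The factor $2$ in front of this probability leaves room for slack, for instance if one prefers to stop at a slightly smaller level.

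\textbf{Exponential moment via Gaussian randomization.} Fix $t$ and draw $\xi\sim\normal{0,\Id}$ independently of $B$. Conditionally on $\xi$, the process
\begin{align*}
Y_t\deq\exp\Bigl(a\tri{\xi,\tilde Z_t}-\tfrac{a^2}{2}\,\xi\tp\tilde M_t\,\xi\Bigr)
\end{align*}
is a nonnegative local martingale, hence a supermartingale, and so $\En Y_t\le 1$. Averaging over $\xi$ gives
\begin{align*}
\En\exp\bigl(\tfrac{a^2}{8}\nrm{\tilde Z_t}^2\bigr)=\En_{\xi,B}\exp\bigl(\tfrac a2\tri{\xi,\tilde Z_t}\bigr)=\En\Bigl[Y_t^{1/2}\exp\bigl(\tfrac{a^2}{4}\xi\tp\tilde M_t\xi\bigr)\Bigr].
\end{align*}
By Cauchy--Schwarz this is at most
\begin{align*}
\bigl(\En Y_t\bigr)^{1/2}\Bigl(\En\exp\bigl(\tfrac{a^2}{2}\xi\tp\tilde M_t\xi\bigr)\Bigr)^{1/2}.
\end{align*}
Integrating over $\xi$ first, the second factor equals $\En_B\det(\Id-a^2\tilde M_t)^{-1/2}$. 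If $a^2R\le\frac12$, then all eigenvalues of $a^2\tilde M_t$ are at most $\frac12$, and $-\log(1-x)\le 2x$ on $[0,\frac12]$ bounds this determinant term by $\exp(a^2\tr\tilde M_t)\le e^{a^2R}$. This step relies on the pathwise bound $\tr(\tilde M_t)\le R$, which is exactly why we localize. Hence $\En\exp(c\nrm{\tilde Z_t}^2/R)\le C_0$ for absolute constants $c,C_0$.

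\textbf{Conclusion.} By convexity of $\exp$ (Jensen over $t\sim u$) and Fubini,
\begin{align*}
\En\exp\bigl(c\,\En_{u}\nrm{\tilde Z_t}^2/R\bigr)\le\En_u\En\exp\bigl(c\nrm{\tilde Z_t}^2/R\bigr)\le C_0.
\end{align*}
Markov's inequality then gives $\bbP(\En_u\nrm{\tilde Z_t}^2\ge (R/c)\log(C_0/\delta))\le\delta$.

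\textbf{Main obstacle.} The delicate step is the constant bookkeeping needed to land exactly on $4R\log(e/\delta)$, that is, $c=\frac14$ and $C_0=e$. The crude Cauchy--Schwarz split above loses a factor in $c$. The first thing I would try is to replace it with Hölder with exponents tuned to $a$, or to use directly the exact identity $\En_\xi Y_t=\det(\Id+a^2\tilde M_t)^{-1/2}\cdots$-type formulas. If the constants still do not match, I would absorb the loss by stopping at level $R/2$; this is where the extra factor $2$ in the statement would be used.
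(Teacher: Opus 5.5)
Your localization and Gaussian randomization are sound, but as written the argument proves a weaker inequality than the one stated, and two of your three proposed repairs do not work. With your choices ($a^2=1/(2R)$, Cauchy--Schwarz) you get $\En\exp(\nrm{\tilde Z_t}^2/(16R))\le e^{1/4}$. This gives $\bbP(\En_{u}\nrm{Z_t}^2\ge 16R\log(e^{1/4}/\delta))\le\bbP(\tr(M_T)\ge R)+\delta$, which is off by a factor $4$ in the $R\log(1/\delta)$ term. That would be harmless downstream, where such constants are absorbed, but it is not the stated inequality.

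\emph{Stopping at a lower level.} This cannot be paid for by the factor $2$. That factor multiplies the probability, and $\bbP(\tr(M_T)\ge R/2)$ (or $R/4$, which is what you would actually need) is not bounded by $2\bbP(\tr(M_T)\ge R)$. For example, if $\tr(M_T)=0.7R$ deterministically, the left side is $1$ and the right side is $0$.

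\emph{Re-tuning H\"older in the same split.} This does not help either. Write $e^{b\tri{\xi,\tilde Z_t}}=Y_t^{b/a}e^{\frac{ab}{2}\xi\tp\tilde M_t\xi}$ and apply H\"older with exponent $a/b$. The conjugate factor is integrable in $\xi$ only if $\frac{a^2b}{a-b}\nrmop{\tilde M_t}<1$. Localization allows $\nrmop{\tilde M_t}$ to be as large as $R$, and $\min_{a>b}\frac{a^2}{a-b}=4b$, so this forces $b^2<\frac{1}{4R}$. The exponent $b^2/2$ then stays below $\frac{1}{8R}$, and the leading constant stays above $8$.

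What closes the gap is your third option, carried out exactly: integrate $Y_t$ over $\xi$ instead of splitting it,
\begin{align*}
1\ge\En_B\En_\xi Y_t=\En_B\Bigl[\det(\Id+a^2\tilde M_t)^{-1/2}\exp\Bigl(\tfrac{a^2}{2}\tilde Z_t\tp(\Id+a^2\tilde M_t)^{-1}\tilde Z_t\Bigr)\Bigr]\,.
\end{align*}
Take $a^2=1/R$ and use $\nrmop{\tilde M_t}\le\tr(\tilde M_t)\le R$ together with $\log\det(\Id+a^2\tilde M_t)\le a^2\tr(\tilde M_t)$. This gives $\En\exp(\nrm{\tilde Z_t}^2/(4R))\le e^{1/2}$. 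Your Jensen-plus-Markov step then yields the threshold $4R\log(e^{1/2}/\delta)\le 4R\log(e/\delta)$, even with coefficient $1$ on $\bbP(\tr(M_T)\ge R)$.

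This exact Gaussian integral is the heart of the paper's proof. The paper does not localize, however. It keeps $\tr(M_T)$ and $\nrmop{M_T}$ inside the exponent, using that $M_t$ is nondecreasing in $t$ and Jensen over $t\sim u$. It then applies Markov's inequality with the random threshold $2(1+R^{-1}\nrmop{M_T})(R\log(1/\delta)+\tr(M_T))$. Only at the end does it restrict to $\{\tr(M_T)<R\}$, where this threshold is at most $4R\log(e/\delta)$. Your stopping time is a perfectly good substitute for that last step; the missing ingredient was only to avoid losing constants through Cauchy--Schwarz.
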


\begin{proof}
For any vector $v\in\RR^d$ and $\lambda\in\RR$, we know
\begin{align*}
    \En \exp\prn[\Big]{ \lambda \tri{v, Z_t}-\frac{\lambda^2}{2} \nrm{v}^2_{M_t} }\leq 1\,.
\end{align*}
Taking expectation over $v\sim \normal{0,\Id}$, it holds that
\begin{align*}
    \En \exp\prn[\Big]{ \frac{1}{2}\lambda^2 \nrm{Z_t}^2_{(\Id+\lambda^2 M_t)^{-1}}-\frac12\log\det(\Id+\lambda^2 M_t) }\leq 1\,.
\end{align*}
Note that $\nrm{Z_t}^2_{(\Id+\lambda^2 M_t)^{-1}}\geq \frac{1}{1+\lambda^2\nrmop{M_t}}\nrm{Z_t}^2$ and $\log\det(\Id+\lambda^2 M_t)\leq \lambda^2 \tr(M_t)$, we know
\begin{align*}
    \En \exp\prn[\Big]{ \frac{\lambda^2}{2(1+\lambda^2\nrmop{M_T})}\En_t\nrm{Z_t}^2-\frac{\lambda^2}2\tr(M_T) }
    \leq \En \En_{t}\exp\prn[\Big]{ \frac{\lambda^2}{2(1+\lambda^2\nrmop{M_t})}\nrm{Z_t}^2-\frac{\lambda^2}2\tr(M_t) }\leq 1\,.
\end{align*}
Therefore, by Markov's inequality, it holds that for any $\delta\in(0,1)$ and $\lambda>0$,
\begin{align*}
    \bbP\prn*{ \En_t\nrm{Z_t}^2\geq 2(1+\lambda^2\nrmop{M_T})\prn*{\lambda^{-2}\log(1/\delta)+\tr(M_T)} }\leq \delta\,.
\end{align*}
Choosing $\lambda^{-2}=R$ gives the desired upper bound.
\end{proof}

\subsubsection{Change of measure and tilts}

\begin{lemma}\label{lem:KL-triangle}

\begin{align*}
    \Dkl{\nu}{\muhat}\leq 2\Dkl{\nu}{\mu}+\log\prn*{1+\Dchis{\mu}{\muhat}}\,.
\end{align*}
\end{lemma}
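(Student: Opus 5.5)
The plan is to prove $\Dkl{\nu}{\muhat}\leq 2\Dkl{\nu}{\mu}+\log\prn*{1+\Dchis{\mu}{\muhat}}$ by splitting the log-likelihood ratio through the intermediate measure $\mu$. Assume without loss of generality that $\nu\ll\mu\ll\muhat$; otherwise the right-hand side is infinite. Then write
\begin{align*}
    \Dkl{\nu}{\muhat}=\En_\nu\log\frac{d\nu}{d\mu}+\En_\nu\log\frac{d\mu}{d\muhat}=\Dkl{\nu}{\mu}+\En_\nu\log\frac{d\mu}{d\muhat}\,.
\end{align*}
It then suffices to bound the cross term $\En_\nu\log\frac{d\mu}{d\muhat}$ by $\Dkl{\nu}{\mu}+\log\En_\mu\frac{d\mu}{d\muhat}$.

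For the cross term I will use the Donsker--Varadhan variational formula, $\En_\nu g\le \Dkl{\nu}{\mu}+\log\En_\mu e^{g}$, applied with $g=\log\frac{d\mu}{d\muhat}$. This gives
\begin{align*}
    \En_\nu\log\frac{d\mu}{d\muhat}\leq \Dkl{\nu}{\mu}+\log\En_\mu\frac{d\mu}{d\muhat}\,.
\end{align*}
By the paper's convention, $\Dchis{\mu}{\muhat}=\En_\mu\frac{d\mu}{d\muhat}-1$, so the logarithm equals $\log\prn*{1+\Dchis{\mu}{\muhat}}$. Combining the two displays yields the claim.

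The only delicate point is integrability. When $g$ is unbounded, Donsker--Varadhan should be justified through the Gibbs variational principle. Concretely, let $\tilde\mu\propto \mu\, e^{g}=\mu^2/\muhat$, which is a probability measure when the chi-squared divergence is finite. Nonnegativity of $\Dkl{\nu}{\tilde\mu}$ then gives the inequality directly. If $\En_\nu g=-\infty$ the bound is trivial, and the positive part of $g$ is controlled by the same argument.
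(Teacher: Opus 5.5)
Your proposal is correct and follows essentially the same route as the paper: set $h=\log\frac{d\mu}{d\muhat}$, write $\Dkl{\nu}{\muhat}-\Dkl{\nu}{\mu}=\En_\nu[h]$, and bound this by Donsker--Varadhan using $\En_\mu[e^h]=1+\Dchis{\mu}{\muhat}$. Your integrability remarks, which justify the variational step via the tilted measure $\mu^2/\muhat$, are a harmless refinement that the paper omits.
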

\begin{proof}
We define $h=\log \frac{d\mu}{d\muhat}$. Then, we know that $\En_{\mu}[e^h]=1+\Dchis{\mu}{\muhat}$, and hence
\begin{align*}
    \Dkl{\nu}{\muhat}-\Dkl{\nu}{\mu}=&~\En_{\nu}\brk[\Big]{\log \frac{d\nu}{d\muhat}-\log \frac{d\nu}{d\mu}} 
    = \En_\nu[h] \\
    \leq&~ \Dkl{\nu}{\mu}+\log \En_{\mu}[e^{h}]
    =\Dkl{\nu}{\mu}+\log\prn*{1+\Dchis{\mu}{\muhat}}\,,
\end{align*}
where we use the Donsker--Varadhan variational inequality.
\end{proof}

\begin{lemma}\label{lem:Dchis-perturb}
Suppose that $\mu\propto pe^{h}$ and $\muhat\propto qe^{h}$, such that $\abs{h}\leq B$. Then it holds that
\begin{align*}
    \Dchis{\mu}{\muhat}\leq e^{4B}\Dchis{p}{q}\,.
\end{align*} 
\end{lemma}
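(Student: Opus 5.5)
We prove the bound by writing both chi-squared divergences in terms of the unnormalized weights and comparing them pointwise. Set $Z_\mu \deq \int p e^{h}$ and $Z_{\muhat} \deq \int q e^{h}$, so that
\begin{align*}
    \frac{d\mu}{d\muhat}(x)=\frac{p(x)}{q(x)}\cdot\frac{Z_{\muhat}}{Z_\mu}\,.
\end{align*}
This gives
\begin{align*}
    1+\Dchis{\mu}{\muhat}=\int \frac{\mu^2}{\muhat}
    =\frac{Z_{\muhat}}{Z_\mu^2}\int \frac{p^2}{q}\,e^{h}\,.
\end{align*}
Using $\abs{h}\le B$, we have $Z_{\muhat}\le e^{B}$, $Z_\mu\ge e^{-B}$, and $\int \frac{p^2}{q}e^{h}\le e^{B}\bigl(1+\Dchis{p}{q}\bigr)$. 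Together these yield only
\begin{align*}
    1+\Dchis{\mu}{\muhat}\le e^{4B}\bigl(1+\Dchis{p}{q}\bigr)\,,
\end{align*}
which is weaker than the claim because of the additive constant. We therefore need an argument that sees the centered quantity.

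For this we use the variational form of the chi-squared divergence,
\begin{align*}
    \Dchis{\mu}{\muhat}=\int \frac{(\mu-\muhat)^2}{\muhat}
    =\inf_{c>0}\int\frac{(\mu-c\,\muhat)^2}{\muhat}\cdot(\text{correction})\,.
\end{align*}
More simply, we use the identity $\int \frac{(\mu-\muhat)^2}{\muhat}=\min_{c}\int\frac{(\mu-c\muhat)^2}{\muhat}+(\ldots)$. Concretely, $\int(\mu-c\muhat)^2/\muhat = \Dchis{\mu}{\muhat}+(1-c)^2$, which is minimized at $c=1$. It follows that
\begin{align*}
    \Dchis{\mu}{\muhat}\le \int \frac{(\mu-c\muhat)^2}{\muhat}\qquad\text{for every constant } c\,.
\end{align*}
Choose $c=Z_{\muhat}/Z_\mu$. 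Then $\mu-c\muhat=\frac{e^{h}}{Z_\mu}(p-q)$, and therefore
\begin{align*}
    \int \frac{(\mu-c\muhat)^2}{\muhat}=\frac{Z_{\muhat}}{Z_\mu^2}\int \frac{(p-q)^2}{q}\,e^{h}\le \frac{e^{B}\cdot e^{B}}{e^{-2B}}\,\Dchis{p}{q}=e^{4B}\Dchis{p}{q}\,.
\end{align*}
Here we used the same three bounds on $Z_{\muhat}$, $Z_\mu$ and $e^{h}$ as before. This is the claimed inequality.

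The only real obstacle is avoiding the additive constant that the naive second-moment comparison produces. The minimization over the scalar $c$ resolves it, and the remaining steps are routine bounds using $\abs{h}\le B$.
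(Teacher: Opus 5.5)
Your proof is correct and is essentially the paper's argument: the paper expands $p^2/q^2=(p/q-1)^2+2p/q-1$, which gives exactly your identity $\Dchis{\mu}{\muhat}=\int(\mu-c\muhat)^2/\muhat-(1-c)^2$ with $c=Z_{\muhat}/Z_\mu$; it then drops the negative square and uses the same three bounds from $\abs{h}\le B$ to obtain $e^{4B}$. The only blemish is the heuristic ``variational form'' display with the undefined ``(correction)'' factor, which is not a valid statement and should be deleted, since the concrete identity you state next already does the work.
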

\begin{proof}
By definition,
\begin{align*}
    1+\Dchis{\mu}{\muhat}
    =&~ \En_{x\sim \mu}\brk[\Big]{\frac{\mu(x)}{\muhat(x)}}
    =\frac{\En_q[e^h]}{(\En_p[e^h])^2}\cdot \En_{x\sim q}\brk[\Big]{ \frac{p(x)^2}{q(x)^2}\cdot e^{h(x)} }\,.
\end{align*}
Note that $\frac{p(x)^2}{q(x)^2}=\prn[\big]{\frac{p(x)}{q(x)}-1}^2+2\,\frac{p(x)}{q(x)}-1$, and hence
\begin{align*}
    \En_{x\sim q}\brk[\Big]{ \frac{p(x)^2}{q(x)^2}\cdot e^{h(x)} }
    =\En_{x\sim q}\brk[\Big]{ \prn[\Big]{\frac{p(x)}{q(x)}-1}^2 \cdot e^{h(x)} }+2\En_p[e^h]-\En_q[e^h]\,.
\end{align*}
Hence, we can rewrite
\begin{align*}
    \Dchis{\mu}{\muhat}
    =&~
    \frac{\En_q[e^h]}{(\En_p[e^h])^2}\cdot \En_{x\sim q}\brk[\Big]{ \prn[\Big]{\frac{p(x)}{q(x)}-1}^2 \cdot e^{h(x)} } - \prn[\Big]{\frac{\En_q[e^h]}{\En_p[e^h]}-1}^2 \\
    \leq&~ e^{4B}\En_{x\sim q}\brk[\Big]{ \prn[\Big]{\frac{p(x)}{q(x)}-1}^2  }
    =e^{4B}\Dchis{p}{q}\,.
\end{align*}
\end{proof}

\begin{lemma}\label{lem:Hels-var}
Suppose that $P,Q\in\DZ$. Then, for $h:\cZ\to[0,1]$,
\begin{align*}
    \En_P[h]\leq 3\En_Q[h]+4\Dhels{P}{Q}\,.
\end{align*}
Further, for any $f:\cZ\to [-1,1]$, it holds that
\begin{align*}
    \abs{\En_P[f]-\En_Q[f]}\leq 4\sqrt{\En_Q[f^2]\cdot \Dhels{P}{Q}}+4\Dhels{P}{Q}\,.
\end{align*}
\end{lemma}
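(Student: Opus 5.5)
Both inequalities follow from the pointwise identity $\sqrt{p}-\sqrt{q}$ plus Cauchy--Schwarz. I write $p,q$ for densities of $P,Q$ with respect to a common dominating measure. The convention is $\Dhels{P}{Q}=\frac12\int(\sqrt p-\sqrt q)^2$, so $\int(\sqrt p-\sqrt q)^2 = 2\Dhels{P}{Q}$.

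\textbf{Difference bound.} For $f:\cZ\to[-1,1]$, factor $p-q=(\sqrt p-\sqrt q)(\sqrt p+\sqrt q)$. By Cauchy--Schwarz,
\begin{align*}
\abs{\En_P f-\En_Q f}\le \Bigl(\int(\sqrt p-\sqrt q)^2\Bigr)^{1/2}\Bigl(\int f^2(\sqrt p+\sqrt q)^2\Bigr)^{1/2}.
\end{align*}
Next I bound $(\sqrt p+\sqrt q)^2\le 2p+2q$, which gives $\int f^2(\sqrt p+\sqrt q)^2\le 2\En_P f^2+2\En_Q f^2$. Writing $a=\En_P f^2$, $b=\En_Q f^2$ and $H=\Dhels{P}{Q}$, this yields
\begin{align*}
\abs{\En_P f-\En_Q f}\le \sqrt{2H}\,\sqrt{2a+2b}=2\sqrt{H(a+b)}.
\end{align*}

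\textbf{First inequality.} I apply the same computation to a function $h\in[0,1]$ in place of $f$. Since $0\le h\le1$ we have $h^2\le h$, so the argument gives $\En_P h-\En_Q h\le 2\sqrt{H(\En_P h+\En_Q h)}$. Setting $x=\En_P h$ and $y=\En_Q h$, AM--GM gives $2\sqrt{H(x+y)}\le \frac{x+y}{2}+2H$. Hence $x-y\le \frac{x+y}{2}+2H$, i.e. $\frac{x}{2}\le \frac{3y}{2}+2H$, which is $x\le 3y+4H$. This is the first claim.

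\textbf{Second inequality.} I apply the first claim to $h=f^2\in[0,1]$, which gives $a\le 3b+4H$. Substituting into the difference bound,
\begin{align*}
\abs{\En_P f-\En_Q f}\le 2\sqrt{H(4b+4H)}=4\sqrt{Hb+H^2}\le 4\sqrt{Hb}+4H,
\end{align*}
using $\sqrt{u+v}\le\sqrt u+\sqrt v$. This is exactly the stated bound.

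There is no real obstacle here. The only care needed is the factor-of-$2$ bookkeeping from the $\frac12$ in the Hellinger definition, and the constants come out exactly as stated.
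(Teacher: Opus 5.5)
Your proof is correct and follows the paper's argument essentially step for step. The paper also applies Cauchy--Schwarz to $p-q=(\sqrt p-\sqrt q)(\sqrt p+\sqrt q)$ with $(\sqrt p+\sqrt q)^2\le 2p+2q$, uses $h^2\le h$ and AM--GM for the first claim, and then substitutes $\En_P[f^2]\le 3\En_Q[f^2]+4\Dhels{P}{Q}$ back into the difference bound for the second, with the same constants.
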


\begin{proof}
We denote $P(\cdot)$ (resp.\ $Q(\cdot)$) to be the density function of $P$ (resp.\ $Q$). Then for any function $f:\cZ\to \RR$,
  \begin{align*}
    \abs{\En_P[f]-\En_Q[f]}^2=&~ \prn[\Big]{\int_{\cZ} f(z)\,(P(z)-Q(z))\, dz}^2 \\
    \leq&~\int_{\cZ} f(z)^2\,(\sqrt{P(z)}+\sqrt{Q(z)})^2 \, dz \cdot \int_{\cZ} (\sqrt{P(z)}-\sqrt{Q(z)})^2 \, dz \\
    \leq&~ 4\Dhels{P}{Q} \cdot \prn*{ \En_Q[f^2]+ \En_{P}[f^2]}\,.
  \end{align*}
  In particular, when $h:\cZ\to [0,1]$, the inequality above implies that 
  \begin{align*}
    \abs{\En_P[h]-\En_Q[h]}\leq 2\Dhel{P}{Q} \sqrt{ (\En_P[h]+\En_Q[h])}
    \leq \frac{1}{2}\,(\En_P[h]+\En_Q[h])+2\Dhels{P}{Q}\,,
  \end{align*}
  and hence it holds that $\En_P[h]\leq 3\En_Q[h]+4\Dhels{P}{Q}$. 

  Now, we can bound
  \begin{align*}
    \abs{\En_P[f]-\En_Q[f]}^2
    \leq&~ 4\Dhels{P}{Q} \cdot \prn*{ \En_Q[f^2]+ \En_{P}[f^2]} \\
    \leq&~ 16\Dhels{P}{Q} \cdot \prn*{ \En_Q[f^2]+ \Dhels{P}{Q}}\,.
  \end{align*}
   This gives the desired upper bound.
\end{proof}

\begin{lemma}\label{lem:KL-perturb}
Suppose that $\muhat\propto \mu e^{-h}$, where $\abs{h}\leq \B$. Then for any distribution $\nu$, it holds that
\begin{align*}
    \Dkl{\nu}{\muhat}\leq 4e^B\Dkl{\nu}{\mu}+4e^B\En_\mu[h^2]\,.
\end{align*} 
\end{lemma}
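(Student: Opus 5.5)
The plan is to write the KL divergence to the tilted measure exactly, then control the two resulting perturbation terms with elementary inequalities and \cref{lem:Hels-var}. Since $\muhat = \mu e^{-h}/\En_\mu[e^{-h}]$, we have the exact identity
\begin{align*}
    \Dkl{\nu}{\muhat} = \Dkl{\nu}{\mu} + \En_\nu[h] + \log \En_\mu[e^{-h}]\,.
\end{align*}
It therefore suffices to bound $\En_\nu[h] + \log\En_\mu[e^{-h}]$ by a multiple of $\Dkl{\nu}{\mu}$ plus a multiple of $\En_\mu[h^2]$. The first term involves $h$ under $\nu$, and the second involves $h$ only under $\mu$.

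\textbf{Normalizing constant.} Use $\log y \le y-1$ together with the Taylor bound $e^{-x} \le 1 - x + \tfrac{e^{B}}{2}x^2$, valid for $\abs{x}\le B$ since the second derivative of $e^{-x}$ is at most $e^B$ there. This gives
\begin{align*}
    \log \En_\mu[e^{-h}] \le -\En_\mu[h] + \tfrac{e^B}{2}\En_\mu[h^2]\,.
\end{align*}
The problem is now reduced to bounding the difference $\En_\nu[h] - \En_\mu[h]$.

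\textbf{Change of measure.} Apply the second part of \cref{lem:Hels-var} with $P=\nu$, $Q=\mu$ and $f = h/B \in [-1,1]$. Multiplying through by $B$ gives
\begin{align*}
    \abs{\En_\nu[h]-\En_\mu[h]} \le 4\sqrt{\En_\mu[h^2]\,\Dhels{\nu}{\mu}} + 4B\,\Dhels{\nu}{\mu}\,.
\end{align*}
Next, use $\Dhels{\nu}{\mu}\le \Dkl{\nu}{\mu}$, which is standard and holds with the paper's normalization of the Hellinger distance (with the factor $\tfrac12$). Then apply AM--GM to the square-root term, $4\sqrt{ab}\le 2a+2b$. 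The result is
\begin{align*}
    \En_\nu[h]-\En_\mu[h] \le (2+4B)\,\Dkl{\nu}{\mu} + 2\,\En_\mu[h^2]\,.
\end{align*}

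\textbf{Collecting constants.} Combining the three displays yields
\begin{align*}
    \Dkl{\nu}{\muhat} \le (3+4B)\,\Dkl{\nu}{\mu} + \bigl(2+\tfrac{e^B}{2}\bigr)\En_\mu[h^2]\,.
\end{align*}
Since $e^B\ge 1+B$, we have $3+4B \le 4e^B$, and clearly $2+\tfrac{e^B}{2}\le 4e^B$, which gives the claim. There is no serious obstacle here. The only care needed is that the Hellinger step must be applied to the bounded rescaled function $h/B$, so that the cross term involves $\En_\mu[h^2]$ under $\mu$ and not under $\nu$. Applying Donsker--Varadhan directly to $\En_\nu[h]$ would instead produce an exponential moment and lose the quadratic dependence on $h$.
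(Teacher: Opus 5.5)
Your proof is correct and follows the paper's argument step for step: the exact identity $\Dkl{\nu}{\muhat}=\Dkl{\nu}{\mu}+\En_\nu[h]+\log\En_\mu[e^{-h}]$, a quadratic bound on the log-normalizer, the second part of \cref{lem:Hels-var} combined with AM--GM, and a Hellinger-to-KL comparison. The only differences are in constants, where the paper uses the sharper $c_B=(e^B-B-1)/B^2\le e^B/2$ and $\Dhels{\nu}{\mu}\le\frac12\Dkl{\nu}{\mu}$, and you correctly keep $\En_\mu[h^2]$ after the AM--GM step where the paper's display has the typo $\En_\nu[h^2]$.
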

\begin{proof}
By definition,
\begin{align*}
    \Dkl{\nu}{\muhat}-\Dkl{\nu}{\mu}=&~\En_{\nu}[\log \mu(x)-\log \muhat(x)] \\
    =&~ \En_\nu[h]+\log \En_{\mu}[e^{-h}]
    \leq \En_\nu[h]-\En_\mu[h]+c_B\En_\mu[h^2]\,,
\end{align*}
where $c_B=\frac{e^B-B-1}{B^2}$. By \cref{lem:Hels-var}, it holds that
\begin{align*}
    \abs{\En_\nu[h]-\En_\mu[h]}\leq 4\sqrt{\En_\mu[h^2]\Dhels{\nu}{\mu}} + 4B\Dhels{\nu}{\mu} 
    \le 2\En_\nu[h^2] + (4B+2)\Dhels{\nu}{\mu}\,.
\end{align*}
Combining the inequalities above with $\Dhels{\nu}{\mu}\leq \frac12 \Dkl{\nu}{\mu}$ completes the proof.
\end{proof}

\begin{lemma}\label{lem:Renyi-to-diff}
For any $\ell>1$, it holds that
\begin{align}
    \Dsys[\ell]{\mu_f}{\mu_g}
    \leq \En_{x\sim \mu_f}\brk{e^{2\ell\,\abs{f(x)-g(x)}}-1}\,.
\end{align}
\end{lemma}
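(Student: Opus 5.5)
Set $h\deq g-f$. Both divergences can be written as expectations under the single measure $\mu_f$, and then Jensen's inequality together with Cauchy--Schwarz reduces everything to $\En_{\mu_f}e^{2\ell\abs{h}}$. Throughout, write $\En\deq\En_{\mu_f}$. The key identities are
\begin{align*}
    \frac{\mu_f}{\mu_g}=e^{h}\,\frac{Z_g}{Z_f}=e^{h}\,\En[e^{-h}]\,,\qquad
    \frac{\mu_g}{\mu_f}=\frac{e^{-h}}{\En[e^{-h}]}\,,
\end{align*}
which use $Z_g/Z_f=\En_{\mu_f}[e^{-(g-f)}]$.

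\textbf{First direction, $\Dren[\ell]{\mu_f}{\mu_g}$.} With the paper's convention $\Dren[\ell]{p}{q}=\En_p(dp/dq)^\ell-1$, we get
\begin{align*}
    \Dren[\ell]{\mu_f}{\mu_g}+1=\En[e^{\ell h}]\,\prn*{\En[e^{-h}]}^{\ell}\,.
\end{align*}
Since $x\mapsto x^\ell$ is convex, Jensen gives $(\En e^{-h})^\ell\le \En e^{-\ell h}$. Hence the right-hand side is at most $\En[e^{\ell h}]\,\En[e^{-\ell h}]$. Each factor is bounded by $\En e^{\ell\abs{h}}$, so the product is at most $(\En e^{\ell\abs h})^2$. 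A final Jensen (or Cauchy--Schwarz) step bounds this by $\En e^{2\ell\abs h}$.

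\textbf{Second direction, $\Dren[\ell]{\mu_g}{\mu_f}$.} The goal is again the form (moment of $e^{\pm\ell h}$ under $\mu_f$) $\times$ (power of $\En e^{\mp h}$), with exponent exactly $\ell$. Writing the divergence as an expectation under the reference measure $\mu_f$ gives
\begin{align*}
    \En_{\mu_f}\prn*{\frac{\mu_g}{\mu_f}}^{\ell}=\frac{\En[e^{-\ell h}]}{(\En[e^{-h}])^{\ell}}\,.
\end{align*}
The function $x\mapsto x^{-\ell}$ is convex, so Jensen gives $(\En e^{-h})^{-\ell}\le \En e^{\ell h}$. We then conclude exactly as in the first direction.

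\textbf{Main obstacle: bookkeeping of the exponent.} If one instead expands $\En_{\mu_g}(\mu_g/\mu_f)^\ell$ as $\En_{\mu_f}(\mu_g/\mu_f)^{\ell+1}$, the exponent becomes $\ell+1$. Cauchy--Schwarz then only yields $\En e^{2(\ell+1)\abs h}$. A two-point example ($h=\pm t$ with equal mass) shows this cannot be improved to $2\ell$ along that route. So the step requiring care is to express each of the two divergences so that the power of the likelihood ratio integrated against $\mu_f$ is $\ell$ (or at most $\ell$). This is equivalent to measuring both directions against the appropriate base measure, consistent with how the lemma is used with $\Dsys[\ell]{\cdot}{\cdot}$. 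Once the exponent is at most $\ell$, the Jensen plus Cauchy--Schwarz chain above closes both directions. Taking the maximum and subtracting $1$ gives the claim.
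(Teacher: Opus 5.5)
Your algebra is right, and your chain (write both likelihood ratios via $e^{\pm h}$ and $\En_{\mu_f}e^{-h}$, apply Jensen to the power of the normalizing ratio, bound each factor by $\En_{\mu_f}e^{\ell|h|}$, finish with Cauchy--Schwarz) is the paper's. Your second direction is verbatim the paper's. The gap is that you never fix one definition of $D_\ell$. In the first direction you use $1+D_\ell(p\,\|\,q)=\En_p(dp/dq)^\ell$. In the second you compute $\En_{\mu_f}(\mu_g/\mu_f)^\ell=\En_q(dp/dq)^\ell$, whereas under the convention you declared, $1+D_\ell(\mu_g\,\|\,\mu_f)=\En_{\mu_f}(\mu_g/\mu_f)^{\ell+1}$. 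Appealing to ``the appropriate base measure'' does not license this switch, since $\bar D_\ell$ is the maximum of one fixed $D_\ell$ evaluated with its arguments swapped.

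Nor is the issue only that the Cauchy--Schwarz route is lossy, which is all your two-point example shows: under the $\En_p$ reading the lemma is false. Take $\ell=2$ and let $e^{f-g}$ equal $cM$ on a set of $\mu_f$-mass $M^{-3/2}$ and $c$ elsewhere, with $c=M^{-5/16}$. Adding a constant to $g$ leaves $\mu_g$ unchanged but changes the right-hand side, which is what this choice of $c$ exploits. As $M\to\infty$, $\En_{\mu_f}(\mu_g/\mu_f)^3\approx M^{3/2}$ while $\En_{\mu_f}e^{4|f-g|}\approx 2M^{5/4}$.

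The resolution is that the displayed definition of $D_\lambda$ has a typo. The intended convention is $1+D_\ell(p\,\|\,q)=\En_q(dp/dq)^\ell$ in \emph{both} directions. This is consistent with $\mathcal R_\lambda(\nu\,\|\,\mu)$ being defined through $\En_\mu[(d\nu/d\mu)^\lambda]$, and it is what the paper's own proof uses. With it your second direction stands, but your first direction now bounds the wrong quantity. What is needed is $\En_{\mu_g}(\mu_f/\mu_g)^\ell=\En_{\mu_f}(\mu_f/\mu_g)^{\ell-1}$, not $\En_{\mu_f}(\mu_f/\mu_g)^\ell$.

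One line repairs this. With $R=\mu_f/\mu_g$ you have $\En_{\mu_f}R=\En_{\mu_g}R^2\ge 1$, so $\En_{\mu_f}R^\ell\ge 1$. Lyapunov's (or H\"older's) inequality then gives $\En_{\mu_f}R^{\ell-1}\le(\En_{\mu_f}R^\ell)^{(\ell-1)/\ell}\le \En_{\mu_f}R^\ell$, which you have already bounded by $\En_{\mu_f}e^{2\ell|h|}$.

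This detour is in fact slightly cleaner than the paper, which runs the chain directly with exponent $\ell-1$. Its intermediate bound $(\En_{\mu_f}e^{f-g})^{\ell-1}\,\En_{\mu_f}e^{(\ell-1)(g-f)}\le(\En_{\mu_f}e^{(\ell-1)|f-g|})^2$ relies on convexity of $x\mapsto x^{\ell-1}$ and can fail for $\ell\in(1,2)$. Your exponent-$\ell$ bound plus monotonicity covers every $\ell>1$.
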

\begin{proof}
By definition, we can write
\begin{align*}
    \frac{\mu_f(x)}{\mu_g(x)}=e^{g(x)-f(x)}\En_{\mu_f}[e^{f-g}]\,.
\end{align*}
Therefore, we have
\begin{align*}
    1+\Dren[\ell]{\mu_f}{\mu_g}=&~\En_{\mu_f}\prn[\big]{\frac{\mu_f}{\mu_g}}^{\ell-1}
    = \prn*{\En_{\mu_f}[e^{f-g}]}^{\ell-1}\cdot \En_{\mu_f}[e^{(\ell-1)(g-f)}] \\
    \leq&~ \prn[\big]{\En_{\mu_f}e^{(\ell-1)\abs{f-g}}}^2
    \leq \En_{\mu_f}[e^{2\ell\abs{f-g}}]\,.
\end{align*}
Similarly,
\begin{align*}
    1+\Dren[\ell]{\mu_g}{\mu_f}=&~\En_{\mu_f}\prn[\big]{\frac{\mu_g}{\mu_f}}^{\ell}= \prn*{\En_{\mu_f}[e^{f-g}]}^{-\ell}\cdot \En_{\mu_f}[e^{\ell(f-g)}] 
    \leq \En_{\mu_f}[e^{-\ell(f-g)}]\cdot \En_{\mu_f}[e^{\ell(f-g)}]\\
    \leq&~ \prn[\big]{\En_{\mu_f}e^{\ell\abs{f-g}}}^2
    \leq \En_{\mu_f}[e^{2\ell\abs{f-g}}]\,.
\end{align*}
Combining both inequalities completes the proof.
\end{proof}

\section{Analysis of FORS}

\subsection{Proof of Theorem~\ref{thm:fors}}

\begin{proof}[Proof of~\cref{thm:fors}]
    Let $x_{\rm out}$ denote the output of~\cref{alg:fors}.
    On a given iteration of the algorithm, conditional on the draw $x\sim q$, the probability that the algorithm terminates on that iteration is
    \begin{align*}
        a(x)
        &= \E\Bigl[\prod_{j=1}^J \frac{B+W_j}{2B}\Bigm\vert x\Bigr]
        =\E\Bigl[\Bigl(\frac{B+\E[W_1\mid x]}{2B}\Bigr)^J\Bigr]\\
        &= \sum_{J\geq 0} \frac{e^{-2B}\,(2B)^J}{J!} \,\Bigl(\frac{B+\E[W_1\mid x]}{2B}\Bigr)^J
        = e^{\E[W_1\mid x]-B}\,.
    \end{align*}
    Thus, the acceptance probability of a given iteration is $A \deq \int a(x)\,q(dx)$.
    Then,
    \begin{align*}
        \Pr(x_{\rm out} \in dx)
        &= \sum_{i=1}^\infty \Pr(x_{\rm out} \in dx, \, \text{\cref{alg:fors} succeeds on iteration}~i) \\
        &= \sum_{i=1}^\infty (1-A)^{i-1}\,q(dx)\,a(x)
        = \frac{q(dx)\,a(x)}{A}\,.
    \end{align*}
    For the second statement, let $J_1,J_2,J_3,\dotsc$ be an i.i.d.\ sequence of $\Poi(2B)$ random variables.
    The probability that the number $N$ of draws from $\mathcal W$ exceeds $m$ is $\Pr(N\geq m)=\Pr(\sum_{i=1}^I J_i \ge m)$, where $I$ is the number of iterations of the algorithm.
    For any $i_0$, we take $m = (2+c)\,i_0 B$ and bound
    \begin{align*}
        \Pr\Bigl(\sum_{i=1}^I J_i \ge m\Bigr)
        &\le \Pr(I > i_0) + \Pr\Bigl(\sum_{i=1}^{i_0} J_i \ge (2+c)\,i_0 B\Bigr)\,.
    \end{align*}
    Note that if any $J_i = 0$, then $I \le i$, so the first term is bounded by $(1-e^{-2B})^{i_0} \le \exp(-e^{-2B}\,i_0)$.
    The second term, using a concentration bound for the Poisson random variable, is bounded by $\exp(-\frac{c^2}{c+2}\,i_0 B)$.
    Setting $i_0 = e^{2B} \log(2/\delta)$ and $c=e^{-2B}$ suffices to bound both terms by $\delta/2$.

    Note that this also implies $\En[\exp\prn{N/(CBe^{2B})}]\leq 2$ for an absolute constant $C>0$, where $N=\sum_{i=1}^I J_i$ is the total number of draws from $\cW$.
    So, the last statement follows immediately.
\end{proof}

\scedit{
\subsection{Clipping error}

In order to control the error incurred by FORS, we often use the following argument.

\begin{lemma}\label{lem:fors_clip}
    Let $p(x) \propto q(x)\,e^{\E W_x}$ and $\phat(x) \propto q(x)\,e^{\E\pclip{W_x}}$.
    Then, for any $\ell > 1$,
    \begin{align*}
        \Dsys[\ell]{p}{\phat}
        \le e^{2B}\En_{x\sim q}[e^{2\ell\,(|W_x| - B)_+} - 1]\,.
    \end{align*}
\end{lemma}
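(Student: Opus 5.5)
The plan is to reduce directly to \cref{lem:Renyi-to-diff}, using the \emph{clipped} distribution $\phat$ as the reference measure $\mu_f$. Write $w(x)\deq\E[W_x\mid x]$ and $\hat w(x)\deq\E[\pclip{W_x}\mid x]$. Set
\[
f\deq-\log q-\hat w,\qquad g\deq-\log q-w .
\]
Then $\mu_f=\phat$ and $\mu_g=p$. Since $\Dsys[\ell]{\cdot}{\cdot}$ is symmetric by definition, \cref{lem:Renyi-to-diff} gives
\[
\Dsys[\ell]{p}{\phat}\le \En_{x\sim\phat}\bigl[e^{2\ell\,|w(x)-\hat w(x)|}-1\bigr].
\]
We assume the normalizing constants are finite, as is implicit in the statement.

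Next I bound the pointwise discrepancy between the exponents. For any real $u$ we have $|u-\pclip{u}|=(|u|-B)_+$. Hence
\[
|w(x)-\hat w(x)|\le \E\bigl[(|W_x|-B)_+\mid x\bigr].
\]
Because $t\mapsto e^{2\ell t}$ is convex and increasing, Jensen's inequality gives
\[
e^{2\ell|w-\hat w|}-1\le \E\bigl[e^{2\ell(|W_x|-B)_+}-1\mid x\bigr],
\]
and the right-hand side is nonnegative.

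The last step changes the measure from $\phat$ to $q$. Since $|\pclip{\cdot}|\le B$, we have $|\hat w|\le B$, so
\[
\frac{d\phat}{dq}(x)=\frac{e^{\hat w(x)}}{\E_q e^{\hat w}}\le \frac{e^{B}}{e^{-B}}=e^{2B}.
\]
The integrand $e^{2\ell|w-\hat w|}-1$ is nonnegative, so $\En_{\phat}[\cdot]\le e^{2B}\En_q[\cdot]$. Combining this with the Jensen step yields exactly
\[
\Dsys[\ell]{p}{\phat}\le e^{2B}\,\En_{x\sim q}\bigl[e^{2\ell(|W_x|-B)_+}-1\bigr],
\]
where the outer expectation is understood jointly over $x\sim q$ and $W_x\sim\cW_x$.

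No step is a real obstacle. The one point needing care is the choice of reference measure. It must be $\phat$, not $p$, because only the clipped tilt is bounded, and that boundedness is what makes $d\phat/dq\le e^{2B}$ hold. The other point is to make sure each inequality is applied to a nonnegative integrand, so the density-ratio bound is legitimate.
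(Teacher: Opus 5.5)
Your proposal is correct and follows the paper's proof: apply \cref{lem:Renyi-to-diff} with the clipped law $\widehat p$ as reference measure, bound $|w-\hat w|$ by $\En[(|W_x|-B)_+\mid x]$, use $d\widehat p/dq\le e^{2B}$ on the nonnegative integrand, and finish with Jensen for $t\mapsto e^{2\ell t}$. The only difference is that you apply Jensen before changing measure to $q$ while the paper changes measure first, and the two orders are equivalent.
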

\begin{proof}
    By~\cref{lem:Renyi-to-diff},
    \begin{align*}
        \Dsys[\ell]{p}{\phat}
        &\le \En_{x\sim \phat}[e^{2\ell\En \trunc[\B]{W_x}} - 1]
        \le e^{2B} \En_{x\sim q}[e^{2\ell\En \trunc[\B]{W_x}} - 1]
    \end{align*}
    where we recall $\trunc[\B]{y}\deq(\abs{y}-\B)_+=|y-\pclip{y}|$ and we used $\frac{d\phat}{dq} \le e^{2B}$.
    The result follows from the convexity of $w\mapsto e^w$.
\end{proof}
}

\section{Proofs for Section~\ref{sec:gaussian_tilts}}\label{appdx:proof-gaussian-tilts}

In the following, we prove a slightly stronger version of \cref{thm:gaussian_tilt}.

\begin{theorem}\label{thm:gaussian_tilt_full}
Suppose that \cref{ass:holder} holds, and \cref{alg:fors} is instantiated as in \cref{thm:gaussian_tilt}. Let $B=\Theta(1)$, $\ell\geq 2$, $\delta\in(0,\frac12]$, and
\begin{align*}
    \frac{1}{\eta^{1+s}}\gg \beta_s^2\,\prn[\Big]{d^s\,(\ell+\log(1/\delta))+\frac{s}{d^{1-s}}(\ell^2+\log^2(1/\delta))}\,.
\end{align*}
Then, the law $\wh\nu$ of~\cref{alg:fors} satisfies
\begin{align*}
    \Dsys[\ell]{\nuhat}{\nu} \leq \delta\,.
\end{align*}
\end{theorem}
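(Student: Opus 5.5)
The plan is to combine the exact-law statement of \cref{thm:fors} with the clipping bound \cref{lem:fors_clip}, and then reduce everything to a tail estimate for a single scalar Gaussian. By \cref{thm:fors}, the output law is $\nuhat(x)\propto q(x)\,e^{\En_{r,z}\pclip{W_{r,z,x}}}$. The path-integral identity~\eqref{eq:path-integral} shows that $\nu(x)\propto q(x)\,e^{\En_{r,z}W_{r,z,x}}$ for the same $q$. We then apply \cref{lem:fors_clip} with $W_x=W_{r,z,x}$, and use Jensen in $(r,z)$ (convexity of $w\mapsto e^{w}$, as in that proof). This gives
\begin{align*}
\Dsys[\ell]{\nuhat}{\nu}\le e^{2B}\,\En_{x\sim q,\,r,\,z}\bigl[e^{2\ell(|W_{r,z,x}|-B)_+}-1\bigr]
\le e^{2B}\,\bigl(\En e^{4\ell|W|}\bigr)^{1/2}\,\Pr(|W|>B)^{1/2}.
\end{align*}
The last step uses $e^{2\ell(|W|-B)_+}-1\le e^{2\ell|W|}\indic\{|W|>B\}$ and Cauchy--Schwarz. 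So it suffices to show two things: $\En e^{4\ell|W|}=O(1)$ and $\Pr(|W|>B)\le\delta^2$. With $B=\Theta(1)$ these give $\Dsys[\ell]{\nuhat}{\nu}\lesssim\delta$, and the constant is absorbed by the $\gg$ in the hypothesis.

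\textbf{Gaussian structure of the path.} Write $x=\xhat+G_1$ with $G_1\sim\normal{0,\eta\Id}$, independent of $z\sim\normal{0,\eta\Id}$. For the path~\eqref{eq:def-path} we have $\gam{\lr}-\xhat=\alr G_1+\blr z$ and $\gamp{\lr}=\alrp G_1+\blrp z$. Since $\alr^2+\blr^2=1$ and $\alr\alrp+\blr\blrp=0$, these two vectors are jointly Gaussian and uncorrelated, hence independent. The first is distributed as $G\sim\normal{0,\eta\Id}$ and the second as $\normal{0,(\pi/2)^2\eta\Id}$. Conditional on $G$, therefore, $W=\tri{\gamp{\lr},v}$ with $v=\nabla f(x_+)-\nabla f(\xhat+G)$ is a centered one-dimensional Gaussian with variance $(\pi/2)^2\eta\nrm{v}^2$. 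By \cref{ass:holder} and the hypothesis $\nrm{\xhat-x_+}\le(d\eta)^{1/2}$,
\begin{align*}
\nrm{v}^2\le\beta_s^2\bigl(\nrm{G}+(d\eta)^{1/2}\bigr)^{2s}\le 4\beta_s^2\bigl(\nrm{G}^{2s}+(d\eta)^s\bigr).
\end{align*}
Using the first bound of \cref{lem:Gaussian-vMGF} for the inner Gaussian, we get for $t>0$
\begin{align*}
\En e^{t|W|}\le 2\,\En_G\exp\bigl(c\,t^2\eta\beta_s^2(\nrm{G}^{2s}+(d\eta)^s)\bigr)\le 2\exp\bigl(3c\,t^2\eta^{1+s}\beta_s^2d^s\bigr).
\end{align*}
The second inequality is \cref{lem:Gaussian-s} with $\lambda=c\,t^2\eta\beta_s^2$, which is legitimate provided $c\,t^2 s\,\eta^{1+s}\beta_s^2\le d^{1-s}/4$.

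\textbf{The two requirements.} Taking $t=4\ell$ gives $\En e^{4\ell|W|}=O(1)$ as soon as $\ell^2\eta^{1+s}\beta_s^2d^s\ll1$ and $s\,\ell^2\eta^{1+s}\beta_s^2\ll d^{1-s}$. For the tail, Chernoff gives $\Pr(|W|>B)\le 2\exp(3c\,t^2\eta^{1+s}\beta_s^2d^s-tB)$. Choose $t\asymp\log(1/\delta)/B$ and require $t\,\eta^{1+s}\beta_s^2d^s\ll1$, which is the term $\beta_s^2d^s\log(1/\delta)$. The range condition of \cref{lem:Gaussian-s} then reads $s\log^2(1/\delta)\,\eta^{1+s}\beta_s^2\ll d^{1-s}$, the term $\tfrac{s}{d^{1-s}}\log^2(1/\delta)$. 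Together these yield $\Pr(|W|>B)\le\delta^2$.

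Some slack is needed to match the stated hypothesis. The $\ell$-terms above appear as $\ell^2 d^s$ rather than the stated $\ell\, d^s$, so I would refine the moment step by using Hölder with exponents $(1+1/\ell,\,\ell+1)$ instead of Cauchy--Schwarz, which trades powers of $\ell$ against $\log(1/\delta)$. Alternatively I would bound $\En[e^{2\ell(|W|-B)_+}-1]$ directly by integrating the tail, $\int_0^\infty 2\ell e^{2\ell u}\Pr(|W|>B+u)\,du$. I expect this bookkeeping, i.e.\ matching the exact $\ell$ and $\log(1/\delta)$ exponents in the stated condition, to be the main obstacle. The structural step, the independence of $\gamma$ and $\dot\gamma$, is where the $d^s$ (rather than $d$) dependence comes from.
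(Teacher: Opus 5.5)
\textbf{What matches the paper.} Your reduction is the paper's. You use \cref{thm:fors} with \cref{lem:fors_clip}, the independence of $\gam{\lr}$ and $\gamp{\lr}$ for the sine/cosine path, and then \cref{lem:Gaussian-vMGF} with \cref{lem:Gaussian-s}. This yields $\En e^{t|W|}\le 2\exp(Ct^2\eta^{1+s}\beta_s^2d^s)$ for $t\le t_{\max}\asymp d^{(1-s)/2}/(\sqrt{s}\,\beta_s\eta^{(1+s)/2})$, which is exactly the paper's Claim 1.

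\textbf{The gap.} The problem is the last step, and under the stated hypothesis it is a real gap, not just bookkeeping. After bounding $e^{2\ell(|W|-B)_+}-1\le e^{2\ell|W|}\indic\{|W|>B\}$, you require $\En e^{4\ell|W|}=O(1)$, i.e.\ $\ell^2\eta^{1+s}\beta_s^2d^s\ll 1$. The hypothesis only controls $\ell\,\eta^{1+s}\beta_s^2d^s$. For instance $\eta^{1+s}\beta_s^2d^s\asymp c/\ell$ with small $c$ is allowed, and then that moment need not be $O(1)$. So as written your argument proves the theorem only under a stronger condition.

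Your H\"older remedy does not help. Whichever way the exponents $(1+1/\ell,\ell+1)$ are assigned, the moment factor still carries $\exp(\Omega(\ell^2\eta^{1+s}\beta_s^2d^s))$, and the probability factor only gets a worse power. The actual loss is that you discarded the factor $e^{-2\ell B}$ and then asked the two Cauchy--Schwarz factors to be small separately.

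\textbf{The fix (the paper's Claim 2).} Use a single Chernoff step. For every $\lambda\ge 2\ell$ one has pointwise $e^{2\ell(|W|-B)_+}-1\le e^{\lambda(|W|-B)_+}-1\le e^{\lambda(|W|-B)}$. Your MGF bound then gives $\En[e^{2\ell(|W|-B)_+}-1]\le 2\exp(C\lambda^2\eta^{1+s}\beta_s^2d^s-\lambda B)$ for all $\lambda\le t_{\max}$.

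Choose $\lambda=\min\{B/(2C\eta^{1+s}\beta_s^2d^s),\,t_{\max}\}$. Then the bound becomes $2\exp(-\tfrac12\min\{B^2/(2C\eta^{1+s}\beta_s^2d^s),\,Bt_{\max}\})$. The $\log(1/\delta)$ and $\log^2(1/\delta)$ terms of the hypothesis make this at most $\delta e^{-2B}$. The constraint $\lambda\ge 2\ell$ is precisely the $\ell d^s$ and $s\ell^2/d^{1-s}$ terms. This is where the linear-in-$\ell$ dependence comes from.

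Your tail-integration alternative amounts to the same computation and would work. So would your original Cauchy--Schwarz split if you keep $e^{-2\ell B}$: then $e^{-2\ell B}e^{8C\ell^2\eta^{1+s}\beta_s^2d^s}\le 1$ as soon as $\ell\,\eta^{1+s}\beta_s^2d^s\le B/(4C)$, and $\Pr(|W|>B)^{1/2}\le\delta$ finishes the argument.
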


\begin{proof}[Proof of~\cref{thm:gaussian_tilt_full}]
We recall that $q=\normal{\xhat,\eta\Id}$, where we denote $\widehat x \deq x_0 - \eta\nabla f(x_+)$.

By~\cref{thm:fors}, the output of~\cref{alg:fors} with the specified choices samples from $\widehat\nu$, such that
\begin{align*}
    \log \wh\nu(x)-\log q(x)=\const+\En_{z\sim P,\,r\sim\unif([0,1])}\pclip{\tri{\gamp{\lr}, \nabla f(\xp)-\nabla f(\gam{\lr})}}\,.
\end{align*}

In the following, we denote $\En_{z,r}[\cdot]\ldef \En_{z\sim P,\,r\sim\unif([0,1])}[\cdot]$ and define
\begin{align*}
    W_{r,z,x}
    \ldef\tri{\gamp{\lr}, \nabla f(\xp) - \nabla f(\gam{\lr})}\,.
\end{align*}
\scedit{By~\cref{lem:fors_clip}, for $\ell > 1$,}
\begin{align}\label{pfeq:RGO-Renyi}
\begin{aligned}
    \Dsys[\ell]{\nuhat}{\nu}
    \leq&~ e^{2B}\En_{x\sim q}\En_{z,r}\brk{ e^{2\ell\trunc{W_{r,z,x}}}-1 }\,.
\end{aligned}
\end{align}

In the following, we proceed to prove the following claims.

\paragraph{Claim 1}
It holds that for any fixed $\lr\in[0,1]$ and $\lambda^2\leq \frac{d^{1-s}}{12s\beta_s^2\eta^{1+s}}$,
\begin{align*}
    \log \En_{x\sim q,\,z\sim P}\exp\prn*{\lambda \abs{W_{r,z,x}}}\leq 10 d^s \eta^{1+s}\lambda^2\beta_s^2\,.
\end{align*}

\paragraph{Proof of Claim 1}
Recall that $\alr=\sin(\pi\lr/2), ~~\blr=\cos(\pi\lr/2)$, 
\begin{align*}
    \gam{\lr}=\xhat+\alr (x-\xhat) + \blr z\,, \qquad
    \gamp{\lr}=\alrp (x -\xhat) + \blrp z\,.
\end{align*}
Hence, under $x\sim q=\normal{\xhat,\eta\Id}$ and $z\sim P=\normal{0,\eta\Id}$, $[\gam{\lr};\gamp{\lr}]$ are jointly distributed as
\begin{align*}
    [\gam{\lr};\gamp{\lr}]\sim \normal{\begin{bmatrix} \xhat \\ 0 \end{bmatrix}, \begin{bmatrix} \eta\Id &  \\ & (\pi/2)^2\eta\Id \end{bmatrix}
    }\,.
\end{align*}
Hence, as long as $3\eta\lambda^2\beta_s^2\leq \frac{d^{1-s}}{4s\eta^s}$,
\begin{align*}
    \En_{x\sim q,\,z\sim P}\exp\prn*{\lambda \abs{W_{r,z,x}}}
    =&~\En_{Z_1,Z_2\sim \normal{0,\eta\Id}}\exp\prn*{\lambda\abs{\tri{Z_1\pi/2, \nabla f(\xhat+Z_2)-\nabla f(\xp)}}} \\
    \leq&~ 2\En_{Z_2\sim \normal{0,\eta\Id}}\exp\prn[\Big]{\frac{3}{2}\eta\lambda^2\,\nrm{\nabla f(\xhat+Z_2)-\nabla f(\xp)}^2} \\
    \leq&~ 2\En_{Z_2\sim \normal{0,\eta\Id}}\exp\prn[\Big]{\frac{3}{2}\eta\lambda^2\beta_s^2\,\nrm{\xhat-\xp+Z_2}^{2s}} \\
    \leq&~ 2\exp\prn*{3\eta\lambda^2\beta_s^2\nrm{\xhat-\xp}^{2s}}\En_{W\sim \normal{0,\eta\Id}}\exp\prn*{3\eta\lambda^2\beta_s^2\,\nrm{W}^{2s}} \\
    \leq&~ 2\exp\prn[\big]{3\eta\lambda^2\beta_s^2\,\nrm{\xhat-\xp}^{2s}+6d^s\eta^{1+s}\lambda^2\beta_s^2}
    \leq 2\exp\prn[\big]{10d^s\eta^{1+s}\lambda^2\beta_s^2}\,,
\end{align*}
where the second line uses \cref{lem:Gaussian-vMGF}, and the last line uses \cref{lem:Gaussian-s}. This completes the proof of Claim 1.%

Finally, we use Claim 1 to prove the following claim, from which \cref{thm:gaussian_tilt_full} follows immediately.

\paragraph{Claim 2}
Suppose that 
\begin{align*}
    \frac{1}{\eta^{1+s}}\geq 64\beta_s^2\,\prn[\Big]{\ell B^{-1} d^s+\frac{s\ell^2}{d^{1-s}}}\,.
\end{align*}
Then for any $\lr\in[0,1]$, it holds that
\begin{align*}
    \En_{x\sim q,\, z\sim P}\brk{e^{2\ell\trunc{W_{r,z,x}}}-1}
    \leq&~ 2\exp\prn[\Big]{-\min\crl[\Big]{\frac{\B^2}{40\beta_s^2d^s\eta^{1+s}},\, \frac{Bd^{(1-s)/2}}{8\sqrt{s}\beta_s\eta^{(1+s)/2}}}}\,.
\end{align*}

\paragraph{Proof of Claim 2}
Using Claim 1, for any fixed $\lr\in[0,1]$ and $2\ell\leq \lambda\leq \frac{d^{(1-s)/2}}{4\sqrt{s}\beta_s\eta^{(1+s)/2}}$, we can upper bound
\begin{align*}
    \En_{x\sim q,\, z\sim P}\brk{e^{2\ell\trunc{W_{r,z,x}}}-1}
    \leq&~ \En_{x\sim q,\, z\sim P}\brk{e^{\lambda \trunc{W_{r,z,x}}}-1}
    \leq e^{-\lambda B} \En_{x\sim q,\, z\sim P}\brk{e^{\lambda\abs{W_{r,z,x}}}} \\
    \leq&~ 2\exp\prn*{10d^s\eta^{1+s}\lambda^2\beta_s^2-B\lambda}\,.
\end{align*}
Therefore, the desired upper bound follows from setting
\begin{align*}
    \lambda=\min\crl[\Big]{\frac{d^{(1-s)/2}}{4\sqrt{s}\beta_s\eta^{(1+s)/2}},\, \frac{B}{20\beta_s^2d^s\eta^{1+s}}}\geq 2\ell\,.
\end{align*}

Combining Claim 2 above with \cref{pfeq:RGO-Renyi}, we can deduce the desired upper bound.
\end{proof}

\section{Structural properties of the diffusion process}\label{appdx:structure}
In this section, we establish some crucial properties of DDPM\@.
In the following, we consider the forward SDE $dY_t=dB_t$ with $Y_0\sim \pbar$, and denote $q_t\deq \pbar*\normal{0,t\Id}$ to be the marginal density of $Y_t$. Then, $\nabla \log q_t(x)=\frac{\En[Y_0\mid Y_t=x]-x}{t}$, and we denote $m_t(x)\deq \En[Y_0\mid Y_t=x]$, $\cov_t(x)=\En[(Y_0-m_t(x))\,(Y_0-m_t(x))^\top\mid Y_t=x]$ be the posterior mean and covariance function. We know $\nabla_x m_t(x)=\frac{1}{t}\,\cov_t(x)$.

We establish two main types of results.
The first type of result concerns the coverage of the DDPM kernel w.r.t.\ the reverse SDE and vice versa.
Namely, we are interested in understanding the distribution $q_{\tau,\eta}(\cdot\mid y)$ of $Y_{\tau-\eta}\mid Y_\tau=y$ and the distribution $\wb{q}_{\tau,\eta}(\cdot\mid y)=\normal{y+\eta \nabla \log q_\tau(y), \etabar\Id}$, where $\etabar=\eta-\frac{\eta^2}{\tau}$.

For distributions $P,Q$, we define the following divergence for $C\geq 1$:
\begin{align*}
    \Dcov[C]{P}{Q}\deq \En_Q\prn[\Big]{\frac{dP}{dQ}-C}_+\leq P\prn[\Big]{\frac{dP}{dQ}\geq C}\,.
\end{align*}
Note that $\Dcov[C]{\cdot}{\cdot}$ is a $f$-divergence with $f(x)=(x-C)_+$.
The reason for introducing $\Dcov[C]{\cdot}{\cdot}$ is the following lemma. 
\begin{lemma}\label{lem:Dcov}
Suppose that $P,Q$ are distribution over $\cX$, and $C\geq 1$. For any function $F:\cX\to [0,1]$, it holds that $\En_P[F]\leq C\En_Q[F]+\Dcov[C]{P}{Q}$.
\end{lemma}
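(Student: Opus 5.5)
The plan is a pointwise decomposition of $F$ according to whether the likelihood ratio exceeds $C$, followed by integration against $Q$. Assume first that $P\ll Q$ and write $g\deq \frac{dP}{dQ}$. Since $F\geq 0$, we have $\En_P[F]=\En_Q[gF]$. The key elementary inequality is that for every $x$,
\begin{align*}
    g(x)\,F(x)\leq C\,F(x)+\prn{g(x)-C}_+\,.
\end{align*}
We verify it in two cases. If $g(x)\leq C$, then $gF\leq CF$, because $F\geq 0$. If $g(x)>C$, then $gF=CF+(g-C)F\leq CF+(g-C)$, because $F\leq 1$.

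Taking $\En_Q$ of both sides gives
\begin{align*}
    \En_P[F]=\En_Q[gF]\leq C\,\En_Q[F]+\En_Q\prn{g-C}_+=C\,\En_Q[F]+\Dcov[C]{P}{Q}\,,
\end{align*}
which is exactly the claimed bound.

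The only remaining point is the case where $P$ is not absolutely continuous with respect to $Q$. Here we use the Lebesgue decomposition $P=P^{ac}+P^{s}$, where $P^{ac}\ll Q$ and $P^s$ is concentrated on a $Q$-null set. The $f$-divergence convention for $f(x)=(x-C)_+$ charges the singular part at the rate $f'(\infty)=1$, so
\begin{align*}
    \Dcov[C]{P}{Q}=\En_Q\prn[\Big]{\frac{dP^{ac}}{dQ}-C}_+ + P^s(\cX)\,.
\end{align*}
Applying the pointwise argument to the absolutely continuous part and using $\En_{P^s}[F]\leq P^s(\cX)$, again since $F\leq 1$, gives the same bound. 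No step here is a genuine obstacle; the only care needed is this singular-part convention.
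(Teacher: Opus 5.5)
Your proof is correct and complete. The paper states this lemma without proof, and your pointwise bound $gF\le CF+(g-C)_+$ (split on $g\le C$ versus $g>C$, using $0\le F\le 1$), integrated against $Q$, is exactly the one-line argument it rests on.

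Your handling of the singular part via $f'(\infty)=1$ is a correct and useful addition. The paper's formula $\En_Q\bigl(\frac{dP}{dQ}-C\bigr)_+$ implicitly assumes $P\ll Q$, which holds in all of its uses of the lemma. Without the $P^s(\cX)$ term, the inequality would fail for mutually singular $P$ and $Q$.
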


We establish the following coverage estimates.

\begin{proposition}\label{prop:DDPM-coverage}
Suppose that $\eta\leq \frac{\tau}{4}$. Then
\begin{align*}
    \En\Dcov[e]{q_{\tau,\eta}(\cdot\mid Y_\tau)}{\wb{q}_{\tau,\eta}(\cdot\mid Y_\tau)}
    \leq&~ 2\delta+2\max_{t\in[\tau-\eta, \tau]}\En\prn*{ M\nrmF{\nabla m_{t}(Y_t)}^2-1 }_+\,,
\end{align*}
where $M=\frac{32\eta^2\log^2(e/\delta)}{\tau^2}$. Further, for any $L\geq 1$, $t\in[\tau-\eta,\tau]$, it holds that
\begin{align*}
    \PP\prn*{ \nrm{m_t(Y_t)-m_\tau(Y_\tau)}^2 \geq 8L^2\eta\log(e/\delta) }
    \leq&~ \delta+2\max_{s\in[t,\tau]} \En_P \prn*{L^{-2}\nrmF{\nabla m_{s}(Y_s)}^2-1}_+\,.
\end{align*}
\end{proposition}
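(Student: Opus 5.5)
We plan to prove the two bounds in reverse order, because the first (coverage) bound will be reduced to the second (concentration of the posterior mean along the reverse path).

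\textbf{Second bound (posterior-mean martingale).} Conditioned on the reverse filtration, $s\mapsto m_s(Y_s)=\En[Y_0\mid Y_s]$ for $s$ decreasing from $\tau$ to $t$ is a martingale. Since $m$ solves the backward heat equation along the reversed Brownian motion, Itô's formula gives
\[
  m_t(Y_t)-m_\tau(Y_\tau)=\int \nabla m_s(Y_s)\,d\overline{B}_s,
\]
where $\overline{B}$ is a Brownian motion in reversed time and $J_s=\nabla m_s(Y_s)=\frac1s\cov_s(Y_s)$ is symmetric and adapted. We then apply \cref{lem:Ito-Freedman} with $u$ a point mass at the endpoint and $R=L^2\eta$. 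This bounds the probability by
\[
  \delta+2\,\PP\Bigl(\int_t^\tau \nrmF{\nabla m_s(Y_s)}^2\,ds\ge L^2\eta\Bigr).
\]
Since $\tau-t\le\eta$, the event implies that the time-average of $L^{-2}\nrmF{\nabla m_s}^2$ over $[t,\tau]$ is at least $1$. The probability of this is at most $\En\bigl(\text{avg}\,(L^{-2}\nrmF{\nabla m_s}^2)-1\bigr)_+$, which by convexity is at most $\max_s\En(L^{-2}\nrmF{\nabla m_s(Y_s)}^2-1)_+$. This yields the constant $8=4\cdot 2$ and the factor $\log(e/\delta)$.

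\textbf{First bound (coverage).} We represent both kernels as laws at reverse time $\eta$ of path measures started at $y$.
\begin{itemize}
  \item \emph{True kernel $q_{\tau,\eta}(\cdot\mid y)$.} This is the endpoint of the reverse (Doob-conditioned) diffusion. Its drift at forward time $s\in[\tau-\eta,\tau]$ is $(m_s(X_s)-X_s)/s$: the Brownian bridge drift towards the random point $Y_0$, averaged over the current posterior.
  \item \emph{DDPM kernel $\wb q_{\tau,\eta}(\cdot\mid y)$.} Freezing the target at $m_\tau(y)$ gives the Brownian bridge towards the fixed point $m_\tau(y)$. Its time-$(\tau-\eta)$ marginal is exactly $\normal{y+\frac{\eta}{\tau}(m_\tau(y)-y),\,\eta(1-\eta/\tau)\Id}$, which is $\wb q_{\tau,\eta}$.
\end{itemize}
The two drifts differ by $\Delta_s=(m_s(X_s)-m_\tau(y))/s$. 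By data processing for the $f$-divergence $\Dcov[e]{\cdot}{\cdot}$, it suffices to bound the path-level quantity. By Girsanov, the path-level bound is at most
\[
  \PP\Bigl(\int\tri{\Delta_s,dW_s}+\tfrac12\int\nrm{\Delta_s}^2ds\ge 1\Bigr).
\]
We split this into the martingale term exceeding $1/2$ and the quadratic term exceeding $1/2$.
\begin{itemize}
  \item \emph{Quadratic term.} Since $s\ge\tau-\eta\ge\frac34\tau$, we have $\int\nrm{\Delta_s}^2\lesssim \frac{\eta}{\tau^2}\sup_s\nrm{m_s-m_\tau}^2$. The second bound, applied with $L^2\asymp \tau^2/(\eta^2\log^2(e/\delta))$, controls this term; that choice of $L^2$ is what produces $M=32\eta^2\log^2(e/\delta)/\tau^2$.
  \item \emph{Martingale term.} This is handled by the same Freedman-type argument, the quadratic variation being the quadratic term just bounded.
\end{itemize}
Each piece contributes $\delta+\max_t\En(M\nrmF{\nabla m_t}^2-1)_+$, which gives the factors of $2$.

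\textbf{Main obstacle.} The delicate step is making the Girsanov comparison rigorous: the drift $\Delta_s$ is the posterior-mean deviation evaluated along the \emph{true} reverse path, whereas the second bound is stated under the forward marginals. This matches because the reverse path has the same law as $(Y_s)$ given $Y_\tau$. Getting the supremum over $s$ inside the probability, rather than a pointwise-in-$s$ bound, will require either a maximal version of \cref{lem:Ito-Freedman} or using its averaged form with $u$ uniform on $[\tau-\eta,\tau]$. We would try the averaged form first, since the integral $\int\nrm{\Delta_s}^2ds$ is itself an average.
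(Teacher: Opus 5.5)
This is essentially the paper's proof. The shared ingredients are Girsanov between the reverse SDE and the Brownian bridge toward $m_\tau(y)$, the martingale $Z_s=m_{\tau-s}(\bar Y_s)-m_\tau(\bar Y_0)$ with $dZ_s=\nabla m_{\tau-s}(\bar Y_s)\,d\beta_s$, \cref{lem:Ito-Freedman}, and convexity of $(\cdot-1)_+$.

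Two points need fixing.

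\textbf{The reduction to the second bound.} This step does not go through as written. The second bound is pointwise in $t$, so it does not control $\int\nrm{\Delta_s}^2ds$ without extra, lossy work. Your fallback is therefore the actual argument. Apply \cref{lem:Ito-Freedman} with $u$ uniform on $[0,\eta]$ and use $\int_0^\eta\nrm{\Delta_s}^2ds\le\frac{\eta}{(\tau-\eta)^2}\En_{s\sim u}\nrm{Z_s}^2$. No maximal inequality is needed, and this is exactly what the paper does.

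\textbf{The constants.} Splitting at $1/2$ loses the stated constants. Instead, bound $P(dP/dQ\ge e)\le\delta+P\bigl(\int_0^\eta\nrm{\Delta_s}^2ds\ge\frac{1}{1+2\log(1/\delta)}\bigr)$ in one step, using the exponential supermartingale with $\lambda=2\log(1/\delta)$. Then take $R=\frac{(\tau-\eta)^2}{16\eta\log^2(e/\delta)}$. Since $\tau-\eta\ge\frac34\tau$, this gives $R^{-1}\eta\le M$. The bad quadratic-variation event is then counted only once, which yields $2\delta+2\max_t(\cdots)$.
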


Next, we prove the following proposition provides the reverse bound of \cref{prop:DDPM-coverage}. The proof is inspired by \citet{jiao2025optimal}. Note that
\begin{align*}
    \partial_t m_t(x)=&~\frac{1}{2t^2} \En[ (Y_0-m_t(x))\,\nrm{Y_0-x}^2 \mid Y_t=x ] \\
    =&~\frac{1}{2t^2}\,\prn*{ \En[ (Y_0-m_t(x))\,\nrm{Y_0-m_t(x)}^2 \mid Y_t=x ]+2\cov_t(x)\,(m_t(x)-x) },
\end{align*}
and we define $k_t(x)\deq \En[ (Y_0-m_t(x))\,\nrm{Y_0-m_t(x)}^2 \mid Y_t=x ]$. 

\begin{proposition}\label{prop:coverage-backward}
We denote $\wt{q}_{\tau,\eta}$ to be the distribution of $Y'\sim \wb{q}_{\tau,\eta}(\cdot\mid Y_\tau)$ under $Y_\tau\sim q_\tau$. Suppose that $\tau\geq 2\eta$. Then
\begin{align*}
    \Dcov[e]{\wt{q}_{\tau,\eta}}{q_{\tau-\eta}}\leq \delta+\max_{t\in[\tau-\eta,\tau]}\En\prn*{M \nrm{k_t(Y_t)}-1 }_++\En\prn*{M \nrm{\cov_t(Y_t)\,(m_t(Y_t)-Y_t)}-1 }_+\,,
\end{align*}
where $M=\frac{16\sqrt{\log(1/\delta)}\eta^{3/2}}{(\tau-\eta)^{3}}$.
\end{proposition}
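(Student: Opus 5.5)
The plan is a Girsanov comparison at the level of path measures, followed by the data-processing inequality for the $f$-divergence $\Dcov[e]{\cdot}{\cdot}$, whose generator $f(x)=(x-e)_+$ is convex.

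\textbf{Step 1: realize both distributions as time-$\eta$ marginals of two diffusions started from $Y_\tau\sim q_\tau$.}
\begin{itemize}
\item Set $T\deq\tau$ and $X_0=Y_\tau$. The true reverse process satisfies
\begin{align*}
dX_s=\frac{m_{T-s}(X_s)-X_s}{T-s}\,ds+dB_s\,,
\end{align*}
and has marginal $X_s\sim q_{T-s}$. Call its path law $Q$; then $X_\eta\sim q_{\tau-\eta}$.
\item Let $P$ be the path law of the Brownian bridge toward the frozen target $m_\tau(X_0)$:
\begin{align*}
dX_s=\frac{m_\tau(X_0)-X_s}{T-s}\,ds+dB_s\,.
\end{align*}
A direct computation shows that, given $X_0=y$, $X_\eta\sim\normal{y+\tfrac{\eta}{\tau}(m_\tau(y)-y),\,(\eta-\eta^2/\tau)\Id}=\wb{q}_{\tau,\eta}(\cdot\mid y)$. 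Since $\nabla\log q_\tau(y)=(m_\tau(y)-y)/\tau$, this is exactly the DDPM step, so the time-$\eta$ marginal of $P$ is $\wt{q}_{\tau,\eta}$.
\item By data processing, $\Dcov[e]{\wt q_{\tau,\eta}}{q_{\tau-\eta}}\le \Dcov[e]{P}{Q}\le P(dP/dQ\ge e)$.
\end{itemize}

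\textbf{Step 2: write the Girsanov density and reduce to a bad-event bound.} Set $\Delta_s\deq\frac{m_\tau(X_0)-m_{T-s}(X_s)}{T-s}$, which is the difference of the two drifts. Then
\begin{align*}
\log\frac{dP}{dQ}=\int_0^\eta\tri{\Delta_s,dB_s}+\frac12\int_0^\eta\nrm{\Delta_s}^2\,ds\,,
\end{align*}
written with the $P$-Brownian motion. The term $\int_0^\eta\tri{\Delta_s,dB_s}$ is controlled through its quadratic variation $\int_0^\eta\nrm{\Delta_s}^2ds$, using a Freedman-type bound (as in \cref{lem:Ito-Freedman}). Hence $\log\frac{dP}{dQ}\le 1$ except on an event of probability $\delta$, provided that $\int_0^\eta\nrm{\Delta_s}^2ds\lesssim 1/\log(1/\delta)$. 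Because $T-s\ge\tau-\eta\ge\tau/2$, it suffices to prove
\begin{align*}
\sup_{s\le\eta}\nrm{m_{T-s}(X_s)-m_\tau(X_0)}\lesssim \frac{\tau-\eta}{\sqrt{\eta\log(1/\delta)}}\,.
\end{align*}

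\textbf{Step 3: control the drift of $m$ along the path.} Apply It\^o's formula to $s\mapsto m_{T-s}(X_s)$. The time derivative is
\begin{align*}
-\partial_t m_t=-\frac{1}{2t^2}\bigl(k_t+2\cov_t\,(m_t-x)\bigr)\,,
\end{align*}
which is precisely where the two quantities $\nrm{k_t(Y_t)}$ and $\nrm{\cov_t(Y_t)\,(m_t(Y_t)-Y_t)}$ in the statement enter. The spatial terms $\nabla m\cdot(\text{drift})\,ds+\nabla m\,dB_s$ and the second-order term need to cancel or be absorbed. Under the true reverse process $Q$, $m_{T-s}(X_s)=\En[Y_0\mid Y_{T-s}]$ is a reverse martingale, so I would try to exploit this structure and reduce the increment of $m$ to the explicit $\partial_t m$ terms. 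The remaining bad events, where $\eta\cdot\frac{1}{(\tau-\eta)^2}\bigl(\nrm{k_t}+\nrm{\cov_t(m_t-x)}\bigr)$ exceeds $\frac{\tau-\eta}{\sqrt{\eta\log(1/\delta)}}$, occur exactly when $M\nrm{k_t}\ge 1$ or $M\nrm{\cov_t(m_t-x)}\ge1$ with $M=\frac{16\sqrt{\log(1/\delta)}\,\eta^{3/2}}{(\tau-\eta)^3}$.

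\textbf{Step 4: convert bad-event probabilities into the stated expectations.} Bound $\PP(MZ\ge1)$ by $\En(2MZ-1)_+$, adjusting constants in $M$. The marginals of the time-indexed quantities must be evaluated under $q_t$, which holds under $Q$. To pass from $P$-probabilities to $Q$-probabilities I would use a stopping-time or localization argument: stop at the first time the bad event occurs, so that before that time $dP/dQ\le e$ and \cref{lem:Dcov} applies.

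\textbf{Main obstacle.} The hardest step is Step 3: showing that the spatial It\^o contributions (those involving $\nabla m_t=\cov_t/t$ and $\nabla^2 m_t$) do not produce an additional Frobenius-norm term. My first attempt is to perform the It\^o expansion under $Q$, where the reverse-martingale property of $m_t(Y_t)$ should absorb these contributions, and then transfer to $P$ via the localization in Step 4. If this cancellation fails, the fallback is to bound these terms via the $\Dcov[e]{\cdot}{\cdot}$ coverage estimates of \cref{prop:DDPM-coverage}.
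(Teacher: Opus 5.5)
\textbf{The gap is in Step 1: the approach cannot produce the stated bound.} Your data-processing step is valid but too lossy. It replaces the \emph{marginal} quantity $\Dcov[e]{\wt q_{\tau,\eta}}{q_{\tau-\eta}}$ by a path-space quantity. Since both path laws start from $q_\tau$, that path-space quantity dominates the \emph{conditional} one, $\En_{Y_\tau}\Dcov[e]{\wb{q}_{\tau,\eta}(\cdot\mid Y_\tau)}{q_{\tau,\eta}(\cdot\mid Y_\tau)}$. At the conditional level the divergence is governed by $\nrmF{\nabla m_t}$, not by $k_t$ and $\cov_t(m_t-Y_t)$.

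A concrete example shows this. Take $\pbar=\normal{0,\Id}$. Then $k_t\equiv 0$, and both kernels have mean $\frac{1+\tau-\eta}{1+\tau}\,y$. They differ only by the isotropic variance ratio $1+\frac{\eta}{(1+\tau)(\tau-\eta)}$, i.e.\ a relative mismatch of order $\eta/\tau$. Hence the conditional divergence is of order one as soon as $\sqrt d\,\eta/\tau\gg 1$. On the other hand, $\nrm{\cov_t(m_t(Y_t)-Y_t)}=\frac{t^2}{(1+t)^2}\nrm{Y_t}\approx t^2\sqrt d$, so $M\nrm{\cov_t(m_t(Y_t)-Y_t)}\approx 16\sqrt{d\log(1/\delta)}\,\eta^{3/2}/\tau$. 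This tends to $0$ if you fix $\eta/\tau$ and send $\eta\to 0$, so the right-hand side of the proposition becomes essentially $\delta$. Consistently, the marginal variance mismatch is only $O(\eta^2/\tau)$. No refinement of Steps 2--4 can close this.

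Step 3 also has the martingale structure backwards. Under $Q$, the martingale property of $m_{T-s}(X_s)$ means its entire drift $-\partial_t m_t+\nabla m_t\,\nabla\log q_t+\frac12\Delta m_t$ vanishes. So the $\partial_t m_t$ terms are cancelled, not isolated. What remains is the noise $\nabla m_{T-s}(X_s)\,dB_s$, with quadratic variation $\int\nrmF{\nabla m_{T-s}(X_s)}^2ds$. This reproduces the Frobenius-type bound of \cref{prop:DDPM-coverage}, and so does your fallback. Avoiding that bound is the whole purpose of this proposition: it is what lets \cref{prop:DDPM-Lip} derive \cref{asmp:Lip-DDPM} from an operator-norm condition.

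\textbf{The missing idea: a deterministic, marginal-level coupling.} The paper runs the probability-flow ODE $\frac{d}{ds}\Ybar_s=\frac12\nabla\log q_{\tau-s}(\Ybar_s)$ from $\Ybar_0=Y_\tau$, so that $\Ybar_s\sim q_{\tau-s}$. It then writes $q_{\tau-\eta}=\En\,\normal{\Ybar_r,\etabar\Id}$ with $r=2\eta-\eta^2/\tau$ (chosen so that $r-\eta=\etabar$), alongside $\wt q_{\tau,\eta}=\En\,\normal{\Ybar_0+\eta\nabla\log q_\tau(\Ybar_0),\etabar\Id}$.

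Joint convexity of $\Dcov[e]{\cdot}{\cdot}$ then compares two Gaussians with the \emph{same} covariance $\etabar\Id$, so the variance mismatch that dominates at the conditional level disappears. \cref{lem:Gaussian-Dcov} reduces everything to the mean discrepancy $\nrm{\Ybar_r-\Ybar_0-\eta\nabla\log q_\tau(\Ybar_0)}$.

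To bound it, integrate $\frac{d}{ds}\bigl(\Ybar_s/\sqrt{\tau-s}\bigr)=\frac{m_{\tau-s}(\Ybar_s)}{2(\tau-s)^{3/2}}$. This shows the DDPM mean is exactly the time-$r$ value of this ODE with $m$ frozen at $m_\tau(\Ybar_0)$. Integrating by parts then gives the bound $\frac{r}{\tau-\eta}\int_0^r\nrm{\frac{d}{ds}m_{\tau-s}(\Ybar_s)}\,ds$.

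Because the flow is deterministic, there is no noise or It\^o correction. The chain rule gives $\nrm{\frac{d}{ds}m_{\tau-s}(\Ybar_s)}=\frac{1}{2t^2}\nrm{k_t+\cov_t(m_t-x)}$ at $t=\tau-s$, $x=\Ybar_s$; the transport term cancels half of the $2\cov_t(m_t-x)$ in $\partial_t m_t$. Markov's inequality, convexity of $w\mapsto(w-1)_+$, and $\Ybar_{\tau-t}\sim q_t$ finish the proof. Your heuristic scaling for $M$ in Step 3 is correct, but it holds only along this ODE coupling, not along SDE paths.
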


Bounding the terms that appear in these results requires estimates along the diffusion process, and our second main type of result is to prove such estimates which scale with the intrinsic dimension.

\begin{corollary}\label{cor:MGF-intrinsic}
There is an absolute constant $C>0$ such that the following holds for any $\tau>0$.

(1) 
\begin{align*}
    \En\brk[\Big]{ \exp\prn[\Big]{ \frac{\tr(\cov_\tau(Y_\tau))}{C\tau} }}\leq \En\brk[\Big]{ \exp\prn[\Big]{ \frac{\nrm{Y_0-m_\tau(Y_\tau)}^2}{C\tau} }}\leq e^{\dim_\tau(\pbar)}\,.
\end{align*}

(2)
\begin{align*}
    \En\brk[\Big]{ \exp\prn[\Big]{ \frac{1}{C\tau}\,\nrm{Y_\tau-m_\tau(Y_\tau)}_{\cov_\tau(Y_\tau)} }}\leq e^{\dim_\tau(\pbar)}\,.
\end{align*}

(3) For any $0<s\leq \tau$,
\begin{align*}
    \En\brk[\Big]{ \exp\prn[\Big]{ \frac{1}{C\tau}\nrm{m_s(Y_s)-m_\tau(Y_\tau)}^2 } }
    \leq&~ e^{\dim_\tau(\pbar)}\,.
\end{align*}
\end{corollary}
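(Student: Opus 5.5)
The main step is a single sub-Gaussian concentration bound. Given that bound, (1) follows from Jensen and a conditional-covariance identity, and (2) and (3) follow from Cauchy--Schwarz, conditioning and the tower property. Fix $r\ge0$ nearly attaining the infimum in $\dim_\tau(\pbar)$, and let $N=N(\pbar;r)$ with centers $c_1,\dots,c_N$ covering $\supp(\pbar)$. The key claim is
\begin{align*}
    \En\exp\prn[\big]{\nrm{Y_0-m_\tau(Y_\tau)}^2/(C\tau)}\le e^{C'(\log N + r^2/\tau)}\,.
\end{align*}

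\textbf{Key claim.} Write $Y_0=c_{I}+\Delta$ with $\nrm{\Delta}\le r$, where $I$ indexes a nearest center. It suffices to bound $\En\exp(\nrm{Y_0-\phi(Y_\tau)}^2/(C\tau))$ for a well-chosen estimator $\phi$. For the posterior mean itself we use the conditional Jensen step $\nrm{Y_0-m_\tau}^2\le 2\nrm{Y_0-Y_0'}^2$ averaged over an independent posterior copy $Y_0'$. By the Nishimori property, $(Y_0,Y_0')$ given $Y_\tau$ are exchangeable draws from the posterior. Hence it suffices to control $\En\exp(c\nrm{Y_0-Y_0'}^2/\tau)$ for two independent posterior samples.

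\textbf{Bounding the posterior spread.} The posterior of $Y_0$ given $Y_\tau=y$ has density proportional to $\pbar(dx)\,e^{-\nrm{y-x}^2/(2\tau)}$. This is the main obstacle: showing that two posterior draws are $O(\sqrt{\tau(\log N+r^2/\tau)})$-close in sub-Gaussian norm.

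My approach is the ``mismatch'' argument. The pair $(Y_0,Y_0')$ lies in clusters $(i,j)$. For a far cluster $j$, with $\nrm{c_i-c_j}\gg r+\sqrt\tau$, the posterior weight ratio of $j$ versus the true cluster $i$ is controlled by $e^{-\nrm{c_i-c_j}^2/(8\tau)+O(r^2/\tau)}$ times the Gaussian fluctuation $e^{\tri{Z,c_j-c_i}/\sqrt\tau}$. Taking expectation over the noise $Z$, the joint law of $(Y_0,Y_0')$ equals the law of (truth, posterior sample). This gives
\begin{align*}
    \En e^{\lambda\nrm{Y_0-Y_0'}^2/\tau}\le e^{O(r^2/\tau)}\sum_{i,j}\pbar(\text{cluster } i)\,e^{-(c-\lambda)\nrm{c_i-c_j}^2/\tau}\,.
\end{align*}
The double sum is at most $N$ for $\lambda<c$, which yields $e^{O(\log N+r^2/\tau)}$. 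I expect the care to go into the within-cluster term and the cross term $\tri{Z,c_j-c_i}$, handled via \cref{lem:Gaussian-vMGF}. If $\dim=d$, I instead use the direct bound $\nrm{Y_0-m_\tau}^2\lesssim$ posterior spread, which is sub-Gaussian with $d$ degrees of freedom by comparison with the Gaussian case. Alternatively, I get it from the covering argument with $r\to0$ at dimension $d$.

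\textbf{Parts (1)--(3).} (1) Note $\tr\cov_\tau(Y_\tau)=\En[\nrm{Y_0-m_\tau}^2\mid Y_\tau]$. Conditional Jensen for $\exp$ then gives the first inequality, and the key claim gives the second.

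(2) Use $Y_\tau-m_\tau=(Y_\tau-Y_0)+(Y_0-m_\tau)$ and $\cov_\tau\preceq\tr(\cov_\tau)\Id$. This gives $\nrm{Y_\tau-m_\tau}_{\cov_\tau}\le\sqrt{\tr\cov_\tau}\,\nrm{Y_\tau-m_\tau}$. Then AM-GM splits it as $\tfrac12(\tr\cov_\tau+\nrm{Y_\tau-m_\tau}^2)$. The second piece is at most $2\tau\nrm{Z}^2+2\nrm{Y_0-m_\tau}^2$ with $Z\sim\normal{0,\Id}$. That alone would cost $d$, so instead I condition on $Y_\tau$ and use $\En[\nrm{Y_\tau-m_\tau}_{\cov_\tau}^{}\mid\cdot]$ directly: $\cov_\tau\,(Y_\tau-m_\tau)=\tau^{2}\nabla^2$ terms and the posterior third-moment identity reduce it to posterior fluctuations. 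I then bound those by the key claim applied to $\tri{Y_0-m_\tau, Y_\tau-m_\tau}$, using sub-Gaussianity of $Y_0-Y_0'$ projected onto a fixed direction.

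(3) For $s\le\tau$, the pair $(Y_s,Y_\tau)$ is a Markov chain. Write $m_s(Y_s)-m_\tau(Y_\tau)=(m_s(Y_s)-Y_0)+(Y_0-m_\tau(Y_\tau))$. The second term is handled by the key claim. For the first, I apply the key claim at level $s$: $\dim_s\le\dim_\tau\cdot\tau/s$ is too weak. So I instead note $m_s(Y_s)=\En[Y_0\mid Y_s]$ and $m_\tau(Y_\tau)=\En[m_s(Y_s)\mid Y_\tau]$, since $Y_\tau$ is less informative. Conditional Jensen then gives $\nrm{m_s-m_\tau}^2\le\En[\nrm{Y_0-m_\tau}^2\mid Y_s,Y_\tau]$, and taking exponential moments reduces to the key claim at level $\tau$.
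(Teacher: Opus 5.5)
Your reductions for (1) and (3) are correct and agree with the paper's. For (1), conditional Jensen applied to $\tr\cov_\tau(Y_\tau)=\En[\nrm{Y_0-m_\tau}^2\mid Y_\tau]$ gives the first inequality. For (3), the Markov property gives $m_s(Y_s)=\En[Y_0\mid Y_s,Y_\tau]$, so conditional Jensen reduces it to (1) at level $\tau$. Your symmetrization $\nrm{Y_0-m_\tau}^2\le\En[\nrm{Y_0-Y_0'}^2\mid Y_0,Y_\tau]$ is also a fine substitute for the paper's triangle-inequality step. The gaps are in the two places where the real work lies.

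\textbf{The key claim is not established.} Your mechanism bounds the posterior odds of cluster $j$ against the true cluster $i$ by a Gaussian decay times $e^{\tri{Z,c_j-c_i}/\sqrt\tau}$, then averages over $Z$ using \cref{lem:Gaussian-vMGF}. This cannot produce decay. For an atom $Y_0=c_i$, the odds are the prior ratio times $e^{-\nrm{c_i-c_j}^2/(2\tau)+\tri{Z,c_j-c_i}/\sqrt\tau}$. That factor is a likelihood ratio, and its $Z$-expectation is exactly $1$. With the $1/8$ exponent you wrote, the average even grows like $e^{3\nrm{c_i-c_j}^2/(8\tau)}$.

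The rate $e^{-\nrm{c_i-c_j}^2/(8\tau)}$ appears only if you use that a posterior probability is at most one. Concretely, you integrate $\min\{1,\text{odds}\}$ against the Gaussian tail, or use $\min\{1,t\}\le t^{w}$ for a small power $w$ before averaging. This has to be combined with two further issues:
\begin{itemize}
\item You dropped the prior-mass ratio $\pbar(B_j)/\pbar(B_i)$. A per-pair bound weighted by $\pbar(B_i)$ alone is false: take two atoms with masses $\epsilon$ and $1-\epsilon$ and $\log(1/\epsilon)\gg\nrm{c_i-c_j}^2/\tau\ge1$. So you need symmetric weights.
\item Within-cluster fluctuations cannot be bounded by a supremum over a ball; for $\pbar$ uniform on a ball of radius $r$ this costs $r\sqrt{d/\tau}$. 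They must be kept as ball averages.
\end{itemize}
This is exactly what the paper's proof of \cref{prop:logp-adapt} does. It uses averaged weights $g_i(V)$ and $u(V)^{-1}$ with explicit expectations, plus Markov's inequality and a union bound over the $N$ centers, also covering $\tri{V,\xbar-z_i}$. This gives a high-probability bound on $\log\MM_{16}$. It then integrates the power $w=1/10$, which also absorbs $\pbar(B_{j(\xbar)})^{-w}$ via $\sum_i\pbar(B_i)^{1-w}$. You flag this step as the main obstacle, but your sketch does not contain the idea that overcomes it.

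\textbf{Part (2) has no working argument.} The quantity $\nrm{Y_\tau-m_\tau}_{\cov_\tau}$ is already $Y_\tau$-measurable, so conditioning on $Y_\tau$ does nothing. Any bound that treats $u=Y_\tau-m_\tau$ as a fixed direction pays a factor $\nrm{u}\approx\sqrt{\tau d}$. This includes projected sub-Gaussianity and Cauchy--Schwarz with the key claim applied to $\tri{Y_0-m_\tau,u}$. To see why, let $\pbar$ live on a $k$-dimensional subspace. Then $u$ contains the full orthogonal noise, while $\cov_\tau$ vanishes on that complement. So $\nrm{u}_{\cov_\tau}$ is small only because $u$ is nearly orthogonal to the range of $\cov_\tau$, and no fixed-direction estimate can see this.

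The missing idea is as follows. Write $u=\En[V\mid Y_\tau]$ with $V=Y_\tau-Y_0$. Convexity of $v\mapsto\nrm{v}_{\cov_\tau(Y_\tau)}$ gives $\nrm{u}_{\cov_\tau}\le\En[\nrm{V}_{\cov_\tau(Y_\tau)}\mid Y_\tau]$. Then bound $\nrm{V}_{\cov_\tau}^2\le\En[\tri{Y_0'-Y_0,V}^2\mid Y_0,V]$, where $Y_0'$ is a posterior copy; the orthogonal noise drops out of $\tri{Y_0'-Y_0,V}$ automatically. What remains is an exponential-moment bound for $\abs{\tri{Y_0'-Y_0,V}}/\tau$, with $V$ the true noise, which is correlated with $Y_0'$. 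The paper obtains this by rerunning the covering argument with this extra tilt, together with $\exp(\sqrt{\En A^2})\le e^2\En e^{\abs{A}}$. It does not follow from the key claim.

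Finally, your fallback ``covering with $r\to0$'' for the case $\dim_\tau(\pbar)=d$ does not work, since $N(\pbar;r)\to\infty$. Use the direct bound of \cref{lem:MGF-trivial} instead.
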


The following two subsections are devoted to proving these results (among others).
The remainder of the section focuses on applications of these results.

\subsection{Coverage of the DDPM distribution}

Combining \cref{prop:DDPM-coverage} with the fact that the sub-exponential norm of $\nrmF{\nabla m_t(Y_t)}\leq \tr(\nabla m_t(Y_t))$ is bounded by $O(\dim_\tau(\pbar))$ for any $t\in[\tau-\eta,\tau]$, we have the following proposition.
\begin{corollary}\label{cor:DDPM-coverage}
Suppose that $\LF\geq 1$.
Then as long as 
\begin{align*}
    \frac{\tau}{\eta}\gg \LF\log(1/\delta)+\log^2(1/\delta),
\end{align*}
it holds that
\begin{align*}
    \En\Dcov[e]{q_{\tau,\eta}(\cdot\mid Y_\tau)}{\wb{q}_{\tau,\eta}(\cdot\mid Y_\tau)}\leqsim \dim_\tau(\pbar)^2(\delta+\max_{t\in[\tau-\eta,\tau]}\PP\prn*{\nrmF{\nabla m_t(Y_t)}\geq \LF}),
\end{align*}
and for any $t\in[\tau-\eta,\tau]$,
\begin{align*}
    \PP\prn*{ \nrm{m_t(Y_t)-m_\tau(Y_\tau)}^2 \geq 8L^2\eta\log(e/\delta) }
    \leqsim \dim_\tau(\pbar)^2(\delta+\max_{t\in[\tau-\eta,\tau]}\PP\prn*{\nrmF{\nabla m_t(Y_t)}\geq \LF}).
\end{align*}
\end{corollary}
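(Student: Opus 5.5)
The plan is to derive both bounds of \cref{cor:DDPM-coverage} from \cref{prop:DDPM-coverage}. The only work is to control the truncated-moment terms $\En(M\nrmF{\nabla m_t(Y_t)}^2-1)_+$ and $\En(L^{-2}\nrmF{\nabla m_s(Y_s)}^2-1)_+$. We do this by splitting on the event $\{\nrmF{\nabla m_t(Y_t)}\ge \LF\}$ and using the sub-exponential tails supplied by \cref{cor:MGF-intrinsic}. Throughout, write $G_t\deq \nrmF{\nabla m_t(Y_t)}$ and $\dim\deq\dim_\tau(\pbar)$, and take $L=\LF$ in the second statement.

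\textbf{Step 1 (sub-exponential control).} Since $\nabla m_t=\frac1t\cov_t\succeq 0$, we have $G_t\le \tr(\nabla m_t(Y_t))=\tr(\cov_t(Y_t))/t$. By \cref{cor:MGF-intrinsic}(1) applied at time $t$, $\En\exp(G_t/C)\le e^{\dim_t(\pbar)}$. For $t\in[\tau-\eta,\tau]$ with $\eta\le\tau/4$, we have $\dim_t(\pbar)\le \dim_{\tau}(\pbar)\cdot\frac{\tau}{t}\lesssim \dim$, because the $r^2/\sigma^2$ term only changes by a constant factor. Hence, by Markov, $\PP(G_t\ge u)\le e^{\dim-u/C}$, and $G_t$ has $\psi_1$-norm $O(\dim)$.

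\textbf{Step 2 (splitting the truncated moment).} For the first bound, $M=32\eta^2\log^2(e/\delta)/\tau^2$, and the step-size hypothesis gives $M\LF^2\le 1$ (indeed $\sqrt M\ll 1/\LF$). So $(MG_t^2-1)_+$ vanishes on $\{G_t<\LF\}$, and we write
\begin{align*}
\En(MG_t^2-1)_+\le M\,\En\brk{G_t^2\indic\{\LF\le G_t< u_0\}}+M\,\En\brk{G_t^2\indic\{G_t\ge u_0\}}
\end{align*}
with threshold $u_0\asymp \dim+\log(1/\delta)$. The first term is at most $Mu_0^2\,\PP(G_t\ge\LF)\lesssim\dim^2\,\PP(G_t\ge \LF)$; here we need $Mu_0^2\lesssim \dim^2$, i.e.\ $\eta\log(1/\delta)/\tau\lesssim 1$, which the hypothesis gives. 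For the second term we integrate the tail from Step 1, as in the proof of \cref{lem:sub-exp-trunc}; it is $O(M u_0^2 e^{\dim-u_0/C})\le \delta$ once $u_0\ge C(\dim+\log(1/\delta))$ up to constants. Plugging into \cref{prop:DDPM-coverage} gives the first claim. The same decomposition with $M$ replaced by $L^{-2}=\LF^{-2}$ gives the second claim. There the in-between region contributes $\LF^{-2}u_0^2\,\PP(G_s\ge\LF)\lesssim\dim^2\,\PP(G_s\ge \LF)$ since $\LF\ge1$, and the maximum over $s\in[t,\tau]$ is absorbed into the maximum over $[\tau-\eta,\tau]$.

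\textbf{Main obstacle.} The key difficulty is the bookkeeping in Step 2. We must check that the hypothesis $\tau/\eta\gg \LF\log(1/\delta)+\log^2(1/\delta)$ is exactly strong enough both for $M\LF^2\le1$ and for $M(\dim+\log(1/\delta))^2\lesssim\dim^2$. The $\log^2(1/\delta)$ term handles the case where $\log(1/\delta)$ dominates $\dim$. We must also check that the intrinsic-dimension comparison $\dim_t\lesssim\dim_\tau$ holds uniformly over the window. If the first-moment-in-$\delta$ form is too tight, I would fall back on absorbing constant factors into the $\dim^2$ prefactor.
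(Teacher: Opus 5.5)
Your route is the same as the paper's. The paper's justification is a single sentence: combine \cref{prop:DDPM-coverage} with the fact that $\nrmF{\nabla m_t(Y_t)}\le\tr(\nabla m_t(Y_t))$ has sub-exponential norm $O(\dim_\tau(\pbar))$ on $[\tau-\eta,\tau]$. Your Step~1 and your treatment of the \emph{first} bound are correct. Write $D\deq\dim_\tau(\pbar)$. There the prefactor $M\asymp(\eta/\tau)^2\log^2(e/\delta)$ and the hypothesis $\tau/\eta\gg\LF\log(1/\delta)+\log^2(1/\delta)$ give $M\LF^2\le 1$ and
$\sqrt{M}\,u_0\lesssim D\,\eta\log(1/\delta)/\tau+\eta\log^2(1/\delta)/\tau\lesssim D$.
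So the $\log(1/\delta)$ part of $u_0\asymp D+\log(1/\delta)$ is absorbed, and the tail term is $O(\delta)$.

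The gap is in the \emph{second} bound, at the step $\LF^{-2}u_0^2\,\PP(G_s\ge\LF)\lesssim D^2\,\PP(G_s\ge\LF)$, which you justify by $\LF\ge1$. That only gives $\LF^{-2}u_0^2\lesssim(D+\log(1/\delta))^2$. There is no small factor like $M$ to help: the coefficient $L^{-2}$ and the event in the second part of \cref{prop:DDPM-coverage} do not involve $\tau/\eta$, so the step-size hypothesis plays no role.

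What your argument actually proves is the second bound with prefactor $(D+\log(1/\delta))^2$. It gives $D^2$ only under the extra condition $\LF\gtrsim 1+\log(1/\delta)/D$.

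Better bookkeeping within this route cannot remove the loss. The only information you use about $G$ is $\En e^{G/C}\le e^{D}$. Take $G=C\log(1/\delta)$ with probability $\delta$ and $G=0$ otherwise. Then $\En e^{G/C}\le 2\le e^{1}$, so the condition holds with $D=1$. For $\LF=1$ this gives $\En(\LF^{-2}G^2-1)_+\asymp\delta\log^2(1/\delta)$, whereas $D^2\,(\delta+\PP(G\ge\LF))=2\delta$.

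The paper's one-line argument has the same issue. The honest conclusion is either the second bound with $(D+\log(1/\delta))^2$ in place of $D^2$, which costs only $\log^2(1/\delta)$ factors where it is used in \cref{prop:sqrt-d-simple}, or an argument that uses structure of $\nabla m_t$ beyond its exponential moment. Your stated fallback of absorbing constants into the $D^2$ prefactor does not cover a $\log^2(1/\delta)$ loss.
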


\begin{corollary}\label{cor:coverage-backward}
For any parameter $\Lop\geq 1$ and $\delta\in(0,1)$, as long as 
\begin{align}\label{eq:coverage-eta}
    \frac{\tau}{\eta}\gg \min\crl{ \Lop^{2/3}d^{1/3}, \Lop^{1/3}\dim_\tau(\pbar)^{2/3} } \log^{1/3}(1/\delta)+\log^2(1/\delta),
\end{align}
it holds that
\begin{align*}
    \Dcov[e]{\wt{q}_{\tau,\eta}}{q_{\tau-\eta}}\leqsim \dim_\tau(\pbar)^2(\delta+\max_{t\in[\tau-\eta,\tau]}\PP(\nrmop{\nabla m_t(Y_t)}\geq \Lop)).
\end{align*}
\end{corollary}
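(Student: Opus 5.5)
We derive \cref{cor:coverage-backward} from \cref{prop:coverage-backward}, which bounds $\Dcov[e]{\wt{q}_{\tau,\eta}}{q_{\tau-\eta}}$ by $\delta$ plus the two truncated expectations
\begin{align*}
  \En\prn*{M\nrm{k_t(Y_t)}-1}_+\,, \qquad \En\prn*{M\nrm{\cov_t(Y_t)\,(m_t(Y_t)-Y_t)}-1}_+\,,
\end{align*}
with $M=16\sqrt{\log(1/\delta)}\,\eta^{3/2}/(\tau-\eta)^3$. Since $\tau\ge 2\eta$, we have $\tau-\eta\asymp\tau$, so $M\asymp \sqrt{\log(1/\delta)}\,\eta^{3/2}\tau^{-3}$. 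The whole task is to show that, under \eqref{eq:coverage-eta}, both truncated expectations are $\leqsim \dim_\tau(\pbar)^2\,(\delta+\PP(\nrmop{\nabla m_t(Y_t)}\ge\Lop))$. For each $t\in[\tau-\eta,\tau]$ we split on the event $E_t=\{\nrmop{\cov_t(Y_t)}\le t\Lop\}$, using $\nabla m_t=\cov_t/t$.

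\textbf{On $E_t$.} We bound the third-moment vector $k_t$ in two ways, matching the two branches of the $\min$.
\begin{itemize}
  \item First, test against a unit vector $u$. With $Z=Y_0-m_t(x)$, Cauchy--Schwarz gives $\abs{\En[\tri{u,Z}\nrm{Z}^2\mid Y_t]}\le \sqrt{u^\top\cov_t u}\,\sqrt{\En[\nrm{Z}^4\mid Y_t]}\le \sqrt{t\Lop}\,\sqrt{\En\nrm{Z}^4}$. Conditional fourth moments are controlled by \cref{cor:MGF-intrinsic}(1): $\nrm{Z}^2/t$ is sub-exponential at scale $\dim_t$, which gives $\nrm{k_t}\leqsim t^{3/2}\Lop^{1/2}\,\dim$ up to log factors.
  \item Alternatively, bound $\En\nrm{Z}^4$ on $E_t$ by Gaussian-type concentration, of order $(t\Lop d)^2$, giving $\nrm{k_t}\leqsim t^{3/2}\Lop^{3/2}d$.
\end{itemize}
Similarly, $\nrm{\cov_t(m_t-Y_t)}\le \sqrt{\nrmop{\cov_t}}\cdot\nrm{Y_t-m_t}_{\cov_t}$, and the second factor is controlled by \cref{cor:MGF-intrinsic}(2) or by $\nrmop{\cov_t}\,\nrm{Y_t-m_t}$.

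\textbf{Converting to the step-size condition.} Multiplying by $M\asymp \eta^{3/2}\tau^{-3}\sqrt{\log}$, the quantity $M\nrm{\cdot}$ is $\le 1$ with high probability once $(\tau/\eta)^{3/2}\gg$ either $\Lop^{1/2}\dim\cdot\sqrt{\log}$ or $\Lop d^{1/2}\cdot\sqrt{\log}$. These are precisely $\tau/\eta\gg \Lop^{1/3}\dim^{2/3}$ and $\tau/\eta\gg\Lop^{2/3}d^{1/3}$, up to the $\log^{1/3}$ factor. Whenever $M\nrm{\cdot}>1$ on $E_t$, we handle the excess via \cref{lem:sub-exp-trunc}, applied to the sub-exponential tails from \cref{cor:MGF-intrinsic}. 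This yields a contribution of order $\delta\cdot\poly(\dim)$; the $\log^2(1/\delta)$ term in \eqref{eq:coverage-eta} absorbs the additional tail log factors.

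\textbf{Off $E_t$.} The integrand could be large there, so we apply Cauchy--Schwarz: the contribution is at most $\sqrt{\PP(E_t^c)}\cdot\sqrt{\En(M\nrm{\cdot})^2}$. The second moments are polynomial in $\dim$ by \cref{cor:MGF-intrinsic}, which leaves a square root of the probability; that loses too much. A better route is to note $\nrm{k_t}\le \En[\nrm{Z}^3\mid Y_t]$ and use the MGF bound directly. On $\{\nrm{Z}^2\le C t(\dim+\log(1/\delta))\}$, the integrand is bounded by $1$ after scaling by $M$, given the $\log^2$ term in the condition. The residual event has probability $\le\delta$, and the $\indic_{E_t^c}$ part is simply bounded by $\PP(E_t^c)$ times a $\poly(\dim)$ factor. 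This accounts for the $\dim_\tau(\pbar)^2$ prefactor.

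\textbf{Main obstacle.} The hardest step is justifying the $d$-branch bound $\En[\nrm{Z}^4\mid Y_t]\leqsim (t\Lop d)^2$ on $E_t$: the operator-norm bound on the posterior covariance controls second moments but not fourth moments directly. To get this, I would first try conditional log-concavity: the posterior of $Y_0$ given $Y_t$ is a Gaussian tilt, whose covariance bound gives a Poincaré-type control. Failing that, I would keep only the $\dim$-branch and obtain the $d$-branch by the trivial inequality $\dim\le d$ combined with a sharper Hölder split.
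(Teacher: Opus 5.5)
Your overall architecture matches the paper's. You reduce to the two truncated expectations of \cref{prop:coverage-backward}. You control $\cov_t(m_t-Y_t)$ either by $\sqrt{\nrmop{\cov_t}}\,\nrm{m_t-Y_t}_{\cov_t}$ (\cref{cor:MGF-intrinsic}(2)) or by $\nrmop{\cov_t}\nrm{m_t-Y_t}$ (\cref{lem:MGF-trivial}). You get the $\dim_\tau(\pbar)$-branch for $k_t$ by Cauchy--Schwarz against the posterior covariance, with conditional fourth moments from \cref{cor:MGF-intrinsic}, and you truncate with \cref{lem:sub-exp-trunc}.

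The genuine gap is the $d$-branch for $k_t$, and the route you propose cannot close it. First, $\nrmop{\cov_t}\le tL_{\rm op}$ controls only second moments of the posterior of $Y_0$ given $Y_t=x$. That posterior has density $\propto\pbar(y)e^{-\nrm{x-y}^2/(2t)}$, which is not log-concave for general $\pbar$ (mixtures, discrete measures). So there is no Poincar\'e-type route to $\En[\nrm{Z}^4\mid Y_t]\lesssim(tL_{\rm op}d)^2$.

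Second, even granting that bound, it gives $\nrm{k_t}\lesssim t^{3/2}L_{\rm op}^{3/2}d$. Hence $M\nrm{k_t}\asymp(\eta/\tau)^{3/2}L_{\rm op}^{3/2}d\sqrt{\log(1/\delta)}$, which requires $\tau/\eta\gg L_{\rm op}d^{2/3}$ rather than $L_{\rm op}^{2/3}d^{1/3}$. In your conversion step you used the scaling $L_{\rm op}d^{1/2}$. That is the scaling of the $\cov_t(m_t-Y_t)$ term, not of your $k_t$ bound.

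Your fallback $\dim_\tau(\pbar)\le d$ only yields $L_{\rm op}^{1/3}d^{2/3}$. This is worse than $L_{\rm op}^{2/3}d^{1/3}$ whenever $L_{\rm op}<d$, and in particular throughout the regime where the $d$-branch is the active term of the $\min$ (e.g.\ log-concave data with $L_{\rm op}=1$). That regime is exactly what the $\sqrt{dL_{\rm op}}$ complexity stated after \cref{prop:DDPM-Lip} relies on.

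The missing idea is to express the third moment through the Gaussian noise $Z_t\deq Y_t-Y_0\sim\normal{0,t\Id}$ rather than through the posterior fluctuation $Y_0-m_t$. Expand $\nrm{Y_0-m_t(x)}^2=\nrm{Z_t}^2+2\tri{Z_t,m_t(x)-Y_t}+\nrm{m_t(x)-Y_t}^2$ and use $\En[Y_0-m_t(x)\mid Y_t=x]=0$. This gives $k_t(x)=\En[(Y_0-m_t(x))\nrm{Z_t}^2\mid Y_t=x]-2\cov_t(x)(m_t(x)-x)$. Centering $\nrm{Z_t}^2$ and applying Cauchy--Schwarz then gives $\nrm{k_t(x)}\le 2\nrm{\cov_t(x)(m_t(x)-x)}+\sqrt{\nrmop{\cov_t(x)}\Var[\nrm{Z_t}^2\mid Y_t=x]}$, which is the second inequality of \cref{lem:kt-to-variance}.

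The first term is the one you already bound by $tL_{\rm op}\sqrt{t(d+\log(1/\delta))}$. The second involves only chi-square fluctuations of $\nrm{Z_t}^2$ around $td$, which are of order $t\sqrt d$. These transfer to the conditional variance via $\exp(\sqrt{\En[A^2\mid Y_t]})\le e^2\En[e^{\abs{A}}\mid Y_t]$ followed by Markov. On $E_t$ this yields $\nrm{k_t}\lesssim t^{3/2}L_{\rm op}\sqrt d$ up to logarithms (using $L_{\rm op}\ge 1$). That is exactly the $L_{\rm op}^{2/3}d^{1/3}$ branch, with no fourth-moment control of the posterior.

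Separately, your off-$E_t$ paragraph asserts that $M\,(t(\dim_\tau(\pbar)+\log(1/\delta)))^{3/2}\le 1$ follows from the $\log^2(1/\delta)$ term. That would require $\tau/\eta\gtrsim\dim_\tau(\pbar)$, which is not assumed; this is precisely why bounding $\nrm{k_t}$ by $\En[\nrm{Z}^3\mid Y_t]$ is avoided. The step is also unnecessary. With $X$ either integrand, \cref{lem:sub-exp-trunc} gives $\En(MX-1)_+\lesssim MK\,(\delta+\PP(MX\ge 1))$ for $K\asymp(\tau\dim_\tau(\pbar)\log(1/\delta))^{3/2}$. The $\log^2(1/\delta)$ term makes $MK\lesssim\dim_\tau(\pbar)^{3/2}$, and $\PP(MX\ge1)\le\delta+\PP(E_t^c)$ follows from the on-$E_t$ bounds.
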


\begin{proof}[\pfref{prop:DDPM-coverage}]
 We consider the backward SDE.
Starting from a point $Y_\tau\sim q_\tau$, consider the following SDE (with $\Ybar_0=Y_\tau$):
\begin{align}
    d\Ybar_s=\nabla \log q_{\tau-s}(\Ybar_s)\,ds+d\beta_s\,, \qquad s\in[0,\eta]\,.
\end{align}
Let $P$ be the law of the above SDE. We consider $\mu_s\deq \frac{1}{\tau-s}\,(m_{\tau-s}(\Ybar_s)-m_\tau(\Ybar_0))$ and $\beta'_t\deq \beta_t+\int_0^t \mu_s\,ds$. Then, by Girsanov's theorem,\footnote{Formally, we need to check Novikov's condition that $\En_P \exp\prn*{ \int_0^\eta \nrm{\mu_t}^2 dt }<+\infty$ holds. This is guaranteed by \eqref{eq:MGF-finiteness} (\cref{lem:MGF-trivial}) as long as $\tau\geq 4\eta$. } $(\beta'_t)_{t\in [0,\eta]}$ is a Brownian motion under $Q$, where
\begin{align*}
    \frac{dP}{dQ}=\exp\prn[\Big]{ \int_0^\eta \mu_t \,d\beta_t+\frac12\int_0^\eta \nrm{\mu_t}^2\, dt }\,.
\end{align*}
Note that under $P$, we have $\En_P\exp\prn[\big]{ \lambda \int_0^\eta \mu_t\, d\beta_t-\frac{\lambda^2}{2}\int_0^\eta \nrm{\mu_t}^2\, dt }\leq 1$ for any $\lambda\in\RR$, and hence we can set $\lambda=\frac{\log(1/\delta)}{2}$ to prove
\begin{align*}
    \Dcov[e]{P}{Q}=&~\En_Q\prn[\Big]{ \frac{dP}{dQ}-e }_+\leq P\prn[\Big]{ \frac{dP}{dQ}\geq e }
    \leq P\prn[\Big]{ \int_0^\eta \nrm{\mu_t}^2\, dt\geq \frac{1}{1+2\log(1/\delta)} }+\delta\,.
\end{align*}
Note that $Z_t\deq m_{\tau-t}(\Ybar_t)-m_\tau(\Ybar_0)$ is a martingale, and hence $dZ_t=\nabla m_{\tau-t}(\Ybar_t)\, d\beta_t$, i.e., $Z_t=\int_0^t \nabla m_{\tau-s}(\Ybar_s) \, d\beta_s$. Applying \cref{lem:Ito-Freedman} to $(Z_t)_{t\in [0,\eta]}$ gives that for any $R>0$,
\begin{align*}
    P\prn*{ \En_{t\sim \unif([0,\eta])} \nrm{Z_t}^2\leq 4R\log(e/\delta)} 
    \leq \delta+2P\prn[\Big]{ \int_0^\eta \nrmF{\nabla m_{\tau-s}(\Ybar_s)}^2\,ds\geq R }\,.
\end{align*}
Choosing $R=\frac{(\tau-\eta)^2}{16\eta\log^2(e/\delta)}$ and combining the inequalities above gives
\begin{align*}
    \Dcov[e]{P}{Q}\leq&~ 
    \delta+ P\prn[\Big]{ \frac{1}{(\tau-\eta)^2}\int_0^\eta \nrm{Z_t}^2\, dt\geq \frac{1}{1+2\log(1/\delta)} }\\
    \leq&~ 2\delta+ 2P\prn[\Big]{ \int_0^\eta \nrmF{\nabla m_{\tau-s}(\Ybar_s)}^2\,ds\geq R } \\
    \leq&~ 2\delta+2\En_P\prn[\Big]{ R^{-1}\int_0^\eta \nrmF{\nabla m_{\tau-s}(\Ybar_s)}^2\,ds-1 }_+\\
    \leq&~ 2\delta+2\max_{t\in[\tau-\eta, \tau]}\En_P\prn*{ R^{-1}\eta\, \nrmF{\nabla m_{t}(Y_t)}^2-1 }_+
\end{align*}
where 
the last line uses convexity of $w\mapsto (w-1)_+$ and $\Ybar_t \eqd Y_{\tau-t}$.

Finally, we note that under $P$, marginally $\Ybar_{\eta}\mid Y_\tau\sim q_{\tau,\eta}(\cdot\mid Y_\tau)$; Under $Q$, marginally $\Ybar_\eta\mid Y_\tau\sim \normal{Y_\tau+\eta\nabla \log q_\tau(Y_\tau), \etabar\Id}$. This completes the proof of the first inequality.

Similarly, our argument also implies that for any $R'>0$,
\begin{align*}
    P\prn*{ \nrm{Z_t}^2 \geq 4R'\log(e/\delta) }
    \leq&~ \delta+2P\prn[\Big]{ \int_0^t \nrmF{\nabla m_{\tau-s}(\Ybar_s)}^2\,ds\geq R' } \\
    \leq&~ \delta+2\max_{s\in[t,\tau]} \En_P \prn*{(2t/R')\,\nrmF{\nabla m_{s}(Y_s)}^2-1}_+\,.
\end{align*}
\end{proof}

\begin{proof}[\pfref{prop:coverage-backward}]
We consider the backward ODE.
Starting from a point $Y_\tau\sim q_\tau$, consider the following ODE with $\Ybar_0=Y_\tau$:
\begin{align}
    \frac{d}{ds} \Ybar_s=\frac12\nabla \log q_{\tau-s}(\Ybar_s).
\end{align}
Under $Y_\tau\sim q_\tau$, marginally $\Ybar_s\sim q_{\tau-s}$. In particular, we consider $r=2\eta-\frac{\eta^2}{\tau}$. Because $Y_{\tau-\eta}\mid Y_{\tau-r}\sim \normal{Y_{\tau-r}, \etabar\Id}$ and $Y_{\tau-r}\eqd \Ybar_{r}$, it holds that
\begin{align*}
    q_{\tau-\eta}=&~ \En_{Y_{\tau-r}} \normal{Y_{\tau-r}, \etabar\Id}
    = \En_{\Ybar_r} \normal{\Ybar_r, \etabar\Id}.
\end{align*}
On the other hand, by definition, $\wt{q}_{\tau,\eta}=\En_{Y_\tau}  \normal{Y_\tau+\eta \nabla \log q_\tau(Y_\tau), \etabar\Id}$. Therefore, we can bound
\begin{align*}
    \Dcov[e]{\wt{q}_{\tau,\eta}}{q_{\tau-\eta}}
    \leq&~ \En_{\Ybar_0=Y_\tau\sim q_\tau} \Dcov[e]{\normal{\Ybar_0+\eta \nabla \log q_\tau(\Ybar_0), \etabar\Id}}{\normal{\Ybar_r, \etabar\Id}} \\
    \leq&~ \delta+ \bbP_{\Ybar_0=Y_\tau\sim q_\tau}\prn*{ 8\log(1/\delta)\nrm{ \Ybar_r-\Ybar_0-\eta \nabla \log q_\tau(\Ybar_0) }^2\geq \etabar }
\end{align*}
where the first inequality uses the joint convexity of $\Dcov[e]{\cdot}{\cdot}$ and the second inequality uses \cref{lem:Gaussian-Dcov}.

Note that we can rewrite
\begin{align*}
    \frac{d}{ds} \frac{\Ybar_s}{\sqrt{\tau-s}}=\frac1{2(\tau-s)^{3/2}}m_{\tau-s}(\Ybar_s),
\end{align*}
and hence
\begin{align*}
    \frac{\Ybar_{r}}{\sqrt{\tau-r}}=\frac{\Ybar_0}{\sqrt{\tau}}+\int_0^r \frac1{2(\tau-s)^{3/2}}m_{\tau-s}(\Ybar_s)ds.
\end{align*}
Then, it holds that
\begin{align*}
    \Ybar_r-\Ybar_0-\eta \nabla \log q_\tau(\Ybar_0)
    =&~\frac{\tau-\eta}{2\sqrt{\tau}}\int_0^r \frac1{(\tau-s)^{3/2}}(m_{\tau-s}(\Ybar_s)-m_\tau(\Ybar_0))ds \\
    =&~\frac{\tau-\eta}{2\sqrt{\tau}}\int_{0\leq s\leq s'\leq r} \frac1{(\tau-s')^{3/2}}\partial_s (m_{\tau-s}(\Ybar_s))dsds' \\
    =&~\frac{\tau-\eta}{\sqrt{\tau}}\int_{0}^r \brk*{\frac{1}{\sqrt{\tau-r}}-\frac{1}{\sqrt{\tau-s}}} \partial_s (m_{\tau-s}(\Ybar_s))ds.
\end{align*}
Therefore,
\begin{align*}
    \nrm{\Ybar_r-\Ybar_0-\eta \nabla \log q_\tau(\Ybar_0)}
    \leq \frac{r}{\tau-\eta}\int_0^r \nrm{\partial_s (m_{\tau-s}(\Ybar_s))} ds.
\end{align*}
A direct calculation yields
\begin{align*}
    \partial_t (m_t(\Ybar_{\tau-t}))
    =&~(\partial_t m_t)(\Ybar_{\tau-t})-\nabla m_t(\Ybar_{\tau-t})\cdot \frac{d\Ybar_s}{ds}\Big|_{s=\tau-t} \\
    =&~ \frac{1}{2t^2}\brk*{ k_t(\Ybar_{\tau-t})+\cov_t(\Ybar_{\tau-t})(m_t(\Ybar_{\tau-t})-\Ybar_{\tau-t}) }.
\end{align*}
Therefore,
\begin{align*}
    \nrm{\Ybar_r-\Ybar_0-\eta \nabla \log q_\tau(\Ybar_0)}
    \leq \frac{r}{2(\tau-\eta)^3}\int_0^r (\nrm{k_t(\Ybar_{\tau-t})}+\nrm{\cov_t(\Ybar_{\tau-t})(m_t(\Ybar_{\tau-t})-\Ybar_{\tau-t})}) dt.
\end{align*}
Then, by combining the inequalities above and applying Markov's inequality, we can bound
\begin{align*}
    &~\Dcov[e]{\wt{q}_{\tau,\eta}}{q_{\tau-\eta}}
    \leq \delta+ \bbP_{\Ybar_0=Y_\tau\sim q_\tau}\prn*{ \sqrt{8\log(1/\delta)/\etabar}\nrm{ \Ybar_r-\Ybar_0-\eta \nabla \log q_\tau(\Ybar_0) }\geq 1 } \\
    \leq&~ \delta+ \bbP_{\Ybar_0\sim q_\tau}\prn*{ \sqrt{8\log(1/\delta)/\etabar}\cdot \frac{r}{2(\tau-\eta)^3}\int_0^r (\nrm{k_t(\Ybar_{\tau-t})}+\nrm{\cov_t(\Ybar_{\tau-t})(m_t(\Ybar_{\tau-t})-\Ybar_{\tau-t})}) dt \geq 1 } \\
    \leq&~ \delta+\En_{\Ybar_0\sim q_\tau}\prn*{C \int_0^r \nrm{k_t(\Ybar_{\tau-t})}dt-1 }_++\En_{\Ybar_0\sim q_\tau}\prn*{C \int_0^r \nrm{\cov_t(\Ybar_{\tau-t})(m_t(\Ybar_{\tau-t})-\Ybar_{\tau-t})}dt-1 }_+.
\end{align*}
where we denote $C=\sqrt{8\log(1/\delta)/\etabar}\cdot \frac{r}{(\tau-\eta)^3}$. The proof is then completed by the convexity of $w\mapsto (w-1)_+$.
\end{proof}

\begin{proof}[\pfref{cor:coverage-backward}]
We first note that by \cref{cor:MGF-intrinsic}, the sub-exponential norm of $\nrm{m_t(Y)-Y_t}_{\cov_t(Y_t)}$ and $\nrmop{\cov_t(Y_t)}$ are bounded by $O(\dstar t)$. Hence, by \cref{lem:sub-exp-trunc}, we can choose $K_1=C_1(\tau\dstar\log(1/\delta))^{3/2}$ to  upper bound 
\begin{align*}
    \En\prn*{M \nrm{\cov_t(Y_t)(m_t(Y_t)-Y_t)}-1 }_+
    \leq K_1M(\delta+\PP\prn*{M \nrm{\cov_t(Y_t)(m_t(Y_t)-Y_t)}\geq 1}).
\end{align*}
Then, using \cref{cor:MGF-intrinsic} again, we also have
\begin{align*}
    \PP\prn*{ \nrm{\cov_t(Y_t)(m_t(Y_t)-Y_t)}\geq c_1t\sqrt{\nrmop{\cov_t(Y_t)}}(\dstar+\log(1/\delta)) }\leq \delta.
\end{align*}
This immediately implies that
\begin{align*}
    \PP\prn*{M \nrm{\cov_t(Y_t)(m_t(Y_t)-Y_t)}\geq 1}
    \leq \delta+\PP\prn*{ t^{-1}\nrmop{\cov_t(Y_t)}\geq \frac{c_2\tau^4}{\eta^3(\dstar+\log(1/\delta))^2\log(1/\delta)} }.
\end{align*}
Alternatively, by \cref{lem:MGF-trivial}, it holds that $\nrm{m_t(Y)-Y_t}^2\leq O(t(d+\log(1/\delta)))$ with probability at least $1-\delta$. Therefore, it holds that
\begin{align}\label{pfeq:cov-inner}
    \PP\prn*{M \nrm{\cov_t(Y_t)(m_t(Y_t)-Y_t)}\geq 1}
    \leq \delta+\PP\prn*{ t^{-1}\nrmop{\cov_t(Y_t)}\geq c_3\sqrt{\frac{\tau^3}{\eta^3(d+\log(1/\delta))\log(1/\delta)}} }.
\end{align}
Therefore, for any parameter $\Lop\geq 1$, as long as 
\begin{align}\label{pfeq:coverage-eta}
    \frac{\tau}{\eta}\gg \min\crl{ \Lop^{2/3}d^{1/3}, \dstar^{2/3}\Lop^{1/3} } \log^{1/3}(1/\delta)+\log(1/\delta),
\end{align}
it holds that for any $t\in[\tau-r,\tau]$,
\begin{align*}
    \En\prn*{M \nrm{\cov_t(Y_t)(m_t(Y_t)-Y_t)}-1 }_+\leqsim \dstar^{3/2}(\delta+\PP\prn*{ t^{-1}\nrmop{\cov_t(Y_t)}\geq \Lop}).
\end{align*}

Next, it remains to bound $k_t(x)$. We invoke the following lemma.
\begin{lemma}\label{lem:kt-to-variance}
    It holds that
\begin{align*}
    \nrm{k_t(x)}^2
    \leq \nrmop{\cov_t(x)} \Var[ \nrm{Y_0-m_t(x)}^2 \mid Y_t=x ].
\end{align*}
Alternatively, it holds that
\begin{align*}
    \nrm{k_t(x)}
    \leq 2\nrm{\cov_t(x)(m_t(x)-x)}+\sqrt{\nrmop{\cov_t(x)} \Var[ \nrm{Y_t-Y_0}^2 \mid Y_t=x ]}.
\end{align*}
\end{lemma}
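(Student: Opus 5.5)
Both bounds follow from Cauchy--Schwarz applied under the posterior law of $Y_0$ given $Y_t=x$. Throughout, fix $x$ and write $U\deq Y_0-m_t(x)$, so that $\En[U\mid Y_t=x]=0$ and $\En[UU\tp\mid Y_t=x]=\cov_t(x)$.

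For the first inequality, since $U$ is centered we may subtract any constant from the scalar factor:
\begin{align*}
k_t(x)=\En\brk*{U\,\prn*{\nrm{U}^2-\En[\nrm{U}^2\mid Y_t=x]}\mid Y_t=x}\,.
\end{align*}
For a unit vector $v$, Cauchy--Schwarz gives
\begin{align*}
\tri{v,k_t(x)}^2\leq \En[\tri{v,U}^2\mid Y_t=x]\cdot \Var[\nrm{U}^2\mid Y_t=x]\,.
\end{align*}
The first factor equals $v\tp\cov_t(x)v\leq\nrmop{\cov_t(x)}$. Taking the supremum over $v$ gives the first claim, with $\Var$ taken of $\nrm{Y_0-m_t(x)}^2$.

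For the second inequality, expand in terms of $Y_0-x$, writing $Y_0-x=U+a$ with $a\deq m_t(x)-x$. Then
\begin{align*}
\nrm{U+a}^2=\nrm{U}^2+2\tri{U,a}+\nrm{a}^2\,.
\end{align*}
Multiply by $U$ and take the conditional expectation. The $\nrm{a}^2$ term vanishes because $U$ is centered, and the cross term gives $2\cov_t(x)a$. This yields the identity
\begin{align*}
\En[U\nrm{Y_0-x}^2\mid Y_t=x]=k_t(x)+2\cov_t(x)\,(m_t(x)-x)\,.
\end{align*}
Since $U$ is centered, the left side equals $\En[U(\nrm{Y_0-x}^2-c)\mid Y_t=x]$ for any constant $c$. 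Choosing $c=\En[\nrm{Y_0-x}^2\mid Y_t=x]$ and repeating the Cauchy--Schwarz argument bounds its norm by $\sqrt{\nrmop{\cov_t(x)}\,\Var[\nrm{Y_t-Y_0}^2\mid Y_t=x]}$, using $\nrm{Y_0-x}=\nrm{Y_t-Y_0}$ under the conditioning. The triangle inequality then gives
\begin{align*}
\nrm{k_t(x)}\leq 2\nrm{\cov_t(x)(m_t(x)-x)}+\sqrt{\nrmop{\cov_t(x)}\,\Var[\nrm{Y_t-Y_0}^2\mid Y_t=x]}\,,
\end{align*}
which is the second claim.

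There is no real obstacle here. The only points needing care are the expansion identity and keeping the sign of $m_t(x)-x$ consistent; since only norms appear in the final bound, a sign slip would not affect it.
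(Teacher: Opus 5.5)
Your proof is correct and follows the paper's argument: Cauchy--Schwarz against a centered scalar factor for the first bound, and for the second the identity $k_t(x)=\En[(Y_0-m_t(x))\nrm{Y_t-Y_0}^2\mid Y_t=x]-2\cov_t(x)(m_t(x)-x)$ followed by the same Cauchy--Schwarz step and the triangle inequality. The only cosmetic difference is that you derive this identity by expanding $\nrm{Y_0-x}^2$ around $m_t(x)$, while the paper expands $\nrm{Y_0-m_t(x)}^2$ around $Y_t$; these are the same computation.
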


In the following, we define $Q_t(x)\deq \Var[ \nrm{Y_0-m_t(x)}^2 \mid Y_t=x ]$. Note that $Q_t(Y_t)\leq \En[ \nrm{Y_0-m_t(\scedit{Y_t})}^4 \mid Y_t ]$, and hence by \cref{cor:MGF-intrinsic}, we can bound $\En \exp\prn*{ c_4\sqrt{Q_t(Y_t)}/t }\leq e^{\dstar}$ for a sufficiently small constant $c_4>0$.
\sccomment{Can't we apply this argument to $k_t$ directly, by saying that $k_t(Y_t) \le \E[\|Y_0 - m_t(Y_t)\|^3\mid Y_t]$? Perhaps bypasses the need for the lemma above?}\fccomment{we need this lemma for sublinear dependence on $\dstar$ or $d$...}
Therefore, the sub-exponential norm of $\sqrt{Q_t(Y_t)}$ is bounded by $O(t\dstar)$, and hence 
by \cref{lem:sub-exp-trunc}, we can choose $K_2=C_2(\tau\dstar\log(1/\delta))^{3/2}$ to  upper bound 
\begin{align*}
    \En\prn*{M \nrm{k_t(Y_t)}-1 }_+
    \leq K_2M(\delta+\PP\prn*{M \nrm{k_t(Y_t)}\geq 1}).
\end{align*}
Furthermore, we know that $\PP(\sqrt{Q_t(Y_t)}\geq c_5t(\dstar+\log(1/\delta)))\leq \delta$, and hence
\begin{align*}
    \PP\prn*{M \nrm{k_t(Y_t)}\geq 1}
    \leq \delta+\PP\prn*{ t^{-1}\nrmop{\cov_t(Y_t)}\geq \frac{c_2\tau^4}{\eta^3(\dstar+\log(1/\delta))^2\log(1/\delta)} }.
\end{align*}
In addition, by \cref{lem:Gaussian-vMGF}, it holds that
\begin{align*}
    \En\exp\prn*{ \frac{1}{4t}\abs*{\nrm{Y_t-Y_0}^2-td} }\leq e^{d/4}.
\end{align*}
For random variable $A$, it holds that $\exp(\sqrt{\En A^2})\leq e^2\En \exp(\abs{A})$, and hence
\begin{align*}
    \En\exp\prn*{ \frac{1}{4t}\sqrt{\Var[ \nrm{Y_t-Y_0}^2 \mid Y_t]} }
    \leq&~ \En\exp\prn*{ \frac{1}{4t}\sqrt{\En[\prn{\nrm{Y_t-Y_0}^2-td}^2\mid Y_t]} } \\
    \leq&~ e^2\En\exp\prn*{ \frac{1}{4t}\abs*{\nrm{Y_t-Y_0}^2-td} }\leq e^{2+d}.
\end{align*}
Hence, using the second inequality of \cref{lem:kt-to-variance} and \cref{pfeq:cov-inner}, we can show that
\begin{align*}
    \PP\prn*{M \nrm{k_t(Y_t)}\geq 1}
    \leq \delta+\PP\prn*{ t^{-1}\nrmop{\cov_t(Y_t)}\geq c_6\sqrt{\frac{\tau^3}{\eta^3(d+\log(1/\delta))\log(1/\delta)}} }.
\end{align*}
Therefore, under \cref{pfeq:coverage-eta}, for any $t\in[\tau-r,\tau]$,
\begin{align*}
    \En\prn*{M \nrm{k_t(Y_t)}-1 }_+\leqsim \dstar^{3/2}(\delta+\PP\prn*{ t^{-1}\nrmop{\cov_t(Y_t)}\geq \Lop}).
\end{align*}
\end{proof}

\begin{proof}[\pfref{lem:kt-to-variance}]
Fix $x\in\RR^d$. We define the random variable
\begin{align*}
    \Delta\deq  \nrm{Y_0-m_t(x)}^2-\En[ \nrm{Y_0-m_t(x)}^2 \mid Y_t=x ],
\end{align*}
and then $k_t(x)=\En[ (Y_0-m_t(x))\Delta\mid Y_t=x ]$.
For any vector $v\in\RR^d$, we can bound
\begin{align*}
    \tri{v,k_t(x)}^2
    =&~\prn*{ \En[ \tri{v,Y_0-m_t(x)}\Delta \mid Y_t=x ] }^2 \\
    \leq&~ \En[ \tri{v,Y_0-m_t(x)}^2\mid Y_t=x ] \cdot \En[ \Delta^2 \mid Y_t=x ] \\
    \leq&~ \nrmop{\cov_t(x)} \nrm{v}^2 \cdot \En[ \Delta^2 \mid Y_t=x ].
\end{align*}
Noting that $\En[ \Delta^2 \mid Y_t=x ]=\Var[ \nrm{Y_0-m_t(x)}^2 \mid Y_t=x ]$ completes the proof of the first inequality.

On the other hand, we can denote $Z_t=Y_t-Y_0$ and write
\begin{align*}
    k_t(x)=&~\En[ (Y_0-m_t(x))\nrm{Y_0-m_t(x)}^2 \mid Y_t=x ] \\
    =&~ \En[ (Y_0-m_t(x))(\nrm{Z_t}^2+2\tri{Z_t,m_t(x)-Y_t}+\nrm{m_t(x)-Y_t}^2) \mid Y_t=x ] \\
    =&~ \En[ (Y_0-m_t(x))\nrm{Z_t}^2 \mid Y_t=x ]
    +2\En[ (Y_0-m_t(x))\tri{Z_t,m_t(x)-Y_t} \mid Y_t=x ] \\
    =&~ \En[ (Y_0-m_t(x))\nrm{Z_t}^2 \mid Y_t=x ]
    +2\En[ (Y_0-m_t(x))\tri{m_t(x)-Y_0,m_t(x)-Y_t} \mid Y_t=x ] \\
    =&~ \En[ (Y_0-m_t(x))\nrm{Z_t}^2 \mid Y_t=x ]-2\cov_t(x)(m_t(x)-x).
\end{align*}
Repeating our argument above gives the second inequality.
\end{proof}

\subsection{Upper bounds with low intrinsic dimension}

The following proposition generalizes the argument of \citet{HuaWeiChe26LowDim} that bounds the posterior covariance matrix in terms of the intrinsic dimension. 

\begin{proposition}\label{prop:logp-adapt}
Recall that $(Y_0,Y_\tau)$ is jointly distributed as $Y_0\sim \pbar$, $Y_\tau\sim \normal{Y_0,\tau\Id}$. It holds that
\begin{align*}
    \En\brk*{ \exp\prn*{ \frac{\nrm{Y_0-m_\tau(Y_\tau)}^2}{160\tau} }}\leq 4e^{\dim_\tau(\pbar)}.
\end{align*}
\end{proposition}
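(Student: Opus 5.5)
The plan is to reduce to the case $\dim_\tau(\pbar)<d$ and fix a near-optimal radius $r$ in \cref{def:intrinsic}, with $\log N+r^2/\tau\le \dim_\tau(\pbar)$. (The case $\dim_\tau(\pbar)=d$ follows from the trivial bound below.) Let $\{c_1,\dots,c_N\}$ be an $r$-cover of $\supp(\pbar)$. Write $I(y)\in[N]$ for the (measurable) index of the nearest center to $y$, and set $\xi\deq Y_\tau-Y_0\sim\normal{0,\tau\Id}$.

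\textbf{Step 1: compare the posterior mean with a center.} The key observation is that $m_\tau(Y_\tau)$ is the posterior mean, so by Jensen (conditional convexity of $\nrm{\cdot}^2$ and of $\exp$),
\begin{align*}
\En\exp\prn[\Big]{\frac{\nrm{Y_0-m_\tau(Y_\tau)}^2}{160\tau}}\le \En\exp\prn[\Big]{\frac{\nrm{Y_0-Y_0'}^2}{160\tau}},
\end{align*}
where $Y_0'$ is an independent draw from the posterior given $Y_\tau$. Then $\nrm{Y_0-Y_0'}\le 2r+\nrm{c_{I(Y_0)}-c_{I(Y_0')}}$. Since $r^2/\tau\le\dim$, the $2r$ part costs only a factor $e^{O(r^2/\tau)/160}\le e^{\dim/ 40}$ after Cauchy--Schwarz. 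It therefore suffices to control the center distance $D\deq\nrm{c_i-c_j}$ for $i=I(Y_0)$ and $j=I(Y_0')$.

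\textbf{Step 2: posterior tail over centers (main step).} Given $Y_\tau=y$ with true center $i$, the posterior weight of the cluster around $c_j$, relative to that of cluster $i$, is at most
\begin{align*}
\frac{\pi_j}{\pi_i}\exp\prn[\Big]{\frac{\nrm{y-c_i}^2-\nrm{y-c_j}^2+O(r\nrm{c_i-c_j}+r^2+r\nrm{\xi})}{2\tau}}.
\end{align*}
Here $\pi_j$ is the $\pbar$-mass of the $j$-th cell. Expanding with $y=Y_0+\xi$ shows the exponent is $-\nrm{c_i-c_j}^2/(2\tau)+\tri{\xi,c_j-c_i}/\tau+$ lower-order terms. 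Averaging over $\xi$ and over $i\sim\pi$, the $\pi_j/\pi_i$ factors telescope in the sum $\sum_{i,j}\pi_i\cdot(\pi_j/\pi_i)$, producing a factor $N\le e^{\dim}$. The Gaussian term $\tri{\xi,c_j-c_i}$ contributes $\exp(O(D^2/\tau))$ with a constant smaller than $1/2$ once the weights are raised to a fractional power (the usual $\sqrt{\cdot}$ trick, as in the Bhattacharyya bound). The net result is a bound of the form $\En e^{D^2/(C\tau)}\le 2e^{\dim}$ for $C$ large. I would take $C=160$ generously, absorbing the $r$-cross terms via $r^2/\tau\le\dim$ and $ab\le a^2/\epsilon+\epsilon b^2$.

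\textbf{Step 3: assemble.} Combine Steps 1--2 with Cauchy--Schwarz, which produces the factor $4$, and convert $e^{\dim/40}\cdot e^{\dim}$-type factors to $e^{\dim}$ by reparametrizing constants. Since $e^{\dim/40}\cdot e^{\dim}$ slightly exceeds $e^{\dim}$, I would instead apply Hölder with exponents near $1$ so that the total exponent is at most $\dim$, which is where the loose constant $160$ is used. The main obstacle is Step 2: bounding the posterior cross-cluster mass uniformly without density assumptions. I would handle it by working with the discretized prior $\sum_j\pi_j\delta_{c_j}$ plus the $r$-perturbation, using the ratio-of-likelihoods bound above.
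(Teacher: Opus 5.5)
\textbf{Gap in Step 2.} The problem is the error term $O(r\|\xi\|)$ in your cell-ratio bound, which you treat as lower order. Since $\xi\sim\normal{0,\tau\Id}$ has $\|\xi\|\approx\sqrt{d\tau}$, the factor $\exp(O(r\|\xi\|/\tau))$ is of order $\exp(O(r\sqrt{d/\tau}))$. With $r^2/\tau\approx\dim_\tau(\pbar)$ this is $\exp(O(\sqrt{d\,\dim_\tau(\pbar)}))$, not $e^{O(\dim_\tau(\pbar))}$.

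Your proposed absorption does not repair this. From $r\|\xi\|/\tau\le r^2/(\epsilon\tau)+\epsilon\|\xi\|^2/\tau$ and $\En e^{\epsilon\|\xi\|^2/\tau}=(1-2\epsilon)^{-d/2}$, you pay $e^{\Theta(\epsilon d)}$. That is the ambient dimension, which the proposition is designed to avoid, and no choice of $\epsilon$ yields $e^{O(\dim_\tau(\pbar))}$ when $\dim_\tau(\pbar)\ll d$.

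This is not loose bookkeeping either. Any bound on the cell ratio that is uniform in $\xi$ must pay this term. For $\xi$ pointing along a within-cell displacement $x_0-c_j$, the tilt $\tri{\xi,x_0-c_j}/\tau$ really is $r\|\xi\|/\tau$. The within-cell part of the noise therefore has to be averaged over $\xi$ \emph{before} bounding, so that it enters only through one-dimensional projections onto vectors of length $O(r)$.

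\textbf{How the paper handles it.} Write $Y_\tau=\xbar+V$ and cancel $\|V\|^2$. Then split $\tri{V,\xbar-Y_0}=\tri{V,\xbar-z_i}+\tri{V,z_i-Y_0}$ and keep the within-cell piece inside the cell integral:
$g_i(V)=\En_{Y_0\sim\pbar}\indic\{Y_0\in B_i\}\exp(-\|\xbar-Y_0\|^2/(4\tau)+\tri{V,z_i-Y_0}/\tau)$.
Fubini and the Gaussian MGF then give $\En_V g_i(V)\le\exp((2r^2-(\|z_i-\xbar\|-r)_+^2)/(4\tau))$, with no $d$.

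The normalizing constant is bounded below by $\pbar(B_j)e^{-2r^2/\tau}u(V)$, where $u(V)=\En[e^{-\tri{V,\xbar-Y_0}/\tau}\mid Y_0\in B_j]$. By Jensen, $\En_V u(V)^{-1}\le e^{2r^2/\tau}$.

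Only the $N$ center directions $\tri{V,\xbar-z_i}$ remain. They are controlled by Gaussian tails and a union bound against the quadratic decay. The resulting high-probability bound becomes an expectation bound via the power $w=1/10$ and $\sum_i\pbar(B_i)^{1-w}\le N^w$. This is the analogue of your $\sqrt{\cdot}$ trick and $\sum_{i,j}\sqrt{\pi_i\pi_j}\le N$. The small power is also what keeps the coefficient of $r^2/\tau$ at most one in the final $4Ne^{r^2/\tau}$, a point your Step 3 leaves vague.

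\textbf{A pitfall if you rebuild Step 2 this way.} Do not decouple the cell-$j$ weight from the normalizer by plain Cauchy--Schwarz. The $V$-average of the unnormalized posterior weight $\int_{C_j}e^{-(\|\xbar-x_0\|^2+2\tri{V,\xbar-x_0})/(2\tau)}\,\pbar(dx_0)$ is exactly $\pi_j$, because the Gaussian MGF cancels the decay. The center-direction exponential must stay together with the deterministic factor $e^{-c\|\xbar-x_0\|^2/\tau}$.

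Your Step 1 (replica plus Jensen) is sound and would combine with a Step 2 repaired in this way.
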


\begin{proof}
We write $Q(\cdot\mid x)$ be the conditional distribution of $Y_0\mid Y_\tau=x$, i.e.,
\begin{align*}
    Q(x_0\mid x)=\frac{\exp\prn*{-\frac{\nrm{\xbar+V-Y_0}^2}{2\tau}} }{\En_{Y_0\sim \pbar}\brk*{ \exp\prn*{-\frac{\nrm{\xbar+V-Y_0}^2}{2\tau}} }}
    =\frac{\exp\prn*{-\frac{\nrm{\xbar-Y_0}^2+2\tri{V,\xbar-Y_0}}{2\tau}} }{\En_{Y_0\sim \pbar}\brk*{ \exp\prn*{-\frac{\nrm{\xbar-Y_0}^2+2\tri{V,\xbar-Y_0}}{2\tau}} }}\,.
\end{align*}
Our goal is to upper bound the moment
\begin{align*}
    \MM_{c}(x)\deq \En_{Y_0\sim Q(\cdot\mid x)}\exp\prn*{\frac{\nrm{Y_0-m_\tau(x)}^2}{c\tau}},
\end{align*}
where $m_\tau(x)\deq \En_{Y_0\sim Q(\cdot\mid x)}[Y_0]$ is the conditional mean. By triangle inequality, we know that for any $\xbar$ and
\begin{align}\label{pfeq:intrinsic-mgf-decomp}
    \MM_{8}(x)
    \leq \En_{Y_0\sim Q(\cdot\mid x)} \exp\prn*{\frac{\nrm{Y_0-\xbar}^2+\nrm{m_\tau(x)-\xbar}^2}{4\tau}}
    \leq \prn*{ \En_{Y_0\sim Q(\cdot\mid x)}\exp\prn*{\frac{\nrm{Y_0-\xbar}^2}{4\tau}} }^2.
\end{align}

Fix a $r$-covering of $\cXbar \deq \scedit{\supp(\pbar)}$:
\begin{align*}
    \cXbar\subseteq \bigcup_{i=1}^N B_i\,,
\end{align*}
where $N=N(\pbar,r)$ and $B_1,\dotsc, B_N$ are balls of radius $r$ and centers $z_1,\dotsc,z_N$. 

Fix any $\xbar$ and an index $j=j(\xbar)$ such that $\xbar\in B_j$.

For any $i\in[N]$, we consider
\begin{align*}
    g_i(V)\deq \En_{Y_0\sim \pbar} \indic\crl{Y_0\in B_i} \exp\prn*{-\frac{\nrm{\xbar-Y_0}^2}{4\tau}+\frac{\tri{V,z_i-Y_0}}{\tau}}.
\end{align*}
Note that 
\begin{align*}
    \En[g_i(V)]
    =&~\En_{Y_0\sim \pbar} \indic\crl{Y_0\in B_i} \exp\prn*{-\frac{\nrm{\xbar-Y_0}^2}{4\tau}+\frac{\nrm{z_i-Y_0}^2}{2\tau}} \\
    \leq&~ \En_{Y_0\sim \pbar} \indic\crl{Y_0\in B_i} \exp\prn*{\frac{2 r^2-\nrm{\xbar-Y_0}^2}{4\tau}} 
    \leq \exp\prn*{ \frac{2 r^2-(\nrm{z_i-\xbar}-r)_+^2}{4\tau} }.
\end{align*}

In addition, we define (recall $j=j(\xbar)$ is an index such that $\xbar\in B_j$)
\begin{align*}
    u(V)\deq \En_{Y_0\sim \pbar} \brk*{ \exp\prn*{-\frac{\tri{V,\xbar-Y_0}}{\tau}} \Big| Y_0\in B_j}.
\end{align*}
Note that
\begin{align*}
    \En_{Y_0\sim \pbar}\brk*{ \exp\prn*{-\frac{\nrm{\xbar-Y_0}^2+2\tri{V,\xbar-Y_0}}{2\tau}} }
    \geq&~ \En_{Y_0\sim \pbar} \indic\crl{Y_0\in B_j} \exp\prn*{-\frac{\nrm{\xbar-Y_0}^2+2\tri{V,\xbar-Y_0}}{2\tau}} \\
    \geq&~ \pbar(B_j)e^{-2r^2/\tau} u(V),
\end{align*}
and we also have
\begin{align*}
    \En_V[u(V)^{-1}]
    \leq&~ \En_V \En_{Y_0\sim \pbar} \brk*{ \exp\prn*{\frac{\tri{V,\xbar-Y_0}}{\tau}} \Big| Y_0\in B_j} \\
    =&~ \En_{Y_0\sim \pbar} \brk*{ \exp\prn*{\frac{\nrm{\xbar-Y_0}^2}{2\tau}} \Big| Y_0\in B_j}
    \leq \exp\prn*{\frac{2 r^2}{\tau}}.
\end{align*}

By definition and \eqref{pfeq:intrinsic-mgf-decomp}, for any $V$, we can decompose
\begin{align*}
    \MM_{16}(\xbar+V)\leq\sqrt{\MM_{8}(\xbar+V)}
    \leq \frac{e^{2r^2/\tau}}{\scedit{\pbar}(B_j)\, u(V)}\sum_{i\in\cI} g_i(V)\exp\prn*{ \frac{\tri{V,\xbar-z_i}}{\tau} }.
\end{align*}
Therefore, by union bound, we know that $\PP_V(\cV_1\cap \cV_2)\geq 1-\delta$, where
\begin{align*}
    \cV_1 &\deq \crl[\Big]{ u(V)^{-1}\leq e^{2 r^2/\tau}\cdot \frac{3N}{\delta} } \cap \scedit{\bigcap_{i\in [N]}} \crl[\Big]{ g_i(V)\leq \exp\prn[\Big]{ \frac{2 r^2-(\nrm{z_i-\xbar}-r)_+^2}{4\tau} }\cdot \frac{3N}{\delta} }\,, \\
    \cV_2 &\deq \scedit{\bigcap_{i\in [N]}}\crl[\Big]{ \tri{V, \xbar-z_i}\leq \nrm{\xbar-z_i}\sqrt{2\tau\log(3N/\delta)}}\,.
\end{align*}
Note that under $V\in\cV_1\cap \cV_2$, we can upper bound
\begin{align*}
    \log \MM_{16}(\xbar+V)-\log \frac{1}{\scedit{\pbar}(B_j)}
    \leq&~ 2\log(3N/\delta)+\frac{5 r^2}{\tau}+\max_i -\frac{(\nrm{z_i-\xbar}-r)_+^2}{4\tau}+\nrm{\xbar-z_i}\sqrt{2 \log(3N/\delta)/\tau} \\
    \leq&~ 2\log(3N/\delta)+\frac{5 r^2}{\tau}+r\sqrt{2 \log(3N/\delta)/\tau} +2 \log(3N/\delta) \\
    \leq&~ 5 \log(3N/\delta)+\frac{6 r^2}{\tau}.
\end{align*}
Therefore, we denote $w=\frac{1}{10}$, and we know that
\begin{align*}
    \PP_V\prn*{ \MM_{16}(\xbar+V)^{w}
    \geq \frac{1}{\scedit{\pbar}(B_j)^w}\sqrt{3N/\delta} e^{r^2/\tau} }\leq \delta, \qquad\forall \delta\in(0,1).
\end{align*}
Integration gives
\begin{align*}
    \En_V \MM_{16}(\xbar+V)^{w} \leq \frac{2}{\scedit{\pbar}(B_{j})^w} \sqrt{3N}e^{r^2/\tau}.
\end{align*}
Finally, we can take expectation over $\xbar\sim \pbar$, and using the fact that $\En[\scedit{\pbar}(B_{j(\xbar)})^w]\leq \sum_i \scedit{\pbar}(B_i)^{1-w}\leq \sqrt{N}$ gives
\begin{align}
    \En_{\xbar\sim \pbar, V\sim\normal{0,\tau\Id}} \MM_{16}(\xbar+V)^{w}\leq 4Ne^{r^2/\tau}.
\end{align}
This gives the desired upper bound by taking infimum over $r>0$ (and combining \cref{lem:MGF-trivial} when $\dim_\tau(\pbar)=d$).
\end{proof}

\begin{proof}[\pfref{cor:MGF-intrinsic}]
The first inequality follows from \cref{prop:logp-adapt}. In addition, with the first inequality, we know $m_s(Y_s)=\En[Y_0\mid Y_s]=\En[Y_0\mid Y_s,Y_t]$, and hence
\begin{align*}
    \En\exp\prn[\Big]{ \frac{1}{C\tau}\,\nrm{m_s(Y_s)-m_\tau(Y_\tau)}^2 }
    \leq \En\exp\prn[\Big]{ \frac{1}{C\tau}\,\En[\nrm{Y_\tau-Y_0}^2\mid Y_s,Y_\tau] }
    &\leq \En\exp\prn[\Big]{\frac{1}{C\tau} \,\nrm{Y_\tau-Y_0}^2}\\
    &\leq e^{\dim_\tau(\pbar)}\,.
\end{align*}
This gives the third inequality. In the following, we prove the second inequality. 
By our proof of \cref{prop:logp-adapt}, we can show the following fact: There is a constant $C_0>0$ such that 
\begin{align*}
    \En_{\xbar\sim \pbar, V\sim \normal{0,\tau\Id}} \En_{Y_0\sim Q(\cdot\mid \xbar+V)}\brk*{ \exp\prn*{ \frac{\abs{\tri{Y_0-\xbar, V}}}{C_0\tau} } }\leq e^{\dim_\tau(\pbar)}
\end{align*}
Note that for random variable $A$, it holds that $\sqrt{\En A^2}\leq 2+\log\En \exp(\abs{A})$, i.e., $\exp(\sqrt{\En A^2})\leq e^2\En \exp(\abs{A})$. Further, $\nrm{V}_{\cov_\tau(\xbar+V)}=\sqrt{\En_{Y_0\sim Q(\cdot\mid \xbar+V)}\abs{\tri{Y_0-\xbar, V}}^2}$. Therefore, it holds that
\begin{align*}
    \En_{\xbar\sim \pbar, V\sim \normal{0,\tau\Id}}\brk*{ \exp\prn*{ \frac{1}{C_0\tau}\nrm{V}_{\cov_\tau(\xbar+V)} } }\leq e^{\dim_\tau(\pbar)+2}.
\end{align*}
Consider $x=\xbar+V$. Under $\xbar\sim \pbar, V\sim \normal{0,\tau\Id}$, it holds that $\En[V\mid x]=x-\En[\xbar\mid x]=x-m_\tau(x)$. By convexity, we can then conclude that
\begin{align*}
    \En_{x\sim \pbar*\normal{0,\tau\Id}}\brk*{ \exp\prn*{ \frac{1}{C_0\tau}\nrm{x-m_\tau(x)}_{\cov(\xbar+V)} } }
    \leq\En_{\xbar\sim \pbar, V\sim \normal{0,\tau\Id}}\brk*{ \exp\prn*{ \frac{1}{C_0\tau}\nrm{V}_{\cov(\xbar+V)} } }\leq e^{\dim_\tau(\pbar)+2}.
\end{align*}
This is the desired upper bound.
\end{proof}

\begin{lemma}\label{lem:MGF-trivial}
The following holds for any $t>0$.

(1) It holds that
\begin{align*}
\En\exp\prn*{ \frac{1}{10t}\nrm{Y_0-m_t(Y_t)}^2 }
    \leq e^d.
\end{align*}

(2) It holds that
\begin{align*}
    \En\exp\prn*{ \frac{1}{3t}\nrm{Y_t-m_t(Y_t)}^2 }
    \leq e^d.
\end{align*}

(3) It holds that
\begin{align*}
    \En\exp\prn*{ \frac{1}{3t}\tr(\cov_t(Y_t)) }
    \leq e^d.
\end{align*}

(4) For any $0\leq s\leq t$, it holds that
\begin{align*}
    \En\exp\prn*{ \frac{1}{3t}\nrm{m_s(Y_s)-m_t(Y_t)}^2 }
    \leq e^d.
\end{align*}
\end{lemma}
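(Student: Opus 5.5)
The plan is to reduce all four statements to Gaussian moment generating function bounds, using the joint structure $Y_t = Y_0 + \sqrt{t}\,G$ with $G\sim\normal{0,\Id}$ independent of $Y_0$, together with Jensen's inequality for conditional expectations. The basic input is the second statement of \cref{lem:Gaussian-vMGF}: for $\lambda\le\frac{1}{2t}$,
\begin{align*}
    \En\exp\prn*{\tfrac{\lambda}{2}\nrm{Y_t-Y_0}^2}\le e^{d\lambda t}.
\end{align*}
Equivalently, one can use the exact $\chi^2$ formula $\En e^{a\nrm{G}^2}=(1-2a)^{-d/2}$.

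I will prove (2) first. Since $m_t(Y_t)=\En[Y_0\mid Y_t]$, we have $Y_t-m_t(Y_t)=\En[Y_t-Y_0\mid Y_t]$. Convexity of $v\mapsto e^{c\nrm{v}^2}$ and conditional Jensen then give
\begin{align*}
    \En e^{\nrm{Y_t-m_t(Y_t)}^2/(3t)}\le \En e^{\nrm{G}^2/3}=3^{d/2}\le e^{d}.
\end{align*}

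For (1), write $Y_0-m_t(Y_t)=(Y_0-Y_t)+(Y_t-m_t(Y_t))$ and use $\nrm{a+b}^2\le 2\nrm{a}^2+2\nrm{b}^2$ followed by Cauchy--Schwarz:
\begin{align*}
    \En e^{\nrm{Y_0-m_t}^2/(10t)}\le \prn*{\En e^{4\nrm{Y_0-Y_t}^2/(10t)}}^{1/2}\prn*{\En e^{4\nrm{Y_t-m_t}^2/(10t)}}^{1/2}.
\end{align*}
Both factors are controlled by $\En e^{0.4\nrm{G}^2}=5^{d/2}$, using Jensen again for the second factor. This gives a bound of $5^{d/2}\le e^{d}$, since $\tfrac12\log 5<1$.

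For (3), use $\tr(\cov_t(Y_t))=\En[\nrm{Y_0-m_t(Y_t)}^2\mid Y_t]\le \En[\nrm{Y_0-Y_t}^2\mid Y_t]$, because the conditional mean minimizes the conditional mean squared error. Jensen then gives $\En e^{\tr(\cov_t)/(3t)}\le \En e^{\nrm{G}^2/3}\le e^{d}$.

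For (4), use the tower property $m_s(Y_s)=\En[Y_0\mid Y_s]=\En[Y_0\mid Y_s,Y_t]$, which holds because $Y_0\to Y_s\to Y_t$ is Markov. Hence
\begin{align*}
    m_s(Y_s)-m_t(Y_t)=\En\brk[\big]{Y_0-m_t(Y_t)\mid Y_s,Y_t},
\end{align*}
and Jensen reduces (4) to bounding $\En e^{\nrm{Y_0-m_t(Y_t)}^2/(3t)}$. This constant $1/3$ is larger than the $1/10$ in (1), so (1) alone does not suffice.

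This constant bookkeeping is the main obstacle. I will instead use $\En[Y_0\mid Y_s,Y_t]-m_t(Y_t)=\En[Y_0-Y_t\mid Y_s,Y_t]+(Y_t-m_t(Y_t))$ and handle the two pieces by Hölder with exponents tuned so that each $\chi^2$ MGF stays below $e^{d}$. If the constants still do not close, I would redo the Gaussian computation exactly with $(1-2a)^{-d/2}$, which yields slightly better numerics. Failing that, I would accept a marginally smaller constant, since only the order of the constant matters in later uses.
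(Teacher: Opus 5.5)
Your arguments for (1)--(3) are the paper's own: conditional Jensen against $\En e^{\lambda\nrm{Y_t-Y_0}^2}=(1-2\lambda t)^{-d/2}$, and for (1) the split through $Y_t$ with Cauchy--Schwarz at $\lambda=\frac{2}{5t}$. The gap is in (4). You do not obtain the constant $\frac{1}{3t}$, and no tuning of H\"older exponents or sharper $\chi^2$ bookkeeping can, because (4) is false with that constant. At $s=0$ one has $m_0(Y_0)=Y_0$, so (4) asserts $\En\exp\bigl(\frac{1}{3t}\nrm{Y_0-m_t(Y_t)}^2\bigr)\le e^d$. Take $d=1$, $t=1$ and $\pbar=\frac12(\delta_a+\delta_{-a})$, so that $m_1(y)=a\tanh(ay)$. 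The event $\{Y_0=a,\ Y_1\le -1\}$ has probability $e^{-(a+1)^2/2+O(\log a)}$, and on it $Y_0-m_1(Y_1)\ge a(1+\tanh a)$. Hence the expectation is at least $\exp\bigl((\frac43-\frac12+o(1))a^2\bigr)\to\infty$ as $a\to\infty$. With $\frac{c}{t}$ in place of $\frac{1}{3t}$ the exponent becomes $(4c-\frac12+o(1))a^2$, so no bound depending only on $d$ exists once $c>\frac18$. That is exactly where your splitting $(Y_0-Y_t)+(Y_t-m_t(Y_t))$ breaks down: its best choice (equal weights, $p=q=2$) yields $(1-8c)^{-d/2}$.

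The paper reaches $\frac{1}{3t}$ only through the step $\En\exp(\lambda\nrm{m_s(Y_s)-m_t(Y_t)}^2)\le\En\exp(\lambda\En[\nrm{Y_t-Y_0}^2\mid Y_s,Y_t])$. This rests on the pointwise bound $\nrm{m_s(Y_s)-m_t(Y_t)}^2\le\En[\nrm{Y_t-Y_0}^2\mid Y_s,Y_t]$, which is false. Conditional Jensen only gives $\En[\nrm{Y_0-m_t(Y_t)}^2\mid Y_s,Y_t]$. You cannot replace $m_t(Y_t)$ by $Y_t$ there, because $m_t(Y_t)$ minimizes squared error given $Y_t$, not given $(Y_s,Y_t)$. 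In the example with $s=0$ and $Y_0=Y_1=a$, the left side is $a^2(1-\tanh a^2)^2>0$ while the right side is $0$. So your fallback is the correct statement: your Jensen reduction combined with (1) gives $\En\exp\bigl(\frac{1}{10t}\nrm{m_s(Y_s)-m_t(Y_t)}^2\bigr)\le 5^{d/2}\le e^d$ for all $0\le s\le t$. This is enough for the later uses. The Novikov check in the proof of \cref{prop:DDPM-coverage} then needs a larger ratio, e.g.\ $\tau\ge 12\eta$ instead of $\tau\ge 4\eta$, which is harmless since $\tau/\eta\gg 1$ throughout. The same substitution in the proof of \cref{cor:MGF-intrinsic}(3) is repaired in the same way, using part (1) of that corollary.
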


\begin{proof}
By definition, we know $Y_t-m_t(Y_t)=\En[Y_t-Y_0\mid Y_t]$, and hence for any $\lambda<\frac{1}{2t}$,
\begin{align*}
    \En\exp\prn*{ \lambda\nrm{Y_t-m_t(Y_t)}^2 }
    \leq&~ \En\exp\prn*{ \lambda\En[\nrm{Y_t-Y_0}^2\mid Y_t] }
    \leq \En\exp\prn*{\lambda\nrm{Y_t-Y_0}^2}=(1-2\lambda t)^{-d/2},
\end{align*}
where we use $Y_t-Y_0\sim \normal{0,t\Id}$. Choosing $\lambda=\frac{1}{3t}$ completes the proof of (2) and (3), because $\tr(\cov_t(Y_t))=\En[\nrm{Y_0-m_t(Y_t)}^2\mid Y_t]\leq \En[\nrm{Y_t-Y_0}^2\mid Y_t]$. In addition, we know $m_s(Y_s)=\En[Y_0\mid Y_s]=\En[Y_0\mid Y_s,Y_t]$, and hence
\begin{align}\label{eq:MGF-finiteness}
    \En\exp\prn*{ \lambda\nrm{m_s(Y_s)-m_t(Y_t)}^2 }
    \leq&~ \En\exp\prn*{ \lambda\En[\nrm{Y_t-Y_0}^2\mid Y_s,Y_t] }
    \leq \En\exp\prn*{\lambda\nrm{Y_t-Y_0}^2}=(1-2\lambda t)^{-d/2}.
\end{align}
This gives (4).

Further, using $\nrm{Y_0-m_t(Y_t)}\leq \nrm{Y_0-Y_t}+\nrm{Y_t-m_t(Y_t)}$, we know that for any $\lambda<\frac{1}{2t}$,
\begin{align*}
    \En\exp\prn*{ \frac{\lambda}{4}\nrm{Y_0-m_t(Y_t)}^2 }
    \leq&~ \En\exp\prn*{ \frac{\lambda}{2}\nrm{Y_0-Y_t}^2+\frac{\lambda}{2}\nrm{Y_t-m_t(Y_t)}^2 } \\
    \leq&~ \sqrt{\En\exp\prn*{ \lambda\nrm{Y_t-m_t(Y_t)}^2 }\En\exp\prn*{ \lambda\nrm{Y_t-m_t(Y_t)}^2 }}
    \leq (1-2\lambda t)^{-d/2}.
\end{align*}
This gives (1) by choosing $\lambda=\frac{2}{5t}$. 
\end{proof}

\subsection{\pfref{prop:Lip-op-to-Frob}}

By \cref{cor:MGF-intrinsic}, it holds that $\PP_{Y_\tau\sim q_\tau}(\tr(\cov_t(Y_t))\geq C(\dstar+\log(1/\delta)))\leq \frac{\delta}{2\dstar^5}$ for any $\delta\in(0,1)$. Then, using $\nrmF{\cov_t(Y_t)}^2\leq \nrmop{\cov_t(Y_t)}\cdot \tr(\cov_t(Y_t))$, we can upper bound
\begin{align*}
    \PP_{Y_\tau\sim q_\tau}(t^{-2}\nrmF{\cov_t(Y_t)}^2\geq C(\dstar+\log(1/\delta))\Lipop[\delta/2])
    \leq&~ \PP_{Y_\tau\sim q_\tau}(t^{-1}\nrmop{\cov_t(Y_t)}\geq \Lipop[\delta/2]) \\
    &~ +\PP_{Y_\tau\sim q_\tau}(\tr(\cov_t(Y_t))\geq C(\dstar+\log(1/\delta)))\\
    \leq&~ \frac{\delta}{\dstar^5}.
\end{align*}
\qed

\subsection{\pfref{prop:DDPM-Lip}}

Fix any $k\in[K]$.
In \cref{cor:coverage-backward}, we choose $\eta=\eta_k$, $\tau=\sigma_{k+1}^2$, and hence $\tau-\eta=\sigma_k^2$, $q_{\tau-\eta}=p_k$ and $\wt{q}_{\tau,\eta}=\wt{p}_k$. Then, suppose that \cref{asmp:non-unif-Lip} holds with $\Lipop\leq \Lop\cdot \polylog(M/\delta)$. As long as 
\begin{align*}
    \frac{\sigma_k^2}{\eta_k}\gg \min\crl{ \Lop^{2/3}d^{1/3}, \Lop^{1/3}\dstar^{2/3} } \log^{1/3}(M\dstar/\delta)+\log^2(M\dstar/\delta),
\end{align*}
by \cref{cor:coverage-backward}, it holds that
\begin{align*}
    \Dcov[e]{\wt{p}_k}{p_k}\leq \frac{\delta}{100\dstar^5}.
\end{align*}
Then, \cref{asmp:Lip-Frob} implies that $\bbP_{X_k\sim p_k}\prn[\Big]{ \nrmF{\nabla m_{\sigma_k^2}(X_k)}> \LipF[\delta/3] }\leq \frac{\delta}{3\dstar^5}$, and hence
\begin{align}
    \bbP_{X_k'\sim \wt{p}_k}\prn[\Big]{ \nrmF{\nabla m_{\sigma_k^2}(X_k')}> \LipF[\delta/3] }\leq&~
    e \bbP_{X_k\sim p_k}\prn[\Big]{ \nrmF{\nabla m_{\sigma_k^2}(X_k)}> \LipF[\delta/3] } + \Dcov[e]{\wt{p}_k}{p_k}\leq 
     \frac{\delta}{\dstar^5}.
\end{align}
\qed

\subsection{Why Lipschitz condition under the Frobenius norm?}\label{ssec:why-Frob}

\sccomment{I suggest moving this subsection to the appendix. It adds a lot of new notation to parse and I don't think it adds enough to the main story}

\newcommand{\step}{\eta'}
\newcommand{\Comp}{\mathsf{Lip}}

We now argue that the Frobenius-norm Lipschitz condition (\cref{asmp:Lip-Frob}) is not merely an artifact of our analysis, but rather an instance-specific complexity measure for any sampling scheme based on Gaussian approximation of \eqref{eq:backward_kernel}. The argument proceeds through an exact characterization of the KL error of one-step Gaussian approximation.

To make this precise, consider the one-step KL divergence from the true backward transition $\rho_k(\cdot\mid X_{k+1})$ to any Gaussian approximation:
\begin{align*}
    U_k(\eta, v)\deq \En_{X_{k+1}\sim p_{k+1}}\Dkl{\rho_k(\cdot\mid X_{k+1})}{\normal{v(X_{k+1}),\eta\Id}}\,,
\end{align*}
where $\eta>0$ is the step size and $v:\RR^d\to\RR^d$ is an arbitrary mean function. Intuitively, $U_k(\eta, v)$ measures the best possible performance of any one-step scheme of the form $X_k\sim \normal{v(X_{k+1}),\eta\Id}$. More specifically, for rejection sampling with proposal $\normal{v(X_{k+1}),\eta\Id}$ (e.g., via FORS) to succeed, it is at least necessary that $U_k(\eta, v)=O(1)$.

The following theorem provides an \emph{exact} characterization of $\min_v U_k(\eta, v)$, revealing that the minimum one-step KL decomposes into a score estimation error and an irreducible discretization term governed by $\nabla m_\tau(Y_\tau)$. To state the result, we define
\begin{align*}
    \Comp_k(\lambda)\deq \int_{\sigma_k^2}^{\sigma_{k+1}^2} \frac{(\lambda+\tau)(\tau-\sigma_k^2)}{\tau^2(\lambda+\sigma_k^2)} \En\nrmF{\nabla m_\tau(Y_\tau)-\lambda/(\lambda+\tau)\Id}^2\, d\tau\,, \qquad \forall \lambda\geq 0\,,
\end{align*}
and $\Comp_k(\infty)\deq \lim_{\lambda\to\infty} \Comp_k(\lambda)$.

\begin{theorem}\label{thm:DDPM-KL-exact}
For $k\in[K]$, it holds that
\begin{align}\label{eq:KL-decomp}
    U_k(\eta, v)=\frac{1}{2\eta}\En\nrm{v(X_{k+1})-X_{k+1}-\eta\scfs_{k+1}(X_{k+1})}^2+\wb{U}_k(\eta),
\end{align}
where $\wb{U}_k(\eta)\geq 0$ satisfies:

(1) When $\eta<\etabar_k$, it holds that $\wb{U}_k>\Comp_k(0)$. When $\eta>\eta_k$, it holds that $\wb{U}_k>\Comp_k(\infty)$. 

(2) When $\eta\in[\etabar_k,\eta_k]$, then there is $\lambda\in[0,\infty]$ such that $\frac{1}{\eta}=\frac{1}{\eta_k}+\frac{1}{\lambda+\sigma_k^2}$ and $\wb{U}_k(\eta)=\Comp_k(\lambda)$.
\end{theorem}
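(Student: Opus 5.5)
We work in the variance-exploding normalization ($\alpha_k\equiv 1$). Write $Y_\tau$ for the heat-flow process with $q_\tau=\pdata*\normal{0,\tau\Id}$, so that $X_k=Y_{\sigma_k^2}$ and $X_{k+1}=Y_{\sigma_{k+1}^2}$, with $\eta_k=\sigma_{k+1}^2-\sigma_k^2$. We read the $\eta\scfs_{k+1}$ in \eqref{eq:KL-decomp} as the Tweedie step $\eta_k\scfs_{k+1}$. The plan is to compute $U_k(\eta,v)$ exactly through the Gaussian cross-entropy, and then convert every remaining quantity into integrals over $\tau\in[\sigma_k^2,\sigma_{k+1}^2]$ of posterior-covariance functionals.

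\textbf{Step 1: split off the mean term.} For fixed $y=X_{k+1}$, we have
\[
\Dkl{\rho_k(\cdot\mid y)}{\normal{v(y),\eta\Id}}=\frac{1}{2\eta}\En\brk{\nrm{X_k-v(y)}^2\mid y}+\frac d2\log(2\pi\eta)-H(\rho_k(\cdot\mid y)).
\]
Tweedie's identity gives $\En[X_k\mid X_{k+1}=y]=y+\eta_k\scfs_{k+1}(y)$, so the bias–variance split yields the first term of \eqref{eq:KL-decomp}. The remainder is
\[
\wb U_k(\eta)=\frac{S}{2\eta}+\frac d2\log(2\pi\eta)-H(X_k\mid X_{k+1}),\qquad S\deq\En\tr\Cov(X_k\mid X_{k+1}).
\]
This is nonnegative because it is the KL divergence at the optimal mean.

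\textbf{Step 2: express $S$ and the conditional entropy through $E(\tau)$.} Set $E(\tau)\deq\En\tr\nabla m_\tau(Y_\tau)=\tau^{-1}\En\tr\cov_\tau(Y_\tau)$.
\begin{itemize}
\item For $S$, condition additionally on $Y_0$. By the law of total variance,
\[
S=d\,\etabar_k+\prn*{\eta_k/\sigma_{k+1}^2}^2\,\En\tr\cov_{\sigma_{k+1}^2}(Y_{\sigma_{k+1}^2}).
\]
\item For the entropy, write $H(X_k\mid X_{k+1})=H(q_{\sigma_k^2})+\frac d2\log(2\pi e\eta_k)-H(q_{\sigma_{k+1}^2})$.
\item De Bruijn's identity gives $\frac{d}{d\tau}H(q_\tau)=\frac1{2\tau}\prn{d-E(\tau)}$.
\item The standard I-MMSE derivative gives $\frac{d}{d\tau}\En\tr\cov_\tau=\tau^{-2}\En\nrmF{\cov_\tau}^2=\En\nrmF{\nabla m_\tau}^2$.
\item We rewrite the boundary value $\En\tr\cov_{\sigma_{k+1}^2}$ as $\sigma_k^2E(\sigma_k^2)+\int_{\sigma_k^2}^{\sigma_{k+1}^2}\En\nrmF{\nabla m_\tau}^2\,d\tau$, and the remaining terms in $E(\tau)$ as integrals by parts.
\end{itemize}
After this, $\wb U_k(\eta)$ is a combination of $\int w_1(\tau)\En\nrmF{\nabla m_\tau}^2\,d\tau$, $\int w_2(\tau)E(\tau)\,d\tau$, and explicit functions of $\eta,\eta_k,\sigma_k^2$.

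\textbf{Step 3: match to $\Comp_k(\lambda)$.} Parametrize $\eta\in[\etabar_k,\eta_k]$ by $\frac1\eta=\frac1{\eta_k}+\frac1{\lambda+\sigma_k^2}$, so that $\lambda=0$ gives $\etabar_k$ and $\lambda=\infty$ gives $\eta_k$. Expanding
\[
\nrmF{\nabla m_\tau-\tfrac{\lambda}{\lambda+\tau}\Id}^2=\nrmF{\nabla m_\tau}^2-\tfrac{2\lambda}{\lambda+\tau}\tr\nabla m_\tau+\tfrac{d\lambda^2}{(\lambda+\tau)^2},
\]
we check that the three groups of terms coincide with weight $\frac{(\lambda+\tau)(\tau-\sigma_k^2)}{\tau^2(\lambda+\sigma_k^2)}$.

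The cleanest way to check this is to regard both sides as functions of the upper endpoint $\sigma_{k+1}^2$ with $\lambda$ held fixed, so that $\eta$ moves with $\eta_k$. Both sides vanish when $\sigma_{k+1}^2=\sigma_k^2$, so it suffices to differentiate in $\sigma_{k+1}^2$ and compare integrands, using the two derivative identities from Step 2. I expect this bookkeeping, in particular getting the weight $(\tau-\sigma_k^2)/\tau^2$ to appear exactly, to be the main obstacle. If the direct matching fails, I would first try to reorganize $S$ via the martingale $m_\tau(Y_\tau)$, as in the proof of \cref{prop:DDPM-coverage}, where $\En\nrm{m_{\sigma_k^2}(Y_{\sigma_k^2})-m_{\sigma_{k+1}^2}(Y_{\sigma_{k+1}^2})}^2=\int\En\nrmF{\nabla m_\tau}^2\,d\tau$ produces such weights naturally.

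\textbf{Step 4: part (1).} $\wb U_k$ is strictly convex in $\log\eta$, and its unconstrained minimizer is $\eta^*=S/d$. Both $\eta^*$ and the family in Step 3 lie within the interval $[\etabar_k,\eta_k]$, because $S\ge d\etabar_k$ and, since $\tr\cov\le d\tau$, $S\le d\eta_k$. For $\eta<\etabar_k$, use monotonicity of $\wb U_k$ on that side (it is decreasing there since $\eta<\eta^*$), giving $\wb U_k(\eta)>\wb U_k(\etabar_k)=\Comp_k(0)$. Symmetrically, for $\eta>\eta_k$ we get $\wb U_k(\eta)>\wb U_k(\eta_k)=\Comp_k(\infty)$.
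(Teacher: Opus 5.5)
Your proposal is correct and uses the same ingredients as the paper's proof:
\begin{itemize}
\item the cross-entropy/bias--variance split with Tweedie mean $X_{k+1}+\eta_k\scfs_{k+1}(X_{k+1})$ (the paper's proof also reads the coefficient as $\eta_k$);
\item the Brownian-bridge formula $S=d\,\etabar_k+(\eta_k/\sigma_{k+1}^2)^2M_{\sigma_{k+1}^2}$, where $M_t\deq\En\tr\cov_t(Y_t)$;
\item de Bruijn/I-MMSE for the entropy difference;
\item the identity $\partial_tM_t=t^{-2}\En\nrmF{\cov_t(Y_t)}^2$.
\end{itemize}

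You differ from the paper in the bookkeeping.

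\emph{The matching step.} The paper integrates $\partial_t(c_tM_t)$ with $c_t=(\lambda+t)^2/t^2$ and then exchanges a double integral. Your endpoint differentiation is shorter, and it does close. Set $T=\sigma_{k+1}^2$, $\beta=\sigma_k^2$, and $w(T)=\frac{(\lambda+T)(T-\beta)}{T^2(\lambda+\beta)}$, which is the weight of $\mathsf{Lip}_k(\lambda)$ evaluated at $s=T$. That weight does not involve the upper limit, so $\frac{d}{dT}\mathsf{Lip}_k(\lambda)$ is just the integrand at $s=T$. One checks $\eta_k^2/(\eta T^2)=w(T)$, hence
\[
2\bar U_k=w(T)M_T-\int_\beta^T\frac{M_t}{t^2}\,dt+d\,(x-1-\log x),\qquad x=\frac{\etabar_k}{\eta}=\frac{\beta(\lambda+T)}{T(\lambda+\beta)}.
\]
This vanishes at $T=\beta$. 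Its $T$-derivative is $w(T)\bigl[\En\nrmF{\nabla m_T}^2-\frac{2\lambda}{\lambda+T}\frac{M_T}{T}+\frac{d\lambda^2}{(\lambda+T)^2}\bigr]$, which is exactly that integrand.

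\emph{Part (1).} The paper writes $2\bar U_k(\eta)$ as $I_0$ (resp.\ $I_\infty$) plus explicit residual terms and checks their sign, using $M_\tau\le d\tau$ when $\eta>\eta_k$. Your argument uses monotonicity of $\eta\mapsto S/(2\eta)+\frac d2\log\eta$ around its minimizer $S/d\in[\etabar_k,\eta_k]$. This encodes the same two facts, $S\ge d\etabar_k$ and $S\le d\eta_k$, more transparently. Two small points:
\begin{itemize}
\item $\tr\cov_\tau\le d\tau$ holds only in expectation (compare the MMSE with the estimator $Y_\tau$), and that is all you use.
\item The case $\lambda=\infty$ ($\eta=\eta_k$) follows by continuity.
\end{itemize}

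\emph{Correction to what you expect in Step 3.} The computation yields $2\bar U_k(\eta)=\mathsf{Lip}_k(\lambda)$, not $\bar U_k(\eta)=\mathsf{Lip}_k(\lambda)$. This is also what the paper's proof actually establishes ($2\bar U(\eta)=I_\lambda$ with $I_\lambda=\mathsf{Lip}_k(\lambda)$), and Corollary~\ref{cor:DDPM-p0} carries the factor $\frac12$. The displayed statement drops it.

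A quick check is $\pdata=\delta_0$. Then $\nabla m_\tau\equiv 0$ and $\rho_k(\cdot\mid y)=\normal{y+\eta_k\scfs_{k+1}(y),\etabar_k\Id}$, so $\bar U_k(\eta)=\frac d2(x-1-\log x)$. Direct integration instead gives $\mathsf{Lip}_k(\lambda)=d(x-1-\log x)$.

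So state (2) as $\bar U_k(\eta)=\frac12\mathsf{Lip}_k(\lambda)$, and (1) with $\frac12\mathsf{Lip}_k(0)$ and $\frac12\mathsf{Lip}_k(\infty)$. Your argument otherwise goes through unchanged.
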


We interpret this result and its implications below.

\paragraph{Optimal mean function}
The first term in the decomposition~\eqref{eq:KL-decomp}, $\frac{1}{2\eta}\En\nrm{v(X_{k+1})-X_{k+1}-\eta\scfs_{k+1}(X_{k+1})}^2$, vanishes iff $v(X)=X+\eta \scfs_{k+1}(X)$. Given an estimated score function $\scf_{k+1}\approx \scfs_{k+1}$, the optimal choice of mean function is therefore $v(X)=X+\eta \scf_{k+1}(X)$, which matches the DDPM proposal we consider.

\paragraph{Discretization error}
The second term $\wb{U}_k(\eta)\geq 0$ cannot be eliminated by any choice of mean function $v$; it provides a \emph{lower bound} on the one-step KL divergence that is intrinsic to the data distribution.
By \cref{thm:DDPM-KL-exact} (2), it holds that
\[
\min_{\eta,v} U_k(\eta, v) = \min_{\lambda\geq 0} \Comp_k(\lambda)\,,
\]
where the minimum over $\eta$ selects the optimal step size for a given $\lambda$. This identity shows that $\Comp_k(\lambda)$ \emph{exactly} characterizes the best possible one-step performance of any Gaussian transition scheme. Further, to sample from the backward kernel~\eqref{eq:backward_kernel}, we may expect that rejection sampling with proposal $\normal{v(X_{k+1}),\eta\Id}$ succeeds only when $ U_k(\eta, v)=O(1)$, i.e., $\Comp_k(\lambda)=O(1)$ for the corresponding $\lambda$. 

In principle, we should choose $\lambda_k^\star$ that minimizes $\Comp_k(\cdot)$ and select $\eta_k^\star$ accordingly (by $\frac{1}{\eta_k^\star}=\frac{1}{\eta_k}+\frac{1}{\lambda^\star_k+\sigma_k^2}$). However, this requires prior knowledge of the data distribution, and in the absence of such knowledge, the natural choice is $\lambda=0$, for which $\frac{1}{\eta}=\frac{1}{\eta_k}+\frac{1}{\sigma_k^2}$, i.e., $\eta=\etabar_k$. This choice is justified by \cref{cor:MGF-intrinsic}, which implies $\Comp_k(0)\leq O(\dstar\eta_k/\sigma_k^2)$. Further, we can relate $\Comp_k(0)$ to the non-uniform Lipschitz condition (\cref{asmp:Lip-Frob}) directly.

\subsection{\pfref{thm:DDPM-KL-exact}}
We work under the additional notation introduced in \cref{appdx:structure}. We write $\tau=\sigma_{k+1}^2$, $\beta=\sigma_k^2$, so that $\tau-\beta=\eta_k$. We also denote $\wt{\eta}=\eta_k$.

Recall that
\begin{align*}
    q_{\tau,\tau-\beta}(y\mid Y_\tau)=\frac{q_{\beta}(y)}{q_\tau(Y_\tau)}\cdot \frac{1}{(2\pi\wt{\eta})^{d/2}}\exp\prn*{ -\frac{1}{2\wt{\eta}}\nrm{y-Y_\tau}^2 }
\end{align*}
is the conditional distribution of $Y_{\beta}\mid Y_\tau$. 
Then, we can express
\begin{align*}
    U(v,\eta)=&~ \Dkl{q_{\tau,\tau-\beta}(y\mid Y_\tau)}{\normal{v(Y_\tau),\eta\Id}} \\
    =&~\En_{Y\sim q_{\tau,\tau-\beta}(\cdot\mid Y_\tau)}\brk*{ \log q_{\beta}(Y)-\log q_\tau(Y_\tau)-\frac{1}{2\wt{\eta}}\nrm{Y-Y_\tau}^2+\frac{1}{2\eta}\nrm{Y-v(Y_\tau)}^2 }+\frac{d}{2}\log(\eta/\wt{\eta}).
\end{align*}
Taking expectation over $Y_\tau\sim q_\tau$, we know
\begin{align*}
    &~\En_{Y_\tau\sim q_\tau}\Dkl{q_{\tau,\tau-\beta}(y\mid Y_\tau)}{\normal{v(Y_\tau),\eta\Id}} \\
    =&~\En_{(Y_{\beta},Y_\tau)}\brk*{ \log q_{\beta}(Y_{\beta})-\log q_\tau(Y_\tau)-\frac{1}{2\wt{\eta}}\nrm{Y_{\beta}-Y_\tau}^2+\frac{1}{2\eta}\nrm{Y_{\beta}-v(Y_\tau)}^2 }+\frac{d}{2}\log(\eta/\wt{\eta}) \\
    =&~ H(q_\tau)-H(q_{\beta})-\frac{d}{2}+\frac{1}{2\eta}\En\brk*{ \nrm{Y_{\beta}-\En[Y_{\beta}\mid Y_\tau]}^2+\nrm{\En[Y_{\beta}\mid Y_\tau]-v(Y_\tau)}^2 }+\frac{d}{2}\log(\eta/\wt{\eta}),
\end{align*}
where $H(q)=-\En_{Y\sim q}[\log q(Y)]$ is the differential entropy of $q$. 

Note that $\En[Y_{\beta}\mid Y_\tau]=Y_\tau+\frac{\wt{\eta}}{\tau}(\En[Y_0\mid Y_\tau]-Y_\tau)=Y_\tau+\wt{\eta}\nabla \log q_\tau(Y_\tau)$. Hence,
\begin{align*}
    \En\brk*{ \nrm{Y_{\beta}-\En[Y_{\beta}\mid Y_\tau]}^2 }
    =&~\En\brk*{ \nrm{Y_{\beta}-Y_\tau}^2 }-\En\brk*{ \nrm{Y_{\tau}-\En[Y_{\beta}\mid Y_\tau]}^2 } \\
    =&~ d\wt{\eta} - \frac{\wt{\eta}^2}{\tau^2}\En\nrm{Y_\tau-m_\tau(Y_\tau)}^2 \\
    =&~ d\prn*{\wt{\eta}-\frac{\wt{\eta}^2}{\tau}}+\frac{\wt{\eta}^2}{\tau^2}\En\nrm{Y_0-m_\tau(Y_\tau)}^2.
\end{align*}
It is also straightforward to verify that (see, e.g., \citep[Theorem 3.14, I-MMSE]{polyanskiy2025information})
\begin{align*}
    \partial_\tau H(q_\tau)=\frac{d}{2\tau}-\frac{1}{2\tau^2}\En\nrm{Y_0-m_\tau(Y_\tau)}^2.
\end{align*}
Therefore, in the following, we denote $M_t\deq \En\nrm{Y_0-m_t(Y_t)}^2$, and then
\begin{align*}
    2\wb{U}(\eta)\deq &~2\En_{Y_\tau\sim q_\tau}\Dkl{q_{\tau,\tau-\beta}(y\mid Y_\tau)}{\normal{v(Y_\tau),\eta\Id}} -\frac{1}{\eta}\En\nrm{v(Y_\tau)-Y_\tau-\wt{\eta}\nabla \log q_\tau(Y_\tau)}^2 \\
    =&~ \frac{\wt{\eta}^2}{\eta\tau^2}M_\tau-\int_{\beta}^\tau \frac{M_t}{t^2}dt+d\brk*{ \frac{\wt{\eta}\beta}{\eta\tau}-1 -\log\frac{\wt{\eta}\beta}{\eta\tau} }.
\end{align*}
Note that from our proof of \cref{prop:DDPM-coverage}, it holds that
\begin{align*}
    M_\tau-M_t=\En\nrm{m_t(Y_t)-m_\tau(Y_\tau)}^2=\int_t^\tau \En\nrmF{\nabla m_s(Y_s)}^2 ds.
\end{align*}
Therefore, $\partial_t M_t=\frac{1}{t^2}\En\nrmF{\cov_t(Y_t)}^2$. We denote $c_t=\frac{(\lambda+t)^2}{t^2}$ and $b_t=\frac{\lambda t}{\lambda+t}$, and then
\begin{align*}
    \partial_t (c_tM_t)=\frac{c_t}{t^2}\En\nrmF{\cov_t(Y_t)}^2-\frac{2\lambda c_t}{t(\lambda+t)}M_t
    =\frac{c_t}{t^2} \En\nrmF{\cov_t(Y_t)-b_t\Id}^2-\frac{\lambda^2d}{t^2}.
\end{align*}
Integrating from $t$ to $\tau$ gives
\begin{align*}
    c_\tau M_\tau-c_t M_t=\int_t^\tau \frac{c_s}{s^2} \En\nrmF{\cov_s(Y_s)-b_s\Id}^2 ds - \frac{\lambda^2d}{t}+\frac{\lambda^2d}{\tau}.
\end{align*}
Integrating $\frac{c_\tau}{(\lambda+t)^2}M_\tau-\frac{1}{t^2}M_t$ from $\beta$ to $\tau$ gives
\begin{align*}
    \frac{\wt{\eta}^2 M_\tau}{\tau^2}\prn*{\frac{1}{\wt{\eta}}+\frac{1}{\lambda+\beta}}-\int_{\beta}^\tau \frac{M_t}{t^2}dt
    =&~ I_\lambda +d\prn*{ \frac{\wt{\eta} \lambda}{\tau(\lambda+\beta)}-\log\frac{\tau(\lambda+\beta)}{(\lambda+\tau)\beta} },
\end{align*}
where
\begin{align*}
    I_\lambda \deq \int_{\beta}^\tau \int_t^\tau \frac{c_s}{s^2(\lambda+t)^2} \En\nrmF{\cov_s(Y_s)-b_s\Id}^2 dsdt
    =\int_{\beta}^\tau \frac{(\lambda+s)(s-\tau+\wt{\eta})}{s^2(\lambda+\beta)} \En\nrmF{s^{-1}(\cov_s(Y_s)-b_s\Id)}^2 ds.
\end{align*}

In the following, we consider three cases.

(1) Suppose that $\frac{1}{\eta}>\frac{1}{\wt{\eta}}+\frac{1}{\beta}$. Then we can set $\lambda=0$ to see $2\wb{U}(\eta)>I_0$. 

(2) Suppose that $\frac{1}{\wt{\eta}}<\frac{1}{\eta}\leq \frac{1}{\wt{\eta}}+\frac{1}{\beta}$. Then there is $\lambda\geq 0$ such that $\frac{1}{\eta}=\frac{1}{\wt{\eta}}+\frac{1}{\lambda+\beta}$, and our calculation above shows $2\wb{U}(\eta)=I_\lambda$.

(3) Suppose that $\frac{1}{\eta}\leq \frac{1}{\wt{\eta}}$. In this case, we can let $\lambda\to \infty$, and it is clear that
\begin{align*}
    \lim_{\lambda\to\infty} I_\lambda=I_\infty\deq \int_{\beta}^\tau \frac{s-\tau+\wt{\eta}}{s^2} \En\nrmF{s^{-1}\cov_s(Y_s)-\Id}^2 ds.
\end{align*}
In this case, we have
\begin{align*}
    2\wb{U}(\eta)=I_\infty+\frac{\wt{\eta}^2M_\tau}{\tau^2}\prn*{\frac{1}{\eta}-\frac{1}{\wt{\eta}}}+d\brk*{ \frac{\wt{\eta}\beta}{\eta\tau}+\frac{\wt{\eta}}{\tau}-1-\log \frac{\wt{\eta}}{\eta} }.
\end{align*}
Note that $M_\tau=d\tau-\En\nrm{m_\tau(Y_\tau)-Y_\tau}^2\leq d\tau$, and hence $2\wb{U}(\eta)\geq I_\infty+d\prn*{\frac{\wt{\eta}}{\eta}-1-\log \frac{\wt{\eta}}{\eta}}\geq I_\infty$, with equality iff $\eta=\wt{\eta}$. 

Combining all the cases completes the proof.
\qed

\begin{corollary}\label{cor:DDPM-p0}
Suppose that $\nabla \log \pdata$ is $L$-Lipschitz and $\sigma_0^2\leq \frac{1}{2L}$.
Then it holds that 
\begin{align*}
    \En_{X_{1}\sim p_{1}}\Dkl{\rho_1(\cdot\mid X_{1})}{\normal{ X_1 + \eta_0 \scf_{1}(X_1), \eta_0 \Id }}
    \leq \frac{\eta_0}{2}\En\nrm{\scf_1(X_1)-\scfs_1(X_1)}^2
    +2dL^2\sigma_0^4.
\end{align*}
\end{corollary}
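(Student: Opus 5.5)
The plan is to apply the exact decomposition of \cref{thm:DDPM-KL-exact} to the last step, the one from $X_1$ to the data, using the variance-exploding convention ($\alpha_0=1$). In the notation of that proof we take $\beta\to 0$, so that $q_\beta=\pdata$, and $\tau=\wt\eta=\eta_0=\sigma_0^2$. We take the mean function $v(X_1)=X_1+\eta_0\scf_1(X_1)$ and step size $\eta=\eta_0$.

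With these choices the first term of \eqref{eq:KL-decomp} is
\[
\frac{1}{2\eta_0}\En\nrm{\eta_0(\scf_1-\scfs_1)(X_1)}^2=\frac{\eta_0}{2}\En\nrm{\scf_1(X_1)-\scfs_1(X_1)}^2,
\]
which is exactly the score term in the claim. Since $\eta=\wt\eta$, we are in the boundary case $\lambda=\infty$ of the proof (case (3) with equality). There $2\wb U(\eta_0)=I_\infty$, and with $\beta=0$ and $\wt\eta=\tau$ the weight $(s-\tau+\wt\eta)/s^2$ simplifies to $1/s$. This gives
\[
2\wb U=\int_0^{\sigma_0^2}\frac1s\,\En\nrmF{\nabla m_s(Y_s)-\Id}^2\,ds,
\]
using $\nabla m_s=s^{-1}\cov_s$. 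Before using this, I would check that the entropy/I-MMSE computation in that proof remains valid as $\beta\to 0$. This should be fine because $\pdata$ has a smooth density, so $H(q_\beta)\to H(\pdata)$ and $M_\beta\to 0$.

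The key estimate is the pointwise bound $\nrmop{s^{-1}\cov_s(x)-\Id}\le 2sL$ for $s\le\sigma_0^2\le\frac1{2L}$. The posterior of $Y_0$ given $Y_s=x$ has density proportional to $\pdata(y)\,e^{-\nrm{x-y}^2/(2s)}$, so its negative log-Hessian lies between $(s^{-1}-L)\Id$ and $(s^{-1}+L)\Id$, and in particular it is strongly log-concave since $sL\le\frac12$.
\begin{itemize}
\item For the upper bound, the Brascamp--Lieb inequality (or \cref{lem:LSI-SC} combined with the Poincar\'e inequality applied to linear functions) gives $\cov_s\preceq (s^{-1}-L)^{-1}\Id$.
\item For the lower bound, the Cram\'er--Rao inequality for the location family gives $\cov_s\succeq \bigl(\En[-\nabla^2\log(\text{posterior})]\bigr)^{-1}\succeq (s^{-1}+L)^{-1}\Id$.
\end{itemize}
Hence every eigenvalue of $s^{-1}\cov_s$ lies in $[\frac1{1+sL},\frac1{1-sL}]$. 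Since $sL\le\frac12$, each such eigenvalue is within $2sL$ of $1$, so $\nrmF{\nabla m_s-\Id}^2\le 4dL^2s^2$.

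Plugging this in gives
\[
2\wb U\le\int_0^{\sigma_0^2}4dL^2s\,ds=2dL^2\sigma_0^4,
\]
so $\wb U\le dL^2\sigma_0^4\le 2dL^2\sigma_0^4$, which is the claimed bound. I expect the main obstacle to be the two-sided covariance bound, specifically the lower bound via Cram\'er--Rao, together with rigorously justifying the $\beta\to0$ endpoint of \cref{thm:DDPM-KL-exact}. If the lower bound gave trouble, I would instead bound $\En\nrmF{\nabla m_s-\Id}^2$ directly using Tweedie's second-order identity and $L$-smoothness.
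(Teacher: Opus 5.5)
Your proposal is correct and follows the paper's proof essentially step for step. Like the paper, you specialize the exact decomposition from the proof of \cref{thm:DDPM-KL-exact} to the boundary case $\eta=\wt\eta$ (so $2\wb{U}=I_\infty$), use the two-sided bound $(t^{-1}+L)^{-1}\Id\preceq\cov_t(Y_t)\preceq(t^{-1}-L)^{-1}\Id$ from strong log-concavity and log-smoothness of the posterior, and integrate. The differences are cosmetic. The paper keeps $\beta=\tau-\eta>0$ and then sends $\eta\to\tau$, which is the right way to settle your $\beta\to0$ concern: terms such as $\int_\beta^\tau M_t/t^2\,dt$ and $\log\beta$ in the intermediate formula diverge individually, while $I_\infty$ stays bounded. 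Your exact evaluation of $\int_0^{\sigma_0^2}4dL^2s\,ds$ also gives the slightly sharper constant $dL^2\sigma_0^4$.
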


\begin{proof}
Using the proof above, we know for $\eta<\tau$, score function $s$, 
\begin{align*}
    &~ \Dkl{q_{\tau,\eta}(y\mid Y_\tau)}{\normal{Y_\tau+\eta s(Y_\tau),\eta\Id}} \\
    =&~ \frac{\eta}{2}\En\nrm{ s(Y_\tau)-\nabla \log q_\tau(Y_\tau) }^2+\frac12\int_{\tau-\eta}^\tau \frac{t-\tau+\eta}{t^2} \En\nrmF{t^{-1}\cov_t(Y_t)-\Id}^2 dt.
\end{align*}
Note that under our assumption, the conditional distribution $Y_0\mid Y_t$ is $(t^{-1}-L)$-strongly log-concave and $(t^{-1}+L)$-log-smooth, and hence
\begin{align*}
    \frac{1}{t^{-1}+L}\Id\preceq \cov_t(Y_t)\preceq \frac{1}{t^{-1}-L}\Id,
\end{align*}
and hence $\frac{Lt}{1+Lt}\Id\preceq t^{-1}\cov_t(Y_t)-\Id\preceq \frac{Lt}{1-Lt}\Id$. This immediately implies
\begin{align*}
    \int_{\tau-\eta}^\tau \frac{t-\tau+\eta}{t^2} \En\nrmF{t^{-1}\cov_t(Y_t)-\Id}^2 dt
    \leq \int_{\tau-\eta}^\tau \frac{t-\tau+\eta}{t^2} \cdot \frac{(Lt)^2d}{(1-Lt)^2} dt \leq \frac{(L\eta)^2d}{(1-L\tau)^2}.
\end{align*}
Sending $\eta\to \tau$ completes the proof.
\end{proof}

\section{Proofs for Section~\ref{sec:diffusion}}

\subsection{Single-step analysis}

In the following, we analyze the $k$-th step of \cref{alg:DM} for a fixed $k\in[K]$. Without loss of generality, we assume $\alpha_k=1$.
Recall that in the $k$-th step, our goal is to sample from the tilt measure
\begin{align}\label{eq:def-nu-xz}
    \rho_k(x\mid x_+)\propto p_k(x) \exp\prn[\Big]{-\frac{\nrm{x-x_+}^2}{2\eta}}\,,
\end{align}
given estimated score function $\scf_k\approx \nabla \log p_k$ and $\scf_{k+1}\approx \nabla \log p_{k+1}$.

\paragraph{Notation}
Recall that we define %
$\frac{1}{\etabar_k} = \frac{1}{\eta_k} +\frac{1}{\sigma_k^2}$, 
\begin{align*}
    \Dn_k(x)=\sigma_k^2 \scf_k(x)+x\,, \qquad
    \Ds_k(x)=\sigma^2 \scfs_k(x)+x=\En[\Xbar\mid X_k=x]\,,
\end{align*}
where the expectation is taken over $\Xbar=\alpbar_k X_0, X_0\sim \pdata, X\sim \normal{\Xbar,\sigma^2\Id}$.
Similarly, we define
\begin{align*}
    \Dn_{k+1}(x)=\sigma_{k+1}^2 \scf_{k+1}(x)+x\,, \qquad
    \Ds_{k+1}(x)=\sigma_{k+1}^2 \scfs_{k+1}(x)+x=\En[\Xbar\mid X_{k+1}=x]\,,
\end{align*}
where the expectation is taken over $\Xbar=\alpbar_k X_0, X_0\sim \pdata, X\sim \normal{\Xbar,\sigma^2\Id}, X_{k+1}\sim \normal{X_k,\eta\Id}$.

Denote $\gd(x_+)\deq x_++\eta\scf_{k+1}(x_+)$.
In the following, we abbreviate
\begin{align*}
    \Enxp[\cdot]\ldef \En_{\lr\sim \unif([0,1]),\,z\sim\Netabar,\, \xhat\sim \normal{\gd(x_+),\frac12\etabar\Id}}[\cdot]\,.
\end{align*}
We will also frequently omit the subscript $k$ and denote $\eta=\eta_k$, $\sigma=\sigma_k$, and $\etabar=\etabar_k$.

\paragraph{Distributions}
By~\cref{thm:fors}, \cref{alg:fors} instantiated with the proposal distribution $\normal{\gd(x_+),\etabar I}$ and estimator $\widehat W_{z,r,\xhat,x} = \pclip{ W_{z,r,\xhat,x}}$,
\begin{align}\label{eq:def-DM-path-What}
    W_{z,r,\xhat,x}\deq \sigma^{-2}\langle \dot\gamma_{z,r,\xhat}(x),\, \Dn_k(\gamma_{z,r,\xhat}(x)) - \Dn_{k+1}(x_+)\rangle
\end{align}
samples from the distribution $\rhohat_k(\cdot\mid x_+)$, where
\begin{align}\label{eq:rhohz}
    \log \rhohat_k(x\mid x_+)=&~\const_{x_+}+\Enxp\pclip{W_{z,r,\xhat,x}}-\frac{\nrm{x-\gd(x_+)}^2}{2\etabar}.
\end{align}
For analysis, we introduce $\gds(x_+)\deq x_++\eta\scfs_{k+1}(x_+)$ and
\begin{align}
    \Wstar_{z,r,\xhat,x} = \sigma^{-2}\langle \dot\gamma_{z,r,\xhat}(x),\, \Ds_k(\gamma_{z,r,\xhat}(x)) - \Ds_{k+1}(x_+)\rangle,
\end{align}
and write $\rhostar_k(\cdot\mid x_+)$ for the density corresponding to 
\begin{align}\label{eq:rhosz}
    \log \rhostar_k(x\mid x_+)=\const_{x_+}+\Enxps\pclip{\Wstar_{z,r,\xhat,x}}-\frac{\nrm{x-\gds(x_+)}^2}{2\etabar}\,.
\end{align}
Using \cref{eq:path-integral}, we can also write
\begin{align*}
    \log \rho_k(x\mid x_+)+\frac{\nrm{x-\gds(x_+)}^2}{2\etabar}
    =&~\const_{x_+}+ \log p_k(x)+\frac{\nrm{x}^2}{2\sigma^2}-\tri{\Ds_{k+1}(x_+),x-x_+}\\
    =&~ \const_{x_+}'+\Enxps \sigma^{-2}\tri{\dot \gamma_{z,r,\xhat}(x),\Ds_k(\gamma_{z,r,\xhat}(x))-\Ds_{k+1}(x_+)}\,,
\end{align*}
where we recall $\nabla \log p_k=\sigma^{-2}(\Ds_k(x)-x)$, and hence
\begin{align}
    \log \rho_k(x\mid x_+)=&~\const_{x_+}+\Enxps \Wstar_{z,r,\xhat,x}-\frac{\nrm{x-\gds(x_+)}^2}{2\etabar}\,.
\end{align}

The following property is crucial for our analysis.
\begin{lemma}\label{lem:ind-Gaussian}
For any given vector $g\in \RR^d$ and $r\in[0,1]$, for independent random vectors $x\sim \normal{g, \etabar\Id}$, $\xhat\sim \normal{g,\frac12\etabar\Id}$, $z\sim \normal{0,\frac12\etabar\Id}$, it holds that $(\dot\gamma_{z,r,\xhat}(x),\gamma_{z,r,\xhat}(x))$ are independent Gaussian vectors distributed as
\begin{align*}
    \gamma_{z,r,\xhat}(x)\sim \normal{g,\etabar\Id}, \qquad \dot\gamma_{z,r,\xhat}(x)\sim \normal{0,c\etabar\Id},
\end{align*}
where $3(\alrp)^2/2+(\blrp)^2/2\equiv c=\frac{8}{27}\pi^2\leq 3$.
\end{lemma}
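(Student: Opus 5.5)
Since $x$, $\xhat$, $z$ are independent Gaussians and, for fixed $r$, both $\gamma_{z,r,\xhat}(x)=\alr x+(1-\alr)\xhat+\blr z$ and $\dot\gamma_{z,r,\xhat}(x)=\alrp x-\alrp\xhat+\blrp z$ are linear in $(x,\xhat,z)$, the pair is jointly Gaussian. So the plan is to compute the means, the two covariances, and the cross-covariance. Independence then follows once the cross-covariance vanishes.

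\textbf{Means.} Using $\En x=\En\xhat=g$ and $\En z=0$, we get $\En\gamma=\alr g+(1-\alr)g=g$ and $\En\dot\gamma=\alrp g-\alrp g=0$.

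\textbf{Covariance of $\gamma$.} Each summand contributes independently:
\begin{align*}
\Cov(\gamma)=\etabar\prn[\Big]{\alr^2+\tfrac12(1-\alr)^2+\tfrac12\blr^2}\Id=\etabar\Id,
\end{align*}
by the stated identity $\alr^2+\frac12(1-\alr)^2+\frac12\blr^2\equiv 1$.

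\textbf{Cross-covariance.} Differentiating that identity in $r$ gives
\begin{align*}
\alr\alrp-\tfrac12(1-\alr)\alrp+\tfrac12\blr\blrp=0.
\end{align*}
This is exactly the coefficient of $\etabar\Id$ in $\Cov(\gamma,\dot\gamma)$, computed termwise as $\alr\alrp\etabar+(1-\alr)(-\alrp)\tfrac12\etabar+\blr\blrp\tfrac12\etabar$. Hence the cross-covariance vanishes, and since the pair is jointly Gaussian, $\gamma$ and $\dot\gamma$ are independent.

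\textbf{Covariance of $\dot\gamma$.} Termwise,
\begin{align*}
\Cov(\dot\gamma)=\etabar\prn[\big]{(\alrp)^2+\tfrac12(\alrp)^2+\tfrac12(\blrp)^2}\Id=\etabar\prn[\big]{\tfrac32(\alrp)^2+\tfrac12(\blrp)^2}\Id.
\end{align*}
The main step is to verify that this coefficient is constant in $r$ and equals $\frac{8}{27}\pi^2$. Set $\theta=\frac{2\pi}{3}(1-r)$. Then $\alrp=\frac{4\pi}{9}\sin\theta$ and $\blrp=-\frac{4\pi}{3\sqrt3}\cos\theta$. Plugging in,
\begin{align*}
\tfrac32(\alrp)^2+\tfrac12(\blrp)^2=\tfrac32\cdot\tfrac{16\pi^2}{81}\sin^2\theta+\tfrac12\cdot\tfrac{16\pi^2}{27}\cos^2\theta=\tfrac{8\pi^2}{27}\prn{\sin^2\theta+\cos^2\theta}=\tfrac{8\pi^2}{27}.
\end{align*}
Finally, $\frac{8}{27}\pi^2\approx 2.92\le 3$. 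The only real obstacle is arithmetic care in these derivative computations. I would also double-check the base identity at $r=0$: there $\theta=\frac{2\pi}{3}$, so $a_0=\frac13(1-1)=0$ and $b_0=\frac{2}{\sqrt3}\cdot\frac{\sqrt3}{2}=1$, giving $0+\frac12+\frac12=1$, as required.
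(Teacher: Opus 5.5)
Your proof is correct and is the direct Gaussian computation the paper has in mind. The paper states this lemma without proof and relies on the identity $a_r^2+\frac12(1-a_r)^2+\frac12 b_r^2\equiv 1$, which you use, as intended, both for the variance of $\gamma$ and, after differentiating in $r$, to show the cross-covariance vanishes. To be fully self-contained you could also check that identity for all $r$, not just at $r=0$: with $\theta=\frac{2\pi}{3}(1-r)$, the left-hand side equals $\frac19\bigl(3+6\cos^2\theta+6\sin^2\theta\bigr)=1$.
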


We analyze the relationship between distributions $\rho_k$, $\rhohat_k$ and $\rhostar_k$ in the following propositions. 

\begin{proposition}\label{prop:sc-err-new}
Suppose that $B\leq O(1)$. It holds that
\begin{align}\label{pfeq:sc-err-KL-decomp}
\begin{aligned}
    \MoveEqLeft \En_{x_+\sim p_{k+1}}\Dkl{\rho_k(\cdot\mid x_+)}{\rhohat_k(\cdot\mid x_+)}\\
    \leqsim &~ \En_{x_+\sim p_{k+1}}\Dkl{\rho_k(\cdot\mid x_+)}{\rhostar_k(\cdot\mid x_+)} \\
    &~+\eta_k \sigma_{k+1}^2/\sigma_k^2 \cdot \En_{x_+\sim p_{k+1}}\nrm{\scfs_{k+1}(x_+)-\scf_{\scedit{k+1}}(x_+)}^2
    +\eta_k \En_{x\sim p}\nrm{\scfs_k(x)-\scf_k(x)}^2\,.
\end{aligned}
\end{align}
\end{proposition}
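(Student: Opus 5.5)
We compare $\rhohat_k$ with $\rhostar_k$, which differ only through the score estimates, and then transfer this comparison to $\rho_k$ using a change-of-measure lemma. Fix $x_+$. Both $\rhohat_k(\cdot\mid x_+)$ and $\rhostar_k(\cdot\mid x_+)$ have the form ``Gaussian base measure times $e^{h}$ with $|h|\le B$''. Write $\rhostar_k\propto \normal{\gds(x_+),\etabar\Id}\,e^{h^\star}$ and $\rhohat_k\propto \normal{\gd(x_+),\etabar\Id}\,e^{\hat h}$, where $h^\star=\Enxps\pclip{\Wstar}$ and $\hat h=\Enxp\pclip{W}$.

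We apply \cref{lem:KL-triangle} with $\nu=\rho_k$, $\mu=\rhostar_k$ and $\muhat=\rhohat_k$. This gives
\begin{align*}
\Dkl{\rho_k}{\rhohat_k}\le 2\Dkl{\rho_k}{\rhostar_k}+\log\bigl(1+\Dchis{\rhostar_k}{\rhohat_k}\bigr),
\end{align*}
so it suffices to bound $\Dchis{\rhostar_k}{\rhohat_k}$ in expectation over $x_+\sim p_{k+1}$ by the two score-error terms. Since $\log(1+u)\le u$, a bound on the expected $\chi^2$ is enough.

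\textbf{Step 1: splitting the two perturbations.} We introduce an intermediate density
\begin{align*}
\rho'\propto \normal{\gd(x_+),\etabar\Id}\,e^{h'},
\end{align*}
where $h'$ is the clipped estimator built with the true denoisers $\Ds$ but with the proposal mean $\gd(x_+)$ (equivalently, $\xhat$ drawn around $\gd$). This separates the two effects.
\begin{itemize}
\item The mean shift $\gd-\gds=\eta(\scf_{k+1}-\scfs_{k+1})(x_+)$. This term is handled by \cref{lem:Dchis-perturb} together with the Gaussian $\chi^2$ formula $e^{\nrm{\Delta}^2/\etabar}-1$. It gives a bound of order $\eta^2\nrm{\scf_{k+1}-\scfs_{k+1}}^2/\etabar$, and $\eta/\etabar=1+\eta/\sigma^2$ is $O(1)$. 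The factor $\sigma_{k+1}^2/\sigma_k^2$ appears because the same shift also enters $W$ through $\Dn_{k+1}(x_+)-\Ds_{k+1}(x_+)=\sigma_{k+1}^2(\scf_{k+1}-\scfs_{k+1})$, scaled by $\sigma^{-2}$.
\item The change of the estimator inside the clip. Here I would rather use a $\chi^2$/R\'enyi bound of the type in \cref{lem:Renyi-to-diff}, which needs control of $\En e^{c|\hat h-h'|}-1$. Since $|\hat h-h'|\le 2B=O(1)$, we have $e^{c|\cdot|}-1\lesssim|\cdot|$, so an $L^1$ (or $L^2$) bound on $\hat h-h'$ under the base measure is enough.
\end{itemize}
A worry in the first bullet is that the exponential in the Gaussian $\chi^2$ formula needs a small shift. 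When the shift is large, I would instead bound $\log(1+\chi^2)$ by a KL-type quantity, or truncate. Since everything is only claimed up to $\lesssim$, we can reduce to the linearized bound via $\min(1,\cdot)$-type arguments, because $\Dkl{\rho_k}{\rhohat_k}$ is itself controlled by the bounded tilts.

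\textbf{Step 2: the clipped-difference term.} Clipping is $1$-Lipschitz, so
\begin{align*}
|\hat h-h'|\le \Enxp\bigl|\sigma^{-2}\tri{\dot\gamma,(\Dn_k-\Ds_k)(\gamma)}\bigr|+\text{(the $x_+$ term)}.
\end{align*}
By \cref{lem:ind-Gaussian}, under $x\sim\normal{\gd(x_+),\etabar\Id}$ the vectors $\dot\gamma$ and $\gamma$ are independent Gaussians, with $\dot\gamma\sim\normal{0,c\etabar\Id}$. Hence
\begin{align*}
\En\tri{\dot\gamma,v(\gamma)}^2=c\,\etabar\,\En\nrm{v(\gamma)}^2,
\end{align*}
and $\gamma\sim \normal{\gd(x_+),\etabar\Id}$. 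With $v=\Dn_k-\Ds_k=\sigma^2(\scf_k-\scfs_k)$, this gives a contribution $\lesssim\etabar\,\En\nrm{\scf_k-\scfs_k}^2(\gamma)$, which is at most $\eta\,\En\nrm{\scf_k-\scfs_k}^2(\gamma)$.

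\textbf{Main obstacle.} The remaining step is to average over $x_+\sim p_{k+1}$ and check that the law of $\gamma$ is (up to constants) dominated by $p_k$. Averaged over $x_+$, $\gamma$ has law $\En_{x_+}\normal{\gd(x_+),\etabar\Id}$, which is the DDPM-type marginal $\wt p_k$ but with the estimated score. I would first replace $\gd$ by $\gds$; this costs again the $\scf_{k+1}$ error, via an $L^2$ comparison of Gaussians with shifted means, done by splitting on whether the shift is small. I would then invoke a coverage bound relating $\wt p_k$ to $p_k$.

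The cleaner route is to avoid coverage altogether. Under $\rho_k$-averaging, $\En_{x_+\sim p_{k+1}}\rho_k(\cdot\mid x_+)=p_k$ exactly. So I would switch the reference measure from the proposal to $\rhostar_k$ and then to $\rho_k$ using \cref{lem:Hels-var}, paying only bounded density ratios ($e^{O(B)}$) and the already-present $\Dkl{\rho_k}{\rhostar_k}$ term. This expresses the error as $\eta\,\En_{x\sim p_k}\nrm{\scfs_k-\scf_k}^2$, as claimed.
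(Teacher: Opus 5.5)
Your outer structure is reasonable, and your last step is exactly how the paper finishes: you move the reference measure to $\rho_k$ with \cref{lem:Hels-var}, so that averaging over $x_+\sim p_{k+1}$ yields $p_k$ and no coverage bound is needed. The genuine gap is the step that converts the tilt difference into a divergence bound. \cref{lem:Renyi-to-diff} is a first-order inequality. It does not center $f-g$: a constant shift $c$ costs $e^{2\ell c}-1$ although the divergence is zero. The paper uses it only for the clipping error, where first order is harmless because $(|W|-B)_+$ is exponentially small. After your linearization $e^{c|u|}-1\lesssim|u|$, it only gives $\Dchis{\rho'}{\rhohat_k}\lesssim\En|\hat h-h'|$. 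Since $\hat h-h'$ is built from $\tri{\dot\gamma,\cdot}$ with $\dot\gamma\sim\normal{0,c\etabar\Id}$, it is typically of order $\sqrt{\etabar}\,\nrm{\scf_k-\scfs_k}$. The best you can extract is therefore $\sqrt{\eta_k\,\En\nrm{\scf_k-\scfs_k}^2}$, and summing over $k$ would replace $\sum_k\eta_k\epssct^2$ by $\sum_k\sqrt{\eta_k}\,\epssct$. The ``or $L^2$'' variant would need $e^{c|u|}-1\lesssim u^2$, which fails near $0$. So your Step 2 computation of $c\etabar\En\nrm{v(\gamma)}^2$ is not actually connected to the divergence. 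What is missing is a second-order perturbation bound for bounded tilts. The paper uses \cref{lem:KL-perturb}: for $\tilde\nu\propto\nu_0e^{h}$ with $|h|\le 2B$, $\Dkl{\mu}{\tilde\nu}\lesssim_B\Dkl{\mu}{\nu_0}+\En_{\nu_0}[h^2]$. This comes from centering, $\log\En_{\nu_0}e^{h}\le\En_{\nu_0}h+c_B\En_{\nu_0}h^2$, together with \cref{lem:Hels-var} to control $|\En_\mu h-\En_{\nu_0}h|$.

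The mean-shift part also needs repair. After discarding the logarithm via $\log(1+u)\le u$, you face $e^{\nrm{\gd(x_+)-\gds(x_+)}^2/\etabar}-1$. Your $\min(1,\cdot)$ escape relies on $\Dkl{\rho_k}{\rhohat_k}$ being bounded, which is false: the tilt of $\rho_k$ is unclipped, and a large shift makes this KL arbitrarily large. Moreover, \cref{lem:Dchis-perturb} needs a common tilt, which $\rhostar_k$ and your $\rho'$ do not share, since $h^\star$ and $h'$ average over different laws of $\xhat$. Also, $\chi^2$ bounds do not chain through an intermediate measure.

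The paper avoids all of this by taking the opposite intermediate $\tilde\nu\propto\normal{\gds(x_+),\etabar\Id}\,e^{\hat h}$, i.e.\ the base of $\rhostar_k$ with the tilt of $\rhohat_k$. It applies \cref{lem:KL-triangle} only to the pair $(\tilde\nu,\rhohat_k)$, which share the tilt, and keeps the logarithm. The inequality $\log(1+e^{4B}t)\le e^{4B}\log(1+t)$ and the exact identity $\log(1+\Dchis{\normal{m_0,\etabar\Id}}{\normal{m_1,\etabar\Id}})=\nrm{m_1-m_0}^2/\etabar$ then give $e^{4B}\eta^2\nrm{\scf_{k+1}-\scfs_{k+1}}^2/\etabar$, with no smallness assumption. \cref{lem:KL-perturb} is then applied to $\tilde\nu\propto\rhostar_k e^{\hat h-h^\star}$. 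The $\xhat$-law mismatch inside $\hat h-h^\star$ becomes a total-variation term between the two Gaussian laws of $\xhat$. The rest is your Step 2 computation, with two changes. First, it is done under the base $\normal{\gds(x_+),\etabar\Id}$ of $\rhostar_k$, so switching to $\rhostar_k$ costs only $e^{4B}$. Second, the truncation $\min\{\cdot,4B^2\}$ is kept, using $|\pclip{a}-\pclip{b}|\le\min\{|a-b|,2B\}$ rather than mere $1$-Lipschitzness; this is what makes \cref{lem:Hels-var} applicable when passing to $\rho_k$.
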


\newcommand{\dbar}{\dstar}

\begin{proposition}\label{prop:intrinsic-chi-sq}
Suppose that $B=\Theta(1)$. For any $\delta\in(0,1)$, as long as
\begin{align}
    \frac{\sigma_k^2}{\eta_k}\gg \dbar\log(1/\delta)+\log^2(1/\delta),
\end{align}
it holds that $\En_{x_+\sim p_{k+1}}\Dchis{\rho_k(\cdot\mid x_+)}{\rhostar_k(\cdot\mid x_+)}\leq \delta$. 
\end{proposition}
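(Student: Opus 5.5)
We bound the $\chi^2$-divergence between the true kernel and its clipped version, both of which are tilts of the same Gaussian $\normal{\gds(x_+),\etabar\Id}$. By the representation derived just above, $\rho_k(\cdot\mid x_+)\propto \normal{\gds(x_+),\etabar\Id}\, e^{\Enxps \Wstar}$ and $\rhostar_k(\cdot\mid x_+)\propto \normal{\gds(x_+),\etabar\Id}\, e^{\Enxps \pclip{\Wstar}}$. So \cref{lem:fors_clip} applies with $q=\normal{\gds(x_+),\etabar\Id}$ and $\ell=2$. Since $\Dchis{\cdot}{\cdot}\le \Dsys[2]{\cdot}{\cdot}$, we get
\begin{align*}
\Dchis{\rho_k(\cdot\mid x_+)}{\rhostar_k(\cdot\mid x_+)}\le e^{2B}\,\En_{x\sim q}\Enxps\brk{e^{4(|\Wstar_{z,r,\xhat,x}|-B)_+}-1}.
\end{align*}
It then suffices to show that the right-hand side, averaged over $x_+\sim p_{k+1}$ and for each fixed $r$, is at most $\delta e^{-2B}$.

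\textbf{Distributional simplification.} Fix $r$. By \cref{lem:ind-Gaussian} with $g=\gds(x_+)$, $\gamma\deq\gamma_{z,r,\xhat}(x)\sim\normal{\gds(x_+),\etabar\Id}$ and $\dot\gamma\sim\normal{0,c\etabar\Id}$ are independent, with $c\le 3$. Thus
\begin{align*}
\Wstar=\sigma^{-2}\tri{\dot\gamma,\Ds_k(\gamma)-\Ds_{k+1}(x_+)}.
\end{align*}
Conditioning on $\gamma$ and using the first bound of \cref{lem:Gaussian-vMGF}, for $\lambda>0$,
\begin{align*}
\En e^{\lambda|\Wstar|}\le 2\,\En_{x_+,\gamma}\exp\prn[\big]{\tfrac{3}{2}\lambda^2\etabar\sigma^{-4}\nrm{\Ds_k(\gamma)-\Ds_{k+1}(x_+)}^2}.
\end{align*}
The key structural fact is that when $x_+\sim p_{k+1}$ and $\gamma\sim \normal{\gds(x_+),\etabar\Id}$, the pair $(\gamma,x_+)$ is exactly the DDPM (one-step, exact score) transition $\wb{q}_{\tau,\eta}$ with $\tau=\sigma_{k+1}^2$. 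This is not the true joint law of $(X_k,X_{k+1})$, so we change measure. Let $F$ be the indicator of the event $\nrm{\Ds_k(\gamma)-\Ds_{k+1}(x_+)}^2\ge R$. By \cref{lem:Dcov} and \cref{prop:DDPM-coverage} (whose Frobenius term is controlled via \cref{cor:MGF-intrinsic}, exactly as in \cref{cor:DDPM-coverage}, using that $\nrmF{\nabla m_t}\le t^{-1}\tr\cov_t$ has sub-exponential norm $O(\dstar)$), the probability of $F$ is at most $e$ times its probability under the true joint law, plus $O(\dstar^2\delta')$. Under the true joint law, part (3) of \cref{cor:MGF-intrinsic} gives $\nrm{m_{\sigma_k^2}(X_k)-m_{\sigma_{k+1}^2}(X_{k+1})}^2$ a sub-exponential tail at scale $O(\sigma_{k+1}^2)$, with MGF bounded by $e^{\dstar}$. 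This scale is too crude, so we instead use the second statement of \cref{prop:DDPM-coverage}: with $L^2\asymp \dstar+\log(1/\delta)$, it gives $\nrm{\Ds_k-\Ds_{k+1}}^2\lesssim (\dstar+\log(1/\delta))\,\eta\log(1/\delta)$ with probability $1-O(\dstar^2\delta)$.

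\textbf{Tail to $\chi^2$.} On this good event,
\begin{align*}
|\Wstar|\lesssim \sigma^{-2}\sqrt{\etabar}\cdot\sqrt{(\dstar+\log)\eta\log}\cdot\sqrt{\log(1/\delta)}\lesssim \frac{\eta}{\sigma^2}\sqrt{(\dstar+\log(1/\delta))}\,\log(1/\delta),
\end{align*}
where the extra $\sqrt{\log}$ comes from the Gaussian tail of $\dot\gamma$. This is $\le B/2$, with conditional sub-Gaussian tails, once $\sigma^2/\eta\gg \dstar\log(1/\delta)+\log^2(1/\delta)$. A Chernoff argument as in Claim 2 of \cref{thm:gaussian_tilt_full} then makes $\En[e^{4(|\Wstar|-B)_+}-1]$ on the good event at most $\delta$. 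On the bad event we need integrability of $e^{4|\Wstar|}$, and this is the main obstacle. I would handle it by Cauchy--Schwarz: the probability of the bad event is $O(\dstar^2\delta)$, and the crude MGF from \cref{lem:MGF-trivial} / \cref{cor:MGF-intrinsic}(3) bounds $\En e^{8|\Wstar|}$ by $e^{O(\dstar\eta/\sigma^2)}=O(1)$ under the step-size condition. This crude bound is needed only under the true law; transporting it to the DDPM law is the delicate part, since $\Dcov$ only handles bounded functions. To get around this, I would truncate $(|\Wstar|-B)_+$ at level $\log(1/\delta)$ and treat the remainder with the $\chi^2$/MGF bound directly on the Gaussian $\gamma$, where the score is $\sigma^{-2}$-controlled by the Tweedie covariance bound. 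Finally, rescaling $\delta$ by $\mathrm{poly}(\dstar)$ absorbs the polynomial factors into the logarithms.
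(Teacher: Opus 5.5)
There is a genuine gap in the change of measure. \cref{lem:fors_clip} puts the expectation under the Gaussian proposal $q=\normal{\gds(x_+),\etabar\Id}$. After averaging over $x_+\sim p_{k+1}$ you are therefore under the DDPM joint law of $(\gamma,x_+)$, and you must \emph{upper bound DDPM-law quantities by true-law ones}.

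\cref{prop:DDPM-coverage} controls $\En\Dcov[e]{q_{\tau,\eta}(\cdot\mid Y_\tau)}{\wb{q}_{\tau,\eta}(\cdot\mid Y_\tau)}$, i.e.\ the true kernel measured against the DDPM kernel. Via \cref{lem:Dcov} this only yields $\PP_{\rm true}(F)\le e\,\PP_{\rm DDPM}(F)+\dots$, which is the opposite of the inequality you assert. $\Dcov[e]{\cdot}{\cdot}$ is not symmetric. Moreover, \cref{prop:coverage-backward} compares only marginals of $X_k$, not the joint law with $x_+$ on which your event $F$ depends.

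Your bad-event patch hits the same wall. You still need exponential integrability of $\etabar\sigma^{-4}\nrm{\Ds_k(\gamma)-\Ds_{k+1}(x_+)}^2$ under the Gaussian proposal. Truncating at level $\log(1/\delta)$ does not remove that need, and there is no uniform bound on $\nabla\Ds_k=\sigma^{-2}\cov_k$ when $\pdata$ only has a second moment. The transfer you would need is essentially \cref{cor:DDPM-chi-sq}, and the paper obtains that corollary from the argument below.

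The missing idea is a self-bounding argument. Write $E_{\ell,x_+}$ for the \cref{lem:fors_clip} bound at order $\ell$.
\begin{itemize}
\item Gaussian conditioning on $\dot\gamma$ gives $E_{\ell,x_+}\lesssim e^{-2\ell B}\,\En_{x\sim q}\exp\bigl(6\ell^2\etabar\sigma^{-4}\nrm{\Ds_k(x)-\Ds_{k+1}(x_+)}^2\bigr)$.
\item The paper replaces $q$ by $\rhostar_k(\cdot\mid x_+)$ at cost $e^{4B}$, since the clipped tilt is bounded.
\item It then passes from $\rhostar_k$ to $\rho_k$ by Cauchy--Schwarz. This costs a factor $\sqrt{1+\chi^2}$ for that pair, which is bounded by $\Dsys[2]{\rho_k(\cdot\mid x_+)}{\rhostar_k(\cdot\mid x_+)}\le E_{2,x_+}\le E_{\ell,x_+}$.
\end{itemize}
The result is $E_{\ell,x_+}\le E'_{\ell,x_+}+\sqrt{E'_{\ell,x_+}}$, where $E'$ is an expectation under $x\sim\rho_k(\cdot\mid x_+)$. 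After averaging over $x_+$, the pair is the true joint law of $(X_k,X_{k+1})$, and \cref{cor:MGF-intrinsic}(3) applies directly.

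Contrary to your claim, the ``crude'' scale $\sigma_{k+1}^2$ with MGF $e^{\dstar}$ is exactly enough. It controls $\Wstar$ at level $\sqrt{(\eta/\sigma^2)(\dstar+\log(1/\delta))\log(1/\delta)}$, which is $\ll B$ precisely under the stated step-size condition. The paper encodes this by taking $\ell\asymp\min\{\sigma/\sqrt{\etabar},\,\sigma^2/(\etabar\dstar)\}\gtrsim\log(1/\delta)/B$.

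So your detour through \cref{prop:DDPM-coverage} is unnecessary, and as written it is also off in two ways. First, $\nrmF{\nabla m_t}\le t^{-1}\tr\cov_t$ only gives $L\asymp\dstar+\log(1/\delta)$, not $L^2\asymp\dstar+\log(1/\delta)$. Second, the $\dstar^2$ losses would force $\log(\dstar/\delta)$ into the hypothesis, which the statement does not allow.
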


\begin{proposition}\label{prop:sqrt-d-simple}
Suppose that $B=\Theta(1)$. For any $\delta\in(0,1)$, $\LF\geq 1$, as long as
\begin{align}\label{eq:DM-Lip-Frob-eta}
        \frac{\sigma_k^2}{\eta_k}\gg \LF \log(1/\delta)+\log^2(1/\delta),
    \end{align}
it holds that
\begin{align*}
    \MoveEqLeft \En_{x_+\sim p_{k+1}}\Dkl{\rho_k(\cdot\mid x_+)}{\rhostar_k(\cdot\mid x_+)} \\
    \leqsim&~ \dstar^5\brk*{ \delta+\max_{\tau\in[\sigma_k^2,\sigma_{k+1}^2]}\PP_{Y_\tau\sim q_\tau}\prn*{ \nrmF{\nabla m_\tau(Y_\tau)}\geq \LF }+ \PP_{X_k'\sim \wt{p}_k}\prn*{ \nrmF{\nabla m_{\sigma_k^2}(X_k')}\geq \LF }}.
\end{align*}
\end{proposition}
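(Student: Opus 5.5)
The plan is to reduce everything to a tail bound for the ``score-difference'' vector appearing inside $\Wstar$. This is a bound of the same type that \cref{prop:DDPM-coverage} and \cref{cor:DDPM-coverage} already give in terms of $\nrmF{\nabla m_\tau}$.

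\textbf{Step 1 (reduction to a clipping error).} For fixed $x_+$, write $q^\star(\cdot\mid x_+)=\normal{\gds(x_+),\etabar\Id}$. By \eqref{eq:rhosz} and the path-integral identity, $\rho_k\propto q^\star e^{\Enxps\Wstar}$ and $\rhostar_k\propto q^\star e^{\Enxps\pclip{\Wstar}}$. They differ only through the clipping error $h(x)\deq\Enxps\trunc{\Wstar_{z,r,\xhat,x}}$.

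I would not pass to $\chi^2$ globally. Instead, write
\[
\Dkl{\rho_k}{\rhostar_k}=\En_{\rho_k}[\pm h]+\log\En_{\rhostar_k}e^{\mp h}.
\]
Then split according to a good event $\cG$ on which $h\equiv0$ (i.e.\ $\abs{\Wstar}\le B$ for all $(z,r,\xhat)$ up to negligible mass). On $\cG$ the contribution vanishes. Off $\cG$, I bound it by Cauchy--Schwarz, using $d\rhostar_k/dq^\star\le e^{2B}$ together with the sub-exponential moment bounds of \cref{cor:MGF-intrinsic} and \cref{lem:sub-exp-trunc}. This turns $\En[h\,\indic_{\cG^c}]$ into $\poly(\dstar)\cdot(\PP(\cG^c)+\delta)$, which is where the $\dstar^5$ prefactor will come from.

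\textbf{Step 2 (Gaussian structure).} By \cref{lem:ind-Gaussian} with $g=\gds(x_+)$, the vector $\dot\gamma$ is independent of $\gamma$ and distributed as $\normal{0,c\etabar\Id}$, while $\gamma\sim\normal{\gds(x_+),\etabar\Id}$. Hence, averaging over $x_+\sim p_{k+1}$, the pair $(x_+,\gamma)$ has exactly the one-step DDPM law $\wt p_k$ with true score. Conditionally on $(x_+,\gamma)$, the quantity $\Wstar=\sigma^{-2}\tri{\dot\gamma,\Delta}$, with $\Delta\deq m_{\sigma_k^2}(\gamma)-m_{\sigma_{k+1}^2}(x_+)$, is centered Gaussian with variance $c\etabar\sigma^{-4}\nrm{\Delta}^2$. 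So by \cref{lem:Gaussian-vMGF} it suffices to ensure that
\[
\nrm{\Delta}^2\lesssim \frac{\sigma^4}{\etabar\log(1/\delta)}
\]
outside an event of probability at most $\poly(\dstar)\cdot[\delta+\text{tails}]$.

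\textbf{Step 3 (bounding $\Delta$ via coverage).} I first transfer from the DDPM law to the true backward law. By \cref{lem:Dcov} with $C=e$, any event $E$ satisfies
\[
\PP_{(x_+,\gamma)\sim\text{DDPM}}(E)\le e\,\PP_{(X_{k+1},X_k)}(E)+\En\Dcov[e]{q_{\tau,\eta}(\cdot\mid Y_\tau)}{\wb q_{\tau,\eta}(\cdot\mid Y_\tau)}.
\]
Here the coverage term is controlled by \cref{cor:DDPM-coverage} as $\dstar^2\bigl(\delta+\max_\tau\PP(\nrmF{\nabla m_\tau}\ge\LF)\bigr)$, using \eqref{eq:DM-Lip-Frob-eta}.

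Under the true pair, the second part of \cref{prop:DDPM-coverage} (equivalently \cref{cor:DDPM-coverage}) gives
\[
\nrm{m_{\sigma_k^2}(X_k)-m_{\sigma_{k+1}^2}(X_{k+1})}^2\le 8\LF^2\eta\log(e/\delta)
\]
except with the same kind of probability. Note that \cref{prop:DDPM-coverage} applies $\LF$ as a threshold on $\nrmF{\nabla m}^2$, so I expect to need a bound of the form $\LF^2\eta\log\lesssim\LF\,\sigma^2$, which is where \eqref{eq:DM-Lip-Frob-eta} with its $\LF\log(1/\delta)$ term enters. Plugging this into Step 2 gives
\[
\abs{\Wstar}\lesssim\sigma^{-2}\sqrt{\etabar\,\LF\,\eta\log^2(1/\delta)}\le B
\]
precisely when $\sigma^2/\eta\gg\LF\log(1/\delta)$, with the $\log^2$ term absorbing the Gaussian tail over $\dot\gamma$.

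The one place the DDPM law $\wt p_k$ enters directly, rather than through coverage, is the evaluation of $\nabla m_{\sigma_k^2}$ at the point $\gamma$. This is the source of the explicit term $\PP_{X_k'\sim\wt p_k}(\nrmF{\nabla m_{\sigma_k^2}(X_k')}\ge\LF)$.

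\textbf{Main obstacle.} I expect Step 1's bad-event accounting to be the delicate part. The KL must be controlled without exponential moments of the unclipped $\Wstar$ off $\cG$, so I would rely on the sub-exponential bounds from \cref{cor:MGF-intrinsic} together with truncation (\cref{lem:sub-exp-trunc}). The cost of this is polynomial in $\dstar$, which is why the result carries the factor $\dstar^5$ rather than a constant. A second, smaller check is that the coverage transfer in Step 3 is applied to the joint law of $(x_+,\gamma)$ and not just to its marginals.
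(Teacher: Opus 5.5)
\textbf{The gap is the change of measure in Step 3.} \cref{lem:Dcov} with $P=q_{\tau,\eta}(\cdot\mid Y_\tau)$ (true backward kernel) and $Q=\wb q_{\tau,\eta}(\cdot\mid Y_\tau)$ (DDPM kernel) gives $\PP_{\rm true}(E)\le e\,\PP_{\rm DDPM}(E)+\Dcov[e]{q_{\tau,\eta}(\cdot\mid Y_\tau)}{\wb q_{\tau,\eta}(\cdot\mid Y_\tau)}$. That is, the true law is controlled by the DDPM law. Your displayed inequality goes the other way, and it would need $\Dcov[e]{\wb q_{\tau,\eta}(\cdot\mid Y_\tau)}{q_{\tau,\eta}(\cdot\mid Y_\tau)}$. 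Nothing controls that quantity under the stated hypotheses:
\begin{itemize}
\item \cref{prop:coverage-backward} and \cref{cor:coverage-backward} compare only the marginals $\wt q_{\tau,\eta}$ and $q_{\tau-\eta}$. That is useless for $\Delta$, which depends on the pair $(x_+,\gamma)$. The corollary also needs the extra $\Lop$-dependent condition \eqref{eq:coverage-eta}.
\item The symmetric bound of \cref{cor:DDPM-chi-sq} costs a factor $e^{c\dstar}$ and a square root, which the target $\dstar^5[\cdots]$ cannot absorb.
\end{itemize}
This is fatal. To invoke \cref{lem:ind-Gaussian} at all, you must first move $x$ from $\rho_k(\cdot\mid x_+)$ to $\normal{\gds(x_+),\etabar\Id}$; that direction is fine, at cost $\En\eps_{x_+}$. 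After that move, what remains is $\PP\bigl(\nrm{\Ds_k(\gamma)-\Ds_{k+1}(x_+)}\gtrsim\sigma^2/\sqrt{\etabar\log(1/\delta)}\bigr)$ with $\gamma$ DDPM-distributed. That probability cannot be pushed back to the true law. Relatedly, your argument never evaluates $\nabla m_{\sigma_k^2}$, so no step supports your account of where the $\wt p_k$ term comes from. That term is exactly the price of lacking reverse coverage.

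\textbf{The missing idea} is to compare the DDPM point with an independent \emph{true} backward sample.
\begin{itemize}
\item Draw $x'\sim\rho_k(\cdot\mid x_+)$ and use $\nrm{\Ds_k(x)-\Ds_{k+1}(x_+)}\le\nrm{\Ds_k(x')-\Ds_{k+1}(x_+)}+\nrm{\Ds_k(x)-\Ds_k(x')}$.
\item The first term is a true-law quantity, handled by the second part of \cref{cor:DDPM-coverage}.
\item For the second term, move $x'$ to $\normal{g,\etabar\Id}$ with $g=\gds(x_+)$. This is the allowed direction. Now $x,x'$ are i.i.d., and $\Ds_k(x)-\Ds_k(x')=\frac{\pi}{2}\int_0^1\nabla\Ds_k(x''_r)\,w'_r\,dr$, where $x''_r=g+\sin(\pi r/2)(x-g)+\cos(\pi r/2)(x'-g)$ and $w'_r=\cos(\pi r/2)(x-g)-\sin(\pi r/2)(x'-g)$.
\item Centering at $g$ makes $x''_r\sim\normal{g,\etabar\Id}$ independent of $w'_r\sim\normal{0,\etabar\Id}$. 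After averaging over $x_+$, $x''_r\sim\wt p_k$.
\item Then $\nrm{\nabla\Ds_k(x'')w'}\lesssim\nrmF{\nabla\Ds_k(x'')}\sqrt{\etabar\log(1/\delta)}$ with high probability. Truncation via the third bound of \cref{lem:sub-exponential-bounds} then yields exactly the $\wt p_k$ tail term.
\end{itemize}

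\textbf{Two smaller points.}
\begin{itemize}
\item In Step 1 there is no event on which $h\equiv0$, because $\dot\gamma$ is an unbounded Gaussian. The first-order route still works and is a legitimate substitute for an LSI/Fisher-information step. Take signed $h$ with $\rho_k\propto\rhostar_k e^{h}$; your identity should read $\Dkl{\rho_k}{\rhostar_k}=\En_{\rho_k}h-\log\En_{\rhostar_k}e^{h}\le\En_{\rho_k}h-\En_{\rhostar_k}h$. Combined with \cref{lem:sub-exp-trunc}, this reduces everything to $\poly(\dstar)\,(\delta+\PP(\abs{\Wstar}>B))$.
\item \cref{cor:DDPM-coverage} gives $\nrm{\Delta}^2\lesssim\LF^2\eta\log(1/\delta)$, so $\abs{\Wstar}\lesssim\LF\eta\log(1/\delta)/\sigma^2$. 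It is $\LF^2$, not $\LF$, under your square root, and that is what makes $\sigma^2/\eta\gg\LF\log(1/\delta)$ the right condition.
\end{itemize}
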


\subsection{Proof of \cref{thm:DM-intrinsic} and \cref{thm:DM-Lip}}

The process $\xr{K}\to\cdots\to\xr{1}$ generated by \cref{alg:DM} is a Markov chain such that $\xr{K}\sim \phat_K$, and $\xr{k}\mid \xr{k+1}\sim \wh\rho_t(\cdot\mid \xr{k+1})$.
The data-processing inequality and chain rule for the KL divergence yield
\begin{align*}
    \Dkl{p_1}{\wh p_1}
    &\leq \Dkl{p_K}{\wh p_K} + \sum_{k=1}^{K-1} \E_{\xr{k+1} \sim p_{k+1}}\Dkl{\rho_k(\cdot \mid \xr{k+1})}{\wh\rho_k(\cdot \mid \xr{k+1})}\,.
\end{align*}
Then, \cref{thm:DM-intrinsic} follows from \cref{prop:sc-err-new} and \cref{prop:intrinsic-chi-sq}; \cref{thm:DM-Lip} follows from \cref{prop:sc-err-new} and \cref{prop:sqrt-d-simple}.
\qed

\subsection{\pfref{prop:sc-err-new}}

\scedit{
\textbf{Claim.} Let $\nu_i(x) \propto \exp(h_i(x)-\frac{\|x-m_i\|^2}{2\etabar})$ for $i=1,2$, where $|h_i| \le B$.
Then, for any $\mu$,
\begin{align*}
    \Dkl{\mu}{\nu_1} \le 8e^{4B}\,\brk[\Big]{\Dkl{\mu}{\nu_0} + \En_{x\sim\nu_0}[(h_1(x) - h_0(x))^2] + \frac{\|m_1-m_0\|^2}{\etabar}}\,.
\end{align*}
\textbf{Proof of Claim.}
Introduce $\wt\nu(x) \propto \exp(h_1(x)-\frac{\|x-m_0\|^2}{2\etabar})$.
Note that
\begin{align*}
    \log(1+\Dchis{\normal{m_0,\etabar\Id}}{\normal{m_1,\etabar\Id}})=\frac{\nrm{m_1-m_0}^2}{\etabar}\,.
\end{align*}
Therefore, using \cref{lem:KL-triangle}, \cref{lem:Dchis-perturb}, and the fact that $\log(1+Ct)\leq C\log(1+t)$ for $t\geq 0$ and $C\geq 1$, we know that for any $\xz$, 
\begin{align*}
    \Dkl{\mu}{\nu_1}
    &\le 2\Dkl{\mu}{\wt{\nu}} + \log(1+\Dchis{\wt\nu}{\nu_1})
    \le 2\Dkl{\mu}{\wt{\nu}} + e^{4B}\,\frac{\|m_1-m_0\|^2}{\etabar}\,.
\end{align*}
Next, we note that we can write $\wt\nu \propto \nu_0 e^{h_1-h_0}$.
Then, using \cref{lem:KL-perturb}, we can bound
\begin{align*}
    \Dkl{\mu}{\wt \nu}
    &\le 4e^{2B}\,\prn[\big]{\Dkl{\mu}{\nu_0} + \En_{\nu_0}[(h_1-h_0)^2]}\,,
\end{align*}
which proves the claim. \qed

We now apply the claim with $\mu = \rho_k(\cdot \mid x_+)$, $\nu_0 = \rhostar_k(\cdot\mid x_+)$, and $\nu_1 = \rhohat_k(\cdot\mid x_+)$, which yields
\begin{align*}
    \Dkl{\rho_k(\cdot\mid x_+)}{\rhohat_k(\cdot\mid x_+)}
    &\lesssim_B\Dkl{\rho_k(\cdot\mid x_+)}{\rhostar_k(\cdot\mid x_+)} + \etabar\,\nrm{\scf_{k+1}(x_+)-\scfs_{k+1}(x_+)}^2 \\
    &\qquad{} + \En_{x\sim  \rhostar_k(\cdot\mid x_+)}[h(x\mid x_+)^2]\,,
\end{align*}
}
where
\begin{align*}
    h(x\mid x_+)=\Enxp\pclip{W_{z,r,\xhat,x}}-\Enxps\pclip{\Wstar_{z,r,\xhat,x}} \in [-2B,2B]\,.
\end{align*}

Next, we upper bound $\abs{h(x\mid x_+)}$. By triangle inequality,
\begin{align*}
    \abs{h(x\mid x_+)}
    \leq&~ \abs{\Enxp\pclip{W_{z,r,\xhat,x}}-\Enxps\pclip{W_{z,r,\xhat,x}}} \\
    &~+\abs{\Enxps\pclip{W_{z,r,\xhat,x}}-\Enxps\pclip{\Wstar_{z,r,\xhat,x}}} \\
    \leq&~ 2B\Dtv{\nor\prn[\Big]{\gds(x_+),\frac12\etabar\Id}}{\nor\prn[\Big]{\gd(x_+),\frac12\etabar\Id}}
    + \Enxps\pclip[2B]{\abs{W_{z,r,\xhat,x}-\Wstar_{z,r,\xhat,x}}}\,.
\end{align*}
We note that $\Dtv{\normal{\gds(x_+),\frac12\etabar\Id}}{\normal{\gd(x_+),\frac12\etabar\Id}}^2\leq 2\etabar\, \nrm{\scfs_{k+1}(x_+)-\scf_{k+1}(x_+)}^2$, and hence
\begin{align*}
    \En_{x\sim \rhostar_k(\cdot\mid x_+)}[h(x\mid x_+)^2]
    \leq 4\etabar\, \nrm{\scfs_{k+1}(x_+)-\scf_{k+1}(x_+)}^2
    +2\En_{x\sim \rhostar_k(\cdot\mid x_+)}\Enxps\pclip[2B]{W_{z,r,\xhat,x}-\Wstar_{z,r,\xhat,x}}^2\,.
\end{align*}
Next, we note that
\begin{align*}
    &~ \En_{x\sim \rhostar_k(\cdot\mid x_+)}\Enxps\pclip[2B]{W_{z,r,\xhat,x}-\Wstar_{z,r,\xhat,x}}^2 \\
    \leq &~ e^{4B}\En_{r\sim \unif([0,1])}\En_{x\sim \normal{\gds(x_+),\etabar\Id},\, \xhat\sim \normal{\gds(x_+),\frac12\etabar\Id},\,z\sim \Netabar} \\
    &\qquad\qquad\qquad\qquad\qquad{} \pclip[2B]{\sigma^{-2}\,\langle \dot\gamma_{z,r,\xhat}(x),\, [\Dn_k-\Ds_k](\gamma_{z,r,\xhat}(x)) - [\Dn_{k+1}-\Ds_{k+1}](x_+)\rangle}^2 \\
    \leq&~ 20e^{4B} \En_{x\sim \normal{\gds(x_+),\etabar\Id}} \min\crl*{\etabar\sigma^{-4}\,\nrm{ [\Dn_k-\Ds_k](x) - [\Dn_{k+1}-\Ds_{k+1}](x_+)}^2,4B^2} \\
    \leq&~ 20e^{8B} \En_{x\sim \rhostar_k(\cdot\mid x_+)} \min\crl*{\etabar\sigma^{-4}\,\nrm{ [\Dn_k-\Ds_k](x) - [\Dn_{k+1}-\Ds_{k+1}](x_+)}^2,4B^2}\,,
\end{align*}
where the second line uses the \cref{lem:ind-Gaussian} that for any fixed $r\in[0,1]$, under the distribution of consideration, $(\dot\gamma_{z,r,\xhat}(x),\gamma_{z,r,\xhat}(x))$ are independent Gaussian such that $\gamma_{z,r,\xhat}(x)\sim \normal{\gds(x_+),\etabar\Id}$ and
\begin{align*}
    \dot\gamma_{z,r,\xhat}(x)\sim \normal{0,c_r\etabar\Id}\,.
\end{align*}
Further, we can apply \cref{lem:Hels-var} to get
\begin{align*}
    &~\En_{x\sim \rhostar_k(\cdot\mid x_+)} \min\crl*{\scedit{\etabar\sigma^{-4}}\, \nrm{ [\Dn_k-\Ds_k](x) - [\Dn_{k+1}-\Ds_{k+1}](x_+)}^2,4B^2} \\
    \leq&~ 3\En_{x\sim \rho_k(\cdot\mid x_+)} \min\crl*{\etabar\sigma^{-4}\,\nrm{ [\Dn_k-\Ds_k](x) - [\Dn_{k+1}-\Ds_{k+1}](x_+)}^2,4B^2}+8B^2\Dkl{\rho_k(\cdot\mid x_+)}{\rhostar_k(\cdot\mid x_+)}\,.
\end{align*}
Combining the inequalities above immediately implies the desired upper bound.
\qed

\subsection{\pfref{prop:intrinsic-chi-sq}}

Fix any $\ell\geq 2$, $x_+\in\RR^d$.
\scedit{By~\cref{lem:fors_clip},}
\begin{align*}
    &\Dsys[\ell]{\rho_k(\cdot\mid x_+)}{\rhostar_k(\cdot\mid x_+)}
    \leq E_{\ell,x_+} \\
    &\qquad \deq e^{4B}\En_{r\sim \unif([0,1])} \En_{x\sim \normal{\gds(x_+),\etabar\Id},\, \xhat\sim \normal{\gds(x_+),\frac12\etabar\Id},\,z\sim \Netabar}[e^{2\ell(\abs{\Wstar_{z,r,\xhat,x}}-B)_+}-1] \\
    &\qquad \leq e^{4B-2\ell B} \En_{x'\sim \normal{\gds(x_+),\etabar\Id},\, w\sim \normal{0, c_r\etabar\Id}} e^{2\ell \abs{\tri{w,\Ds_k(x')-\Ds_{k+1}(x_+)}}} \\
    &\qquad \leq 2e^{4B-2\ell B} \En_{x\sim \normal{\gds(x_+),\etabar\Id}}\exp\prn*{ 6\etabar\ell^2 \nrm{\Ds_k(x)-\Ds_{k+1}(x_+)}^2}\,.
\end{align*}
Now, we can proceed
\begin{align*}
    E_{\ell,x_+}
    \leq&~  2e^{8B-2\ell B} \En_{x\sim \rhostar_k(\cdot\mid x_+)}\exp\prn*{ 20\etabar\ell^2 \nrm{\Ds_k(x)-\Ds_{k+1}(x_+)}^2} \\
    \leq&~ 2e^{8B-2\ell B} \sqrt{\prn*{1+\Dchis{\rho_k(\cdot\mid x_+)}{\rhostar_k(\cdot\mid x_+)}} \En_{x\sim \rho_k(\cdot\mid x_+)}\exp\prn*{ 12\etabar\ell^2 \nrm{\Ds_k(x)-\Ds_{k+1}(x_+)}^2}},
\end{align*}
and we also note that $\Dchis{\rho_k(\cdot\mid x_+)}{\rhostar_k(\cdot\mid x_+)}\leq E_{2,x_+}\leq E_{\ell,x_+}$. This immediately gives $E_{\ell,x_+}\leq \sqrt{E_{\ell,x_+}'}+E_{\ell,x_+}'$, where we define
\begin{align*}
    E_{\ell,x_+}'\deq 8e^{16B-4\ell B}\En_{x\sim \rho_k(\cdot\mid x_+)}\exp\prn*{ 12\etabar\ell^2 \nrm{\Ds_k(x)-\Ds_{k+1}(x_+)}^2}.
\end{align*}
By definition, we know
\begin{align*}
    \En_{x_+\sim p_{k+1}}[E_{\ell, x_+}']
    =8e^{16B-4\ell B}\En_{x\sim p, x_+\sim \normal{x,\eta\Id}}\exp\prn*{ 12\etabar\ell^2 \nrm{\Ds_k(x)-\Ds_{k+1}(x_+)}^2}.
\end{align*}
By \cref{cor:MGF-intrinsic}, we have the following bound with an absolute constant $C>0$:
\begin{align*}
    \En_{x\sim p, x_+\sim \normal{x,\eta\Id}} \exp\prn*{ \frac{\nrm{\Ds_k(x)-\Dn_{k+1}(x_+)}^2}{C\sigma_{k+1}^2} }
    \leq e^{\dbar}.
\end{align*}
Therefore, as long as $\ell^2\leq \frac{\sigma_{k+1}^2}{12C\etabar}$, we can upper bound 
\begin{align*}
    \En_{x_+\sim p_{k+1}}[E_{\ell, x_+}']
    \leq 8e^{16B}\exp\prn*{ -4\ell B + 12C\etabar \ell^2 \dbar/\sigma_{k+1}^2 }.
\end{align*}
We then choose $\ell=\min\crl{ \frac{\sigma_{k+1}}{\sqrt{12C\etabar}}, \frac{B\sigma_{k+1}^2}{6C \etabar \dbar} }$. Note that as long as $\ell\geq 8+\frac{3+\log(1/\delta)}{B}$, we have shown that $\En_{x_+\sim p_{k+1}}[E_{\ell, x_+}']\leq 8e^{16B-2B \ell}\leq 1$ and hence
\begin{align*}
    \En_{x_+\sim p_{k+1}}\Dchis{\rho_k(\cdot\mid x_+)}{\rhostar_k(\cdot\mid x_+)}
    \leq 8e^{8B-B\ell}\leq \delta.
\end{align*}
\qed

\begin{corollary}\label{cor:DDPM-chi-sq}
There is a constant $c>0$ such that as long as $c\etabar\leq \sigma_{k+1}^2$,
\begin{align*}
    \En_{x_+\sim p_{k+1}}\Dsys[2]{\rho_k(\cdot\mid x_+)}{\normal{\gds(x_+),\etabar\Id}}\leq ce^{c\dstar},
\end{align*}
\end{corollary}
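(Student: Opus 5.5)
We will bound $\Dsys[2]{\rho_k(\cdot\mid x_+)}{\normal{\gds(x_+),\etabar\Id}}$ by writing both distributions as tilts of a common Gaussian and then applying \cref{lem:Renyi-to-diff}. Let $\pi_{x_+}\deq\normal{\gds(x_+),\etabar\Id}$. Following the display preceding \cref{lem:ind-Gaussian}, we have $\log\rho_k(x\mid x_+)=\const_{x_+}+\Enxps\Wstar_{z,r,\xhat,x}-\frac{\nrm{x-\gds(x_+)}^2}{2\etabar}$. Hence $\rho_k(\cdot\mid x_+)=\mu_f$ and $\pi_{x_+}=\mu_g$, where $f-g=-\Enxps\Wstar_{z,r,\xhat,x}$ up to constants. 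Since \cref{lem:Renyi-to-diff} is symmetric in its conclusion, we may integrate under $\pi_{x_+}$. Applying it with $\ell=2$ and Jensen (convexity of $w\mapsto e^w$), we get
\[
\Dsys[2]{\rho_k(\cdot\mid x_+)}{\pi_{x_+}}\le \En_{x\sim\pi_{x_+}}\Enxps\bigl[e^{4\abs{\Wstar_{z,r,\xhat,x}}}\bigr].
\]
This is the unclipped analogue of the first display in the proof of \cref{prop:intrinsic-chi-sq}.

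Next, we use \cref{lem:ind-Gaussian}. For fixed $r$, the vectors $\gamma\sim\pi_{x_+}$ and $\dot\gamma\sim\normal{0,c_r\etabar\Id}$ are independent. Applying the first part of \cref{lem:Gaussian-vMGF} to the inner product therefore gives
\[
\En\, e^{4\abs{\Wstar}}\le 2\En_{x\sim\pi_{x_+}}\exp\bigl(C\etabar\sigma^{-4}\nrm{\Ds_k(x)-\Ds_{k+1}(x_+)}^2\bigr).
\]
This matches the step leading to $E_{\ell,x_+}$ with $\ell=2$.

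The remaining expectation is taken under the proposal $\pi_{x_+}$, not under $\rho_k$, which is where the main difficulty lies. We handle it by the same self-bounding trick as in the proof of \cref{prop:intrinsic-chi-sq}. Apply Cauchy--Schwarz to change measure:
\[
\En_{\pi}F\le\sqrt{(1+\Dchis{\pi}{\rho_k})\,\En_{\rho_k}F^2}.
\]
The factor $\Dchis{\pi}{\rho_k}$ is itself at most the quantity $E$ we are bounding. Writing $E'\deq\En_{\rho_k}F^2$, this yields $E\le 2\sqrt{(1+E)E'}$, and hence $E\lesssim 1+E'$.

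Taking expectation over $x_+\sim p_{k+1}$, the pair $(x,x_+)$ with $x\sim\rho_k(\cdot\mid x_+)$ has the law of $(X_k,X_{k+1})$. Part (3) of \cref{cor:MGF-intrinsic} then gives $\En\exp\bigl(\nrm{\Ds_k(X_k)-\Ds_{k+1}(X_{k+1})}^2/(C\sigma_{k+1}^2)\bigr)\le e^{\dstar}$. Here we use the $\sigma^{-4}$ scaling $\etabar\sigma^{-4}\cdot\sigma^4=\etabar$; the $\sigma^{-2}$ in $\Wstar$ corresponds to $\lambda_k$. The condition $c\etabar\le\sigma_{k+1}^2$ makes the exponent coefficient admissible, via Jensen on the power.

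One technical point needs care: the square-root step must be done pointwise in $x_+$ before averaging, and the averaging then uses concavity of the square root. The final bound takes the form $ce^{c\dstar}$. If we bounded the $e^{\dstar}$-type moments directly, without the loss from squaring, the exponent constants would instead be absorbed into $c$.
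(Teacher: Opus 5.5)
Your argument is the paper's own: \cref{lem:Renyi-to-diff} (equivalently \cref{lem:fors_clip} with $B=0$, $\ell=2$) under the proposal, then \cref{lem:ind-Gaussian} with the Gaussian MGF bound, then a Cauchy--Schwarz change of measure to $\rho_k$ closed by self-bounding, and finally \cref{cor:MGF-intrinsic}(3) on $(X_k,X_{k+1})$. In two places you are more careful than the paper. You keep the factor $\sigma^{-2}$ in $\Wstar$; the paper's exponent $48\etabar\nrm{\Ds_k(x)-\Ds_{k+1}(x_+)}^2$ drops it. You also use the correct direction $\Dchis{\pi_{x_+}}{\rho_k(\cdot\mid x_+)}\le\Dsys[2]{\rho_k(\cdot\mid x_+)}{\pi_{x_+}}$ in Cauchy--Schwarz.

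The gap is the last step, where you assert that $c\etabar\le\sigma_{k+1}^2$ makes the exponent admissible. After Cauchy--Schwarz you need $\En\exp\bigl(2C\etabar\sigma_k^{-4}\nrm{\Ds_k(X_k)-\Ds_{k+1}(X_{k+1})}^2\bigr)\lesssim e^{O(\dstar)}$. \cref{cor:MGF-intrinsic}(3) with $\tau=\sigma_{k+1}^2$ gives this only when $2C\etabar\sigma_k^{-4}\le 1/(C'\sigma_{k+1}^2)$, i.e.\ $\etabar\,\sigma_{k+1}^2\lesssim\sigma_k^4$. With $\alpha_k=1$ we have $\sigma_{k+1}^2=\sigma_k^2+\eta_k$, hence $\etabar\,\sigma_{k+1}^2=\eta_k\sigma_k^2$. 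So what you actually need is $\eta_k\lesssim\sigma_k^2$. Nothing cancels the $\sigma_k^{-4}$; the identity $\etabar\sigma^{-4}\cdot\sigma^4=\etabar$ plays no role. Jensen on the power can only lower the coefficient, never raise it.

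The stated hypothesis does not imply $\eta_k\lesssim\sigma_k^2$: it also holds whenever $\eta_k\ge c\,\sigma_k^2$, and in that regime the statement as written is false. Take $d=1$, $\pdata=\frac12(\delta_a+\delta_{-a})$ (so $\dstar=1$), $\sigma_k^2=1$ and $\eta_k=a^2$. For typical $x_+$, $\rho_k(\cdot\mid x_+)$ is a mixture of two $\normal{\cdot,\etabar}$ components with weights bounded away from $0$ and means about $2a$ apart. By contrast, $\pi_{x_+}$ is a single $\normal{\cdot,\etabar}$ centered at the mixture mean. So the $\chi^2$ divergence is $e^{\Omega(a^2)}$, which is unbounded as $a\to\infty$, while $c\etabar\le\sigma_{k+1}^2$ holds as soon as $a^2\ge c$.

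The paper's proof reaches its hypothesis only through a scaling slip: it drops the $\sigma^{-2}$ in $\Wstar$ and effectively identifies $\sigma_k^2$ with $\sigma_{k+1}^2$. That slip is harmless exactly when $\eta_k\lesssim\sigma_k^2$. The fix is to assume $c\,\eta_k\le\sigma_k^2$. This holds wherever the corollary is invoked (under the step-size condition of \cref{prop:sqrt-d-simple}) and implies $c\etabar\le\sigma_{k+1}^2$. For $\pdata$ with finite second moment, it also guarantees a priori that $E<\infty$, which your implication $E\le 2\sqrt{(1+E)E'}\Rightarrow E\lesssim 1+E'$ silently requires.
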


\begin{proof}
Note that when $B=0$, we have $\rhostar(\cdot\mid x_+)=\normal{\gds(x_+),\etabar\Id}$. Therefore, from our proof of \cref{prop:intrinsic-chi-sq} above, we can extract the following fact (by setting $B=0$ and $\ell=2$):
\begin{align*}
    \Dsys[2]{\rho_k(\cdot\mid x_+)}{\normal{\gds(x_+),\etabar\Id}}\leq \sqrt{E_{x_+}}+E_{x_+},
\end{align*}
where we define 
\begin{align*}
    E_{x_+}\deq 8\En_{x\sim \rho_k(\cdot\mid x_+)}\exp\prn*{ 48\etabar\nrm{\Ds_k(x)-\Ds_{k+1}(x_+)}^2}.
\end{align*}
Therefore, as long as $\etabar\leq \frac{\sigma_{k+1}^2}{48C}$, $\En_{x_+\sim p_{k+1}}[E_{x_+}]\leq 8e^{\dbar}$. This is the desired upper bound.
\end{proof}

\subsection{\pfref{prop:sqrt-d-simple}}

In the following, to make the presentation clearer, we write
\begin{align*}
    \Wstar_{z,r,\xhat,x_+}(x)=\sigma^{-2}\tri{\dot\gamma_{z,r,\xhat}(x),\Ds_k(\gamma_{z,r,\xhat}(x))-\Ds_{k+1}(x_+)}\,.
\end{align*}
Then,
\begin{align*}
    \log \rho_k(x\mid x_+)-\log \rhostar_k(x\mid x_+)=\const+\En_{\lr,z}\trunc[\B]{\Wstar_{z,r,\xhat,x_+}(x)}\,.
\end{align*}
Therefore,
\begin{align*}
    \nabla_x \log \frac{\rho_k(x\mid x_+)}{\rhostar_k(x\mid x_+)}
    =&~ \En_{\lr,z}\brk*{ \nabla_x \Wstar_{z,r,\xhat,x_+}(x) \cdot h_B(\Wstar_{z,r,\xhat,x_+}(x))}\,, 
\end{align*}
where
\begin{align*}
h_B(y)=\begin{cases}
    1\,, & y>B\,,\\
    0\,, &y\in[-B,B]\,,\\
    -1\,, &y<-B\,,
\end{cases}
\end{align*}
is the derivative of $\trunc{y}$.
By elementary calculation,
\begin{align*}
    \sigma^2 \nabla \Wstar_{z,r,\xhat,x_+}(x)=\alrp\, (\Ds_k(\gamma_{z,r,\xhat}(x))-\Ds_{k+1}(x_+))+\alr\, \nabla \Ds_k(\gamma_{z,r,\xhat}(x)) \cdot \dot\gamma_{z,r,\xhat}(x)\,.
\end{align*}

By~\cref{lem:LSI-SC} and \cref{lem:HS-perturb}, we know that
$\CLSI(\rhostar_k(\cdot\mid x_+))\leq \etabar e^{4B}$. Hence,
\begin{align*}
    \Dkl{\rho_k(\cdot\mid x_+)}{\rhostar_k(\cdot\mid x_+)}\leqsim \eta\En_{x\sim \rho_k(\cdot\mid x_+)} \nrm[\Big]{\nabla \log \frac{\rho_k(x\mid x_+)}{\rhostar_k(x\mid x_+)}}^2\,.
\end{align*}
Taking expectation over $x_+ \sim p_{k+1}$ gives
\begin{align*}
    &\En_{x_+ \sim p_{k+1}} \Dkl{\rho_k(\cdot\mid x_+)}{\rhostar_k(\cdot\mid x_+)} \\
    &\qquad \leqsim \eta \En_{r\sim \unif([0,1])}\En_{z,r,x,x_+,\xhat}\brk[\big]{ \nrm{\nabla \Wstar_{z,r,\xhat,x_+}(x)}^2 \cdot \indic\crl{\abs{\Wstar_{z,r,\xhat,x_+}(x)}>\B} }\,, 
\end{align*}
where $\En_{x_+,x,z,\xhat}$ is the expectation over $x_+\sim p_{k+1}$, $x\sim \rho_k(\cdot\mid x_+)$, $\xhat\sim \normal{\gds(x_+),\frac12\etabar\Id}$, $z\sim \normal{0,\frac12\etabar\Id}$.

Fix any $r\in[0,1]$.
In \cref{lem:sub-exponential-bounds}, we show that under the distribution of $(z,x,x_+,\xhat)$,
\begin{itemize}
    \item the sub-exponential norm of $\nrm{\Ds_k(\gamma_{z,r,\xhat}(x))-\Ds_{k+1}(x_+)}^2$ is bounded by $O(\dbar\sigma^2)$,
    \item and the sub-exponential norm of $ \nrm{\nabla \Ds_k(\gamma_{z,r,\xhat}(x))\,\dot\gamma_{z,r,\xhat}(x)}^{2/3}$ is bounded by $O(\etabar^{1/3}\dbar)$.
\end{itemize}
Therefore, for a sufficiently large constant $C_1$, we can set $M_1=C_1\sigma^{-4}\,(\dbar\sigma^2\log^2(1/\delta)+\dbar^{3}\etabar\log^3(1/\delta))$ and bound
\begin{align*}
    \En (\nrm{\nabla \Wstar_{z,r,\xhat,x_+}(x)}^2-M_1)_+ \leq M_1\delta\,.
\end{align*}
Therefore, it holds that
\begin{align*}
    \En_{x_+\sim p_{k+1}} \Dkl{\rho_k(\cdot\mid x_+)}{\rhostar_k(\cdot\mid x_+)} 
    \leqsim &~ \eta M_1\,\prn[\big]{\delta+\bbP_{x_+,x,z,\xhat}\prn[\big]{\abs{\Wstar_{z,r,\xhat,x_+}(x)}>\B}} \\
    \leqsim &~ \dstar^3\,\prn[\big]{\delta+\bbP_{x_+,x,z,\xhat}\prn[\big]{\abs{\Wstar_{z,r,\xhat,x_+}(x)}>\B}}\,.
\end{align*}

Next, we bound $\bbP_{x_+,x,z,\xhat}\prn*{\abs{\Wstar_{z,r,\xhat,x_+}(x)}>\B}$. Using \cref{lem:Dcov}, we know that for any event $E$,
\begin{align*}
    \bbP_{x\sim \rho_k(\cdot\mid x_+)}(E)\leq  e\bbP_{x\sim \normal{\gds(x_+), \etabar\Id}}(E)+\Dcov[e]{\rho_k(\cdot\mid x_+)}{\normal{\gds(x_+), \etabar\Id}}.
\end{align*}
In the following, we denote $\eps_{x_+}\deq \Dcov[e]{\rho_k(\cdot\mid x_+)}{\normal{\gds(x_+), \etabar\Id}}$. Note that by \cref{cor:DDPM-coverage} and \cref{asmp:Lip-Frob}, we can bound
\begin{align}
    \En_{x_+\sim p_{k+1}}[\eps_{x_+}]\leqsim \dstar^2(\delta+\max_{t\in[\sigma_k^2,\sigma_{k+1}^2]}\PP_{Y_t\sim q_t}\prn*{\nrmF{\nabla m_t(Y_t)}\geq \LF}).
\end{align}

Hence, we can bound
\begin{align*}
    &~ \bbP_{x_+,x,z,\xhat}\prn[\big]{\abs{\Wstar_{z,r,\xhat,x_+}(x)}>\B} \\
    =&~ \En_{x_+\sim p_{k+1},\,\xhat\sim \normal{\gds(x_+),\frac12\etabar\Id},\, z\sim \normal{0,\frac12\etabar\Id}} \bbP_{x\sim \rho_k(\cdot\mid x_+)}(\abs{\Wstar_{z,r,\xhat,x_+}(x)}>\B) \\
    \leqsim&~  \En_{x_+\sim p_{k+1}} \bbP_{x\sim \normal{\gds(x_+), \etabar\Id}, \,\xhat\sim \normal{\gds(x_+),\frac12\etabar\Id},\, z\sim \normal{0,\frac12\etabar\Id}}(\abs{\Wstar_{z,r,\xhat,x_+}(x)}>\B)+\En_{x_+\sim p_{k+1}}[\eps_{x_+}] \\
    =&~\En_{x_+\sim p_{k+1}} \bbP_{x'\sim \normal{\gds(x_+), \etabar\Id},\, w\sim \normal{0,c\etabar\Id}}(\sigma^{-2}\abs{\tri{w,\Ds_k(x')-\Ds_{k+1}(x_+)}}\geq \B)+\En_{x_+\sim p_{k+1}}[\eps_{x_+}]\,,
\end{align*}
where the last line uses \cref{lem:ind-Gaussian}. Next, by Gaussian concentration, we know that under $w\sim \normal{0,c\etabar\Id}$, it holds that 
\begin{align*}
    \PP_w(\abs{\tri{w,\Ds_k(x)-\Ds_{k+1}(x_+)}}\geq \sqrt{2c\etabar\log(1/\delta)}\nrm{\Ds_k(x)-\Ds_{k+1}(x_+)})\leq 2\delta\,.
\end{align*}
Therefore, we can denote $M_2=8\sigma^{-2}B^{-1}\sqrt{\etabar\log(1/\delta)}$ and bound
\begin{align*}
    &~ \bbP_{x_+,x,z,\xhat}\prn[\big]{\abs{\Wstar_{z,r,\xhat,x_+}(x)}>\B} \\
    \leqsim &~ \PP_{x_+\sim p_{k+1},\, x\sim \normal{\gds(x_+), \etabar\Id}}\prn*{M_2\nrm{\Ds_k(x)-\Ds_{k+1}(x_+)}\geq 1}+\delta+\En_{x_+\sim p_{k+1}}[\eps_{x_+}] \\
    \leq&~ \PP_{x_+\sim p_{k+1},\, x\sim \normal{\gds(x_+), \etabar\Id},\, x'\sim \rho_k(\cdot\mid x_+)}\prn*{M_2\nrm{\Ds_k(x)-\Ds_k(x')}+M_2\nrm{\Ds_k(x')-\Ds_{k+1}(x_+)}\geq 1}\\
    &~+\delta+\En_{x_+\sim p_{k+1}}[\eps_{x_+}] \\
    \leq&~ \PP_{x_+\sim p_{k+1},\, x\sim \normal{\gds(x_+), \etabar\Id},\, x'\sim \rho_k(\cdot\mid x_+)}\prn*{2M_2\nrm{\Ds_k(x)-\Ds_k(x')}\geq 1} \\
    &~+\PP_{x_+\sim p_{k+1},\, x'\sim \rho_k(\cdot\mid x_+)}\prn*{2M_2\nrm{\Ds_k(x')-\Ds_{k+1}(x_+)}\geq 1}+\delta+\En_{x_+\sim p_{k+1}}[\eps_{x_+}]\,,
\end{align*}
where the second line uses the triangle inequality. 

For the second probability, we can apply \cref{cor:DDPM-coverage} to show (note that under \cref{eq:DM-Lip-Frob-eta} we can guarantee $\frac{1}{(2M_2)^2}\geq 8\LF^2\eta\log(e/\delta)$) that
\begin{align*}
    \PP_{x_+\sim p_{k+1}, x'\sim \rho_k(\cdot\mid x_+)}\prn*{2M_2\nrm{\Ds_k(x')-\Ds_{k+1}(x_+)}\geq 1}\leqsim \dstar^2(\delta+\max_{t\in[\sigma_k^2,\sigma_{k+1}^2]}\PP_{Y_t\sim q_t}\prn*{\nrmF{\nabla m_t(Y_t)}\geq \LF}). 
\end{align*}

For the first probability, we can apply change-of-measure (\cref{lem:Dcov}) again to get
\begin{align*}
    &~\PP_{x\sim \normal{\gds(x_+), \etabar\Id}, x'\sim \rho_k(\cdot\mid x_+)}\prn*{2M_2\nrm{\Ds_k(x)-\Ds_k(x')}\geq 1} \\
    \leq&~ e\PP_{x\sim \normal{\gds(x_+), \etabar\Id}, x'\sim \normal{\gds(x_+), \etabar\Id}}\prn*{2M_2\nrm{\Ds_k(x)-\Ds_k(x')}\geq 1}+\eps_{x_+}.
\end{align*}
Now, we express (using \eqref{eq:path-integral})
\begin{align*}
    \Ds_k(x)-\Ds_k(x')=\frac{\pi}{2}\int_0^1 \nabla \Ds_k(\sin(\pi r/2) x+\cos(\pi r/2) x')\cdot (\cos(\pi r/2) x-\sin(\pi r/2) x') dr.
\end{align*}
Note that with independent $x,x'\sim \normal{\gds(x_+), \etabar\Id}$, the random variable $(\sin(\pi r/2) x+\cos(\pi r/2) x',\cos(\pi r/2) x-\sin(\pi r/2) x')$ are independent Gaussian with marginal distribution $\normal{\gds(x_+), \etabar\Id}$ and $\normal{0, \etabar\Id}$, respectively. Therefore, using Markov's inequality, we bound
\begin{align*}
    \PP_{x,x'\sim \normal{\gds(x_+), \etabar\Id}}\prn*{2M_2\nrm{\Ds_k(x)-\Ds_k(x')}\geq 1}
    \leq &~\En_{x,x'\sim \normal{\gds(x_+), \etabar\Id}}\prn*{2M_2\nrm{\Ds_k(x)-\Ds_k(x')}-1}_+ \\
    \leq&~ \En_{x''\sim \normal{\gds(x_+), \etabar\Id}, w'\sim \normal{0, \etabar\Id}}\prn*{\pi M_2\nrm{\nabla \Ds_k(x'')w'}-1}_+.
\end{align*}
Further, we know the sub-exponential norm of $\nrm{\nabla \Ds_k(x)w'}^{2/3}$ is bounded by $O(\dbar \etabar^{1/3})$ (by the third inequality of \cref{lem:sub-exponential-bounds} with $r=0$). Therefore, by \cref{lem:sub-exp-trunc}, we can choose $M_3=C_3\sqrt{\dbar^3 \etabar \log^3(1/\delta)}$, and it holds that
\begin{align*}
    &~\En_{x_+\sim p_{k+1}, x\sim \normal{\gds(x_+), \etabar\Id}, w'\sim \normal{0, \etabar\Id}}\prn*{\pi M_2\nrm{\nabla \Ds_k(x)w'}-1}_+ \\
    \leqsim&~ M_2 M_3\prn*{ \delta+ \PP_{x_+\sim p_{k+1}, x'\sim \normal{\gds(x_+), \etabar\Id}, w'\sim \normal{0, \etabar\Id}}\prn*{\pi M_2\nrm{\nabla \Ds_k(x')w'}\geq 1} }.
\end{align*}
Note that $\PP_{w'\sim \normal{0, \etabar\Id}}\prn*{ \nrm{\nabla \Ds_k(x')w'}\geq 2\nrmF{\nabla \Ds_k(x')}\sqrt{\etabar\log(1/\delta)} }\leqsim \delta$. Therefore, combining the inequalities above, we get
\begin{align*}
    \En_{x_+\sim p_{k+1}} \Dkl{\rho_k(\cdot\mid x_+)}{\rhostar_k(\cdot\mid x_+)} 
    \leqsim
    &~\dstar^5\delta +\dstar^5\cdot \max_{t\in[\sigma_k^2,\sigma_{k+1}^2]}\PP_{Y_t\sim q_t}\prn*{\nrmF{\nabla m_t(Y_t)}\geq \LF}\\
    &~+\dstar^5\cdot \PP_{x_+\sim p_{k+1}, x'\sim \normal{\gds(x_+), \etabar\Id}}\prn*{\nrmF{\nabla \Ds_k(x')}\geq \LF}.
\end{align*}
Note that the distribution of $x'$ under $x_+\sim p_{k+1}, x'\sim \normal{\gds(x_+), \etabar\Id}$ is exactly $\wt{p}_k$. 
This is the desired upper bound.
\qed

\begin{lemma}\label{lem:sub-exponential-bounds}
Fix any $r\in[0,1]$. We consider the joint distribution of $x_+\sim p_{k+1}$, $x\sim \rho_k(\cdot\mid x_+)$, $\xhat\sim \normal{\gds(x_+),\frac12\etabar\Id}$, $z\sim \normal{0,\frac12\etabar\Id}$. Then the following holds for an absolute constant $c>0$:
\begin{align*}
    \En\exp\prn[\big]{ c\nrm{\Ds_k(\gamma_{z,r,\xhat}(x))-\Ds_{k+1}(x_+)}^2 }\leq e^{\dbar}, \qquad
    \En\exp\prn[\big]{ c\etabar ^{-1/3}\nrm{\nabla \Ds_k(\gamma_{z,r,\xhat}(x)) \dot \gamma_{z,r,\xhat}(x)}^{2/3} }\leq e^{\dbar}.
\end{align*}
Further, for $w\sim \normal{0,\etabar\Id}$ and $x'\sim \normal{\gds(x_+),\etabar\Id}$, it holds that
\begin{align*}
    \En\exp\prn*{ c\etabar ^{-1/3}\nrm{\nabla \Ds_k(x')w}^{2/3} }\leq e^{\dbar}.
\end{align*}
\end{lemma}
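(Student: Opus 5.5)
The plan is to reduce every expectation to one under the \emph{Gaussian} reference measure $\normal{\gds(x_+),\etabar\Id}$. There \cref{lem:ind-Gaussian} decouples $\gamma_{z,r,\xhat}(x)$ from $\dot\gamma_{z,r,\xhat}(x)$. We then return to the true pair $(x_+,x)\sim(X_{k+1},X_k)$, where \cref{cor:MGF-intrinsic} applies.

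Throughout, the constant $c$ in the first bound is understood with the natural normalization $c\,\sigma_{k+1}^{-2}$; the second bound is already scale-free through $\etabar^{-1/3}$. We assume $\etabar\le\sigma_{k+1}^2/C$, which is guaranteed by the step-size conditions. Since $\dstar\ge1$ and $\En e^{cX/m}\le(\En e^{cX})^{1/m}$ by Jensen, any bound of the form $C'e^{C'\dstar}$ can be turned into $e^{\dstar}$ by shrinking $c$. So it suffices to prove $e^{O(\dstar)}$ bounds.

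\textbf{Step 1 (change of measure).} For a nonnegative $F$, Cauchy--Schwarz gives
\[
\En_{x\sim\rho_k(\cdot\mid x_+)}e^{F(x)}\le\sqrt{\bigl(1+\Dren[2]{\rho_k(\cdot\mid x_+)}{\normal{\gds(x_+),\etabar\Id}}\bigr)\,\En_{\normal{\gds(x_+),\etabar\Id}}e^{2F}}.
\]
The same inequality holds with the two measures swapped, and $\Dsys[2]{\cdot}{\cdot}$ controls both directions. Taking $\En_{x_+}$, applying Cauchy--Schwarz once more, and using \cref{cor:DDPM-chi-sq} (which gives $\En_{x_+}\Dsys[2]{}{}\le ce^{c\dstar}$), each change of measure costs only a factor $e^{O(\dstar)}$ and a doubling of the exponent.

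\textbf{Step 2 (first bound).} Move $x$ from $\rho_k$ to the Gaussian. By \cref{lem:ind-Gaussian}, $\gamma_{z,r,\xhat}(x)\sim\normal{\gds(x_+),\etabar\Id}$, so the quantity equals $\En\exp\bigl(2c\,\sigma_{k+1}^{-2}\nrm{\Ds_k(x')-\Ds_{k+1}(x_+)}^2\bigr)$ with $x'\sim\normal{\gds(x_+),\etabar\Id}$. Apply Step 1 in the reverse direction to replace $x'$ by $x'\sim\rho_k(\cdot\mid x_+)$. Then $(x_+,x')\eqd(X_{k+1},X_k)$, and in the variance-exploding normalization $\Ds_k=m_{\sigma_k^2}$ and $\Ds_{k+1}=m_{\sigma_{k+1}^2}$. \cref{cor:MGF-intrinsic}(3) with $s=\sigma_k^2\le\tau=\sigma_{k+1}^2$ now gives the bound $e^{\dim_\tau}\le e^{O(\dstar)}$ for small $c$, using that $\dim_\tau$ is nondecreasing in the variance.

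\textbf{Step 3 (second and third bounds).} After moving $x$ to the Gaussian, \cref{lem:ind-Gaussian} shows it suffices to treat $A\deq\nabla\Ds_k(x')=\sigma_k^{-2}\cov_{\sigma_k^2}(x')\succeq0$ with $w\sim\normal{0,c_r\etabar\Id}$ independent of $x'\sim\normal{\gds(x_+),\etabar\Id}$. This is exactly the third bound, which therefore needs no change of measure for $x$ itself.

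Set $a\deq\tr A$ and $g\deq\nrm{Aw}/(\sqrt{\etabar}\,a)$. Young's inequality gives $\etabar^{-1/3}\nrm{Aw}^{2/3}=a^{2/3}g^{2/3}\le\frac23a+\frac13g^2$. Conditionally on $x'$, \cref{lem:Gaussian-vMGF} gives $\En_w e^{\lambda\nrm{Aw}^2}\le e^{O(\lambda\etabar\tr A^2)}$ for $\lambda\etabar\nrmop{A}^2\lesssim1$. Combined with $\tr A^2\le a^2$, this bounds $\En_w e^{c g^2}$ by an absolute constant. It remains to bound $\En e^{c\,\tr(A)}$ for $x'$ Gaussian. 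For this, change measure once more (Step 1) to $x'\sim\rho_k$, i.e.\ $x'\sim p_k$, and apply \cref{cor:MGF-intrinsic}(1): $\En e^{\tr(\cov_\tau)/(C\tau)}\le e^{\dstar}$.

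\textbf{Main obstacle.} The delicate point is the second-moment-type estimate of Step 3. The naive bound $\nrm{Aw}\le\nrmop{A}\nrm{w}$ would cost $\sqrt d$; avoiding it requires the conditional Gaussian-chaos bound together with the Young splitting at the $2/3$ exponent, so that only $\tr A$ (sub-exponential with norm $O(\dstar)$) enters. We also have to check that the repeated exponent doublings from the changes of measure stay within the range of $\lambda$ allowed by \cref{lem:Gaussian-vMGF} and \cref{cor:MGF-intrinsic}, which is ensured by choosing $c$ small enough.
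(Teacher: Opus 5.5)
Your proof is correct and has the same architecture as the paper's. You pass from the true joint law $P$ (with $x\sim\rho_k(\cdot\mid x_+)$) to the Gaussian reference $Q$ (with $x\sim\normal{\gds(x_+),\etabar\Id}$) by Cauchy--Schwarz and \cref{cor:DDPM-chi-sq}. You then use \cref{lem:ind-Gaussian} to replace $(\gamma_{z,r,\xhat}(x),\dot\gamma_{z,r,\xhat}(x))$ by an independent Gaussian pair, and return to $P$ to invoke \cref{cor:MGF-intrinsic}(1) and (3). Your $\sigma_{k+1}^{-2}$ normalization agrees with the paper's $\sigma_k^{-2}$ up to a constant, since $\eta_k\ll\sigma_k^2$.

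The genuine difference is how the $2/3$-power is turned into an exponential moment. The paper argues through tails. Conditionally on $A=\nabla\Ds_k(\gamma_{z,r,\xhat}(x))$, it uses $\nrm{Aw'}^2\lesssim\etabar(\nrmF{A}^2+\nrmop{A}^2\log(1/\delta))$ with probability $1-\delta$. It union-bounds this with $\tr A\lesssim\dstar+\log(1/\delta)$ to get $\nrm{Aw'}^2\lesssim\etabar(\dstar+\log(1/\delta))^2\log(1/\delta)$, and then integrates this tail to recover the MGF of $\etabar^{-1/3}\nrm{Aw'}^{2/3}$.

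You stay at the MGF level throughout. Young's inequality with exponents $(3/2,3)$ separates $\tr A$ from the normalized chaos $g^2=\nrm{Aw}^2/(\etabar(\tr A)^2)$. The conditional MGF of $g^2$ is an absolute constant because $\nrmop{A}\le\tr A$ and $\tr(A^2)\le(\tr A)^2$ for positive semidefinite $A$, so only $\En e^{c\tr A}$ remains.

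Both routes give the same sub-exponential bound. Yours is shorter, avoids the tail-integration step, and shows directly why $2/3$ is the right exponent: it balances a term linear in $\tr A$ against one quadratic in $g$. The paper's route instead makes the high-probability scale of $\nrm{Aw'}$ explicit.

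One small slip: $\dim_\tau$ is non-increasing, not nondecreasing, in the variance $\tau$, because the penalty $r^2/\tau$ shrinks as $\tau$ grows. That is the direction you need for $\dim_{\sigma_{k+1}^2}\le\dim_{\sigma_k^2}\le\dstar$.
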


\begin{proof}
By \cref{cor:MGF-intrinsic}, there is a constant $c_0>0$ such that
\begin{align*}
    \En_{x_+\sim p_{k+1}, x\sim \rho_k(\cdot\mid x_+)}\exp\prn*{ 4c_0\sigma^{-2}\nrm{\Ds_k(x')-\Ds_{k+1}(x_+)}^2 }\leq e^{\dbar}, \\
    \En_{x_+\sim p_{k+1}, x\sim \rho_k(\cdot\mid x_+)}\exp\prn*{ 4c_0\tr(\nabla \Ds_k(x)) }\leq e^{\dbar}.
\end{align*}

We consider the following distributions of $(x_+,x,\xhat,z)$:
\begin{align*}
    P:&~  x_+\sim p_{k+1}, \quad x\sim \rho_k(\cdot\mid x_+), \quad \xhat\sim \normal{\gds(x_+),\frac12\etabar\Id}, \quad z\sim \normal{0,\frac12\etabar\Id}, \\
    Q:&~  x_+\sim p_{k+1}, \quad x\sim \normal{\gds(x_+),\etabar\Id}, \quad \xhat\sim \normal{\gds(x_+),\frac12\etabar\Id}, \quad z\sim \normal{0,\frac12\etabar\Id}.
\end{align*}
Then, by \cref{cor:DDPM-chi-sq}, we know $\Dsys[2]{P}{Q}\leq e^{\dbar}$. Further, under $Q$, we can apply \cref{lem:ind-Gaussian} to get
\begin{align*}
    \En_Q\exp\prn*{ 2c_0\sigma^{-2}\nrm{\Ds_k(\gamma_{z,r,\xhat}(x))-\Ds_{k+1}(x_+)}^2 }
    =&~\En_{x_+\sim p_{k+1}, x'\sim \normal{\gds(x_+),\etabar\Id}}\exp\prn*{ 2c_0\sigma^{-2}\nrm{\Ds_k(x')-\Ds_{k+1}(x_+)}^2 } \\
    =&~ \En_Q\exp\prn*{ 2c_0\sigma^{-2}\nrm{\Ds_k(x)-\Ds_{k+1}(x_+)}^2 }.
\end{align*}
Therefore, by $\chi^2$-change-of-measure, we can show that
\begin{align*}
    &~\En_P\exp\prn*{ c_0\sigma^{-2}\nrm{\Ds_k(\gamma_{z,r,\xhat}(x))-\Ds_{k+1}(x_+)}^2 } \\
    \leq&~ \sqrt{(1+\Dsys[2]{P}{Q})\En_Q\exp\prn*{ 2c_0\sigma^{-2}\nrm{\Ds_k(\gamma_{z,r,\xhat}(x))-\Ds_{k+1}(x_+)}^2 } } \\
    \leq&~ \sqrt{(1+e^{\dbar})\En_Q\exp\prn*{ 2c_0\sigma^{-2}\nrm{\Ds_k(x)-\Ds_{k+1}(x_+)}^2 } } \\
    \leq&~ (1+e^{\dbar})^{3/4} \prn*{ \En_P\exp\prn*{ 4c_0\sigma^{-2}\nrm{\Ds_k(x)-\Ds_{k+1}(x_+)}^2 } }
    \leq 1+e^{\dbar}.
\end{align*}
A corollary of this argument is that we also have
\begin{align*}
    \En_Q\exp\prn*{c_0\tr(\nabla \Ds_k(\gamma_{z,r,\xhat}(x)))}\leq 1+e^{\dbar}.
\end{align*}
Then, denote $w'\deq \dot \gamma_{z,r,\xhat}(x)$, we know $w'\sim \normal{0,c\etabar\Id}$ is independent of $\gamma_{z,r,\xhat}(x)$ under $Q$.
Then, by \cref{lem:Gaussian-vMGF} we know that for any matrix $A$ and $\delta\in(0,1)$,
\begin{align*}
    \PP_Q\prn*{ \nrm{Aw'}^2\leq c_1\etabar(\nrmF{A}^2+\nrmop{A}^2\log(1/\delta)) }\leq \delta,
\end{align*}
where $c_1>0$ is an absolute constant. Therefore, by a union bound,
\begin{align*}
    \PP_Q\prn*{ \nrm{\nabla \Ds_k(\gamma_{z,r,\xhat}(x)) w'}^2\leq c_2\etabar(\dbar+\log(1/\delta))^2\log(1/\delta) }\leq 2\delta.
\end{align*}
By integration, this implies that there is an absolute constant $c_3>0$ such that
\begin{align*}
\En_Q \exp\prn*{ c_3\etabar ^{-1/3}\nrm{\nabla \Ds_k(\gamma_{z,r,\xhat}(x)) w'}^{2/3} }\leq e^{\dbar}.
\end{align*}
Applying change-of-measure again, we see
\begin{align*}
\En_P \exp\prn*{ \frac12 c_3\etabar ^{-1/3}\nrm{\nabla \Ds_k(\gamma_{z,r,\xhat}(x)) \dot \gamma_{z,r,\xhat}(x)}^{2/3} }\leq 1+e^{\dbar}.
\end{align*}
An analogous argument also shows 
\begin{align*}
\En_P \En_{x'\sim \normal{\gds(x_+),\etabar\Id}, w\sim \normal{0,\etabar\Id}} \exp\prn*{ c_4\etabar ^{-1/3}\nrm{\nabla \Ds_k(x') w}^{2/3} }\leq 1+e^{\dbar}.
\end{align*}
Combining the inequalities above completes the proof.
\end{proof}

\section{Log-concave sampling}\label{appdx:log-concave}

Here, we apply~\cref{thm:gaussian_tilt} at each step of \cref{alg:prox-sampler} to sample from the RGO distribution.
Note that \cref{thm:gaussian_tilt} requires the choice of a point $x_+$ which is close to $\text{prox}_{\eta f}(x_0)$.
For simplicity, we assume that this error is zero, which amounts to assuming that we have access to the proximal oracle for $f$.
When $f$ is $\beta$-smooth and $h\le 1/(2\beta)$, then computation of the proximal map is a convex optimization problem, and error analysis can be done as in~\citet{AltChe24Warm}.
Otherwise, one could generalize the analysis to consider obtaining a stationary point of the proximal map optimization as in~\citet{FanYuaChe23ImprovedProx}.

We recall the definitions of PI and LSI in \cref{def:PI} and \cref{def:LSI}.

\begin{theorem}
    Let $\lambda\geq 2$ be a fixed constant, and $\varepsilon\in(0,\frac12]$.
    Suppose that $\mu\propto e^{-f}$ and that $f$ satisfies \cref{ass:holder}.
    Choose
    \begin{align*}
        \eta^{-1} = C\,\prn[\big]{\beta_s^2 d^s\log(1/\varepsilon) + \beta_s^2 d^{-(1-s)} \log^2(1/\varepsilon)}^{1/(1+s)}
    \end{align*}
    for a sufficiently large universal constant $C\fcedit{=C_\lambda} > 0$.
    Let $\wh\mu$ denote the law of the output of \cref{alg:prox-sampler} initialized at $\mu_0$, where in each step the RGO is implemented by \cref{alg:fors} via \cref{thm:gaussian_tilt}.
    Then, the following holds. 
    \begin{enumerate}
        \item Suppose that $\mu$ satisfies a log-Sobolev inequality with constant $\CLSI(\mu) < \infty$, which can only hold if $s = 1$ (i.e., $f$ is smooth).
        Then, $\Ren{\wh\mu}{\mu} \le \varepsilon^2$ using at most
        \begin{align*}
            N = \wt O\prn[\Big]{\kappa d^{1/2} \log^{3/2} \frac{\Ren{\mu_0}{\mu}}{\varepsilon^2} + \kappa \log^2 \frac{\Ren{\mu_0}{\mu}}{\varepsilon^2}} \qquad\text{first-order queries in expectation}\,,
        \end{align*}
        where $\kappa \deq \CLSI(\mu)\, \beta_1$ is the condition number. %
        \item Suppose that $\mu$ satisfies a Poincar\'e inequality with constant $\CPI(\mu) < \infty$.
        Then, $\Dchis{\wh\mu}{\mu} \le \varepsilon^2$ using at most
        \begin{align*}
            N = \wt O\prn[\Big]{\CPI(\mu)\,\beta_s^{2/(1+s)}\,d^{s/(1+s)} \log^{1/(1+s)}\prn[\big]{\frac{1}{\varepsilon}} \,\prn[\Big]{1 + \frac{\log^{1/(1+s)}(1/\varepsilon)}{d^{(1-s)/s}}}\log \frac{\Dchis{\mu_0}{\mu}}{\varepsilon^2}}
        \end{align*}
        first-order queries in expectation.
        \item Suppose that $\mu$ is log-concave.
        Then, $\Dkl{\wh\mu}{\mu} \le \varepsilon^2$ using at most
        \begin{align*}
            N = \wt O\prn[\Big]{\beta_s^{2/(1+s)}\,d^{s/(1+s)} \,\frac{W_2^2(\mu_0,\mu)}{\varepsilon^2}}\qquad\text{first-order queries in expectation}\,.
        \end{align*}
    \end{enumerate}
\end{theorem}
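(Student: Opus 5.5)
The approach is to treat \cref{alg:prox-sampler} with an inexact RGO as a perturbation of the exact proximal sampler. The plan is to combine three ingredients: the known contraction of the exact proximal sampler under LSI, PI, or log-concavity \citep{Chen+22ProxSampler}; the per-step R\'enyi/$\chi^2$ error of the FORS-based RGO from \cref{thm:gaussian_tilt_full}; and the query count of \cref{thm:fors}.

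First I fix the per-step accuracy. With $\eta^{-1}$ chosen as in the statement and $\delta$ replaced by $\delta' \deq \varepsilon^2/N$ (only logarithmic changes, absorbed by $\wt O$), \cref{thm:gaussian_tilt_full} gives, for every $y$, $\Dsys[\ell]{\wh{\mathsf{RGO}}_y}{\mathsf{RGO}_y}\le\delta'$ with $\ell=\Theta(\lambda)$. We may take $x_+=\mathrm{prox}_{\eta f}(y)$, so the hypothesis $\|x_0-\eta\nabla f(x_+)-x_+\|\le (d\eta)^{1/2}$ holds trivially.

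Next I propagate errors, one convergence mode at a time. Let $P$ be the exact kernel and $\wh P$ the implemented one; they share the forward Gaussian step and differ only in the RGO.
\begin{itemize}
\item KL (log-concave case): the chain rule gives $\Dkl{\wh\mu_N}{\mu_N}\le\sum_n \E\,\Dkl{\wh{\mathsf{RGO}}}{\mathsf{RGO}}\le N\delta'$. I then combine this with the exact-sampler KL bound $W_2^2(\mu_0,\mu)/(N\eta)$ via a weak triangle inequality for KL, such as \cref{lem:KL-triangle} with the $\chi^2$ control.
\item $\chi^2$ and R\'enyi (PI and LSI cases): the bound $\Dsys{}{}$ is uniform in $y$, so the stationary measure is preserved up to a bounded density perturbation. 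I would use the standard one-step inequality
\[
1+\Dchis{\wh\mu_{n+1}}{\mu}\le\bigl(1+\Dchis{\mu_nP}{\mu}\bigr)^{\theta}\bigl(1+\delta'\bigr)^{c},
\]
obtained from H\"older and the weak triangle inequality for R\'enyi divergences. I iterate it together with the exact contraction $\Dchis{\mu_nP}{\mu}\le(1+\eta/\CPI)^{-2}\Dchis{\mu_n}{\mu}$ (respectively the R\'enyi contraction under LSI).
\end{itemize}
This yields $N=\wt O((\CPI/\eta)\log(\chi^2/\varepsilon^2))$ for PI and $\wt O((\CLSI/\eta)\log(\mathcal R/\varepsilon^2))$ for LSI, with accumulated perturbation $O(N\delta')\le\varepsilon^2$.

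Finally I count queries. By \cref{thm:fors} with $B=\Theta(1)$, each RGO call uses $O(1)$ expected gradient queries plus one proximal call, so the total is $O(N)$ in expectation. Substituting $\eta^{-1}$ gives the stated rates; at $s=1$ we have $\eta^{-1}\asymp\beta_1(d^{1/2}\log^{1/2}+\log)$, and the extra $\log$ factors come from $\log(1/\delta')$.

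The main obstacle I expect is the error propagation in $\chi^2$/R\'enyi. A divergence bound for the inexact kernel does not simply add up, because R\'enyi divergences only satisfy a weak triangle inequality with an order loss. I would handle this by tracking the R\'enyi order along the iterations: invoke \cref{thm:gaussian_tilt_full} with a larger $\ell=O(\lambda)$, which is why the constant $C$ is allowed to depend on $\lambda$. I would also control the cross term through the uniform-in-$y$ bound, using convexity of $\chi^2$ in the kernel.
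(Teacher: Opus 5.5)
Your skeleton is what the paper's omitted proof (``exactly as in \citet{AltChe24Warm}'') refers to: per-step R\'enyi accuracy of the RGO from \cref{thm:gaussian_tilt_full}, exact prox so the proximity hypothesis is vacuous, $O(1)$ expected queries per call from \cref{thm:fors}, and error tracking against the exact proximal sampler. However, two of your error-propagation steps fail as written.

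\textbf{KL under log-concavity.} Applying \cref{lem:KL-triangle} to $(\wh\mu_N,\mu_N,\mu)$ gives $\Dkl{\wh\mu_N}{\mu}\le 2\Dkl{\wh\mu_N}{\mu_N}+\log(1+\Dchis{\mu_N}{\mu})$. This needs a $\chi^2$ bound for the \emph{exact} iterate. From a $W_2$ start under mere log-concavity you only have $\Dkl{\mu_N}{\mu}\le W_2^2(\mu_0,\mu)/(N\eta)$. Going through a Poincar\'e constant instead would change the rate and require a $\chi^2$ warm start.

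This is not a fixable choice of lemma. Small $\Dsys[\ell]{\wh\mu_N}{\mu_N}$ together with small $\Dkl{\mu_N}{\mu}$ does not control $\Dkl{\wh\mu_N}{\mu}$: take a tiny-mass region where $\log(\mu_N/\mu)$ is huge and let $\wh\mu_N$ inflate it. You need higher integrability of the exact log-density ratio.

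One way to get it is to use the chain rule on the last RGO step, which gives $\Dkl{\wh\mu_N}{\mu}\le\Dkl{\wh\mu^Y_{N-1}}{\pi^Y}+\delta'$, where $\nu^Y\deq\nu*\normal{0,\eta\Id}$ and $\pi^Y\deq\mu*\normal{0,\eta\Id}$. Then
\[
\Dkl{\wh\mu^Y_{N-1}}{\pi^Y}=\Dkl{\wh\mu^Y_{N-1}}{\mu^Y_{N-1}}+\Dkl{\mu^Y_{N-1}}{\pi^Y}+\bigl(\En_{\wh\mu^Y_{N-1}}-\En_{\mu^Y_{N-1}}\bigr)\log\frac{\mu^Y_{N-1}}{\pi^Y}\,.
\]
\begin{itemize}
\item The middle term is at most $W_2^2(\mu_0,\mu)/((N-1)\eta)$ by data processing.
\item By Cauchy--Schwarz, the last term is at most $\bigl(\Dchis{\wh\mu^Y_{N-1}}{\mu^Y_{N-1}}\,\En_{\mu^Y_{N-1}}\log^2(\mu^Y_{N-1}/\pi^Y)\bigr)^{1/2}$.
\item The $\chi^2$ factor is at most $(1+\delta')^{N}-1$.
\item Both densities are Gaussian convolutions, so the log-ratio grows at most quadratically. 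Its second moment is therefore a crude polynomial quantity, which $\log(1/\delta')$ absorbs given moment bounds on $\mu_0$.
\end{itemize}
Alternatively, insert the per-step error directly into the $W_2^2/(k\eta)$ argument.

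\textbf{$\chi^2$ and R\'enyi under PI/LSI.} The one-step inequality you call standard is false in general. H\"older or weak-triangle bounds always raise the order on the term you want to contract unless the perturbation is controlled in $\mathcal R_\infty$. For instance, one can build $A,B,\mu$ with $\Dsys[\ell]{A}{B}\to0$ and $\Dchis{B}{\mu}\approx 1$, but $\Dchis{A}{\mu}\to\infty$. Iterating such a bound over $N$ steps compounds the order loss, so you would need $\ell\gtrsim N$ or a starting order that grows with $N$; $\ell=O(\lambda)$ does not suffice. Likewise, a uniform R\'enyi bound on the kernel gives no bounded density perturbation of the stationary measure.

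The convexity device you mention only in passing is what does the job, and it removes the order issue entirely. Since the forward Gaussian step is exact, the R\'enyi composition bound for the joint law of $(Y,X)$ followed by data processing gives
\[
\Ren[\lambda]{\wh\mu_{n+1}}{\mu}\le\Ren[\lambda]{\wh\mu^Y_{n}}{\pi^Y}+\sup_y\Ren[\lambda]{\wh{\mathsf{RGO}}_y}{\mathsf{RGO}_y}\,.
\]
Combined with the forward-step contraction of \citet{Chen+22ProxSampler} under LSI (order $\lambda$) or PI ($\chi^2$), this is a recursion at fixed order. It requires only $\ell=\lambda$ (resp.\ $\ell=2$) in \cref{thm:gaussian_tilt_full}, with $\delta'\lesssim\varepsilon^2\eta/\CLSI$ (resp.\ $\varepsilon^2\eta/\CPI$). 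You give up the backward step's contraction, so the per-step factor is $(1+\eta/\CPI)^{-1}$ rather than the $(1+\eta/\CPI)^{-2}$ you wrote; this costs only a constant factor in $N$.
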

\begin{proof}
    This follows from tracking the error of the RGO implementation and choosing $\delta$ appropriately, exactly as in~\citet{AltChe24Warm}, and is therefore omitted.
\end{proof}

This theorem can be generalized in several directions.
One could assume that $\mu$ satisfies a Lata\l{}a--Oleszkiewicz inequality which interpolates between Poincar\'e and log-Sobolev, as in~\citet{Chen+22ProxSampler, Che+24LMC}.
One could also consider RGO implementations under more complicated settings, such as composite settings~\citep{FanYuaChe23ImprovedProx}.
For brevity, we omit such extensions.

\end{document}